\newif\ifarxiv
\definecolor{red}{RGB}{0,0,0}
\definecolor{darkred}{RGB}{150,0,0}
\definecolor{darkgreen}{RGB}{0,150,0}
\definecolor{darkblue}{RGB}{0,0,200}
\newtheorem{theorem}{Theorem}%[section]
\newtheorem{assumption}{Assumption}
\newtheorem{lemma}{Lemma}
\newtheorem{definition}{Definition}
\def \endprf{\hfill {\vrule height6pt width6pt depth0pt}\medskip}
\newenvironment{proof}{\noindent {\bf Proof.} }{\endprf\par}
\newenvironment{proofsk}{\noindent {\bf Proof sketch.} }{\endprf\par}
\newcommand{\red}{\textcolor{red}}
\newcommand{\cln}[1]{\red{}}
\newcommand{\GM}{{\texttt{GMM}}\xspace}
\newcommand{\LM}{{\texttt{LMM}}\xspace}
\newcommand{\eps}{\epsilon}
\newcommand{\hb}{\vct{h}}
\newcommand{\st}{\star}
\newcommand{\xa}{\x^{\tsc{att}}}
\newcommand{\link}[1]{\lnk(#1)}
\newcommand{\lnk}{\psi}
\newcommand{\beq}{\begin{equation}}
\newcommand{\ba}{\begin{align}}
\newcommand{\ea}{\end{align}}
\newcommand{\eeq}{\end{equation}}
\newcommand{\smax}{\bar{\sigma}}
\newcommand{\A}{{\mtx{A}}}
\newcommand{\kapp}{s_{\max}^{\nu}}
\newcommand{\V}{{\mtx{V}}}
\newcommand{\Sb}{{{\mtx{S}}}}
\newcommand{\diag}[1]{\text{diag}(#1)}
\newcommand{\Lc}{{\cal{L}}}
\newcommand{\Lcb}{{\bar{\cal{L}}}}
\newcommand{\Nc}{{\cal{N}}}
\newcommand{\Dc}{{\cal{D}}}
\newcommand{\Gc}{{\cal{G}}}
\newcommand{\Db}{{\mtx{D}}}
\newcommand{\onebb}{{\mathbf{1}}}
\newcommand{\Iden}{{\mtx{I}}}
\newcommand{\order}[1]{{\cal{O}}(#1)}
\newcommand{\z}{{\vct{z}}}
\newcommand{\sft}[1]{\mathbb{S}(#1)}%{\texttt{sfmx}(#1)}
\newcommand{\sftx}{\mathbb{S}}
\newcommand{\sfp}[1]{\mathbb{S}'(#1)}
\newcommand{\tn}[1]{\|{#1}\|}
\newcommand{\dist}[1]{\texttt{dist}\left(#1\right)}
\newcommand{\Cc}{\mathcal{C}}
\newcommand{\pbb}{\vct{\bar{p}}}
\newcommand{\pbs}{\tilde{\pb}^\svm}
\newcommand{\Rc}{\mathcal{O}}
\newcommand{\bet}{{\boldsymbol{\beta}}}
\newcommand{\bal}{{\boldsymbol{\alpha}}}
\newcommand{\bgam}{\boldsymbol{\gamma}}
\newcommand{\gamb}{\bar{\gamma}}
\newcommand{\Sc}{\mathcal{S}}
\newcommand{\Scc}{\bar{\mathcal{S}}}
\newcommand{\Mc}{\mathcal{M}}
\newcommand{\vb}{\vct{v}}
\newcommand{\Ic}{{\mathcal{I}}}
\newcommand{\ap}{\mtx{a}'}
\newcommand{\nei}{\text{SVM-neighbor}\xspace}
\newcommand{\neis}{\text{SVM-neighbors}\xspace}
\newcommand{\ww}{\vct{V}}
\newcommand{\li}{\left<}
\newcommand{\ri}{\right>}
\newcommand{\s}{\vct{s}}
\newcommand{\ab}{{\vct{a}}}
\newcommand{\abm}{\vct{\bar{a}}}%^\svm
\newcommand{\abg}{\vct{a}_{\tsc{gap}}}
\newcommand{\bgag}{\boldsymbol{\gamma}_{\tsc{gap}}}
\newcommand{\bgg}{\gamma^{\tsc{gap}}}
\newcommand{\bggm}{\gamma^{\tsc{gap}}_{\min}}
\newcommand{\bgm}{\bar{\gamma}^{\tsc{gap}}}
\newcommand{\abp}{\vct{a}^\pb}
\newcommand{\ub}{{\vct{u}}}
\newcommand{\Tc}{\mathcal{T}}
\newcommand{\Tcb}{\bar{\mathcal{T}}}
\newcommand{\rel}{optimal\xspace}
\newcommand{\irel}{non-optimal\xspace}
\newcommand{\kb}{\vct{k}}
\newcommand{\cone}{\texttt{cone}}
\newcommand{\ir}{q}
\newcommand{\irm}{q^\pb_{\max}}
\newcommand{\ira}{q^{\pb'}_{\max}}
\newcommand{\vbs}{\tilde{\vb}^\svm}
\newcommand{\vs}{\vb^\svm}
\newcommand{\ps}{\pb^\svm}
\newcommand{\prr}{\pb^\tsc{relax}}
\newcommand{\pre}{\pb^\tsc{$\eps$-rlx}}
\newcommand{\pbr}{\tilde{\pb}^\tsc{$\eps$-rlx}}
\newcommand{\RR}{\bar{R}}
\newcommand{\MM}{\bar{M}}
\newcommand{\psb}{\bar{\pb}^\svm}
\newcommand{\pso}{\pb^{\svm\star}}
\newcommand{\x}{\vct{x}}
\newcommand{\rb}{\vct{r}}
\newcommand{\W}{\mtx{W}}
\definecolor{emmanuel}{RGB}{255,127,0}
\newcommand{\p}{{\vct{p}}}
\newcommand{\Kb}{{\mtx{K}}}
\newcommand{\Kbb}{{\mtx{\bar{K}}}}
\newcommand{\pb}{{\vct{p}}}
\newcommand{\pp}[1]{{\vct{p}}(#1)}
\newcommand{\pr}[1]{{\vct{\bar{p}}}(#1)}
\newcommand{\prl}[1]{{\vct{\tilde{p}}}(#1)}%^{\tsc{loc}}
\newcommand{\qb}{{\vct{q}}}
\newcommand{\R}{\mathbb{R}}
\newcommand{\Pro}{\mathbb{P}}
\newcommand{\E}{\operatorname{\mathbb{E}}}
\newcommand{\vct}[1]{\bm{#1}}%{{\bf{\emph{#1}}}}
\newcommand{\mtx}[1]{\bm{#1}}
\newcommand{\vba}{{\bf{\emph{v}}}}
\newcommand{\pba}{{\bf{\emph{p}}}}
\newcommand{\Pc}{{\cal{P}}}
\newcommand{\X}{{\mtx{X}}}
\newcommand{\Vb}{{\mtx{V}}}
\newcommand{\iprod}[2]{\left\langle #1 , #2 \right\rangle}
\newcommand{\mc}{\mathcal}
\renewcommand{\mc}[1]{\ensuremath{\mathcal{#1}}} 
\newcommand{\mb}[1]{{\mathbb{#1}}}
\newcommand{\spar}{\widehat{\text{nnz}}}
\newcommand{\tsc}{\textsl}
\newcommand{\cls}{\texttt{[CLS]}\xspace}
\newcommand{\fat}{f_{\tsc{sa}}}
\newcommand{\fatt}{f_{\tsc{cls}}}
\newcommand{\svm}{\tsc{mm}}
\newcommand{\op}{\texttt{opt}}
\newcommand{\opt}{\texttt{opt}}
\newcommand{\Rcb}{\bar{\Rc}}
\newcommand{\Qc}{{\cal{Q}}}
\newcommand{\zX}[1]{\z\{#1\}}
\newcommand{\aX}[1]{\ab\{#1\}}
\title{Max-Margin Token Selection in Attention Mechanism}%Margin Maximization in Attention Mechanism}%, Token Separation, % and Directional Convergence
\author{Davoud Ataee Tarzanagh\\University of Pennsylvania\\\texttt{tarzanaq@upenn.edu}\And Yingcong Li\quad\quad Xuechen Zhang\\University of California, Riverside\\\texttt{\{yli692,xzhan394\}@ucr.edu}\And Samet Oymak\\University of Michigan\\UC Riverside\\\texttt{oymak@umich.edu}}%$^\star$
\date{}
\begin{document}
%################################################
\doparttoc % Tell to minitoc to generate a toc for the parts
\faketableofcontents % Run a fake tableofcontents command for the partocs
%################################################

\maketitle
%,is the value weights, $\W,\pb$ is the key-query weights, and $\pb$ is a tunable token.
\begin{abstract} Attention mechanism is a central component of the transformer architecture which led to the phenomenal success of large language models. However, the theoretical principles underlying the attention mechanism are poorly understood, especially its nonconvex optimization dynamics. In this work, we explore the seminal softmax-attention model $f(\X)=\li\X\vb, \texttt{softmax}(\X\W\pb)\ri$, where $\X$ is the token sequence and $(\vb,\W,\pb)$ are trainable parameters. We prove that running gradient descent on $\pb$, or equivalently $\W$, converges in direction to a max-margin solution that separates \emph{locally-optimal} tokens from non-optimal ones. This clearly formalizes attention as an optimal token selection mechanism. Remarkably, our results are applicable to general data and precisely characterize \emph{optimality} of tokens in terms of the value embeddings $\X\vb$ and problem geometry. We also provide a broader regularization path analysis that establishes the margin maximizing nature of attention even for nonlinear prediction heads. When optimizing $\vb$ and $\pb$ simultaneously with logistic loss, we identify conditions under which the regularization paths directionally converge to their respective hard-margin SVM solutions where $\vb$ separates the input features based on their labels. Interestingly, the SVM formulation of $\pb$ is influenced by the support vector geometry of $\vb$. Finally, we verify our theoretical findings via numerical experiments and provide insights. 
\end{abstract}
%even with sample size $n=1\tilde 3$, with $T=2,3$ tokens, in dimension $d=2,3$. 
%Attention mechanism is a central component of the transformer architecture which led to the phenomenal success of large language models. However, the theoretical principles underlying the attention mechanism are poorly understood, especially its nonconvex optimization dynamics. In this work, we explore the seminal softmax-attention model $f(\X)=\vb^\top\X^\top \texttt{softmax}(\X\W^\top\pb)$, where, $\X$ is the tokenized input, $\vb$ is the value weights, $\W$ is the key-query weights, and $\pb$ is a tunable token/prompt. We prove that running gradient descent on $\pb$, or equivalently $\W$, converges to a max-margin solution that separates locally-optimal tokens from non-optimal ones. We also develop regularization path analysis that generalizes these findings to nonlinear classifier head -- rather than linear $\vb$. When optimizing $\vb$ and $\pb$ simultaneously with logistic loss, we identify conditions under which the regularization paths converge to their respective max-margin solutions where $\vb$ separates the input features based on their labels. Finally, we verify our results through numerical insights. 

\vspace{-8pt}
\section{Introduction}\label{sec:intro}
\vspace{-4pt}
Since its introduction in the seminal work \cite{bahdanau2015neural}, attention mechanism has played an influential role in advancing natural language processing, and more recently, large language models \cite{brown2020language,chen2021evaluating,radford2019language,chowdhery2022palm}.
Initially introduced for encoder-decoder RNN architectures, attention allows the decoder to focus on the most relevant parts of the input sequence, instead of relying solely on a fixed-length hidden state. %Attention is initially introduced for encoder-decoder RNN architectures in order to allow the decoder to use the most relevant parts of the input sequence, rather than relying on a fixed-length hidden state.
Attention mechanism has taken the center stage in the transformers \cite{vaswani2017attention}, where the self-attention layer -- which calculates softmax similarities between input tokens -- serves as the backbone of the architecture. Since their inception, transformers have revolutionized natural language processing, from  models like BERT \cite{devlin2018bert} to ChatGPT \cite{gpt4}, and have also become the architecture of choice for foundation models \cite{bommasani2021opportunities} addressing diverse challenges in generative modeling \cite{chen2021evaluating,ramesh2021zero}, computer vision \cite{dosovitskiy2021vit,radford2021learning}, and reinforcement learning \cite{driess2023palm,chen2021decision,reed2022generalist}.\vspace{-2pt}

The prominence of the attention mechanism motivates a fundamental theoretical understanding of its role in optimization and learning. While it is well-known that attention enables the model to focus on the relevant parts of the input sequence, the precise mechanism by which this is achieved is far from clear. To this end, we ask

\begin{quote}
\hspace{-21pt}
\textbf{Q:} What are the optimization dynamics and inductive biases of the attention mechanism?
\end{quote}\vspace{-2pt}
We study this question using the fundamental attention model $f(\X)=\li\X\vb, \sftx(\X\W^\top\pb)\ri$. Here, $\X$ is the sequence of input tokens, $\vb$ is the prediction head, $\W$ is the trainable key-query weights, and $\sftx$ denotes the softmax nonlinearity. For transformers, $\pb$ corresponds to the \texttt{[CLS]} token or tunable prompt \cite{lester2021power,oymak2023role,li2021prefix}, whereas for RNN architectures \cite{bahdanau2015neural}, $\pb$ corresponds to the hidden state.
\begin{figure}[t]
\centering
\hspace{-10pt}
\subfigure[Global convergence]{
    \includegraphics[width=.33\columnwidth]{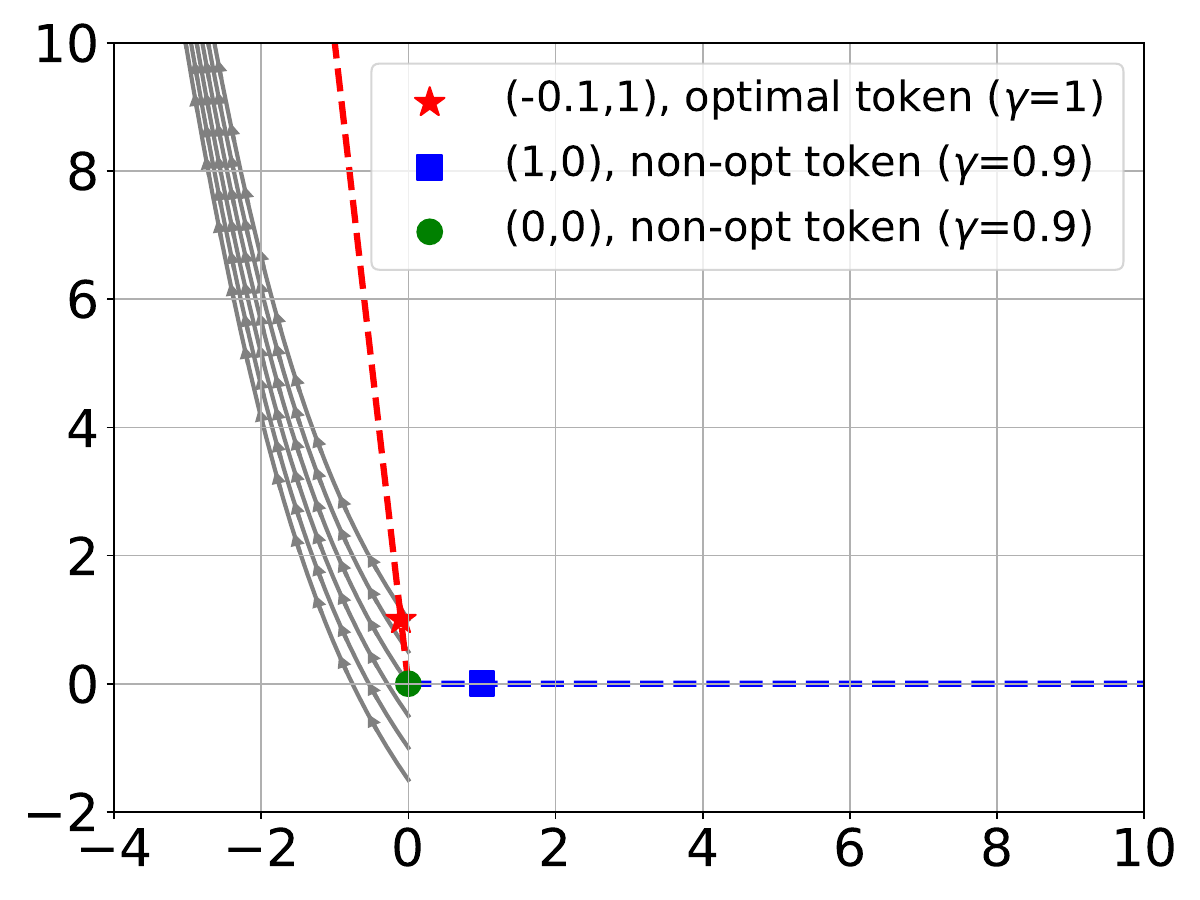}
    \label{fig:global}
    
}%\caption{Global convergence}
\hspace{-10pt}
\subfigure[Global and local optimal directions]{
    \includegraphics[width=.33\columnwidth]{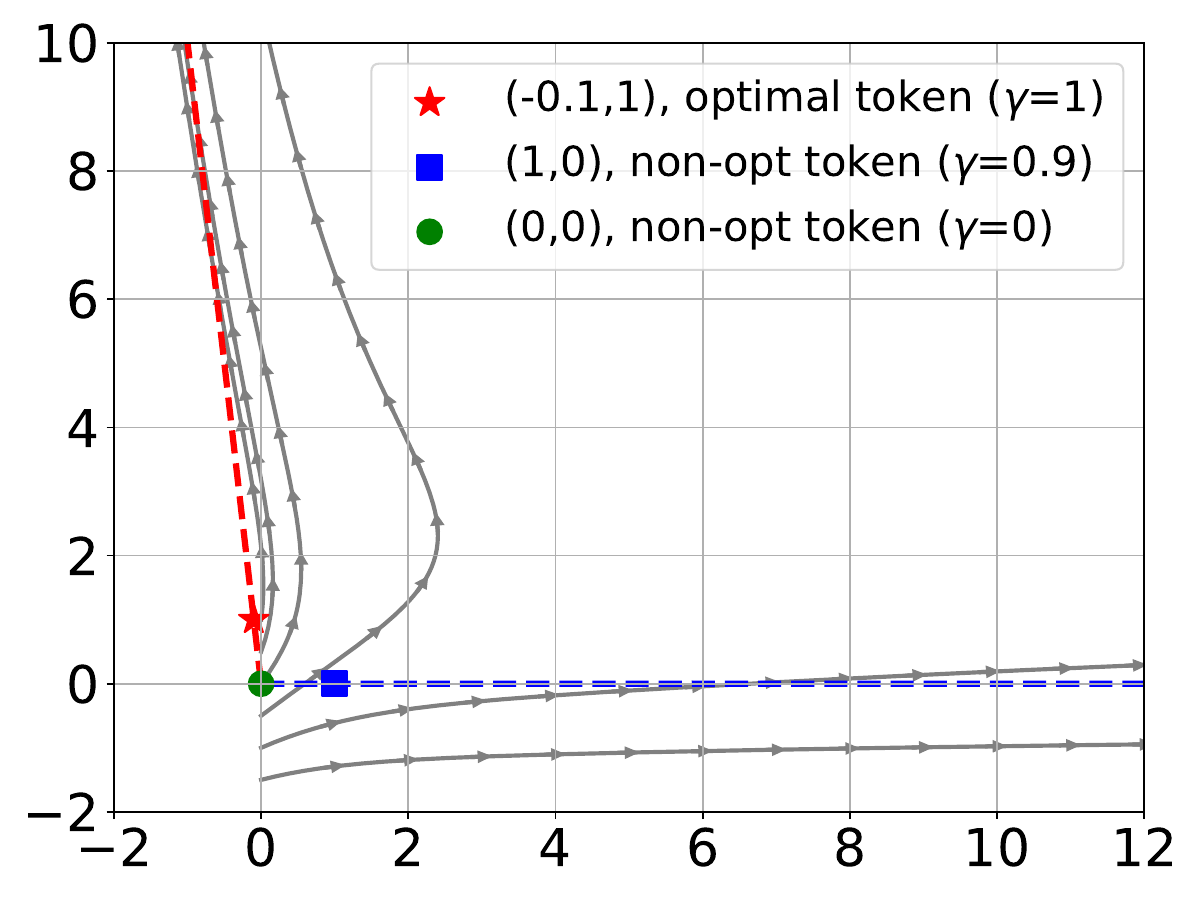}
    \label{fig:local}
    
}%\caption{Convergence to global and local optima}
\hspace{-10pt}
\subfigure[Multiple inputs]{
    \includegraphics[width=.33\columnwidth]{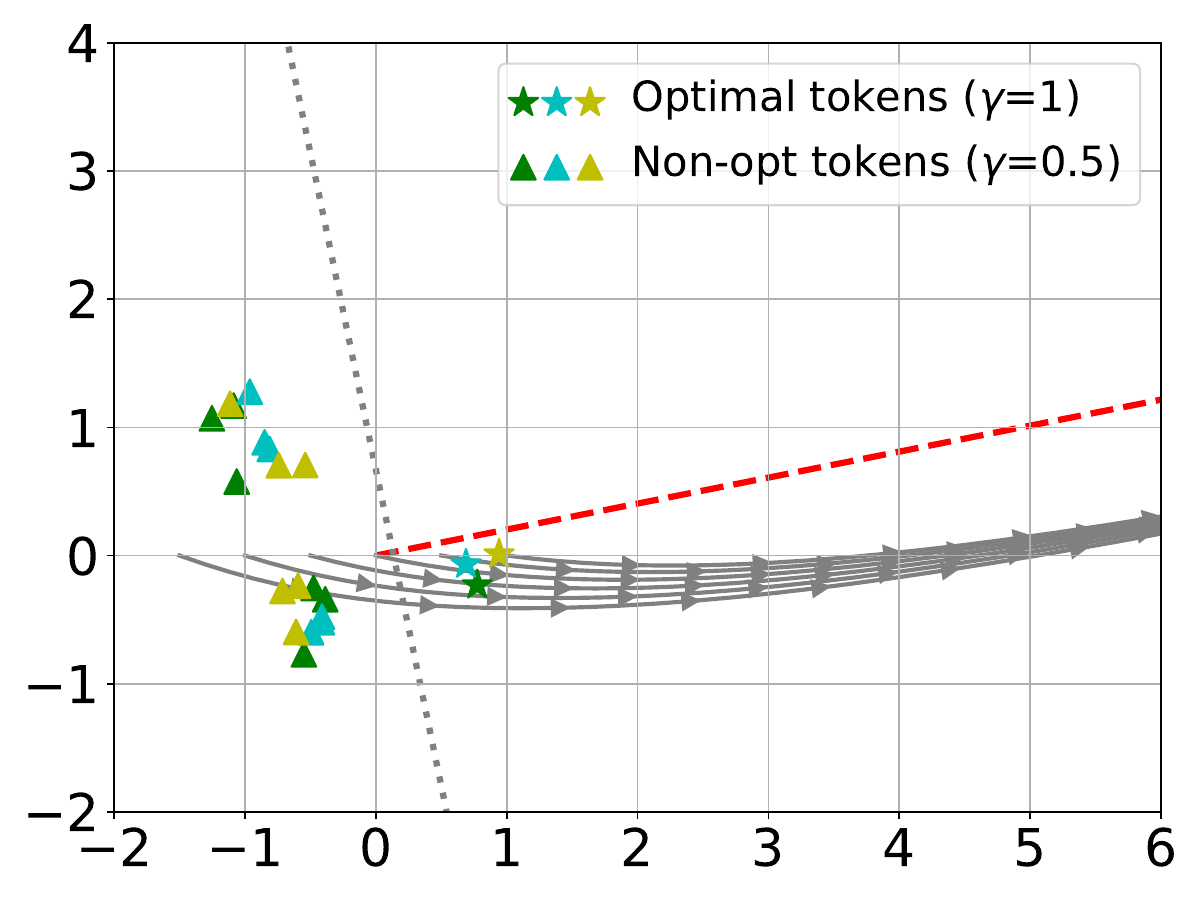}
    \label{fig:joint global}
}
\vspace{-2mm}
\caption{The convergence behavior of the gradient descent on the attention weights $\pb$ using the logistic loss in \eqref{lin det losss}. The arrows ({\color{gray} \footnotesize{--->---}}) represent trajectories from different initializations. Here, ({\color{red}-~-~-}) and ({\color{blue}-~-~-}) denote the \textbf{g}lobally- and \textbf{l}ocally-optimal \textbf{m}ax-\textbf{m}argin directions (\GM, \LM). $\gamma$ denotes the \emph{score} of a token per Definition \ref{score def}. Discussion is provided under Theorems \ref{conv:gd:global} and \ref{thm:local:gd}.}
\label{fig:main_fig}\vspace{-5pt}
\end{figure}
Given training data $(Y_i,\X_i)_{i=1}^n$ with labels $Y_i\in \{-1,1\}$ and inputs $\X_i\in\R^{T\times d}$, we consider the empirical risk minimization with a decreasing loss function $\ell(\cdot):\R\rightarrow\R$, %with logistic loss
\begin{align}\label{eq:erm:init}
\Lc(\vb,\pb, \W)=\frac{1}{n}\sum_{i=1}^n \ell(Y_i\cdot f(\X_i)),~~\text{where}~~f(\X_i)=\vb^\top\X^\top_i \sftx(\X_i\W^\top\pb).
\end{align}
At a high-level, this work establishes fundamental equivalences between the optimization trajectories of \eqref{eq:erm:init} and hard-margin SVM problems. Our main contributions are as follows:

\noindent $\bullet$ \textbf{Optimization geometry of attention~ (Sec~\ref{linear head sec}):} We first show that gradient iterations of $\pb$ and $\W$ admit a one-to-one mapping, thus we focus on optimizing $\pb$ without losing generality. In Theorem \ref{thm:local:gd}, we prove that, under proper initialization:  \vspace{-5pt}
\begin{quote}
Gradient descent on $\pb$ converges in direction to a max-margin solution -- namely \eqref{attnsvm} -- that separates locally-optimal tokens from non-optimal ones.
\end{quote}\vspace{-5pt}
We call these \textbf{L}ocally-optimal \textbf{M}ax-\textbf{M}argin (\LM) directions and show that % (Theorem~\ref{non-LOMM reg path})
these thoroughly characterize the viable convergence directions of attention when the norm of its weights grows to infinity. We also identify conditions under which  (algorithm-independent) regularization path and gradient descent path converge to \textbf{G}lobally-optimal \textbf{M}ax-\textbf{M}argin (\GM) direction in Theorems \ref{lin det thm} and \ref{conv:gd:global}, respectively. A central feature of our results is precisely quantifying \emph{optimality} in terms of \emph{token scores} $\bgam_t=Y\cdot \vb^\top \x_t$ where $\x_t$ is the $t^{\text{th}}$ token of the input sequence $\X$. \emph{Locally-optimal} tokens are those with higher scores than their nearest neighbors determined by the SVM solution. These are illustrated in Figure \ref{fig:main_fig}.

\noindent $\bullet$ \textbf{Optimize attention $\pb$ and prediction-head $\vb$ jointly (Sec~\ref{sec joint}):} We study the joint problem under logistic loss function. We use regularization path analysis where \eqref{lin det losss} is solved under ridge constraints and we study the solution trajectory as the constraints are relaxed. Since the problem is linear in $\vb$, if the attention features $\xa_i=\X_i^\top\sftx(\X_i\W^\top\pb)$ are separable based on their labels $Y_i$, $\vb$ would implement a max-margin classifier. Building on this, we prove that $\pb$ and $\vb$ converges to their respective max-margin solutions under proper geometric conditions (Theorem \ref{thm:path:joint:vp}). Relaxing these conditions, we obtain a more general solution where margin constraints on $\pb$ are relaxed on the inputs whose attention features are not support vectors of $\vb$ (Theorem~\ref{joint thm general}). Figure \ref{fig:joint_fig} illustrates these outcomes.

 \ifarxiv
In Section~\ref{sec nonlin}, we extend our approach to the more general model $f(\X)=\psi(\X^\top \sftx(\X\W^\top\pb))$ -- where $\psi$ is a nonlinear prediction head -- to establish the margin maximizing nature of attention under broader conditions (Theorem~\ref{meta thm}). Overall, our results formalize the role of the attention mechanism as a token-selection \& separation mechanism, characterize its geometry (when tokens are separable), and lay the groundwork for future research by connecting attention to a max-margin SVM formulation.
 \else
 \fi

The next section introduces the preliminary concepts, Section~\ref{sec:exprim} presents numerical experiments\footnote{ The code for experiments can be found at \url{https://github.com/ucr-optml/max_margin_attention}.}, Section \ref{sec related} discusses related literature, and Section \ref{sec discuss} highlights limitations and future work.

% \vspace{-4pt}
\subsection{Preliminaries}\label{sec prelim}\vspace{-4pt}
\paragraph{Notations.} For any integer $N \geq 1$, let $[N] := \{1, \dots, N\}$. We use lower-case and upper-case bold letters (e.g.~$\ab$ and $\A$) to represent vectors and matrices, respectively. The entries of $\ab$ are denoted as $\ab_i$.  We use $\smax(\A)$ to denote the maximum singular value of $\A$. We denote the minimum of two numbers $a,b$ as $a\wedge b$, and the maximum as $a\vee b$. Big-O notation $\mc{O}(\cdot)$ hides the universal constants. 
%For a given $\qb$ and a positive scalar $\mu$, we define $\cone_{\mu}(\qb)$ as the set of vectors $\pb \in\R^d$ such that the correlation coefficient between $\pb$ and $\qb$ is at least $1-\mu$ :
%\begin{align}\label{eqn:cone}
%\cone_{\mu}(\qb) := \left\{\pb\in\R^d~\Big|~\left\langle\frac{\pb}{\|\pb\|},\frac{\qb}{\|\qb\|} \right\rangle \geq 1-\mu\right\}.
%\end{align}
%Given $R>0$, the intersection of $\cone_{\mu}(\qb)$ and the set $\{\pb\in\R^d|~\|\pb\|\geq R\}$ is denoted as $\Cc_{\mu,R}(\qb)$:
%\begin{align}\label{eqn:cone:}
%\Cc_{\mu,R}(\qb):=\cone_{\mu}(\qb)\cap \left\{\pb\in\R^d~\big|~\|\pb\|\geq R\right\}.
%\end{align}
Throughout, we will use $\Lc (\pb)$ and $\Lc(\vb,\pb)$ to denote Objective \eqref{eq:erm:init} with fixed $(\vb, \W)$ and $\W$, respectively. 

\noindent\textbf{Optimization.} Given an objective function $\Lc:\R^d\to\R$ 
and an $\ell_2$-norm bound $R$, define the regularized solution as \vspace{-0pt}%defined as
\begin{align}\label{eq:reg_f}
  \pr{R}:=\arg\min_{\|\pb\|\le R}\Lc(\pb).
\end{align}
Regularization path -- the evolution of $\pr{R}$ as $R$ grows -- is known to capture the spirit of gradient descent as the ridge constraint $R$ provides a proxy for the number of gradient descent iterations. For instance, \cite{rosset2003margin,suggala2018connecting,ji2020gradient} study the implicit bias of logistic regression and rigorously connect the directional convergence of regularization path (i.e.~$\lim_{R\rightarrow \infty}\pr{R}/R$) and gradient descent.  For gradient descent, we assume the objective $\Lc(\pb)$ is smooth and describe the gradient descent process as \vspace{-0pt}
\begin{equation}\label{eqn:GD}
\pp{t+1} = \pp{t} - \eta(t) \nabla \Lc(\pp{t}),
\end{equation}
where $\eta(t)$ is the stepsize at time $t$ and $\nabla \Lc(\pp{t})$ is the gradient of $\Lc$ at $\pp{t}$.
%
%For a function $\Lc(\x, \m{y})$, we use  $\Lc_{\x}(\x)$  to denote \eqref{lin det los} with optimization variable $\x$ (i.e, other variables are fixed). 

\textbf{Attention in Transformers.}  Next, we will discuss the connection between our model and the attention mechanism used in transformers. Our exposition borrows from ~\cite{oymak2023role}, where the authors analyze the same attention model using gradient-based techniques on specific contextual datasets.

\noindent$\bullet$ \textbf{Self-attention} is the core building block of transformers \cite{vaswani2017attention}. Given an input consisting of $T$ tokens $\X=[\x_1,\ldots,\x_T]^\top\in\R^{T\times d}$, self-attention with key-query matrix $\W\in\R^{d\times d}$, and value matrix $\V\in\R^{d\times v}$, the self-attention model is defined as follows:
\begin{align}
\fat(\X) = \sft{\X\W\X^\top}\X\V.\label{satt}%\tag{SA}
\end{align}
Here, $\sft{\cdot}$ is the softmax nonlinearity that applies row-wise on the similarity matrix $\X\W\X^\top$. 

\noindent$\bullet$ \textbf{Tunable tokens: \cls and prompt-tuning.} In practice, we append additional tokens to the raw input features $\X$: For instance, a \cls token is used for classification purposes \cite{devlin2018bert} and prompt vectors can be appended for adapting a pretrained model to new tasks \cite{lester2021power,li2021prefix}. Let $\pb\in\R^d$ be the tunable token (\cls or prompt vector) and concatenate it to $\X$ to obtain $\X_{\pb}:=[\pb~\X^\top]^\top\in \R^{(T+1)\times d}$. Consider the cross-attention features obtained from $\X_{\pb}$ and $\X$ given by
%Next, we introduce the so-called prompt-attention model that captures the properties of prompt-tuning on a single-layer attention model. Here we introduce a single tunable prompt borrowing the prompt-attention model of \cite{}. This is later extended to multiple prompts in Sec \ref{}.  Then, the cross-attention of $\X_{\pb}$ and $\X$ is given by
\[
\begin{bmatrix}\fatt^\top(\X)\\\fat(\X)\end{bmatrix}=\sft{\X_{\pb}\W\X^\top}\X\V=\begin{bmatrix}\sft{\pb^\top\W\X^\top}\\\sft{\X\W\X^\top}\end{bmatrix}\X\V.
\]
%Now, subsuming $\pb$ into $\W$ via $\qb:=\W^\top\pb$
%the modelSetting the key values $\Kb=\X\W^\top$, we can write 
The beauty of cross-attention is that it isolates the contribution of $\pb$ under the upper term $\fatt(\X)= \ww^\top \X^\top \sft{\X\W^\top\pb}\in\R^v$. In this work, we use the value weights for classification, thus we set $v=1$, and denote $\vb=\Vb\in\R^d$. This brings us to our attention model of interest:%in the to our core model%a \cls-attention model with weights $\bT=(\pb,\W,\V)$% We refer to this term as prompt-attention following \cite{}%Our prompt-tuning discussion will focus on $v=1$ and $y$ will be the predicted label. We also denote $\w:=\V\in\R^d$ to emphasize vector notation. 
\begin{align}%\quad\text{where}\quad \Kb=\X\W^\top
f(\X) = \vb^\top \X^\top \sft{\Kb\pb},\quad\text{where}\quad \Kb=\X\W^\top.\label{patt}%\tag{CA}
\end{align}
Here, $(\vb,\W,\pb)$ are the tunable model parameters and $\Kb$ is the key embeddings. Note that $\W$ and $\pb$ are playing the same role within softmax, thus, it is intuitive that they exhibit similar optimization dynamics. Confirming this, the next lemma shows that gradient iterations of $\pb$ (after setting $\W\gets\text{Identity}$) and $\W$ admit a one-to-one mapping.
\begin{lemma}\label{lem:map:wp}%[Mapping $\W$ and $\qb$ iterations] 
Fix $\ub\in\R^d\setminus\{\boldsymbol{0}\}$ . Let $\psi:\R^d\rightarrow\R$ and $\ell:\R\rightarrow\R$ be differentiable functions. On the same training data  $(Y_i,\X_i)_{i=1}^n$, define $\tilde{\Lc}(\pb):=1/n\sum_{i=1}^n\ell(Y_i\cdot \link{\X_i^\top \sft{\X_i\pb}})$ and $\Lc(\W):=1/n\sum_{i=1}^n\ell(Y_i\cdot\link{\X_i^\top \sft{\X_i\W^\top\ub}})$.  Consider the gradient descent iterations on $\pb$ and $\W$ with initial values $\pb(0)$ and $\W(0)=\ub\pb(0)^\top/\tn{\ub}^2$ and stepsizes $\eta$ and $\eta/\tn{\ub}^{2}$, respectively: 
\begin{align*}
    \pb(t+1)&=\pb(t)-\eta\nabla\tilde{\Lc}(\pb(t)),\\
\W(t+1)&=\W(t)-\frac{\eta}{\tn{\ub}^{2}}\nabla \Lc(\W(t)).
\end{align*}
We have that $\W(t)=\ub\pb(t)^\top/\tn{\ub}^2$ for all $t\geq 0$.
\end{lemma}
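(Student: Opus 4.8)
The plan is to exploit the elementary identity $\W^\top\ub = \pb$ that holds precisely when $\W = \ub\pb^\top/\tn{\ub}^2$ (since then $\W^\top\ub = \pb\,(\ub^\top\ub)/\tn{\ub}^2 = \pb$), combined with the observation that the map $\W\mapsto\W^\top\ub$ is linear, so that differentiating $\Lc$ in $\W$ produces a rank-one ``lift'' of the gradient of $\tilde{\Lc}$ in $\pb$. Given this, the claim follows by a one-line induction on $t$. The base case $t=0$ holds by the hypothesis $\W(0)=\ub\pb(0)^\top/\tn{\ub}^2$, and the substitution identity shows that $\Lc(\W)=\tilde{\Lc}(\pb)$ whenever $\W$ lies on the rank-one manifold $\{\ub\pb^\top/\tn{\ub}^2:\pb\in\R^d\}$, so the two objectives genuinely agree along the trajectory we are tracking.

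The only computational step is the chain rule for $\nabla_\W\Lc$. Write $g_i(\z):=\ell\big(Y_i\cdot\link{\X_i^\top\sft{\X_i\z}}\big)$, so that $\tilde{\Lc}(\pb)=\tfrac1n\sum_i g_i(\pb)$ and $\Lc(\W)=\tfrac1n\sum_i g_i(\W^\top\ub)$; each $g_i$ is differentiable because $\ell$ and $\psi$ are differentiable and the softmax is smooth. Since $\z=\W^\top\ub$ depends linearly on $\W$ with $\partial \z_k/\partial\W_{jl}=\ub_j\,\delta_{kl}$, the chain rule gives $\nabla_\W g_i(\W^\top\ub)=\ub\,(\nabla g_i(\W^\top\ub))^\top$, and summing over $i$ yields $\nabla\Lc(\W)=\ub\,\big(\nabla\tilde{\Lc}(\W^\top\ub)\big)^\top$. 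The main thing to be careful about here is transpose bookkeeping in the matrix derivative; there is no genuine analytic obstacle.

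For the induction step, suppose $\W(t)=\ub\pb(t)^\top/\tn{\ub}^2$, so $\W(t)^\top\ub=\pb(t)$ and hence $\nabla\Lc(\W(t))=\ub\,(\nabla\tilde{\Lc}(\pb(t)))^\top$ by the previous step. Substituting into the $\W$-recursion and factoring $\ub/\tn{\ub}^2$ on the left,
\[
\W(t+1)=\frac{\ub\pb(t)^\top}{\tn{\ub}^2}-\frac{\eta}{\tn{\ub}^2}\,\ub\,(\nabla\tilde{\Lc}(\pb(t)))^\top=\frac{\ub}{\tn{\ub}^2}\big(\pb(t)-\eta\nabla\tilde{\Lc}(\pb(t))\big)^\top=\frac{\ub\,\pb(t+1)^\top}{\tn{\ub}^2},
\]
which closes the induction. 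The rescaling of the $\W$-stepsize by $\tn{\ub}^{-2}$ is exactly what cancels the $\tn{\ub}^2$ appearing from the denominator of the manifold parametrization, so that $\pb(t)$ and $\W(t)$ advance in lockstep.
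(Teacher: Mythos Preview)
The proposal is correct and follows essentially the same approach as the paper: an induction on $t$ driven by the rank-one identity $\nabla\Lc(\W)=\ub\,(\nabla\tilde{\Lc}(\W^\top\ub))^\top$, then substitution into the $\W$-recursion. The only cosmetic difference is that you derive the rank-one gradient identity via a clean abstract chain-rule on the composite $g_i(\z)=\ell(Y_i\cdot\psi(\X_i^\top\sft{\X_i\z}))$, whereas the paper writes out the softmax Jacobian explicitly before observing the same identity; your route is slightly more economical but the structure is identical.
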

%Thanks to this lemma, $\W$'s optimization dynamics is directly characterized by $\pb$'s dynamics, since we can always reconstruct $\W$ from $\pb$ using the relationship between their gradient iterations. Hence, in what follows, we fix $\W$, and focus on optimizing $\pb$ in Sec.~\ref{linear head sec} and joint optimization of $(\vb,\pb)$ in Sec.~\ref{sec joint}. 
%in the next sections, 
This lemma directly characterizes the optimization dynamics of $\W$ through the dynamics of $\pb$, allowing us to reconstruct $\W$ from $\pb$ using their gradient iterations. Therefore, we will fix $\W$ and concentrate on optimizing $\pb$ in Section \ref{linear head sec} and the joint optimization of $(\vb,\pb)$ in Section \ref{sec joint}.
\begin{center}\vspace{-5pt}
\begin{tcolorbox}[boxsep=1pt,left=7pt,right=7pt,top=5pt,bottom=5pt,enhanced, width=14cm,colframe=green!3!black,colback=green!3!white,colbacktitle=orange!5!yellow!10!white,
fonttitle=\bfseries,coltitle=black,attach boxed title to top center=
{yshift=-0.25mm-\tcboxedtitleheight/2,yshifttext=2mm-\tcboxedtitleheight/2},
boxed title style={boxrule=0.2mm,
frame code={ \path[tcb fill frame] ([xshift=-4mm]frame.west)
-- (frame.north west) -- (frame.north east) -- ([xshift=4mm]frame.east)
-- (frame.south east) -- (frame.south west) -- cycle; },
interior code={ \path[tcb fill interior] ([xshift=-2mm]interior.west)
-- (interior.north west) -- (interior.north east)
-- ([xshift=2mm]interior.east) -- (interior.south east) -- (interior.south west)
-- cycle;} }]
\noindent\textbf{Problem definition:} Throughout, $(Y_i,\X_i)_{i=1}^n$ denotes training dataset where $Y_i\in\{-1,1\}$ and $\X_i\in\R^{T\times d}$. We denote the key embeddings of $\X_i$ via $\Kb_i=\X_i\W^\top$ and explore the training risk
\begin{align}
\Lc(\vb,\pb)=\frac{1}{n}\sum_{i=1}^n \ell\left(Y_i\cdot \vb^\top \X_i^\top \sft{\Kb_i\pb}\right).\label{lin det losss}\tag{ERM}
\end{align}
Importantly, our results apply to general tuples $(Y_i,\X_i,\Kb_i)$ and do not assume that $(\X_i,\Kb_i)$ are tied via $\W$. Finally, the $t^{th}$ tokens of $\X_i,\Kb_i$ are denoted by $\x_{it},\kb_{it}\in\R^d$, respectively, for $t\in[T]$.
\end{tcolorbox}
\end{center}
The highly nonlinear and nonconvex nature of the softmax operation makes the training problem in \eqref{lin det losss} a challenging nonconvex optimization problem for $\pb$, even with a fixed $\vb$. In the next section, we will introduce a set of assumptions to demonstrate the global and local convergence of gradient descent for margin maximization in the attention mechanism.

\vspace{-4pt}\section{Global and Local Margin Maximization with Attention}\label{linear head sec}\vspace{-4pt}
In this section, we present the main results of this paper (Theorems \ref{conv:gd:global} and \ref{thm:local:gd}) by examining the implicit bias of gradient descent on learning $\pb\in\R^d$ given a fixed choice of $\vb\in\R^d$. Notably, our results apply to general \emph{decreasing loss functions without requiring convexity}. This generality is attributed to margin maximization arising from the exponentially-tailed nature of softmax within attention, rather than $\ell$. We maintain the following assumption on the loss function throughout this section.
\vspace{-1pt}
\begin{assumption}[Well-behaved Loss]\label{assum:loss:prope} Over any bounded interval: (1) $\ell:\R\rightarrow\R$ is strictly decreasing. (2) $\ell^{\prime}$ is $M_0$-Lipschitz continuous and  $|\ell^{\prime}(u)|\leq M_1$.
\end{assumption}
\vspace{-1pt}

Assumption~\ref{assum:loss:prope} includes many common loss functions, including the logistic loss $\ell\left(u\right)=\log\left(1+e^{-u}\right)$, exponential loss $\ell\left(u\right)=e^{-u}$, and correlation loss $\ell(u)=-u$. Assumption~\ref{assum:loss:prope} implies that $\mathcal{L}\left(\pb\right)$ is  $L_p$--smooth (see Lemma~\ref{lem:grad:descent} in Supplementary), where
\begin{equation}\label{eqn:lp:main}
L_p:=\frac{1}{n}\sum_{i=1}^{n} \left(M_0\|\vb\|^2\|\W\|^2\|\X_i\|^4 +3M_1 \|\vb\|~\|\W\|^2\| \X_i\|^3\right).
\end{equation}
 % where $\smax:= 1/n \sum_{i=1}^n \smax(\X_i)$.
%$\mc{O}\left(M_0 \smax^4 +M_1 \smax^3\right)$
We now introduce a convex hard-margin SVM problem that separates one token of the input sequence from the rest, jointly solved over all inputs. We will show that this problem captures the optimization properties of softmax-attention. Fix indices $\bal=(\alpha_i)_{i=1}^n$ and consider\vspace{-5pt}
\begin{center}\vspace{-2pt}
\begin{tcolorbox}[boxsep=1pt,left=7pt,right=7pt,top=0pt,bottom=2pt,enhanced, width=14cm,colframe=green!3!black,colback=green!3!white,colbacktitle=orange!5!yellow!10!white,
fonttitle=\bfseries,coltitle=black,attach boxed title to top center=
{yshift=-0.25mm-\tcboxedtitleheight/2,yshifttext=2mm-\tcboxedtitleheight/2},
boxed title style={boxrule=0.2mm,
frame code={ \path[tcb fill frame] ([xshift=-4mm]frame.west)
-- (frame.north west) -- (frame.north east) -- ([xshift=4mm]frame.east)
-- (frame.south east) -- (frame.south west) -- cycle; },
interior code={ \path[tcb fill interior] ([xshift=-2mm]interior.west)
-- (interior.north west) -- (interior.north east)
-- ([xshift=2mm]interior.east) -- (interior.south east) -- (interior.south west)
-- cycle;} }]
\begin{equation}\tag{ATT-{SVM}}
 \ps(\bal)=\arg\min_{\pb}\tn{\pb}
 \quad \text{subject to}   \quad \min_{t\neq \alpha_i}~\pb^\top(\kb_{i\alpha_i}-\kb _{it})\geq 1,~~\text{for all}\quad 1\leq i\leq n\label{attnsvm}.
\end{equation}
\end{tcolorbox}%\vspace{-5pt}
\end{center}
Note that existence of $\ps(\bal)$ implies the separability of tokens $\bal$ from the others. Specifically, choosing direction $\ps(\bal)$ will exactly select tokens $(\x_{i\alpha_i})_{i=1}^n$ at the attention output for each input sequence, that is,~$\lim_{R\rightarrow\infty}\X_i^\top \sft{R\cdot\Kb_i\ps(\bal)}=\x_{i\alpha_i}$.
We are now ready to introduce our main results that characterize the global and local convergence of the attention weights $\pb$ via \eqref{attnsvm}. 

\vspace{-4pt}\subsection{Global convergence of the attention weights $\pba$}\label{sec linear p}\vspace{-2pt}
We first identify the conditions that guarantee the global convergence of gradient descent for $\pb$. The intuition is that, in order for attention to exhibit implicit bias, the softmax nonlinearity should be forced to select the \emph{optimal token within each input sequence}. Fortunately, the optimal tokens that achieve the smallest training objective under decreasing loss function $\ell(\cdot)$ have a clear definition. 
\begin{definition}[Token Scores, Optimality \& \GM]\label{score def}
The score of token $\x_{it}$ of input $\X_i$ is defined as $\bgam_{it}:=Y_i\cdot\vb^\top \x_{it}$. The optimal tokens for input $\X_i$ are those tokens with highest scores given by 
$$
\op_i\in\arg\max_{t\in [T]}\bgam_{it}.
$$
Globally-optimal max-margin (\GM) direction is defined as the solution of \eqref{attnsvm} with optimal indices $(\op_i)_{i=1}^n$ by $\pso$. 
\end{definition}
\vspace{-5pt}
It is worth noting that score definition simply uses the \emph{value embeddings} $\vb^\top \x_{it}$ of the tokens. Note that multiple tokens within an input might attain the same score, thus $\op_i$ or $\pso$ may not be unique. The theorem below provides our regularization path guarantee on the global convergence of attention. 
\begin{theorem}[Regularization Path]\label{lin det thm} Suppose Assumption~\ref{assum:loss:prope} on the loss function holds,  and  for all $i\in[n]$ and $t\neq \op_i$, the scores obey $\bgam_{it}<\bgam_{i\op_i}$. Then, the regularization path $\pr{R}=\arg\min_{\tn{\pb}\leq R}\Lc(\pb)$ converges to the \GM direction i.e.~$\lim_{R\rightarrow\infty}\pr{R}/R=\pso/\tn{\pso}$.
\end{theorem}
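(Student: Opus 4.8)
The plan is a regularization-path argument in which the margin-maximizing force comes from the \emph{exponential tail of the softmax} rather than from $\ell$. Write $\pr{R}=\rho_R\,\qb_R$ with $\rho_R=\tn{\pr{R}}\le R$ and $\tn{\qb_R}=1$; the target reduces to showing $\rho_R/R\to 1$ together with $\qb_R\to\hat\qb:=\pso/\tn{\pso}$. The first fact I would record is the scaling/duality relation for \eqref{attnsvm}: with the margin functional $\zeta(\pb):=\min_{i\in[n]}\min_{t\ne\op_i}\pb^\top(\kb_{i\op_i}-\kb_{it})$, the vector $\hat\qb$ is the \emph{unique} maximizer of $\zeta$ over the closed unit ball, with value $\zeta(\hat\qb)=1/\tn{\pso}$. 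This is standard: $\zeta$ is concave and positively homogeneous, the constraint in \eqref{attnsvm} is exactly $\zeta(\pb)\ge 1$, and $\tn{\pb}$ is strictly convex, so $\pso$ — hence the maximizer — is unique. Throughout I assume, as the statement implicitly does by referring to $\pso$, that \eqref{attnsvm} at the optimal indices is feasible.

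Step 1 (value of $\inf\Lc$). For any $\pb$, $Y_i f(\X_i)=\sum_t s_{it}\bgam_{it}$ with $s_i:=\sft{\Kb_i\pb}$ is a convex combination of the scores, so $Y_if(\X_i)\le\bgam_{i\op_i}$ and, since $\ell$ decreases, $\Lc(\pb)\ge\Lc_\star:=\tfrac1n\sum_i\ell(\bgam_{i\op_i})$. Conversely, at $\pb=R\hat\qb$ the ATT-SVM constraints give $\hat\qb^\top(\kb_{i\op_i}-\kb_{it})\ge 1/\tn{\pso}$ for $t\ne\op_i$, hence $s_{it}\le e^{-R/\tn{\pso}}$ and $Y_if(\X_i)=\bgam_{i\op_i}-\sum_{t\ne\op_i}s_{it}(\bgam_{i\op_i}-\bgam_{it})\ge\bgam_{i\op_i}-(T-1)B\,e^{-R/\tn{\pso}}$ with $B:=\max_{i,t}|\bgam_{i\op_i}-\bgam_{it}|$. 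Using $|\ell'|\le M_1$ on the bounded interval of attainable values, $\Lc(R\hat\qb)\le\Lc_\star+M_1(T-1)B\,e^{-R/\tn{\pso}}$. Since $\Lc(\pr{R})\le\Lc(R\hat\qb)\to\Lc_\star$, while $\Lc(\pb)>\Lc_\star$ for every finite $\pb$ (equality needs $s_{i\op_i}=1$, impossible, and the score hypothesis rules out ties), a compactness argument gives $\rho_R\to\infty$.

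Step 2 (the crux: suboptimality $\Rightarrow$ margin). Pick the bottleneck input $i^\star$ and token $t^\star$ realizing $\qb_R^\top(\kb_{i^\star\op_{i^\star}}-\kb_{i^\star t^\star})=\zeta(\qb_R)$. If $\op_{i^\star}$ is not a $\qb_R$-argmax of $\{\qb_R^\top\kb_{i^\star t}\}_t$ then $s_{i^\star\op_{i^\star}}\to0$, so the leakage $\bgam_{i^\star\op_{i^\star}}-Y_{i^\star}f(\X_{i^\star})$ is bounded below by the positive constant $\min_{t\ne\op_{i^\star}}(\bgam_{i^\star\op_{i^\star}}-\bgam_{i^\star t})$, contradicting $\Lc(\pr{R})\to\Lc_\star$; hence for large $R$, $\op_{i^\star}$ is the argmax and $s_{i^\star t^\star}\ge\tfrac1T e^{-\rho_R\zeta(\qb_R)}$. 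Since all summands $\ell(Y_if(\X_i))-\ell(\bgam_{i\op_i})$ are nonnegative,
\[
\Lc(\pr{R})-\Lc_\star\;\ge\;\tfrac1n\Big(\ell\big(Y_{i^\star}f(\X_{i^\star})\big)-\ell\big(\bgam_{i^\star\op_{i^\star}}\big)\Big)\;\ge\;\tfrac{m\,c}{n\,T}\,e^{-\rho_R\zeta(\qb_R)},
\]
where $c:=\min_{i,t\ne\op_i}(\bgam_{i\op_i}-\bgam_{it})>0$ (strict, by the score hypothesis) and $m>0$ lower-bounds $-\ell'$ on the bounded interval of attainable values. Comparing with the Step-1 bound $\Lc(\pr{R})-\Lc_\star\le M_1(T-1)B\,e^{-R/\tn{\pso}}$ yields $\rho_R\,\zeta(\qb_R)\ge R/\tn{\pso}-\order{1}$. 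Together with $\rho_R\le R$ and $\zeta(\qb_R)\le 1/\tn{\pso}$, this forces $\rho_R/R\to1$ and $\zeta(\qb_R)\to1/\tn{\pso}$; by continuity of $\zeta$ and uniqueness of its maximizer over the unit ball, $\qb_R\to\hat\qb$, so $\pr{R}/R=(\rho_R/R)\qb_R\to\pso/\tn{\pso}$, as claimed.

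The main obstacle is Step 2: reconciling the two exponential scales. The minimizer can beat $\Lc_\star$ only at rate $e^{-R/\tn{\pso}}$, and turning this into a bound on $\zeta(\qb_R)$ requires the two-sided sandwich $\tfrac1T e^{-\rho\zeta(\qb)}\le \bgam_{i^\star\op_{i^\star}}-Y_{i^\star}f(\X_{i^\star})\le (T-1)B\,e^{-\rho\zeta(\qb)}$ for the bottleneck input (the upper side available once $\op_{i^\star}$ is the argmax), plus a positive lower bound on $-\ell'$ over the bounded range of attained values. The latter is where ``$\ell$ strictly decreasing'' must be invoked with care; it is immediate for the logistic, exponential, and correlation losses highlighted after Assumption~\ref{assum:loss:prope}. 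The remaining ingredients — $\rho_R\to\infty$, $\rho_R\le R$, smoothness of $\Lc$, and the SVM scaling/uniqueness fact (which presupposes feasibility of \eqref{attnsvm} at the optimal indices) — are routine.
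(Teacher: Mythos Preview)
Your proof is correct. The core mechanism --- the exponential tail of softmax forces the minimizer to maximize margin --- matches the paper's, but the organization differs. The paper obtains Theorem~\ref{lin det thm} as a corollary of the nonlinear-head result Theorem~\ref{meta thm}: it first verifies that the linear head satisfies the abstract ``mixing hurts'' condition (Assumption~\ref{asshurt}) via Lemma~\ref{label mix lem}, then runs Theorem~\ref{meta thm}'s contradiction argument on the non-optimality quantities $q_{\max}^{\pb}$. You do the direct, un-abstracted version: sandwich $\Lc(\pr{R})-\Lc_\star$ between $C_+e^{-R/\tn{\pso}}$ (from the comparison point $R\hat\qb$) and $C_-e^{-\rho_R\zeta(\qb_R)}$ (from the bottleneck token), read off $\rho_R\zeta(\qb_R)\ge R/\tn{\pso}-\order{1}$, and conclude via uniqueness of the unit-norm margin maximizer. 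This buys a shorter, self-contained argument that avoids the intermediate decomposition $Y_if(\X_i)=\nu_i(1-q_i)+Z_i$; in particular you sidestep the implicit requirement $\nu_i=\bgam_{i\op_i}>0$ that the paper's reduction to Lemma~\ref{label mix lem} appears to need. Conversely, the paper's abstraction is what lets the same machinery handle nonlinear heads and multiple optimal tokens in Section~\ref{sec nonlin}. Your caveat about needing $m=\inf(-\ell')>0$ on the bounded range of attainable values is well placed and not a defect of your argument: the paper's Lemma~\ref{label mix lem} uses the identical hypothesis (its constant $A>0$).
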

Theorem~\ref{lin det thm} shows that as the regularization strength $R$ increases towards the ridgeless problem $\min_{\pb}\Lc(\pb)$, the optimal direction $\pr{R}$ aligns more closely with the max-margin solution $\pso$. 
Since this theorem allows for arbitrary token scores, it demonstrates that \emph{max-margin token separation is an essential feature of the attention mechanism}. In fact, it is a corollary of Theorem \ref{meta thm}, which applies to the generalized model $f(\X)=\psi(\X^\top \sftx(\X\W^\top\pb))$ and accommodates multiple optimal tokens per input. However, while regularization path analysis captures the global behavior, gradient descent lacks general global convergence guarantees. In Section~\ref{sec:gd:local}, we show that due to the nonconvex landscape and softmax nonlinearity, gradient descent often converges to local optima. We first establish that when \eqref{lin det losss} is trained with gradient descent, the norm of the parameters will diverge. For the restrictive setting of $n=1$, gradient descent also exhibits a global convergence guarantee.

% \begin{assumption}\label{assum:opt:token}
% For all $t,\tau\neq \op$, the scores per Definition~\ref{score def} obey $\bgam_{t}=\bgam_{\tau}<\bgam_{\op}$.
% \end{assumption}
\begin{assumption}\label{assum:opt:token}
For all $i\in[n]$ and $t,\tau\neq \op_i$, the scores per Definition~\ref{score def} obey $\bgam_{it}=\bgam_{i\tau}<\bgam_{i\op_i}$.
\end{assumption}

\begin{theorem}[Global Convergence of Gradient Descent]\label{conv:gd:global}
Suppose Assumption~\ref{assum:loss:prope} on the loss function $\ell$ and Assumption \ref{assum:opt:token} on the tokens' score hold. Then, the gradient descent iterates $\pb(t+1) = \pp{t} - \eta \nabla \Lc(\pb(t))$ on \eqref{lin det losss}, with the stepsize $\eta \leq 1/L_p$ and any starting point $\pb(0)$ satisfy $\lim_{t\rightarrow\infty}\tn{\pb(t)}=\infty$.   If $n=1$, we also have $\lim_{t\rightarrow\infty}\pb(t)/\|\pb(t)\|=\pso/\tn{\pso}$. 
\end{theorem}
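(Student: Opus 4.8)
The plan is to prove this in three stages: (i) show the iterates diverge in norm, (ii) show the loss vanishes and the attention weights asymptotically select the optimal token $\op_i$ in every input, and (iii) pin down the limiting direction as $\pso/\tn{\pso}$ by a Łojasiewicz/descent-type argument comparing the gradient direction to the SVM cone.

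First I would establish that $\lim_{t\to\infty}\tn{\pb(t)}=\infty$. Since $\ell$ is strictly decreasing and bounded below, and $\Lc$ is $\mc{O}(M_0\smax^4+M_1\smax^3)$-smooth (the cited Lemma~\ref{lem:grad:descent}), the standard smoothness-descent inequality with the prescribed step size gives $\Lc(\pb(t+1))\le\Lc(\pb(t))-\tfrac{\eta}{2}\tn{\nabla\Lc(\pb(t))}^2$, hence $\Lc(\pb(t))$ is non-increasing and $\sum_t\tn{\nabla\Lc(\pb(t))}^2<\infty$, so $\nabla\Lc(\pb(t))\to0$. The key structural fact is that $\Lc$ has no finite stationary point: for any fixed $\pb$, because each input has a token $\op_i$ with strictly largest score (Assumption~\ref{assum:opt:token} includes $\bgam_{it}<\bgam_{i\op_i}$), moving $\pb$ in the direction $\pso$ strictly decreases every summand in the limit, so $\inf_{\pb}\Lc(\pb)$ is not attained; combined with $\nabla\Lc(\pb(t))\to0$ this forces $\tn{\pb(t)}\to\infty$. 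One has to be slightly careful: one should show the gradient cannot vanish while $\tn{\pb}$ stays bounded, which follows since on any bounded set $\nabla\Lc$ vanishes only at a minimizer and no minimizer exists.

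Second, I would show the softmax saturates onto the optimal tokens, i.e.\ $\lim_{t\to\infty}\X_i^\top\sft{\Kb_i\pb(t)}=\x_{i\op_i}$ for each $i$ (reading $\op_i$ as any maximal-score token, exploiting Assumption~\ref{assum:opt:token} so the competing tokens are interchangeable). Heuristically, writing $\pb(t)=R_t\bar\pb_t$ with $R_t\to\infty$, the loss gradient is a convex combination (weighted by $-\ell'>0$) of per-input terms, each of which pushes the softmax mass toward the token(s) with the largest value of $\bgam$-aligned inner product; the only way $\nabla\Lc\to0$ with $R_t\to\infty$ is for the softmax to concentrate on maximizers of $\bgam_{it}$, which are exactly $\op_i$. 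Quantitatively I would lower-bound the component of $-\nabla\Lc(\pb(t))$ along $\pso$: by the SVM margin constraints $\pso^\top(\kb_{i\op_i}-\kb_{it})\ge1$, this component is at least a positive multiple of $\sum_i|\ell'(\cdot)|\cdot(\text{softmax mass off }\op_i\text{ in input }i)$, so $\nabla\Lc\to0$ forces that mass to $0$.

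Third, to upgrade from "selects optimal tokens" to "converges in direction to $\pso/\tn{\pso}$", I would run the cone-argument that is standard for implicit-bias proofs: define the correlation $\rho(t)=\langle\pb(t),\pso/\tn{\pso}\rangle/\tn{\pb(t)}$ and show $\rho(t)\to1$. The idea is that once the softmax mass is concentrated near $\op_i$, the gradient $-\nabla\Lc(\pb(t))$ is, up to vanishing error, a nonnegative combination of the vectors $\{\kb_{i\op_i}-\kb_{it}\}$, hence lies (approximately) in the dual cone defining \eqref{attnsvm}; projecting the gradient update onto $\pso$ versus onto its orthogonal complement and using the margin normalization shows the $\pso$-component grows at least linearly in the step while the orthogonal component grows strictly slower, giving $\rho(t)\to1$. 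Minimality of $\tn{\pso}$ among feasible directions then forces the limiting direction to be exactly $\pso/\tn{\pso}$; Assumption~\ref{assum:opt:token} guarantees $\pso$ is well-defined as the common \GM\ direction despite non-uniqueness of individual $\op_i$, since all maximal-score tokens in an input are symmetric.

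The main obstacle I expect is the second/third stage coupling: controlling the rate at which the softmax concentrates relative to the growth of $\tn{\pb(t)}$, and ruling out that gradient descent converges in direction to some \emph{other} feasible SVM direction (a non-\GM\ \LM\ direction) — this is exactly where Assumption~\ref{assum:opt:token} is essential, since it removes the "flat" non-optimal tokens that could otherwise trap the dynamics (as the local-convergence Theorem~\ref{thm:local:gd} shows can happen without it). Making the cone/correlation estimate rigorous will require carefully quantified bounds on $|\ell'|$ from above and below on the relevant (growing) interval, which is where the Lipschitz and boundedness clauses of Assumption~\ref{assum:loss:prope} get used.
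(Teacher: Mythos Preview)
Your stage (i) is essentially the paper's argument: Lemma~\ref{global des lem} shows $\langle\nabla\Lc(\pb),\pso\rangle<0$ for every $\pb$, so there are no finite critical points, and the descent lemma (Lemma~\ref{lem:grad:descent}) then forces $\tn{\pb(t)}\to\infty$. From there, however, the paper takes a shorter route than your stages (ii)--(iii). It does not separately establish softmax saturation, nor does it argue via the dual cone. Instead it proves a single gradient-correlation inequality (Lemma~\ref{lem:gd:corr}): for any $\epsilon>0$ there is $R_\epsilon$ such that whenever $\tn{\pb}\ge R_\epsilon$,
\[
\Big\langle -\nabla\Lc(\pb),\,\frac{\pso}{\tn{\pso}}\Big\rangle \;\ge\;(1-\epsilon)\,\Big\langle -\nabla\Lc(\pb),\,\frac{\pb}{\tn{\pb}}\Big\rangle.
\]
This inequality is obtained by a direct per-input computation exploiting Assumption~\ref{assum:opt:token} (all non-optimal scores equal, so the softmax-weighted gradient factors cleanly) together with a two-case analysis depending on whether $\pb/\tn{\pb}$ is close to $\pso/\tn{\pso}$ or not; in the latter case the strict max-margin property of $\pso$ enters. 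Once the correlation inequality holds, one multiplies by $\eta$, telescopes, and uses $\tn{\pb(t)}\to\infty$ to obtain $\langle\pb(t)/\tn{\pb(t)},\pso/\tn{\pso}\rangle\ge 1-\epsilon+O(1/\tn{\pb(t)})$.

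The gap in your proposal is stage (iii). Knowing that $-\nabla\Lc(\pb(t))$ lies (approximately) in the cone generated by $\{\kb_{i\op_i}-\kb_{it}\}$ is correct---under Assumption~\ref{assum:opt:token} it holds exactly, not just after saturation---but it does not by itself pin down the limiting direction. Every feasible direction for \eqref{attnsvm} has positive inner product with every vector in that cone, so the dual-cone membership of the gradient is compatible with convergence to \emph{any} feasible direction, not just the minimum-norm one. Your sentence ``minimality of $\tn{\pso}$ among feasible directions then forces the limiting direction to be exactly $\pso/\tn{\pso}$'' is precisely the step that needs an argument, and that argument is the correlation inequality above: it compares the gradient's alignment with $\pso$ against its alignment with the \emph{current} direction $\pb/\tn{\pb}$, and it is this comparison (not the dual-cone property) that uses the optimality of $\pso$. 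Your ``$\pso$-component grows linearly while the orthogonal component grows strictly slower'' is the right intuition, but to make it rigorous you will end up proving exactly the correlation inequality---so stage (ii) as a separate step is unnecessary, and stage (iii) should be replaced by that inequality plus telescoping.
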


\begin{comment}
\begin{proofsk}
We first show that under Assumption~\ref{assum:opt:token}, the gradient of training loss \eqref{lin det losss} satisfies $-\li\nabla\Lc(\pb),\pso\ri > 0$ for all $\pb\in\R^d$. This, combined with Assumption~\ref{assum:loss:prope}, implies that $\lim_{t\rightarrow\infty}\left\Vert \pb\left(t\right)\right\Vert =\infty$. Next, we establish that for any $\epsilon \in (0,1)$, there exists $t_0 \in \mb{N}$ such that for all $t\geq t_0$, the iterates of the gradient descent satisfy:
\begin{align*}
\iprod{-\nabla\mc{L}(\pb(t))}
     {\frac{\pso}{\|\pso\|}} \geq (1-\epsilon)\iprod{-\nabla \mc{L}(\pb(t))}{\frac{\pb(t)}{\|\pb(t)\|}}.
\end{align*}

Moreover, the gradient descent method with an appropriate step size guarantees a sufficient decrease of the smooth objective function: $\mc{L}(\pb({t+1}))-\mc{L}(\pb(t))\leq- \eta/2 \|\nabla \mc{L}(\pb(t))\|^2$. Combining this inequality with the aforementioned inequality, we obtain:
\begin{align*}
\left \langle \frac{\pb(t+1)}{\|\pb(t+1)\|},\frac{\pso}{\|\pso\|} \right \rangle\ge  1-\epsilon
%\frac{\left\langle \pb_{t_0}, \frac{\pso}{\|\pso\|}\right\rangle-(1-\epsilon)\|\pb_{t_0}\| -\eta (\mc{L}(\pb_{t_0})-\mc{L}^*)}{\|\pb(t+1)\|}
+\mc{O}\left(\frac{\eta \left(\mc{L}(\pb(t_0))- \min_{\pb}\mc{L}(\pb)\right)}{\|\pb(t)\|}\right).
    \end{align*}
Since $\epsilon \in (0,1)$ can take any value within this range, we get $\lim_{t\rightarrow\infty}\pb(t)/\|\pb(t)\|=\pso/\tn{\pso}$.
%We first show that under Assumption~\ref{assum:opt:token}, for all $\pb\in\R^d$, the training loss \eqref{lin det losss} obeys 
%$-\li\nabla\Lc(\pb),\pso\ri > 0$. This together with  Assumption~\ref{assum:loss:prope} implies that $\lim_{t\rightarrow\infty}\left\Vert \pb\left(t\right)\right\Vert =\infty$. 
%We then show that for any  $\epsilon \in (0,1)$, there exists $t_0 \in \mb{N}$ such that for all $t\geq t_0$ the gradient descent iterates satisfy
%\begin{align*}
%\iprod{-\nabla\mc{L}(\pb(t))}
%     {\frac{\pso}{\|\pso\|}} \geq (1-\epsilon)\iprod{-\nabla \mc{L}(\pb(t))}{\frac{\pb(t)}{\|\pb(t)\|}}.
%\end{align*}
%On the other hand, gradient descent method with proper step size provides sufficient decrease of the smooth objective function after a single step, i.e., $\mc{L}(\pb({t+1}))-\mc{L}(\pb(t))\leq- \eta/2 \|\nabla \mc{L}(\pb(t))\|^2$. This together with the above inequality gives 
%\begin{align*}
%\left \langle \frac{\pb(t)}{\|\pb(t)\|},\frac{\pso}{\|\pso\|} \right \rangle\ge  1-\epsilon
%%\frac{\left\langle \pb_{t_0}, \frac{\pso}{\|\pso\|}\right\rangle-(1-\epsilon)\|\pb_{t_0}\| -\eta (\mc{L}(\pb_{t_0})-\mc{L}^*)}{\|\pb(t)\|}
%+\mc{O}\left(\frac{\eta (\mc{L}(\pb(t_0))- \mc{L}^*)}{\|\pb(t)\|}\right).
%    \end{align*}
%As $\epsilon \in (0,1)$ can take any value within that range, we obtain $\lim_{t\rightarrow\infty}\pb(t)/\|\pb(t)\|=\pso/\tn{\pso}$.
\end{proofsk}
\end{comment}

Theorem~\ref{conv:gd:global} shows that gradient descent will diverge in norm, and when $n=1$, the normalized predictor $\pb(t)/\|\pb(t)\|$ converges towards $\pso$, the separator of the globally optimal token. While $n=1$ is a stringent condition, this requirement is in fact tight as discussed in {Appendix \ref{app:exp:detail}}. To illustrate this theorem, we have conducted synthetic experiments. Let us first explain the setup used in Figure~\ref{fig:main_fig}. We set $d=3$ as the dimension,  with each token having three entries $\x=[x_1,x_2,x_3]$. We reserve the first two coordinates as key embeddings $\kb=[x_1,x_2,0]$ by setting $\W=\text{diag}([1,1,0])$. This is what we display in our figures as token positions. Finally, in order to assign scores to the tokens we use the last coordinate by setting $\vb=[0,0,1]$. This way score becomes $Y\cdot \vb^\top\x=Y\cdot x_3$, allowing us to assign any score (regardless of key embedding).

In Figure \ref{fig:global}, the gray paths represent gradient descent trajectories from different initializations. The points $(0,0)$ and $(1,0)$ correspond to non-optimal tokens, while the point $(-0.1,1)$ represents the optimal token. Notably, gradient descent iterates with various starting points converge towards the direction of the max-margin solution $\pso$ (depicted by {\color{red}-~-~-}). Moreover, as the iteration count $t$ increases, the inner product $\left \langle \pb(t)/\|\pb(t)\|,\pso/\|\pso\| \right \rangle$ consistently increases. Figure~\ref{fig:joint global} also depicts the directional convergence of gradient descent from various initializations on multiple inputs, with the gray dotted line representing the separating hyperplane. These emphasize the gradual alignment between the evolving predictor and the max-margin solution throughout the optimization.

\ifarxiv
 {\color{blue} edit for arxiv}
\textbf{\GM $\pso$ is guaranteed to exist when labels are $1$:} Our results focus on the directional convergence of attention, which corresponds to the scenarios \eqref{attnsvm} is feasible. As demonstrated in Figure~\ref{fig:prob diff d} numerically, we expect separability to hold if the problem has more parameters (large $d$) whereas for small $d$, \GM or \LM directions may not exist and gradient descent can instead find a finite minima to $\Lc(\pb)$. While we defer the statistical analysis of separability to future, we make the remark that if labels are always $1$, the problem is guaranteed to be separable because value weights $\vb$ provides the separating direction. This scenario can be interpreted as an \emph{exact alignment} between values and 
 \else
 \fi
 
\begin{lemma} \label{pso exists lemma} Suppose for all $i\in[n]$ and $t\neq \op_i$, $Y_i=1$ and $\bgam_{it}<\bgam_{i\op_i}$. Also assume $\W\in\R^{d\times d}$ is full-rank. Then $\pso$ exists -- i.e.~\eqref{attnsvm} is feasible for optimal indices $\alpha_i\gets \op_i$.
\end{lemma}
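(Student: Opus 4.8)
The plan is to exhibit an explicit feasible point for the program \eqref{attnsvm} under the index assignment $\alpha_i\gets\op_i$, and then deduce existence of the minimum-norm solution by a one-line compactness argument. Since $Y_i=1$ for every $i$, the token score collapses to $\bgam_{it}=\vb^\top\x_{it}$, and the hypothesis gives the strict inequalities $\bgam_{i\op_i}>\bgam_{it}$ for all $i\in[n]$ and all $t\neq\op_i$. Here the lemma is of course read in the tied regime $\Kb_i=\X_i\W^\top$ (which is what makes full-rankness of $\W$ meaningful), so the $t$-th token of $\Kb_i$ is $\kb_{it}=\W\x_{it}$.

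The key observation is a change of variables that converts key inner products into value scores. Because $\W$ is full-rank, $\W^\top$ is invertible; set $\pb_0:=(\W^\top)^{-1}\vb$. Then for every $i$ and $t$,
\[
\pb_0^\top\kb_{it}=\big((\W^\top)^{-1}\vb\big)^\top(\W\x_{it})=\vb^\top\W^{-1}\W\x_{it}=\vb^\top\x_{it}=\bgam_{it},
\]
so that $\pb_0^\top(\kb_{i\op_i}-\kb_{it})=\bgam_{i\op_i}-\bgam_{it}>0$ for all $i$ and all $t\neq\op_i$. The finitely many margins $\bgam_{i\op_i}-\bgam_{it}$ are therefore all strictly positive, hence $\delta:=\min_{i,\,t\neq\op_i}(\bgam_{i\op_i}-\bgam_{it})>0$, and rescaling $\bar\pb:=\pb_0/\delta$ yields $\bar\pb^\top(\kb_{i\op_i}-\kb_{it})\geq 1$ for all $i$ and $t\neq\op_i$. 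Thus the feasible set of \eqref{attnsvm} with indices $(\op_i)_{i=1}^n$ is nonempty.

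To finish, note this feasible set is an intersection of finitely many closed half-spaces, hence closed (and convex); minimizing the coercive map $\pb\mapsto\tn{\pb}$ over a nonempty closed set, the infimum is attained, so $\pso=\ps((\op_i)_{i=1}^n)$ exists. There is essentially no hard step in this argument — the only care needed is that $\op_i$ need not be unique, but both the hypothesis and \eqref{attnsvm} are phrased for a fixed per-input choice of optimal index, so any consistent selection works, and the full-rank assumption on $\W$ is used exactly once, to guarantee invertibility of $\W^\top$ in the change of variables above.
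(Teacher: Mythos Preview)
Your proof is correct and follows essentially the same approach as the paper: both construct the explicit feasible point $\pb=\underline{\gamma}^{-1}(\W^\top)^{-1}\vb$ (the paper writes $(\W^\top)^\dagger$ to cover a slightly more general range condition, but in the full-rank case this is identical) and verify the margin constraints via the identity $\kb_{it}^\top\pb=\x_{it}^\top\vb/\underline{\gamma}$. Your added compactness line for the attainment of the minimum-norm solution is a harmless completeness touch that the paper leaves implicit.
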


\vspace{-3pt}\subsection{ Local convergence of the attention weights $\pba$}\label{sec:gd:local} \vspace{-2pt}
Theorem \ref{conv:gd:global} on the global convergence of gradient descent serves as a prelude to the general behavior of the optimization. Once we relax Assumption \ref{assum:opt:token} by allowing for arbitrary token scores, we will show that $\pb$ can converge (in direction) to a locally-optimal solution. However, this locally-optimal solution is still characterized in terms of \eqref{attnsvm} which separates \emph{locally-optimal} tokens from the rest. Our theory builds on two new concepts: locally-optimal tokens and neighbors of these tokens. \vspace{-4pt}
%is a special case of the general optimization landscape
% direction still selects locallIn this section, we discuss the local convergence of gradient descent methods towards the max margin solution in the context of Attention SVM.
%\begin{theorem}\label{local det thm} Fix a vector $\vb\in\R^d$. Define scores $\bgam_i=\X_i\vb$. Suppose gradient descent on $\pb$ achieves a limit $\pb^\star=\pb_t/\tn{\pb_t}$. Suppose $\lim_{t\rightarrow\infty}\sft{\Kb_i\pb_t}$ are one-hot vectors (selects only one token). Then, there exists indices $(\alpha_i)_{i=1}^n$ such that, $\pb^\star=\ps/\tn{\ps}$ where $\ps$ is the SVM solution
%\begin{align}
%\ps=\arg\min_{\pb\in\R^d} \tn{\pb}\quad\text{subject to}\quad \pb^\top (\kb_{i\alpha_i}-\kb_{it})\geq 1\label{svm-local}.
%\end{align}
%Additionally, in this SVM, all support vectors $\x_{it}$ obeying $\pb^\top (\x_{i\alpha_i}-\x_{it})=1$ satisfy $\bgam_{it}<\bgam_{i\alpha_i}$.
%\end{theorem}
%\begin{assumption} [\neis have same score]\label{assum:regular} Fix token indices $\bal=(\alpha_i)_{i=1}^n$ and let $\Tc_i\subset[T]$ be their \neis for $i\in[n]$. For all $i\in[n]$ and $t_1,t_2\in\Tc_i$, the scores are same, i.e., $\x_{it_1}^\top\vb=\x_{it_1}^\top \vb$.
%% Relaxing this assumption is tricky because there are $\pb$ directions with gradients subpar correlations with $\ps$. The reason is that $\pb$ can select the lowest score neighbor to get high-corr whereas $\ps$ will be $\Omega(1)$ less corr.
%\end{assumption}

\begin{definition}[SVM-Neighbor and Locally-Optimal Tokens]\label{def loc opt} Fix token indices $\bal=(\alpha_i)_{i=1}^n$ for which \eqref{attnsvm} is feasible to obtain $\ps=\ps(\bal)$. Consider tokens $\Tc_i\subset[T]$ such that $(\kb_{i\alpha_i}-\kb_{it})^\top \ps=1$ for all $t\in\Tc_i$. We refer to $\Tc_i$ as \neis of $\kb_{i\alpha_i}$. Additionally, tokens with indices $\bal=(\alpha_i)_{i=1}^n$ are called locally-optimal if for all $i\in[n]$ and  $t\in\Tc_i$ scores per Definition~\ref{score def} obey $\bgam_{i\alpha_i}>\bgam_{it}$. Associated $\ps$ is called a locally-optimal max-margin (\LM) direction.
\end{definition}
%
%\begin{assumption} [\neis have same score]\label{assum:regular} Fix token indices $\bal=(\alpha_i)_{i=1}^n$ and let $\Tc_i\subset[T]$ be their \neis for $i\in[n]$. For all $i\in[n]$ and $t_1,t_2\in\Tc_i$, the scores are same, i.e., $\x_{it_1}^\top\vb=\x_{it_1}^\top \vb$.
%% Relaxing this assumption is tricky because there are $\pb$ directions with gradients subpar correlations with $\ps$. The reason is that $\pb$ can select the lowest score neighbor to get high-corr whereas $\ps$ will be $\Omega(1)$ less corr.
%\end{assumption}

\begin{wrapfigure}{r}{0.405\textwidth}
\vspace{-.5cm} 
  \begin{center}
    \includegraphics[width=0.39\textwidth]{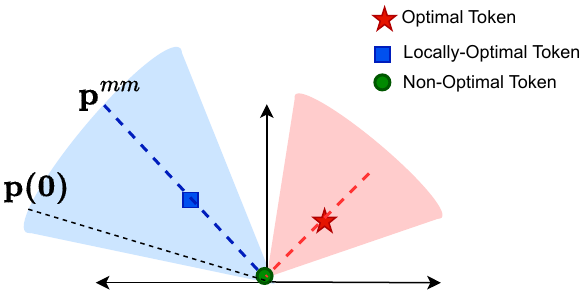}
  \end{center}\vspace{-6pt}
  \caption{\small{Gradient descent initialization $\pb(0)$ inside the cone containing the locally-optimal solution $\ps$.}}
  \label{fig:init:cone}
\vspace{-.1cm} 
\end{wrapfigure}

To provide a basis for discussing local convergence, we provide some preliminary definitions regarding cones. For a given $\qb$ and a scalar $\mu>0$, we define $\cone_{\mu}(\qb)$ as the set of vectors $\pb \in\R^d$ such that the correlation coefficient between $\pb$ and $\qb$ is at least $1-\mu$ :
\begin{align}\label{eqn:cone}
\cone_{\mu}(\qb) := \left\{\pb\in\R^d~\Big|~\left\langle\frac{\pb}{\|\pb\|},\frac{\qb}{\|\qb\|} \right\rangle \geq 1-\mu\right\}.
\end{align}
Given $R>0$, the intersection of $\cone_{\mu}(\qb)$ and the set $\{\pb\in\R^d|~\|\pb\|\geq R\}$ is denoted as $\Cc_{\mu,R}(\qb)$:
\begin{align}\label{eqn:cone:inters}
\Cc_{\mu,R}(\qb):=\cone_{\mu}(\qb)\cap \left\{\pb\in\R^d~\big|~\|\pb\|\geq R\right\}.
\end{align}

Next, we demonstrate the existence of parameters $\mu=\mu(\bal)>0$ and $R>0$ such that when $R$ is sufficiently large, there are no stationary points within $\Cc_{\mu,R}(\ps)$. Further, the gradient descent initialized within $\Cc_{\mu,R}(\ps)$  converges in direction to $\ps/\|\ps\|$; refer to Figure \ref{fig:init:cone} for a visualization.
%
% let us consider a cone centered around a fixed vector $\qb$: Let $\mu$ be a positive scalar and define
%%
%\begin{equation}\label{eqn:def:cone}
%%\hspace{-.2cm}
%\cone_{\mu}(\ps) := \left\{\pb\in\R^d~\Big|~\left\langle\frac{\pb}{\|\pb\|},\frac{\ps}{\|\ps\|} \right\rangle\geq 1-\mu\right\}.
%\end{equation}
%In other words, $\pb$ lies within this cone if its correlation coefficient with $\ps$ is at least $1-\mu$.
%
%In the subsequent theorem, we demonstrate the existence of parameters $\mu=\mu(\bal)>0$ and $R>0$ such that when $R$ is sufficiently large, there are no stationary points within the intersection of $\cone_{\mu}(\ps)$ and the set $\left\{\pb~|~\tn{\pb}\geq R\right\}$:
%\begin{align}\label{eqn:cone:inters}
%\Cc_{\mu,R}(\ps):=\cone_{\mu}(\ps)\cap \left\{\pb\in\R^d~\big|~\|\pb\|\geq R\right\}.
%\end{align}
%Further, the gradient descent initialized within $\Cc_{\mu,R}(\ps)$  converges in direction to $\ps/\|\ps\|$; refer to Figure \ref{fig:init:cone} for a visualization.
%
\begin{theorem}[Local Convergence of Gradient Descent]\label{thm:local:gd} 
Suppose  Assumption~\ref{assum:loss:prope} on the loss function $\ell$ holds and  assume $\bal=(\alpha_i)_{i=1}^n$ are indices of locally-optimal tokens per Definition \ref{def loc opt}.  Then, there is a constant $\mu=\mu(\bal)\in (0,1)$ and $R>0$ such  that  $\Cc_{\mu,R}(\ps)$  does not contain any  stationary points. Further, for any starting point $\pb(0) \in  \Cc_{\mu,R}(\ps)$, gradient descent iterates
$\pb(t+1) = \pp{t} - \eta \nabla \Lc(\pb(t))$ on \eqref{lin det losss} with stepsize  $\eta \leq 1/L_p$ satisfies $\lim_{t\rightarrow\infty}\tn{\pb(t)}=\infty$ and $\lim_{t\rightarrow\infty}\pb(t)/\tn{\pb(t)}=\ps /\tn{\ps}$.
\end{theorem} 

To further illustrate Theorem~\ref{thm:local:gd}, we can consider Figure \ref{fig:local} where $n=1$ and $T=3$. In this figure, the point $(0,0)$ represents the non-optimal tokens, while $(1,0)$ represents the locally optimal token. Additionally, the gray paths represent the trajectories of gradient descent initiated from different points. By observing the figure, we can see that gradient descent, when properly initialized, converges towards the direction of $\ps$ (depicted by {\color{blue}-~-~-}). This direction of convergence effectively separates the locally optimal tokens $(1,0)$ from the non-optimal token $(0,0)$.

\subsection{Regularization paths can only converge to locally-optimal max-margin directions}\label{sec:tight:local}
%An important question is whether our \LM definition (Def.~\ref{def loc opt}) covers all potential directional convergence paths of the attention mechanism (as $\tn{\pb}\rightarrow\infty$). Let us define the set of $\LM$ directions
An important question arises regarding whether our definition of \LM (Definition~\ref{def loc opt}) encompasses all possible convergence paths of the attention mechanism when $\tn{\pb}\rightarrow\infty$. To address this, we introduce the set of $\LM$ directions as follows: 
\[
\Pc^{\tsc{mm}}:=\left\{\frac{\ps(\bal)}{\tn{\ps(\bal)}}~\big|~\bal~\text{is locally-optimal per Definition \ref{def loc opt}}\right\}.
\]
The following theorem establishes the tightness of these directions: It demonstrates that for any candidate  $\qb\not\in\Pc^{\tsc{mm}}$, its local regularization path  within an arbitrarily small neighborhood will provably not converge in the direction of $\qb$. 
\begin{theorem}\label{non-LOMM reg path}
Fix $\qb\not\in\Pc^{\tsc{mm}}$ with unit $\ell_2$ norm. Assume that token scores are distinct (namely~$\bgam_{it}\neq\bgam_{i\tau}$ for $t\neq \tau$) and key embeddings $\kb_{it}$ are in general position (see Theorem~ \ref{tightest reg path}). Fix arbitrary $\eps>0,R_0>0$.
%Assume that the token scores are distinct (i.e., $\bgam_{it}\neq\bgam_{i\tau}$ for all $t\neq \tau$) and the following conditions hold \footnote{This requirement holds for general data because it is guaranteed by adding arbitrarily small independent gaussian perturbations to keys $\kb_{it}$.}:
%\vspace{-.25cm}
%\begin{enumerate}[label={\textnormal{\textbf{A\arabic*.}}}, wide, labelwidth=!,itemindent=!, labelindent=0pt]
%\item \label{local:RP:cond1} When $m=d$, all matrices $\Kbb\in\R^{T\times d}$ where each row of $\Kbb$ has the form $\kb_{it}-\kb_{i\alpha_i}$ for a unique $(i,\alpha_i,t\neq \alpha_i)$ tuple, are full-rank.
%\vspace{-.25cm}
%\item \label{local:RP:cond2} When $m=d+1$, the vector of all ones is not in the range space of any such $\Kbb$ matrix.
%\end{enumerate} 
%\vspace{-.2cm}
%Fix $\eps,R_0>0$.
 Define the local regularization path of $\qb$ as its $(\eps,R_0)$-conic neighborhood:
\begin{equation}\label{eqn:local:RP}
\pbb(R)=\underset{\pb\in \Cc_{\epsilon,R_0}(\qb),  \|\pb\| \leq R}{\arg\min}\Lc(\pb),~~~\text{where}~~~\Cc_{\epsilon,R_0}(\qb)= \cone_{\eps}(\qb) \cap \left\{\pb \in \R^d\big|~\|\pb\|\geq R_0\right\}.
\end{equation}
Then,  either $\lim_{R\rightarrow\infty}\tn{\pbb(R)}<\infty$ or $\underset{R\rightarrow\infty}{\lim} \pbb(R)/\tn{\pbb(R)}\neq \qb$. In both scenarios $\underset{R\rightarrow\infty}{\lim} \pbb(R)/R\neq \qb$.
\end{theorem}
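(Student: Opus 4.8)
The second assertion is immediate from the dichotomy: if $\tn{\pbb(R)}$ stays bounded then $\pbb(R)/R\to\boldsymbol 0\ne\qb$, whereas $\pbb(R)/R\to\qb$ would force $\tn{\pbb(R)}/R\to 1$ and hence $\pbb(R)/\tn{\pbb(R)}\to\qb$. So it suffices to show that $\tn{\pbb(R)}\to\infty$ forces $\pbb(R)/\tn{\pbb(R)}\not\to\qb$. Assume towards a contradiction that $\tn{\pbb(R)}\to\infty$ and $\pbb(R)/\tn{\pbb(R)}\to\qb$. Since $\qb$ has correlation $1>1-\eps$ with itself and the constraint $\tn{\pb}\ge R_0$ is slack, for all large $R$ the minimizer $\pbb(R)$ lies in the interior of $\Cc_{\eps,R_0}(\qb)$; hence it minimizes $\Lc$ over the ball $\{\tn{\pb}\le R\}$ only, and first-order optimality gives $\nabla\Lc(\pbb(R))=\boldsymbol 0$ or $-\nabla\Lc(\pbb(R))=\lambda_R\pbb(R)$ for some $\lambda_R\ge 0$. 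Either way $-\nabla\Lc(\pbb(R))$ is a nonnegative radial multiple of $\pbb(R)$, a vector whose direction tends to $\qb$. The strategy is to compute the asymptotic \emph{direction} of $-\nabla\Lc(\pbb(R))$ and show it is incompatible with $\qb$ unless $\qb\in\Pc^{\tsc{mm}}$.

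Attach to $\qb$ the selected-token sets $\Sc_i(\qb)=\arg\max_{t\in[T]}\kb_{it}^\top\qb$. Reusing the softmax estimates behind Step~1 of Theorem~\ref{thm:local:gd}, for $\pb$ of large norm with $\pb/\tn{\pb}$ near $\qb$ the softmax $\sft{\Kb_i\pb}$ concentrates on $\Sc_i(\qb)$, the off-support masses $\s_{it}$ decay like $\exp\!\bigl(-\tn{\pb}\,(\max_\tau\kb_{i\tau}^\top\tfrac{\pb}{\tn{\pb}}-\kb_{it}^\top\tfrac{\pb}{\tn{\pb}})\bigr)$, and, writing $\Tc_i(\pb)$ for the tokens realizing the smallest such gap at input $i$,
\begin{equation*}
-\nabla\Lc(\pb)=\frac1n\sum_{i=1}^n\bigl(-\ell'(g_i)\bigr)\sum_{t\in\Tc_i(\pb)}(\bgam_{i\alpha_i}-\bgam_{it})\,\s_{it}\,(\kb_{i\alpha_i}-\kb_{it})+(\text{strictly lower order}),
\end{equation*}
where $g_i=\sum_t\bgam_{it}\s_{it}$ and $-\ell'(g_i)>0$. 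I would prove this expansion uniformly over $\Cc_{\eps,R_0}(\qb)$, carefully tracking which inputs and tokens attain the minimal gap, since the active sets $\Tc_i(\pb)$ can vary within the cone.

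With the expansion in hand, the contradiction is derived in two stages. First, if some $\Sc_{i_0}(\qb)$ contains two tokens $\alpha,\beta$ (necessarily with $\bgam_{i_0\alpha}\ne\bgam_{i_0\beta}$ by the distinct-scores hypothesis), then $\kb_{i_0\alpha}-\kb_{i_0\beta}\perp\qb$, and the $i_0$-contribution to $-\nabla\Lc(\pbb(R))$ points along this vector while being comparable to or dominating the contributions of the strictly-separated inputs (whose gaps are $\Theta(\tn{\pbb(R)})$, whereas the $i_0$-gap vanishes along the ray direction); by the general-position hypothesis it is not cancelled, so $-\nabla\Lc(\pbb(R))$ has a nonvanishing component orthogonal to $\qb$, contradicting that it is a radial multiple of $\pbb(R)$. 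Hence each $\Sc_i(\qb)=\{\alpha_i\}$ is a singleton, so $\bal=(\alpha_i)_{i=1}^n$ separates the tokens with positive margin and \eqref{attnsvm} is feasible. Now radiality forces $\qb$ to be a positive multiple of $\sum_i c_i\sum_{t\in\Tc_i(\qb)}(\bgam_{i\alpha_i}-\bgam_{it})(\kb_{i\alpha_i}-\kb_{it})$ with $c_i>0$; matching this against the KKT conditions of \eqref{attnsvm} and using general position (linear independence of the active differences, uniqueness of the max-margin direction) shows this is possible only if every active gap $\bgam_{i\alpha_i}-\bgam_{it}$ is positive — i.e. $\bal$ is locally optimal per Definition~\ref{def loc opt} — and $\qb=\ps(\bal)/\tn{\ps(\bal)}$. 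Equivalently, whenever $\bal$ is not locally optimal one perturbs $\qb$ inside $\cone_{\eps}(\qb)$ to shrink the margin at an offending constraint with $\bgam_{i_0 t_0}>\bgam_{i_0\alpha_{i_0}}$ while enlarging the rest, obtaining at norm $R$ a strictly smaller loss than $\Lc(\pbb(R))$; and whenever $\qb\ne\ps(\bal)/\tn{\ps(\bal)}$ a small move of $\qb$ toward $\ps(\bal)$ inside the cone strictly enlarges the margin of the (now locally optimal) $\bal$ and hence strictly decreases the loss for large $R$ — both contradicting optimality of $\pbb(R)$. In every case $\qb=\ps(\bal)/\tn{\ps(\bal)}$ with $\bal$ locally optimal, so $\qb\in\Pc^{\tsc{mm}}$, the desired contradiction.

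The main obstacle is the second step: establishing the asymptotic expansion of $\nabla\Lc$ \emph{uniformly} over directions in $\Cc_{\eps,R_0}(\qb)$ — especially handling inputs or tokens whose gap is nearly minimal, so that the active set $\Tc_i(\pb)$ is unstable — and then pinning down the signs of the coefficients in the limiting combination. This is exactly where the general-position hypothesis does the work: it guarantees that the active differences $\{\kb_{i\alpha_i}-\kb_{it}\}$ are linearly independent (so the coefficients, hence their signs, are determined) and that the max-margin direction of each selection is unique (so that local margin-maximality inside the cone upgrades to equality with $\ps(\bal)$). A minor point handled in passing: $\pbb(R)$ need not lie on the sphere $\tn{\pb}=R$, but if it lies in the interior then $\nabla\Lc(\pbb(R))=\boldsymbol 0$, which the displayed expansion excludes since the combination above is nonzero by distinctness of the scores.
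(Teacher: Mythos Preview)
Your proposal braids two arguments: a first-order/KKT-matching route (genuinely different from the paper) and, under ``Equivalently,'' perturbation constructions that are exactly the paper's approach. The KKT route, as written, has a real gap. Radiality at $\pbb(R)$ says $\pbb(R)$ is parallel to $\sum_{(i,t)}(-\ell'_i)(\bgam_{i\alpha_i}-\bgam_{it})\,\s_{it}\,(\kb_{i\alpha_i}-\kb_{it})$, but the ratios of the weights $\s_{it}$ among the $\qb$-neighbors are \emph{not} determined by $\qb$: they depend on how $\pbb(R)/\tn{\pbb(R)}$ approaches $\qb$ (your ``active set is unstable'' obstacle is exactly this). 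Hence your limiting combination with a single $c_i>0$ per input and no $\s_{it}$ is only one of many possibilities, and radiality is not a condition on $\qb$ alone. Concretely, take $n=1$, $d=3$, three $\qb$-neighbor differences $(1,0,0),(0,1,0),(1,1,1)$ with score-gap signs $(+,+,-)$; then $\qb=(1,1,-1)/\sqrt3$ expands in this basis with coefficient signs $(+,+,-)$ matching the score gaps, so a sequence $\pbb(R)$ with suitably tuned $\s_{it}$-ratios is first-order consistent---yet here $\ps(\bal)/\tn{\ps(\bal)}=(1,1,0)/\sqrt2\ne\qb$ and $\qb\notin\Pc^{\tsc{mm}}$. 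First-order optimality alone does not exclude this direction.

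The paper proceeds by pure comparison, with no stationarity conditions: for each $\qb\notin\Pc^{\tsc{mm}}$ it constructs a competitor inside $\cone_\eps(\qb)$ that beats an entire $\delta$-neighborhood of $\qb$, via a margin-comparison lemma (Lemma~\ref{lem dominate}): between two neighbor-optimal unit directions sharing selected tokens, the larger-margin one has strictly smaller loss along rays. The cases are precisely your ``Equivalently'' sketches made explicit. When the selected $\bal$ is neighbor-optimal but $\qb\ne\ps(\bal)/\tn{\ps(\bal)}$: nudge $\qb$ toward $\ps(\bal)$ to strictly enlarge the directional margin. When $\qb$ is not neighbor-optimal: using general position, set $\qb_0=\qb+\eps_0\Db^\dagger\onebb_+$ (where $\Db$ stacks the $\qb$-neighbor differences and $\onebb_+$ flags the higher-score ones) so that only offending neighbors remain directional-neighbors of $\qb_0$; a direct estimate then gives $\Lc(R\qb_0)<\Lc_\st$, forcing $\tn{\pbb(R)}$ to stay bounded. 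When $\qb$ selects multiple tokens for some input: break ties toward the highest-score one via another pseudo-inverse perturbation and reduce to a margin comparison. (Your orthogonal-gradient handling of this last case looks correct and is a nice alternative; it is not the paper's route.) In short, the constructions are where the proof actually lives; the KKT shortcut does not close on its own.
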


The result above nicely complements Theorem \ref{thm:local:gd}, which states that when gradient descent is initialized above a threshold ($\tn{\pb(0)}\geq R_0$) in an \LM direction, $\tn{\pb(t)}$ diverges but the direction converges to \LM. In contrast, Theorem \ref{non-LOMM reg path} shows that regardless of how small the cone is (in terms of angle and norm lower bound $\tn{\pb}\geq R_0$), the optimal solution path will not converge along $\qb\not\in\Pc^{\tsc{mm}}$.

\ifarxiv
To illustrate Theorems \ref{thm:local:gd} \& \ref{non-LOMM reg path}, we have investigated the convergence behavior of $\pb(t)$ generated by gradient descent in Figure~\ref{fig:prob diff d}, using $n=4$, $T=6$, and conducted 1,000 random trials for varying $d\in\{2,5,10,100,300,500\}$. These experiments use normalized gradient descent with learning rate 1 for 1000 iterations. Inputs $\x_{it}$ and the linear head $\vb$ are uniformly sampled from the unit sphere, while $Y_i$ is uniformly $\pm1$, and $\W$ is set to $\Iden$.

\begin{figure}[t]   
  \centering
  % \hspace{-10pt}
   \hspace{-12pt}
  %\caption{Convergence to global and local optima}
  % \hspace{-5pt}
  \subfigure[Perc. of different convergence scenarios for $\pb(t)$ ]{
    \begin{tikzpicture}
      \node at (0,0) {\includegraphics[height=.3\columnwidth, trim={0 0.5cm 50mm 0}]{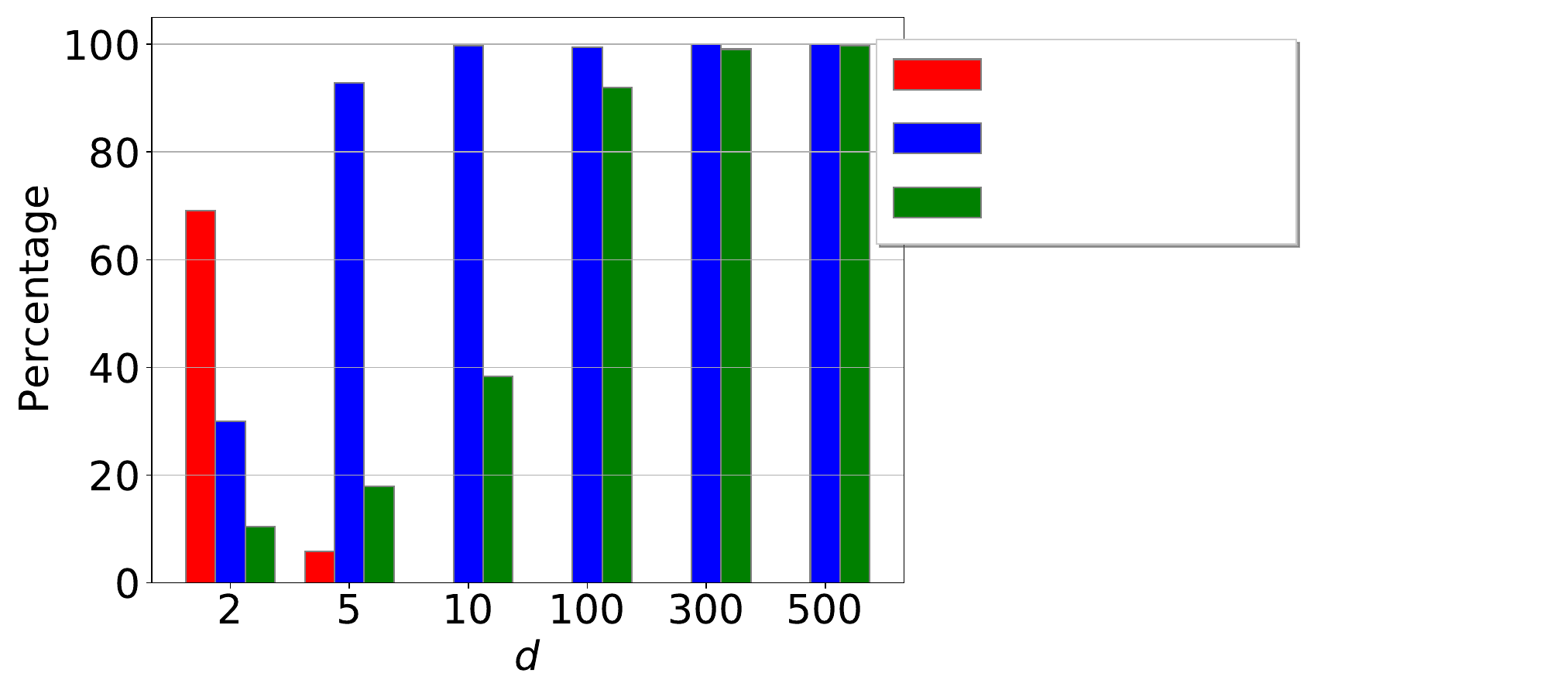}};
%      \node at (2.92,1.57) {\footnotesize{$\sftx(\Kb\pb(t))_{\alpha}<1$}};
%      \node at (2.76,1.17) {\small{$\pb(t)\to\ps$}};
%      \node at (2.84,0.77) {\small{$\pb(t)\to\pso$}};
      \node at (2.92,1.6) {\footnotesize{$\sftx(\Kb\pb(t))_{\alpha}<1$}};
      \node at (2.76,1.23) {\footnotesize{$\pb(t)\to\ps$}};
      \node at (2.84,0.86) {\footnotesize{$\pb(t)\to\pso$}};
    \end{tikzpicture}
    \label{fig:prob diff d}
  }
  \hspace{-20pt}
  \subfigure[Prob. of $\underline{\gamma}:=\min_{i\in[n],t\in\Tc_i}\bgam_{i\alpha_i}-\bgam_{it}$]{
    \begin{tikzpicture}
      \node at (0,0) {\includegraphics[height=.295\columnwidth,trim={0 0.5cm 0 0}]{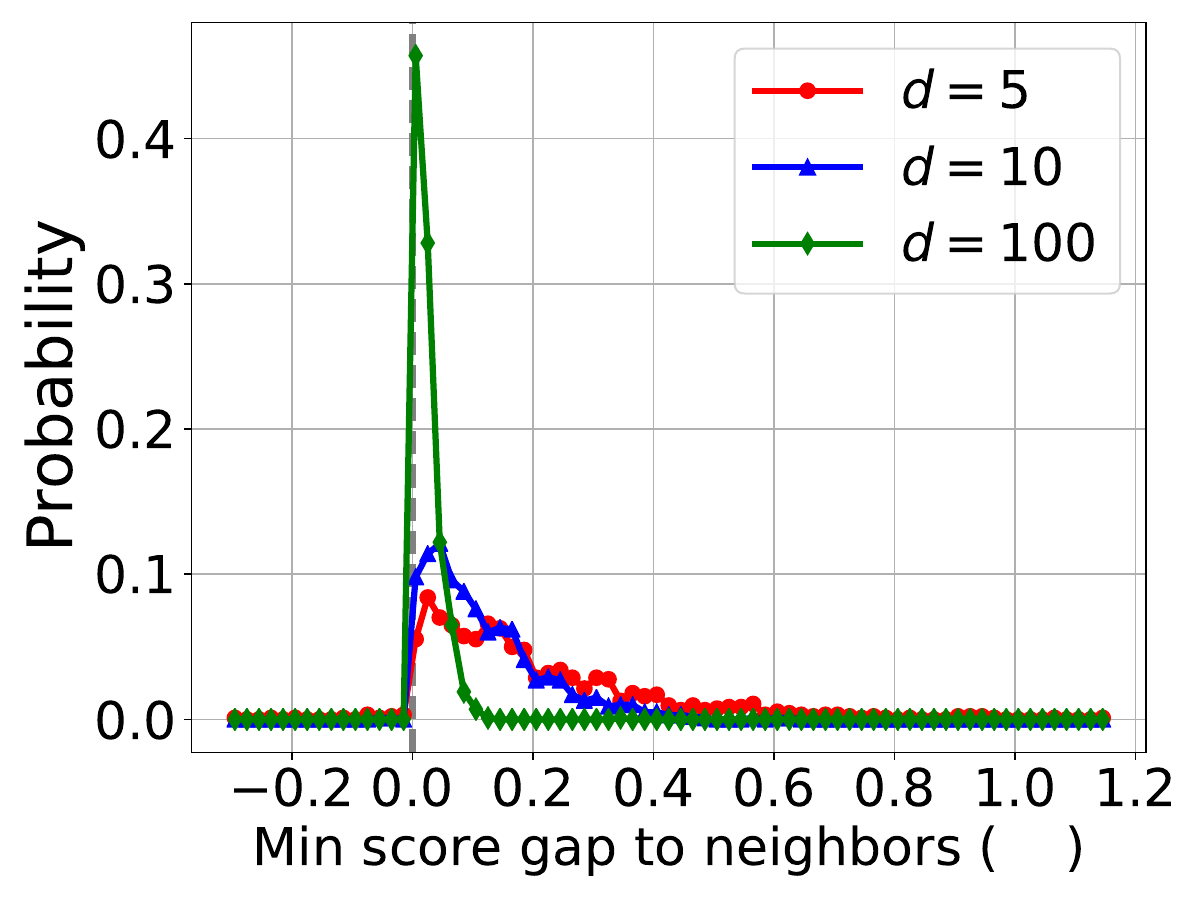}};
%      \node at (2.17,-1.97) {$\underline{\gamma}$};
      \node at (2.05,-1.97) {$\underline{\gamma}$};
    \end{tikzpicture}
    \label{fig:prob score gap}
  }
\vspace{-2mm}
%\caption{Convergence behavior of attention weight $\pb$ when trained with random data using gradient descent. In \textbf{(a)}, we report the probabilities of three scenarios: 1) attention failling to select one token per input; 2) $\pb$ converging in direction to $\ps$ and 3) $\ps$ equating to $\pso$ in red, blue and green bars, respectively. Considerring the instances when $\pb$ successfully selects some locally-optimal tokens $(\alpha_i)_{i=1}^n$, \textbf{(b)} presents the probabilities of the minimal score gap between $\alpha_i$ and its corresponding neighbors $\Tc_i$.}
\caption{Convergence analysis of $\pb(t)$ trained with random data using gradient descent. \textbf{(a)} shows three scenarios: \textcolor{darkred}{(1)} attention failing to select one token per input (i.e.~softmax is not saturated); \textcolor{darkblue}{(2)} $\pb$ converging towards $\ps$; and \textcolor{darkgreen}{(3)} $\ps$ equating to $\pso$ with red, blue, and green bars, respectively. Considering saturated softmax instances where $\pb(t)$ selects one token $\alpha_i$ per-input, \textbf{(b)} presents histogram of the minimal score gap between $\alpha_i$ and its corresponding neighbors $\Tc_i$.}
  \label{fig:general}\vspace{-10pt}
  \end{figure}

% (i.e., $\sftx(\Kb\pb)_{\alpha_i}<1-10^{-5}$ for all $i \in [n]$)
%The red bars show the percentage of trials where gradient . 
% non-saturated softmax is a superset of proxy for the problems where the final parameter norm is finite. gradient descent solution is  for 
The bar plot in Figure~\ref{fig:prob diff d} distinguishes between \textit{non-saturated} softmax (red bars) and \textit{saturated} softmax (other bars). Saturation is defined as average softmax probability over tokens selected by gradient descent are at least $1-10^{-5}$ and implies that attention selects one token per input. Note that, whenever the norm of gradient descent solution is finite, softmax will be non-saturated. For small $d$ (e.g., $d=2$), problem has small degrees of freedom to separate optimal tokens from the rest (i.e.~no SVM solution for \LM directions) -- especially due to label randomness. This results in a tall red bar capturing the finite-norm solutions. However, for larger $d$, we observe that softmax saturates (i.e.~$\tn{\pb(t)}\to\infty$) and we observe that the selected tokens $\bal$ almost always converges to an \LM direction (blue bar) -- this is in line with Theorems \ref{thm:local:gd} \& \ref{non-LOMM reg path}. We also study the convergence to the globally-optimal \GM which is represented by the green bar: \GM is a strict subset of \LM however as $d$ increases, we observe that the probability of \GM convergence increases as well. This behavior is in line with what one would expect from over-parameterized deep learning theory \cite{du2019gradient,jacot2018neural,oymak2020toward,allen2019convergence} and motivates future research. The average correlation coefficient between $\pb(t)$ and its associated \LM/\GM direction is $0.997$, suggesting that, whenever softmax saturates, gradient descent indeed directionally converges to a \LM solution $\pb \in\Pc^{\tsc{mm}}$, confirming Theorem~\ref{non-LOMM reg path}.
%The  Among and \GM (blue and green, respectively). 

Furthermore, we found that there exist problem instances, with saturated softmax and $\tn{\pb(t)}\to\infty$, that do not converge to either \LM or \GM. We analyzed this phenomenon using the minimum score gap, $\underline{\gamma}:=\min_{i\in[n],t\in\Tc_i}\bgam_{i\alpha_i}-\bgam_{it}$, where $\Tc_i,i\in[n]$, represents the sets of \nei tokens. Figure~\ref{fig:prob score gap} provides the probability distribution of $\underline{\gamma}$ (with bins of width $<0.01$) and demonstrates the rarity of such cases. Specifically, we found this happens less than 1\% of the problems, that is, $\text{Prob}(\underline{\gamma}<0)<0.01$. Figure~\ref{fig:prob score gap} also reveals that, in these scenarios, even if $\underline{\gamma}<0$, it is typically close to zero i.e.~even if there exists a \nei with a higher score, it is only slightly so. This is not surprising since when token scores are close, we need a large number of gradient iterations to distinguish them. For all practical purposes, the optimization will treat both tokens equally and rather than solving \eqref{attnsvm}, the more refined formulation \eqref{svm} developed in Section \ref{sec nonlin} will be a better proxy. Confirming this intuition, we have verified that, over the instances $\underline{\gamma}<0$, gradient descent solution is still $>0.99$ correlated with the max-margin solution in average. Additional details are provided in the appendix.
\else
\if

\section{Joint Convergence of Head $\vba$ and Attention Weights $\pba$}\label{sec joint}\vspace{-5pt}
%\textbf{Joint $\vb,\pb$ convergence:} 

In this section, we extend the preceding results to the general case of joint optimization of head $\vb$ and attention weights $\pb$ using a logistic loss function. To this aim, we focus on regularization path analysis, which involves solving \eqref{lin det losss} under ridge constraints and examining the solution trajectory as the constraints are relaxed. \vspace{-2pt}

\noindent\textbf{High-level intuition.} Since the prediction is linear as a function of $\vb$, logistic regression in $\vb$ can exhibit its own implicit bias to a max-margin solution. Concretely, define the attention features $\x_i^\pb=\X_i^\top\sft{\Kb_i\pb}$ and define the dataset $\Dc^{\pb}=(Y_i,\x_i^\pb)_{i=1}^n$. If this dataset $\Dc^{\pb}$ is linearly separable, then fixing $\pb$ and optimizing only $\vb$ will converge in the direction of the standard max-margin classifier\vspace{1pt}
\begin{align}
\vs=\arg\min_{\vb\in\R^d}\tn{\vb}\quad\text{subject to}\quad Y_i\cdot\vb^\top\rb_i\geq 1,~~\text{for all}\quad 1\leq i\leq n, \label{svm-label2}\tag{SVM}
\end{align}
after setting inputs to the attention features $\rb_i\gets \x^\pb_i$ \cite{soudry2018implicit}. This motivates a clear question:\vspace{3pt}
\\
\vspace{3pt}
\emph{Under what conditions, optimizing $\vb,\pb$ jointly will converge to their respective max-margin solutions?} 
\\
We study this question in two steps. Loosely speaking: \textbf{(1)} We will first assume that, at the optimal tokens $\x_{i\alpha_i},i\in[n]$ selected by $\pb$, when solving \eqref{svm-label2} with $\rb_i\gets \x_{i\alpha_i}$, all of these tokens become support vectors of \eqref{svm-label2}. \textbf{(2)} We will then relax this condition to uncover a more general implicit bias for $\pb$ that distinguish support vs non-support vectors. Throughout, we assume that the joint problem is separable and there exists $(\vb,\pb)$ asymptotically achieving zero training loss.
\vspace{-5pt}
\subsection{When all attention features are support vectors}\vspace{-3pt}
%We will explore the joint implicit bias of $(\vb,\pb)$ when both contexts and labels are separable and optimizing $(\vb,\pb)$ can make logistic loss arbitrarily small. Here,
%To keep notation concise, we assume first token of each input is the optimal choice. The assumption below formalizes this: We assume that, without losing generality,

%The choice of first tokens is for notational convenience and without losing generality. 
In \eqref{svm-label2}, define \emph{label margin} to be $1/\tn{\vs}$. Our first insight in quantifying the joint implicit bias is that, \textbf{optimal tokens} admit a natural definition: \textit{Those that maximize the downstream label margin when selected.} This is formalized below where we assume that: (1) Selecting the token indices $\bal=(\alpha_i)_{i=1}^n$ from each input data achieves the largest label margin. (2) The optimality of the $\bal$ choice is strict in the sense that mixing other tokens will shrink the label margin in \eqref{svm-label2}.\vspace{-2pt}
%assumption in quantifying joint implicit bias is that, if $(\x_i^\pb)_{i=1}^n$ contain irrelevant tokens, then, the label margin of \eqref{svm-label2} strictly shrinks. 
\begin{assumption}[Optimal Tokens] \label{label margin ass} Let $\Gamma>0$ be the label margin when solving \eqref{svm-label2} with $\rb_i\gets \x_{i\alpha_i}$. There exists $\nu>0$ such that for all $\pb$, solving \eqref{svm-label2} with $\rb_i\gets \x_i^\pb$ results in a label margin of at most $\Gamma-\nu\cdot \max_{i\in[n]} (1-\s_{i\alpha_i})$ where $\s_i=\sft{\Kb_i\pb}$. 
\end{assumption}\vspace{-2pt}
%\begin{assumption} \label{label margin ass} Let $\Gamma>0$ be the label margin when solving \eqref{svm-label2} with $\rb_i\gets \x_{i\alpha_i}$. There exists $\nu>0$ such that for all $\pb$, solving \eqref{svm-label2} with $\rb_i\gets \x_i^\pb$ results in a label margin of at most $\Gamma-\nu\cdot \max_{i\in[n]} (1-\s_{i\alpha_i})$ where $\s_i=\sft{\Kb_i\pb}$. \end{assumption}

%\SO{Is above margin-shrinkage assumption implied by the statement: $\Gamma$ is the unique maximum achievable margin over all $\pb$ (and associated $\rb_i\gets\x_i^\pb$) choices. Essentially, how will the margin shrink in terms of $\max_{i\in[n]}(1-\s_{i\alpha_i})$?}
%\red{\begin{assumption} \label{margin ass better} Let $\Gamma>0$ be the label margin when solving \eqref{svm-label2} with $\rb_i\gets \x_{i\alpha_i}$. $\Gamma$ is the unique maximum margin over all $\pb$ (and associated $\s_i=\sft{\Kb_i\pb}$) choices.\end{assumption}}
%There exists $\nu>0$ such that for all $\pb$, solving \eqref{svm-label2} with $\rb_i\gets \x_i^\pb$ results in a label margin of at most $\Gamma-\nu\cdot \max_{i\in[n]} (1-\s_{i\alpha_i})$ where $\s_i=\sft{\Kb_i\pb}$. 
%A dataset that obeys Theorem \ref{lin det thm} is as follows when we set $\vb=\vstar$. 
%Note that, our choice of first tokens is for notational convenience and without losing generality. 
\noindent\textbf{Example:} To gain intuition, let us fix $\ab\in\R^d$ and consider the dataset obeying $\x_{i1}=Y_i\cdot \ab$ and $\tn{\x_{it}}<\tn{\ab}$ for all $t\geq 2$ and all $i\in[n]$. For this dataset, we can choose $\alpha_i=1$, $\vs=\ab/\tn{\ab}^2$, $\Gamma=1/\tn{\vs}=\tn{\ab}$ and $\nu=\tn{\ab}-\sup_{i\in[n],t\geq 2}\tn{\x_{it}}$.
% maximize downstream margin

\begin{theorem}\label{thm:path:joint:vp} Consider the ridge-constrained solutions $(\vb_r,\pb_R)$ of \eqref{lin det losss} defined as
\[
(\vb_r,\pb_R)=\underset{\tn{\vb}\leq r,\tn{\pb}\leq R}{\arg\min}\Lc(\vb,\pb).
\]
Suppose there are token indices $\bal=(\alpha_i)_{i=1}^n$ for which $\tn{\ps(\bal)}$ exists (\ref{attnsvm} is feasible) and Assumption \ref{label margin ass} holds for some $\Gamma,\nu>0$. Then,   $\lim_{R\rightarrow\infty} \pb_R/R = \ps/\tn{\ps}$, where $\ps$ is the solution of \eqref{attnsvm}; and  $\lim_{r\rightarrow\infty}\vb_{r}/r=\vs/\tn{\vs}$, where $\vs$ is the solution of \eqref{svm-label2} with $\rb_i=\x_{i\alpha_i}$.
%\begin{itemize}
%\item   $\lim_{R\rightarrow\infty} \pb_R/R = \ps/\tn{\ps}$, where $\ps$ is the solution of \eqref{attnsvm}; and
%\item $\lim_{r\rightarrow\infty}\vb_{r}/r=\vs/\tn{\vs}$, where $\vs$ is the solution of \eqref{svm-label2} with $\rb_i=\x_{i\alpha_i}$.
%\end{itemize}
\end{theorem}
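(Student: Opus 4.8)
The plan is to bracket the optimal value $\Lc(\vb_r,\pb_R)$ between a feasible upper bound — obtained by plugging the two conjectured directions into the loss — and a lower bound driven by the downstream label margin together with Assumption~\ref{label margin ass}; comparing the two forces the softmax induced by $\pb_R$ to concentrate on the token selection $\bal$, after which the variables asymptotically decouple into a logistic-regression implicit-bias problem for $\vb$ and a fixed-head regularization-path problem for $\pb$. Write $\bar\vb=\vs/\tn{\vs}$, $\bar\pb=\ps/\tn{\ps}$, $\Gamma=1/\tn{\vs}$, $\xi=1/\tn{\ps}$, $M=\max_{i,t}\tn{\x_{it}}$, and $\s_i=\sft{\Kb_i\pb_R}$.

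\textbf{Step 1 (feasible upper bound).} Evaluate $\Lc$ at $(r\bar\vb,R\bar\pb)$. The \eqref{attnsvm} constraints give $\sft{\Kb_iR\bar\pb}_t\le e^{-R\xi}$ for $t\ne\alpha_i$, hence $\tn{\x_i^{R\bar\pb}-\x_{i\alpha_i}}\le 2M(T-1)e^{-R\xi}$, while the \eqref{svm-label2} constraints give $Y_i\bar\vb^\top\x_{i\alpha_i}\ge\Gamma$. Therefore $Y_i(r\bar\vb)^\top\x_i^{R\bar\pb}\ge r\Gamma-2M(T-1)re^{-R\xi}$, and since $\ell(u)\le e^{-u}$,
\[
\Lc(\vb_r,\pb_R)\ \le\ \Lc(r\bar\vb,R\bar\pb)\ \le\ \exp\!\big(-r\Gamma+2M(T-1)\,re^{-R\xi}\big).
\]

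\textbf{Step 2 (softmax of $\pb_R$ saturates on $\bal$).} For any $(\vb,\pb)$ with $\tn{\vb}\le r$, letting $\gamma(\pb)$ be the label margin of the attention features $\{\x_i^{\pb}\}$ one has $\min_i Y_i\vb^\top\x_i^{\pb}\le r\gamma(\pb)$, so $\Lc(\vb,\pb)\ge\tfrac1n\ell(r\gamma(\pb))\ge\tfrac1{2n}e^{-r\gamma(\pb)}$ (using $\ell(u)\ge\tfrac12e^{-u}$ for $u\ge0$; non-separability of $\{\x_i^{\pb}\}$ makes $\Lc$ bounded below, contradicting Step~1 for large $r$, so separability is automatic). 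Assumption~\ref{label margin ass} gives $\gamma(\pb_R)\le\Gamma-\nu\max_i(1-\s_{i\alpha_i})$; chaining with Step~1 and taking logarithms,
\[
\max_{i\in[n]}\big(1-\s_{i\alpha_i}\big)\ \le\ \frac{\log(2n)+2M(T-1)\,r e^{-R\xi}}{r\,\nu}\ \longrightarrow\ 0\quad\text{as }r,R\to\infty .
\]
Consequently $\x_i^{\pb_R}\to\x_{i\alpha_i}$ uniformly in $i$.

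\textbf{Step 3 (direction of $\vb_r$).} Since $\tn{\vb_r}\le r$ and $\tn{\x_i^{\pb_R}-\x_{i\alpha_i}}=o(1)$, we get $\min_i Y_i\vb_r^\top\x_{i\alpha_i}=\min_i Y_i\vb_r^\top\x_i^{\pb_R}+o(r)$; inserting the lower bound $\Lc(\vb_r,\pb_R)\ge\tfrac1{2n}e^{-\min_i Y_i\vb_r^\top\x_i^{\pb_R}}$ into Step~1 yields $\min_i Y_i\vb_r^\top\x_{i\alpha_i}\ge r\Gamma-o(r)$. Thus $\vb_r/\tn{\vb_r}$ attains margin $\Gamma-o(1)$ on the separable dataset $(Y_i,\x_{i\alpha_i})_{i=1}^n$, whose max-margin solution $\vs$ is unique (strictly convex QP); by compactness of the unit ball and continuity of $\ub\mapsto\min_i Y_i\ub^\top\x_{i\alpha_i}$ this forces $\vb_r/\tn{\vb_r}\to\bar\vb$, and the attained margin forces $\tn{\vb_r}/r\to1$, so $\vb_r/r\to\vs/\tn{\vs}$.

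\textbf{Step 4 (direction of $\pb_R$) and the main obstacle.} By Step~3, $\vb_r=r\bar\vb+o(r)$; Assumption~\ref{label margin ass} moreover forces $\bal$ to be (weakly) the highest-scoring tokens for $\bar\vb$, so for large $r$ the index $\alpha_i$ is the top-scoring token of $\vb_r$. Freezing this near-optimal head, $\pb_R$ minimizes $\Lc(\vb_r,\cdot)$ over $\tn{\pb}\le R$, which is exactly the fixed-head regularization path of Section~\ref{linear head sec}: to leading exponential order in $R$ the $\pb$-dependent part of the loss is governed by $\max_i(1-\s_{i\alpha_i})\asymp\exp\!\big(-\tn{\pb}\cdot\min_{i,\,t\ne\alpha_i}(\kb_{i\alpha_i}-\kb_{it})^\top(\pb/\tn{\pb})\big)$, which is minimized by the unique unit direction maximizing $\min_{i,t}(\kb_{i\alpha_i}-\kb_{it})^\top\qb$ — namely $\bar\pb$ — while the monotone benefit of saturation gives $\tn{\pb_R}/R\to1$; hence $\pb_R/R\to\ps/\tn{\ps}$. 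The principal technical obstacle is precisely this last step: transferring the fixed-head result (Theorem~\ref{lin det thm}, together with the no-stationary-point/gradient-lower-bound estimates behind Theorem~\ref{thm:local:gd}) to the joint setting where the head $\vb_r$ drifts with $r$ and the induced token scores may be only weakly separated. This is handled by a compactness/continuity argument combined with the quantitative estimates of Section~\ref{linear head sec}, with an analogous extra care required when the optimal-token set is non-unique.
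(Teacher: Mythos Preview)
Your Steps~1--3 are essentially the same as the paper's: the paper also compares the optimizer $(\vb_r,\pb_R)$ against the baseline $(r\bar\vb,R\bar\pb)$, upper-bounds the logistic loss of the latter exactly as you do, and lower-bounds the logistic loss of the former through the label margin of the attention features. In particular, the argument for $\vb_r/r\to\vs/\tn{\vs}$ in your Step~3 matches the paper's ``Case~2'' almost verbatim (once softmax saturates, $\vb_r$ achieves margin $\Gamma-o(1)$ on $(Y_i,\x_{i\alpha_i})$, and strong convexity of \eqref{svm-label2} pins down the direction).

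The gap is Step~4. Your plan there is to freeze the (nearly converged) head $\vb_r$ and invoke the fixed-head regularization-path result for $\pb$. This requires two things you do not have: (i) that $\bal$ are the top-scoring tokens under $\bar\vb$, and (ii) a strict score-gap hypothesis as in Theorem~\ref{lin det thm}. Neither follows from Assumption~\ref{label margin ass}; that assumption controls the \emph{label margin} of the attention features, not the per-token scores $Y_i\bar\vb^\top\x_{it}$, and it is easy to construct data where some $\x_{it}$ with $t\ne\alpha_i$ has $Y_i\bar\vb^\top\x_{it}\ge Y_i\bar\vb^\top\x_{i\alpha_i}$ while Assumption~\ref{label margin ass} still holds. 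You acknowledge this as the ``principal technical obstacle,'' but the proposed fix (``compactness/continuity combined with the quantitative estimates of Section~\ref{linear head sec}'') would need precisely the score separation that is unavailable.

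The paper avoids this detour entirely by running the same upper/lower comparison you used in Steps~1--2, but \emph{quantitatively in the rate}. Suppose $\tn{\pb_R/R-\bar\pb}\ge\delta$ along a subsequence. By strong convexity of \eqref{attnsvm}, the key-margin of $\pb_R/\tn{\pb_R}$ is at most $\xi(1-\gamma)$ for some $\gamma=\gamma(\delta)>0$, hence
\[
\max_{i\in[n]}\big(1-\s_{i\alpha_i}\big)\ \ge\ T^{-1}e^{-(1-\gamma)\xi R}.
\]
Now Assumption~\ref{label margin ass} (not the fixed-head theory) gives a label margin of at most $\Gamma-\nu T^{-1}e^{-(1-\gamma)\xi R}$ for the features $\{\x_i^{\pb_R}\}$, so
\[
\Lc(\vb_r,\pb_R)\ \ge\ \tfrac{1}{2n}\,\exp\!\big(-r\Gamma+r\,\nu T^{-1}e^{-(1-\gamma)\xi R}\big)\wedge\log 2.
\]
Comparing with your Step~1 upper bound $\exp(-r\Gamma+O(re^{-\xi R}))$, the exponent gap $r\big(\nu T^{-1}e^{-(1-\gamma)\xi R}-O(e^{-\xi R})\big)$ is positive for large $R$, contradicting optimality of $(\vb_r,\pb_R)$. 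This closes Step~4 without any reference to token scores or the fixed-head analysis.
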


As further discussion, consider Figure \ref{fig2:allsvm} where we set $n=3,T=d=2$ and $\W=\text{Identity}$. All three inputs share the point $(0,0)$ which corresponds to their non-optimal tokens. The optimal tokens (denoted by $\star$) are all support vectors of the \eqref{svm-label2} since $\vs=[0,1]$ is the optimal classifier direction (depicted by {\color{blue}-~-~-}). Because of this, $\ps$ will separate optimal $\star$ tokens from tokens at the $(0,0)$ coordinate via \eqref{attnsvm} and its direction is dictated by yellow and teal colored $\star$s which are the support vectors.
%Set $\bar{\Gamma}=\sup_{t\neq \alpha_i} \tn{\x_{ti}}<\Gamma$. 
% which results in the red direction where $\star$ tokens get selected by attention

    \begin{figure}[t]
    \centering
    % \hspace{-10pt}
    % \subfigure[The role of loss function]{
    %     \includegraphics[width=.33\columnwidth]{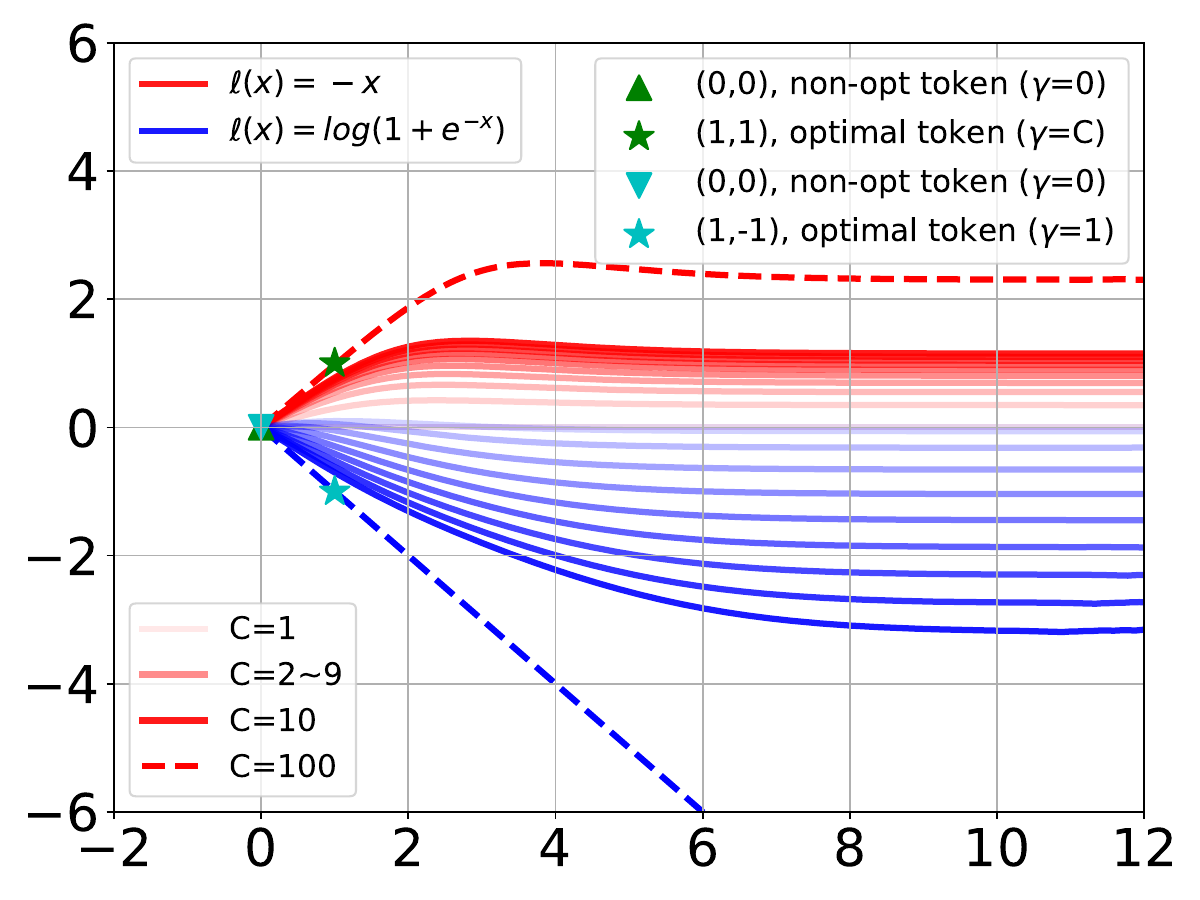}
    %     \label{fig2:loss:sen}
        
    % }%\caption{Global convergence}
    \hspace{-15pt}
    \subfigure[All inputs are support vectors]{
        \includegraphics[width=.32\columnwidth, trim={0 -0.5cm 0 0}]{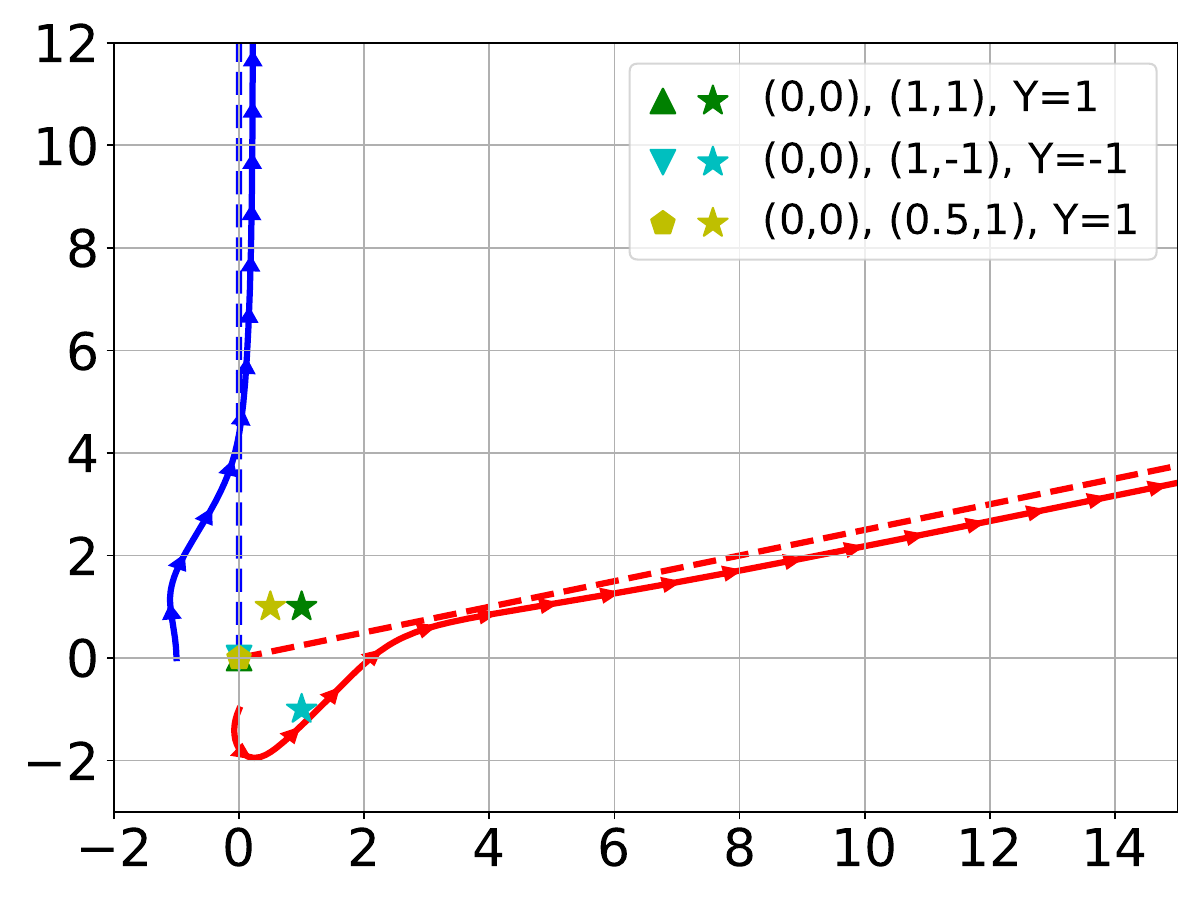}
        \label{fig2:allsvm}
    }
    %\caption{Convergence to global and local optima}
    \hspace{-5pt}
    \subfigure[(0.5,1.5) is not a support vector]{
        \includegraphics[width=.32\columnwidth, trim={0 -0.5cm 0 0}]{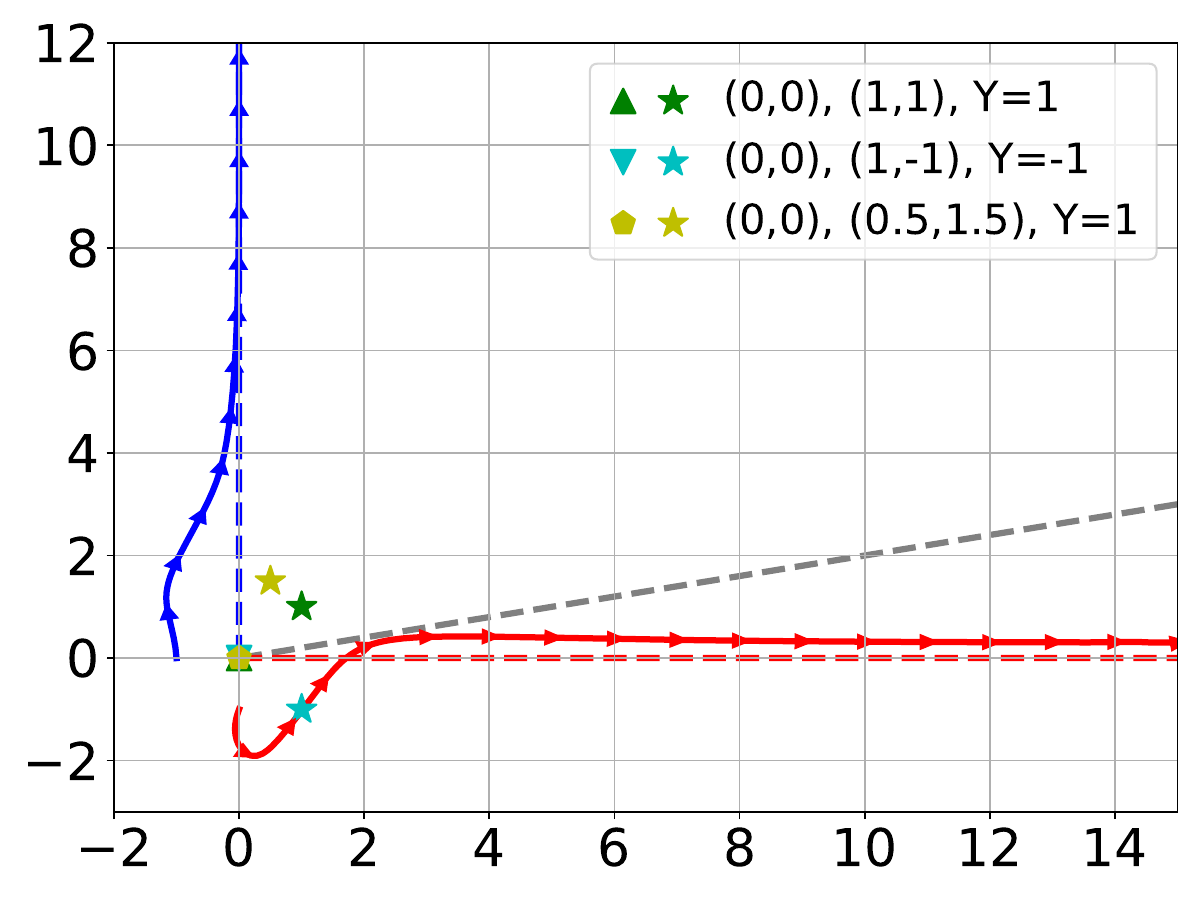}
        \label{fig2:nosvm}
    }
    \hspace{-10pt}
    \subfigure[Probability evolutions in \textbf{(a)}]{
        \includegraphics[width=.34\columnwidth, trim={0 .7cm 0 0}]{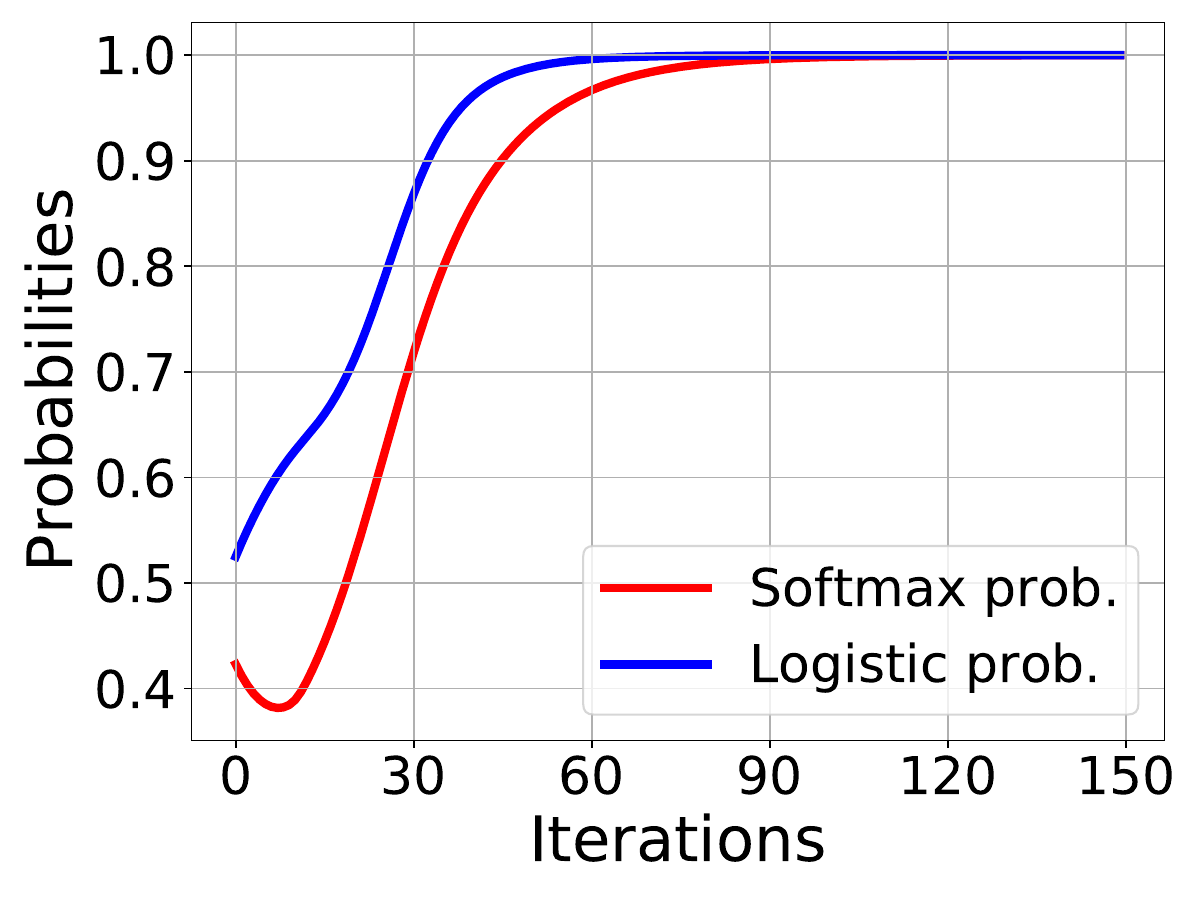}
        \label{fig2:prob}
    }
    \vspace{-2mm}
    \caption{
        % \textbf{(a)} Global convergence of $\pb$ with different loss functions and scores. 
        \textbf{(a)} and \textbf{(b)} Joint convergence of attention weights $\pb$ ({\color{red} \footnotesize{--->---}}) and classifier head $\vb$ ({\color{blue} \footnotesize{--->---}}) to max-margin directions. \textbf{(c)} Averaged softmax probability evolution of optimal tokens and logistic probability evolution of output in \textbf{(a)}.}
    \label{fig:joint_fig}\vspace{-8pt}
    \end{figure} 
%\input{sec/joint_fig.tex}
%\vspace{-5pt}

\subsection{General solution when selecting one token per input}
%\vspace{-3pt}

%, and without losing generality, again assume $\alpha_i=1$
Can we relax Assumption \ref{label margin ass}, and if so, what is the resulting behavior? Consider the scenario where the optimal $\pb$ diverges to $\infty$ and ends up selecting one token per input. Suppose this $\pb$ selects some coordinates $\bal=(\alpha_i)_{i=1}^n$. Let $\Sc\subset[n]$ be the set of indices where the associated token $\x_{i\alpha_i}$ is a support vector when solving \eqref{svm-label2}. Set $\Scc=[n]-\Sc$. Our intuition is as follows: Even if we slightly perturb this $\pb$ choice and mix other tokens $t\neq \alpha_i$ over the input set $\Scc\subset[n]$, since $\Scc$ is not support vector for \eqref{svm-label2}, we can preserve the label margin (by only preserving the support vectors $\Sc$). This means that $\pb$ may not have to enforce \emph{max-margin} constraint over inputs $i\in\Scc$, instead, it suffices to just select these tokens (asymptotically). This results in the following \textbf{relaxed SVM} problem:
\begin{align} 
\prr=\arg\min_{\pb} \tn{\pb}\quad\text{such that}\quad \pb^\top (\kb_{i\alpha_i}-\kb_{it})\geq \begin{cases} 1\quad\text{for all}\quad t\neq \alpha_i,~i\in\Sc\\ 0\quad\text{for all}\quad t\neq \alpha_i,~i\in\Scc\end{cases}.\label{relax svm}
\end{align}
%Here, $\pb^\top \x_{i\alpha_i}\geq 0$
Here, $\pb^\top (\kb_{i\alpha_i}-\kb_{it})\geq 0$ corresponds to the \emph{selection} idea. Building on this intuition, the following theorem captures the generalized behavior of the joint regularization path.
%for some indices $(\alpha_i)_{i=1}^n$, 
%$\x^{\pb_R}_i:=\X_i^\top\sft{\Kb_i\pb_R}\rightarrow \x_{i\alpha_i}$
%Let $\eb_j$ be the $j$'th standard basis vector. 
%\begin{theorem}\label{joint thm general} Consider the path of $(\vb_r,\pb_R)$ as $r,R\rightarrow\infty$ as in Theorem \ref{thm:path:joint:vp}. \red{Suppose $\sft{\Kb_i\pb_R}_{\alpha_i}\rightarrow 1$}, i.e., the tokens $(\alpha_i)_{i=1}^n$ are asymptotically selected. Then, $\vb_{r}/{r}\rightarrow \vs/\tn{\vs}$ where $\vs$ is the solution of \eqref{svm-label2} with $\rb_i=\x_{i\alpha_i}$, $\Scc$ is the set of non-support vectors for \eqref{svm-label2}, and $\frac{\pb_R}{R}\rightarrow\frac{\prr}{\tn{\prr}}$ where $\prr$ is the solution of \eqref{relax svm} with $\alpha_i$ choices.
%\end{theorem}
%\SO{Does $\sft{\Kb_i\pb_R}_{\alpha_i}\rightarrow 1$ above imply margin-shrinkage Assumption \ref{label margin ass} when we mix non-$\alpha_i$ tokens (over support vectors)? Otherwise, we need to add the sentence ``Suppose Assumption \ref{label margin ass} holds over the support vectors $\Sc$.''}
\begin{theorem}\label{joint thm general} Consider the same \eqref{lin det losss} problem as discussed in Theorem~\ref{thm:path:joint:vp}. {Suppose $\sft{\Kb_i\pb_R}_{\alpha_i}\rightarrow 1$}, i.e., the tokens $(\alpha_i)_{i=1}^n$ are asymptotically selected. Let $\vs$ be the solution of \eqref{svm-label2} with $\rb_i=\x_{i\alpha_i}$ and $\Sc$ be its set of support vector indices. Suppose Assumption \ref{label margin ass} holds over $\Sc$ i.e.~having $\s_{i\alpha_i}<1$ shrinks the margin when \eqref{svm-label2} is only solved over $\Sc\subset[n]$. Then, $ \lim_{r\to\infty}\vb_r/r= \vs/\tn{\vs}$ and $ \lim_{R\to\infty}\pb_R/R= \prr/\tn{\prr}$, where $\prr$ is the solution of \eqref{relax svm} with $(\alpha_i)_{i=1}^n$ choices.
\end{theorem}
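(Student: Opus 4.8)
The plan is to follow the template of Theorem~\ref{thm:path:joint:vp}, with \eqref{relax svm} playing the role of \eqref{attnsvm} and with the split $[n]=\Sc\cup\Scc$ handled explicitly. As in that proof, I would establish the two directional limits through three ingredients: (i) an upper bound showing $\Lc(\vb_r,\pb_R)\to 0$ along the path, obtained by evaluating $\Lc$ at a well-chosen candidate; (ii) lower bounds, extracted from the vanishing loss together with the hypothesis $\s_{i\alpha_i}=\sft{\Kb_i\pb_R}_{\alpha_i}\to 1$, showing that $\vb_r/r$ and $\pb_R/R$ are asymptotically feasible for \eqref{svm-label2} (on the selected features $\x_{i\alpha_i}$) and for \eqref{relax svm}, respectively; and (iii) uniqueness of the minimum-norm solutions of these SVMs, which pins the limiting directions. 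Throughout I keep the standing hypotheses of Theorem~\ref{thm:path:joint:vp} (in particular that $\ps(\bal)$ exists and the joint problem is separable) and assume $r,R\to\infty$ together.

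\textbf{Upper bound.} I would take $\bar{\vb}_r=r\,\vs/\tn{\vs}$ and $\bar{\pb}_R$ the norm-$R$ rescaling of $\prr/\tn{\prr}+\eps_R\,\ps(\bal)$ with $\eps_R\downarrow 0$ slowly (e.g.\ $\eps_R=R^{-1/2}$). The $\ps(\bal)$ term is a tie-breaker: since $\ps(\bal)^\top(\kb_{i\alpha_i}-\kb_{it})\ge 1$ for \emph{all} $t\neq\alpha_i$ and $i\in[n]$, every gap $\bar{\pb}_R^\top(\kb_{i\alpha_i}-\kb_{it})$ is $\Omega(R/\tn{\prr})$ for $i\in\Sc$ and $\Omega(R\eps_R)$ for $i\in\Scc$, so $\sft{\Kb_i\bar{\pb}_R}_{\alpha_i}\to 1$ on every input and $\X_i^\top\sft{\Kb_i\bar{\pb}_R}\to\x_{i\alpha_i}$. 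Hence $Y_i\bar{\vb}_r^\top(\X_i^\top\sft{\Kb_i\bar{\pb}_R})\to r\,Y_i(\vs/\tn{\vs})^\top\x_{i\alpha_i}$, which equals $r\Gamma$ for $i\in\Sc$ and is strictly larger than $r\Gamma$ for $i\in\Scc$ (those are not support vectors of \eqref{svm-label2}). Thus $\Lc(\bar{\vb}_r,\bar{\pb}_R)\le\ell\big(r\Gamma(1-o(1))\big)\to 0$, and a fortiori $\Lc(\vb_r,\pb_R)\to 0$; a standard argument (no finite stationary point in a separable, exponentially-tailed problem) then gives $\tn{\vb_r}=r$, $\tn{\pb_R}=R$ for large $r,R$.

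\textbf{Feasibility and uniqueness.} From $\Lc(\vb_r,\pb_R)\to 0$ each summand obeys $\ell(Y_i\vb_r^\top\x_i^{\pb_R})\le n\Lc(\vb_r,\pb_R)$, so $Y_i\vb_r^\top\x_i^{\pb_R}\ge r\Gamma(1-o(1))$; combining with $\tn{\x_i^{\pb_R}-\x_{i\alpha_i}}=\mc{O}(1-\s_{i\alpha_i})\to 0$ yields $Y_i(\vb_r/r)^\top\x_{i\alpha_i}\ge\Gamma-o(1)$, so $\vb_r/r$ (norm $\le 1$) is asymptotically feasible for \eqref{svm-label2} at the optimal margin $\Gamma=1/\tn{\vs}$; the unique unit-norm such vector is $\vs/\tn{\vs}$, hence $\vb_r/r\to\vs/\tn{\vs}$. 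For $\pb$, $\s_{i\alpha_i}\to 1$ forces $\pb_R^\top(\kb_{i\alpha_i}-\kb_{it})\ge\log\frac{1}{T(1-\s_{i\alpha_i})}\to+\infty$ for every $t\neq\alpha_i$, so any subsequential limit $\pb^{\infty}$ of $\pb_R/R$ (unit norm) satisfies all selection constraints $(\pb^{\infty})^\top(\kb_{i\alpha_i}-\kb_{it})\ge 0$ — matching the $\Scc$-branch of \eqref{relax svm}. It remains to check the $\Sc$-constraints at the full level $1/\tn{\prr}$: they cannot exceed it everywhere on $\Sc$ (shrinking $\pb^{\infty}$ would then beat $\prr$ in \eqref{relax svm}), and they cannot fall below it — if $(\pb^{\infty})^\top(\kb_{j\alpha_j}-\kb_{jt^\star})=1/\tn{\prr}-\delta$ for some $j\in\Sc$, $\delta>0$, then $1-\s_{j\alpha_j}\gtrsim e^{-R(1/\tn{\prr}-\delta)}$, and Assumption~\ref{label margin ass} restricted to $\Sc$ gives a downstream margin $\le\Gamma-\nu(1-\s_{j\alpha_j})$, whence $\Lc(\vb_r,\pb_R)\ge\frac1n\ell\big(r\Gamma-c\,r\,e^{-R(1/\tn{\prr}-\delta)}\big)$, contradicting the upper bound (whose correction decays at the sharper rate $e^{-(R/\tn{\prr})(1-o(1))}$) once $r,R$ diverge at comparable rates. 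Since a unit-norm vector feasible for ``$\Sc:\ \ge 1/\tn{\prr}$, $\Scc:\ \ge 0$'' is uniquely $\prr/\tn{\prr}$ (the strictly convex min-norm point of that polyhedron), every subsequential limit equals $\prr/\tn{\prr}$, so $\pb_R/R\to\prr/\tn{\prr}$.

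\textbf{Main obstacle.} The hard step is the last lower bound. The indices $i\in\Scc$ give no margin leverage — only the ``$\ge 0$'' selection constraints — so the candidate's softmax saturates there only at the slower rate $e^{-\Omega(R\eps_R)}$, and one must verify the loss still vanishes on those inputs; this is precisely what the strict gap $Y_i\vs^\top\x_{i\alpha_i}>1$ on $\Scc$ buys. More importantly, converting the loss-level comparison into a statement about the \emph{direction} of $\pb_R$ requires controlling how fast $r$ and $R$ diverge relative to one another along the regularization path (exactly as in Theorem~\ref{thm:path:joint:vp}); outside such a regime the off-margin limit cannot be excluded. The remaining points — activeness of the ridge constraints, well-posedness and continuity of \eqref{svm-label2} under the $\mc{O}(1-\s_{i\alpha_i})$ perturbation of features, and the exact constant in $1-\s_{j\alpha_j}\gtrsim e^{-R(1/\tn{\prr}-\delta)}$ — are routine.
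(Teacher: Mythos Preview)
Your proposal is correct and is essentially the paper's approach: both build a competitor by perturbing $\prr$ toward a strictly-selecting direction (you take $\eps_R\downarrow 0$ along $\ps(\bal)$, the paper uses a fixed $\eps$ via an $\eps$-tightened version of \eqref{relax svm}), then compare the competitor's loss upper bound against a lower bound on $\Lc(\vb_r,\pb_R)$ coming from Assumption~\ref{label margin ass} restricted to $\Sc$. The packaging differs slightly---you extract subsequential limits and invoke min-norm uniqueness, whereas the paper argues by direct contradiction after first isolating a ``quadrant lemma'' ($\pb_R$ eventually satisfies all $\ge 0$ constraints)---but the substance, including the $r$-$R$ coupling you flag as the main obstacle, is the same.
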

To illustrate this numerically, consider Figure \ref{fig2:nosvm} which modifies Figure \ref{fig2:allsvm} by pushing the yellow $\star$ to the northern position $(0.5,1.5)$. We still have $\vs=[0,1]$ however the yellow $\star$ is no longer a support vector of \eqref{svm-label2}. Thus, $\pb$ solves the relaxed problem \eqref{relax svm} which separates green and teal $\star$'s by enforcing the max-margin constraint on $\pb$ (which is the red direction). Instead, yellow $\star$ only needs to achieve positive correlation with $\pb$ (unlike Figure \ref{fig2:allsvm} where it dictates the direction). We also display the direction of $\ps$ using a gray dashed line. 

We further investigate the evolution of softmax and logistic output probabilities throughout the training process of Figure~\ref{fig2:allsvm}, and the results are illustrated in Figure~\ref{fig2:prob}. The averaged softmax probability of optimal tokens is represented by the red curve and is calculated as $\frac{1}{n}\sum_{i=1}^{n}\max_{t\in[T]}\sft{\Kb_i\pb}_t$. An achievement of $1$ for this probability indicates that the attention mechanism successfully selects the optimal tokens. On the other hand, the logistic probability of the output is represented by the blue curve and is determined by $1/n\sum_{i=1}^n1/(1+e^{-Y_i\cdot f(\X_i)})$. This probability also reaches a value of $1$, suggesting that the inputs are correctly classified.

%%%%%%%%%%%Camera-ready cut
 \ifarxiv
%\vspace{-5pt}
\section{Regularization Path of Attention with Nonlinear Head}\label{sec nonlin}
%\vspace{-5pt}

%Our results will build on a contextual dataset model where the high-level idea is splitting a tokenized input into label-\rel and label-\irel tokens\footnote{In Section \ref{multicon}, we also augment this model to allow for multiple contexts akin to the distinction between binary vs multiclass classification.}. Under this model, we will see that attention mechanism plays a critical for separating \rel tokens from the \irel ones.
So far our discussion has focused on the attention model with linear head. However, the conceptual ideas on optimal token selection via margin maximization also extends to a general nonlinear model under mild assumptions. The aim of this section is showcasing this generalization. Specifically, we consider the prediction model $f(\X)=\link{\X^\top \sft{\Kb\pb}}$ where $\link{\cdot}:\R^d\rightarrow\R$ generalizes the linear head $\vb$ of our attention model. For instance, following exposition in Section \ref{sec prelim}, $\link{\cdot}$ can represent a multilayer transformer with $\pb$ being a tunable prompt at the input layer. Recall that $(\X_i,\Kb_i,Y_i)_{i=1}^n$ is the dataset of the input-key-label tuples. We consider the training risk\vspace{-2pt}
\begin{align}
\Lc(\pb)=\frac{1}{n}\sum_{i=1}^n \ell(Y_i,\link{\X_i^\top \s^{\pb}_i}),\quad\text{where}\quad\s^{\pb}_i=\sft{\Kb_i\pb}\in\R^T.\label{NPT loss}
\end{align}
The challenge with nonlinear $\link{\cdot}$ is that, we lack a clear score function (Def.~\ref{score def}) unlike the previous sections. The assumption below introduces a generic condition that splits the tokens of each $\X_i$ into an \emph{optimal} set $\Rc_i$ and \emph{\irel} set $\Rcb_i=[T]-\Rc_i$. In words, \irel tokens are those that strictly increase the training risk $\Lc(\pb)$ if they are not fully suppressed by attention probabilities $\s_i^{\pb}$.  

\vspace{-3pt}
\begin{assumption}[Mixing \irel tokens hurt]\label{asshurt} There exists sets $(\Rc_i)_{i=1}^n\subset[T]$ as follows. Let $q^{\pb}_i=\sum_{t\in \Rcb_{i}} \s^{\pb}_{it}$ be the sum of softmax similarities over the \irel set for $\pb$. Set $q^{\pb}_{\max}=\max_{i \in [n]}q^{\pb}_i$. For any $\Delta>0$, there exists $\rho<0$ such that:\vspace{-1pt}
\[
\textnormal{For all }\pb,\pb'\in\R^d,~\textnormal{ if }~\log (q^{\pb}_{\max})\leq (1+\Delta)\log (q^{\pb'}_{\max})\wedge \rho,~~\textnormal{ then }\Lc(\pb)< \Lc(\pb').
\]

\end{assumption}
This assumption is titled \emph{mixing hurts} because the attention output $\X_i^\top \s^\pb_i$ is mixing the tokens of $\X_i$ and our condition is that, to achieve optimal risk, this mixture should not contain any \irel tokens. In particular, we require that, a model $\pb$ that contains \emph{exponentially less non-optimality} (quantified via log($q_{\max}$)) compared to $\pb'$ is strictly preferable. As we discuss in the supplementary material, Theorem \ref{lin det thm} is in fact a concrete instance (with linear head $\vb$) satisfying this condition.

Before stating our generic theorem, we need to introduce the max-margin separator towards which regularization path of attention will converge. This is a slightly general version of Section \ref{linear head sec}'s \eqref{attnsvm} problem where we allow for a set of \rel tokens $\Rc_i$ for each input.
\begin{align}
\pb^{\svm}=\arg\min_{\pb}\tn{\pb}\quad\text{subject to}\quad &\max_{\alpha\in \Rc_{i}}\min_{\beta\in \Rcb_{i}}\pb^\top(\kb_{i\alpha}-\kb_{i\beta})\geq 1,\quad\text{for all}\quad i\in [n]\label{svm}\tag{ATT-SVM'}.
\end{align}
Unlike \eqref{attnsvm}, this problem is not necessarily convex when the \rel set $\Rc_i$ is not a singleton. To see this, imagine $n=d=1$ and $T=3$: Set the two \rel tokens as $\kb_1=1$ and $\kb_2=-1$ and the \irel token as $\kb_3=0$. The solution set of \eqref{svm} is $\ps\in \{-1,1\}$ whereas their convex combination $\pb=0$ violates the constraints. To proceed, our final result establishes the convergence of regularization path to the solution set of \eqref{svm} under Assumption \ref{asshurt}. 
%On the other hand, problem becomes convex if there is a single \rel token or all \rel tokens are identical so that application of $\max_{\alpha\in \Rc_{i}}$ becomes trivial.
\begin{theorem} \label{meta thm} Let $\Gc^\svm$ be the set of global minima of \eqref{svm}. Suppose its margin $\Xi:=1/\tn{\ps}>0$ and Assumption \ref{asshurt} holds. Let $\dist{\cdot,\cdot}$ denote the $\ell_2$-distance between a vector and a set. Following \eqref{NPT loss}, define $\pbb(R)=\arg\min_{\tn{\pb}\leq R}\Lc(\pb)$. We have that $\lim_{R\to\infty}\dist{\frac{\pbb(R)}{\Xi R},\Gc^\svm}=0$.
%\[
%\lim_{R\to\infty}\dist{\Gamma\frac{\pbb(R)}{R},\Gc^\svm}=0.
%\]
\end{theorem}
We note that Theorem \ref{lin det thm} is a corollary of this result where $\Rc_i$'s and $\Gc^\svm$ are singleton. Based on this result, with multiple optimal tokens, Theorem \ref{lin det thm} would gracefully generalize to solve \eqref{svm}. 
 \else
 \fi
%%%%%%%%%%%Camera-ready cut
\section{Experiments}\label{sec:exprim}
%
%\input{sec/add_fig}
%Theorems \ref{conv:gd:global} and \ref{thm:local:gd}
    \begin{figure}[t]
    \begin{minipage}{.66\textwidth}
    \centering
    \hspace{-15pt}
    \subfigure[Evolution of softmax probability]{
        \includegraphics[height=.37\columnwidth, trim={0 0.5cm 0 0}]{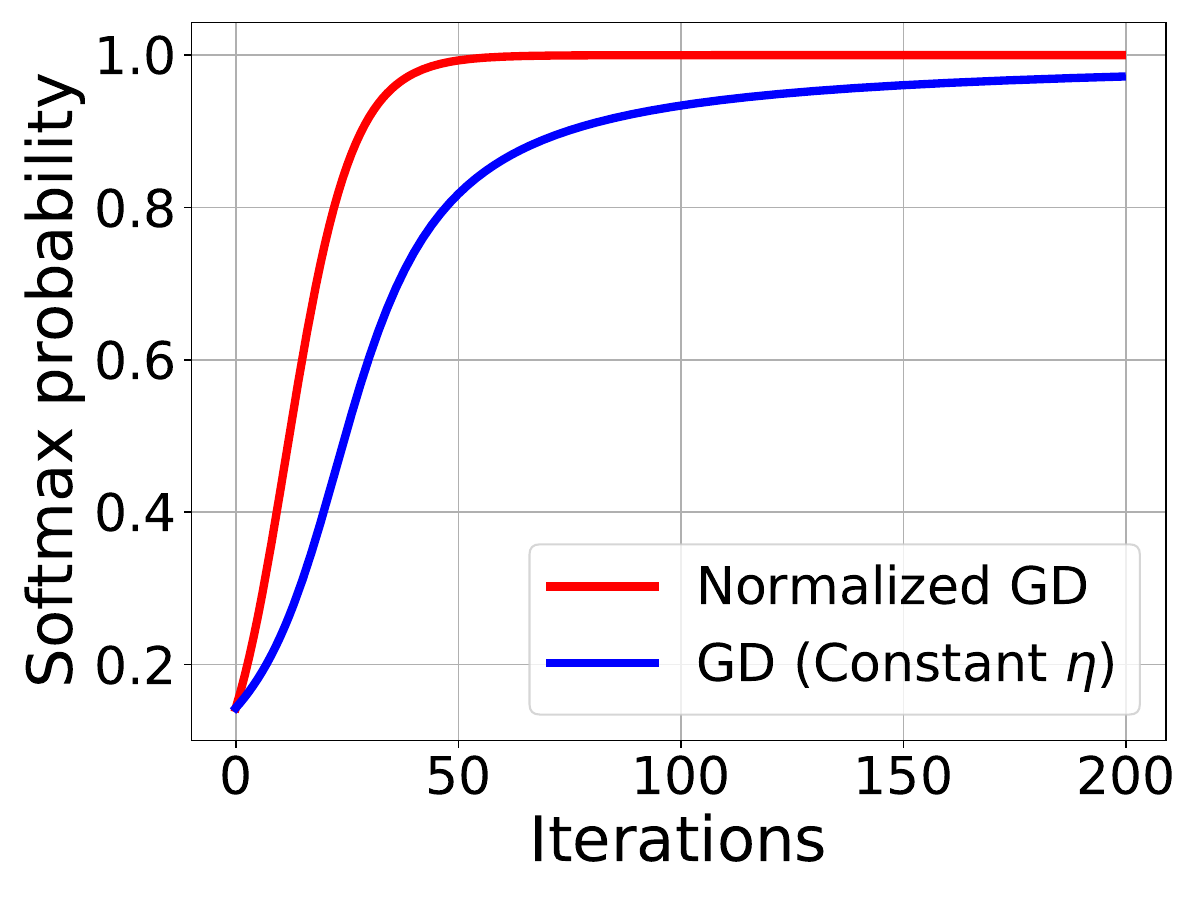}
        \label{fig:sfx prob}
    }%\caption{Convergence to global and local optima}
    \hspace{-5pt}
    \subfigure[Evolution of attention weights]{
        \includegraphics[height=.37\columnwidth, trim={0 0.5cm 0 0}]{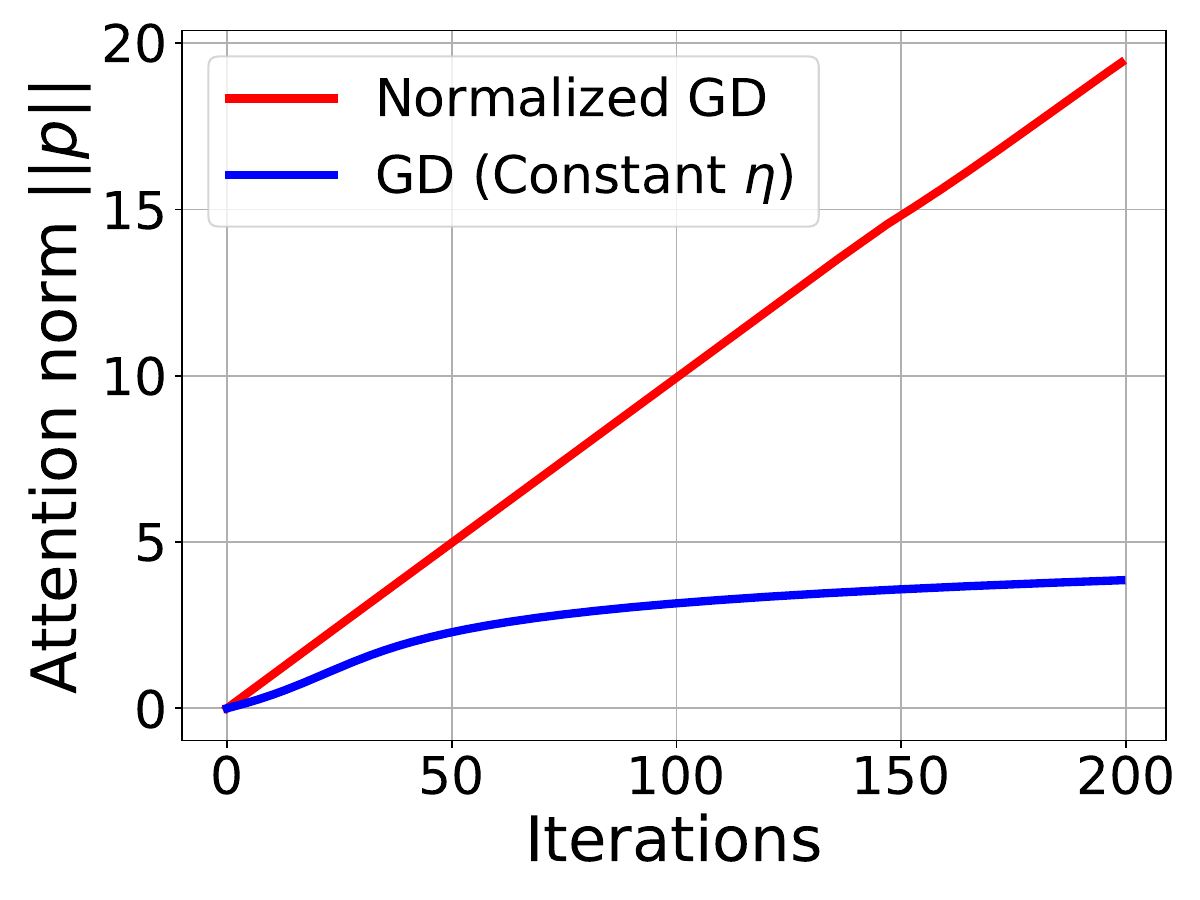}
        \label{fig:p norm}
    }
%    \vspace{-5pt}
    \caption{
        Evolution of softmax probability and attention weights when training with normalized gradient descent or constant step size $\eta$ respectively.}
    \end{minipage}\hspace{10pt}\vspace{-10pt}
    \begin{minipage}{.33\textwidth}
    \hspace{-10pt}
    \subfigure{
        \includegraphics[height=0.74\columnwidth, trim={0 0.5cm 0 0}]{fig/diff_loss_func.pdf}
        \label{fig2:loss:sen}
        
    }%\caption[\vspace{-10pt}Trajectory of $\pb$]
    \hspace{-15pt}
    \vspace{4.5mm}
    \caption{Trajectories of $\pb$ with different loss functions and scores in Theorem \ref{conv:gd:global}. }
    \label{fig:add_fig}
    \end{minipage}
%    \vspace{-8pt}
    \end{figure}
% The underlying reason can be seen from the gradient of individual inputs: $\nabla\Lc_i(\pb)=\ell'_i\cdot \Kb_i^\top \sfp{\X\pb}\X\vb$ where $\sfp{\cdot}$ is the softmax derivative and $\ell'_i:=\ell'(Y_i\cdot \vb^\top \X_i^\top\sft{\X_i\pb})$. Assuming $\pb$ (approximately) selects the optimal tokens, this would simplify to $\ell'_i\approx \ell'(\bgam_i)$ and $\tn{\nabla\Lc_i(\pb)}\propto |\ell'(\bgam_i)|\cdot \bgam_i$. Now, with correlation loss $|\ell'|=1$, thus, $\tn{\nabla\Lc_i(\pb)}\propto \bgam_i$ and larger score induces larger gradient. Whereas logistic loss will behave like exponential loss under separable data, i.e., $|\ell'|= e^{-x}/(1+e^{-x})\approx e^{-x}$. Consequently, $\tn{\nabla\Lc_i(\pb)}\propto \bgam_i e^{-\bgam_i}\approx e^{-\bgam_i}$ and smaller score results in larger gradient explaining the empirical behavior.

% Synthetic experiments
\noindent{\textbf{Sparsity of softmax and evolution of attention weights.}} It is well known that, in practice, attention maps often exhibit sparsity and highlight salient tokens that aid inference. Our results provide a formal explanation of this when tokens are separable: Since attention selects a locally-optimal token within the input sequence and suppresses the rest, the associated attention map $\sft{\X\pb}$ will (eventually) be a sparse vector. Additionally, the sparsity should arise in tandem with the increasing norm of attention weights. We provide empirical evidence to support these findings.

\emph{Synthetic experiments.} Figures~\ref{fig:sfx prob} and \ref{fig:p norm} show the evolution of the largest softmax probability and attention weights over time when using either normalized gradient or a fixed stepsize $\eta$ for training. The dataset model follows Figure~\ref{fig:joint global}. The softmax probability shown in Figure~\ref{fig:sfx prob} is defined as $\frac{1}{n}\sum_{i=1}^{n}\max_{t\in[T]}\sft{\Kb_i\pb}_t$. When this average probability reaches the value of $1$, it means attention selects only a single token per input. The attention norm in Figure~\ref{fig:p norm}, is simply equal to $\tn{\pb}$.

The red curves in both figures represent the normalized gradient method, which updates the model parameters $\pb$ using $\pb(t+1)=\pb(t)-\eta\nabla\Lc(\pb(t))/\tn{\nabla\Lc(\pb(t))}$ with $\eta=0.1$. The blue curves correspond to gradient descent with constant learning rate given by $\pb(t+1)=\pb(t)-\eta\nabla\Lc(\pb(t))$ with $\eta=1$. Observe that the normalized gradient method achieves a softmax probability of $1$ quicker as vanilla GD suffers from vanishing gradients. This is visible in Figure \ref{fig:p norm} where blue norm curve levels off.% due to its linear update of the norm of $\pb$ towards the optimal direction.

%conducted a training experiment on %the average sparsity of the
\emph{Real experiments.} To study softmax sparsity and the evolution of attention weights throughout training, we train a vision transformer (ViT-base) model~\cite{dong2021attention} from scratch, utilizing the CIFAR-10 dataset~\cite{krizhevsky2014cifar} for 400 epochs with fixed learning rate $3\times10^{-3}$. ViT tokenizes an image into $16\times 16$ patches, thus, its softmax attention maps can be easily visualized. We examine the average attention map -- associated with the [CLS] token -- computed from all 12 attention heads within the model. Figure~\ref{fig:att_map} provides a visual representation of the resulting attention weights (16 $\times$ 16 grids) corresponding to the original patch locations within the image. 
\begin{figure}[t]
\begin{minipage}{7cm}
    \centering
    \subfigure[Input image]{
        \includegraphics[width=.3\columnwidth]{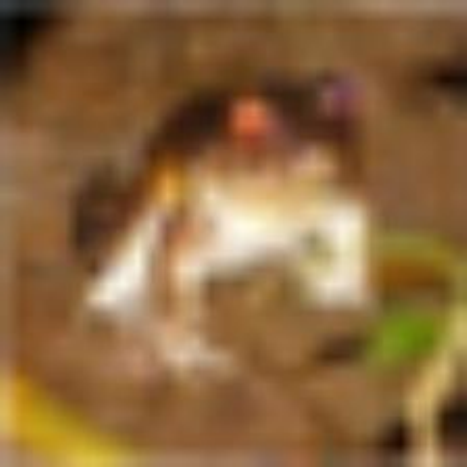}
        \label{fig:real_image}
    }
    \subfigure[Epoch 0]{
        \includegraphics[width=.3\columnwidth]{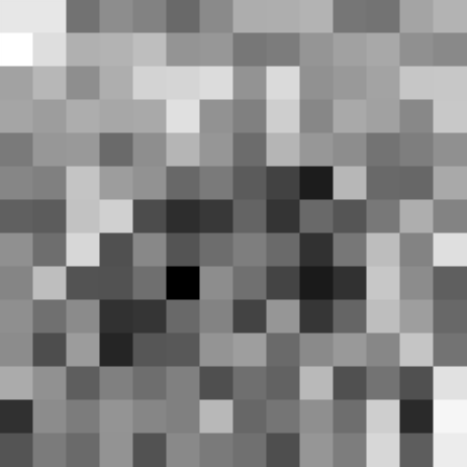}
        \label{fig:real_image_0}
    }
        \subfigure[Epoch 100]{
        \includegraphics[width=.3\columnwidth]{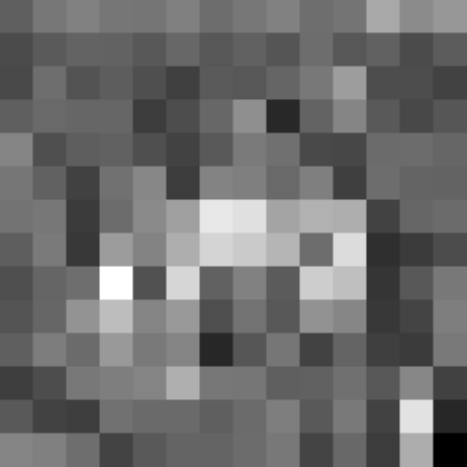}
        \label{fig:real_image_100}
    }
    % \subfigure[sparsity]{
    %     \includegraphics[width=.43\columnwidth]{exp/real_data/sparsity_norm.pdf}
    %     \label{fig:sparsity}
    % }    
            \subfigure[Epoch 200]{
        \includegraphics[width=.3\columnwidth]{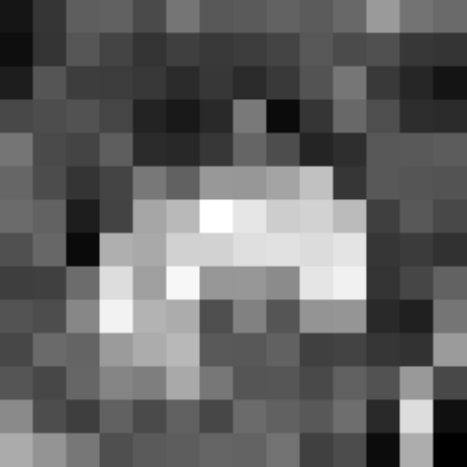}
        \label{fig:real_image_200}
    }
            \subfigure[Epoch 300]{
        \includegraphics[width=.3\columnwidth]{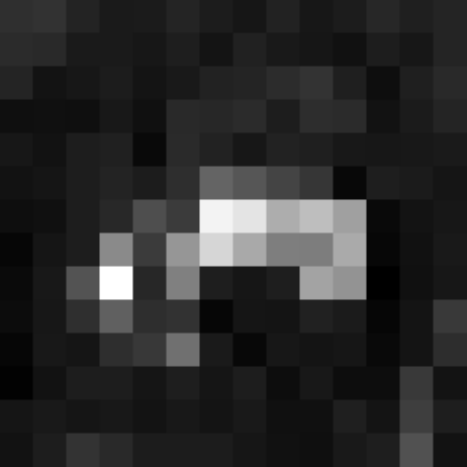}
        \label{fig:real_image_300}
    }
                \subfigure[Epoch 400]{
        \includegraphics[width=.3\columnwidth]{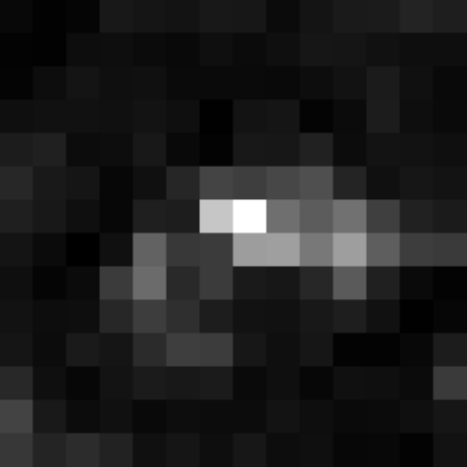}
        \label{fig:real_image_400}
    }
\caption{Illustration of the progressive change in attention weights of the \texttt{[CLS]} token during training in the transformer model, using a specific input image shown in Figure \ref{fig:real_image}. }
    \label{fig:att_map}
    \end{minipage}
    \hspace{5pt}
\begin{minipage}{6.5cm}
    \centering
        \includegraphics[width=.9\columnwidth]{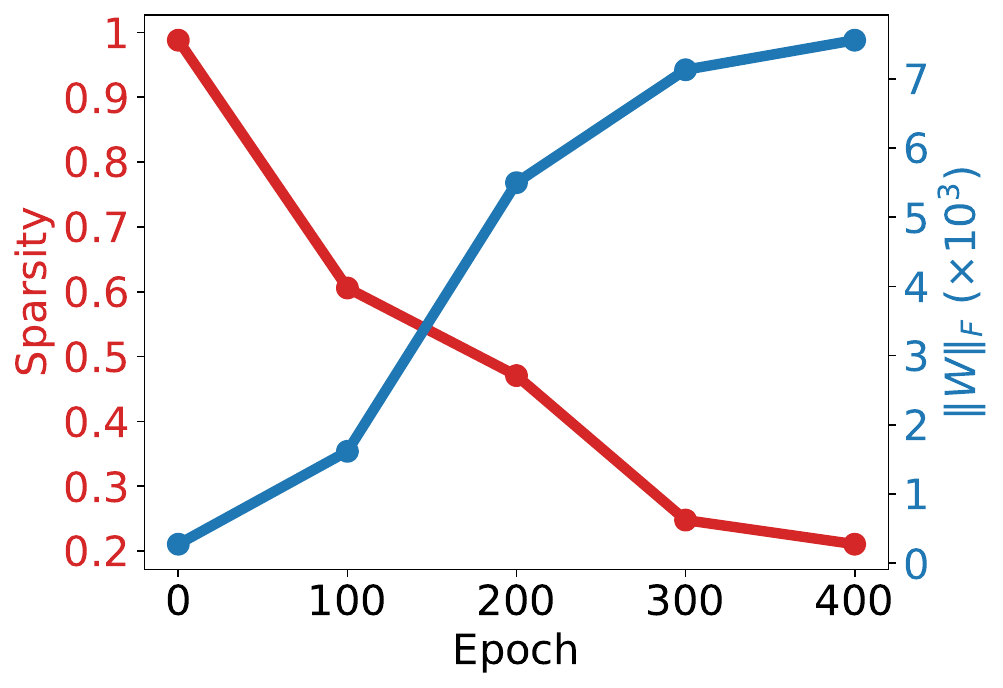}
        \label{fig:sparsity}
 %   \subfigure[weight norm]{
%      \includegraphics[width=.33\columnwidth]{exp/real_data/weight_norm.pdf}
%     \label{fig:weight}
% }
\caption{Red curve is the sparsity level $\spar(\s)/$ $T$ of the average attention map which takes values on [0,1].  A sparser vector implies that few key tokens receive significantly higher attention, while the majority of the tokens receive minimal attention. Blue curve is the Frobenius norm of attention weights $\| \W \|_F$ of the final layer. We display their evolutions over epochs.}
    \label{fig:att_sparsity}
\end{minipage}
 \vspace{-.3cm}
\end{figure}
During the initial epochs of training, the attention weights are randomly distributed and exhibit a dense pattern. However, as the training progresses, the attention map gradually becomes sparser and the attention mechanism begins to concentrate on fewer salient patches within the image that possess distinct features that aid classification. This illustrates the evolution of attention from a random initial state to a more focused and sparse representation. These salient patches highlighted by attention conceptually corresponds to the optimal tokens within our theory.% (at a high-level).%, contributing to improved classification performance.
%, standing out from the background

%The normalization is performed by ensuring that $\sum_{t=1}^T s_t = 1$, where $T$ represents the number of tokens.%Since $\s$ is softmax output
We quantify the sparsity of the attention map via a soft-sparsity measure, denoted by $\spar(\s)$ where $\s$ is the softmax probability vector. The soft-sparsity is computed as the ratio of the $\ell_1$--norm to the squared $\ell_2$--norm, defined as $\spar(\s) = {\| \s \|}_1 /{\| \s \|}^2$. $\spar(\s)$ takes values between $1$ to $T=256$ and a smaller value indicates a sparser vector. Also note that ${\| \s \|}_1= \sum_{t=1}^T \s_t = 1$. Together with sparsity, Figure~\ref{fig:att_sparsity} also displays the Frobenius norm of the combined key-query matrix $\W$ of the last attention layer over epochs. The theory suggests that the increase in sparsity is associated with the growth of attention weights -- which converge directionally. The results in Figure~\ref{fig:att_sparsity} align with the theory, demonstrating the progressive sparsification of the attention map as $\|\W\|_F$ grows.

\textbf{Transient optimization dynamics and the influence of the loss function.} Theorem \ref{conv:gd:global} shows that the asymptotic direction of gradient descent is determined by $\pso$. However, it is worth noting that transient dynamics can exhibit bias towards certain input examples and their associated optimal tokens. We illustrate this idea in Fig \ref{fig2:loss:sen}, which displays the trajectories of the gradients for different scores and loss functions. We consider two optimal tokens ($\star$) with scores $\bgam_1=1$ and $\bgam_2=C$, where $C$ varies. For our analysis, we examine the correlation loss $\ell(x)=-x$ and the logistic loss $\ell(x)=\log(1+e^{-x})$. 

In essence, as $C$ increases, we can observe that the correlation loss $\ell(x)=-x$ exhibits a bias towards the token with a high score, while the logistic loss is biased towards the token with a low score. The underlying reason for this behavior can be observed from the gradients of individual inputs: $\nabla\Lc_i(\pb)=\ell'_i\cdot \Kb_i^\top \sfp{\X\pb}\X\vb$, where $\sfp{\cdot}$ represents the derivative of the softmax function and $\ell'_i:=\ell'(Y_i\cdot \vb^\top \X_i^\top\sft{\X_i\pb})$. Assuming that $\pb$ (approximately) selects the optimal tokens, this simplifies to $\ell'_i\approx \ell'(\bgam_i)$ and $\tn{\nabla\Lc_i(\pb)}\propto |\ell'(\bgam_i)|\cdot \bgam_i$. With the correlation loss, $|\ell'|=1$, resulting in $\tn{\nabla\Lc_i(\pb)}\propto \bgam_i$, meaning that a larger score induces a larger gradient. On the other hand, the logistic loss behaves similarly to the exponential loss under separable data, i.e., $|\ell'|= e^{-x}/(1+e^{-x})\approx e^{-x}$. Consequently, $\tn{\nabla\Lc_i(\pb)}\propto \bgam_i e^{-\bgam_i}\approx e^{-\bgam_i}$, indicating that a smaller score leads to a larger gradient. These observations explain the empirical behavior we observe.

\vspace{-5pt}
\section{Related Work} \label{sec related}
\vspace{-5pt}
\noindent\textbf{Implicit Regularization.} 
The implicit bias of gradient descent in classification tasks involving separable data has been extensively examined by~\cite{soudry2018implicit,
gunasekar2018characterizing, nacson2019convergence, ji2021characterizing,moroshko2020implicit,ji2020directional}. These works typically use logistic loss or, more generally, exponentially-tailed losses to make connections to  margin maximization.
These results are also extended to non-separable data by~\cite{ji2018risk,ji2019implicit,ji2020gradient}. Furthermore, there have been notable investigations into the implicit bias in regression problems/losses utilizing techniques such as mirror descent~\cite{woodworth2020kernel, gunasekar2018characterizing,
yun2020unifying, vaskevicius2019implicit, amid2020winnowing, amid2020reparameterizing}. In addition, several papers have explored the implicit bias of stochastic gradient descent~\cite{li2019towards,blanc2020implicit, haochen2020shape, li2022what, damian2021label, zou2021benefits}, as well as adaptive and momentum-based methods~\cite{qian2019implicit, wang2021momentum, wang2021implicit, ji2021fast}. Although there are similarities between our optimization approach for $\vb$ and existing works, the optimization of $\pb$ stands out as significantly different. Firstly, our optimization problem is nonconvex, introducing new challenges and complexities. Secondly, it necessitates the introduction of novel concepts such as locally-optimal tokens and requires a fresh analysis specifically tailored to the cones surrounding them. 

\noindent\textbf{Attention Mechanism.}
Transformers, introduced by \cite{vaswani2017attention}, revolutionized the field of NLP and machine translation, with earlier works on self-attention by \cite{cheng2016long,parikh2016decomposable,paulus2017deep,lin2017structured}. Self-attention differs from traditional models like MLPs and CNNs by leveraging global interactions for feature representations, showing exceptional empirical performance. However, the underlying mechanisms and learning processes of the attention layer remain unknown. Recent studies such as \cite{edelman2022inductive,sahiner2022unraveling,ergen2022convexifying,baldi2022quarks,dong2021attention} have focused on specific aspects like representing sparse functions, convex-relaxations, and expressive power. In contrast to our nonconvex \eqref{lin det losss}, \cite{sahiner2022unraveling} studies self-attention with linear activation instead of softmax, while \cite{ergen2022convexifying} approximates softmax using a linear operation with unit simplex constraints. Their main objective is to derive convex reformulations for ERM-based training problem. \cite{jelassi2022vision,li2023theoretical} have developed initial results to characterize the optimization and generalization dynamics of attention. \cite{oymak2023role} is another closely related work where the authors analyze the same attention model \eqref{lin det losss} as us. Specifically, they jointly optimize $\vb,\pb$ for three gradient iterations for a contextual dataset model. However, all of these works make stringent assumptions on the data, namely, tokens are tightly clusterable or can be clearly split into clear relevant and irrelevant sets. Additionally \cite{li2023theoretical} requires assumptions on initialization and \cite{jelassi2022vision} considers a simplified attention structure where the attention matrix is not directly parameterized with respect to the input. Our work links attention models to hard-margin SVM problems and pioneers the study of gradient descent's implicit bias in these models.
\vspace{-5pt}
\section{Discussion}\label{sec discuss}
\vspace{-5pt}

We have provided a thorough optimization-theoretic characterization of the fundamental attention model $f(\X)=\vb^\top\X^\top \sft{\X\W\pb}$ by formally connecting it to max-margin problems. We first established the convergence of gradient descent on $\pb$ (or equivalently $\W$) in isolation. We also explored joint convergence of $(\vb,\pb)$ via regularization path which revealed surprising implicit biases such as \eqref{relax svm}. These findings motivate several exciting avenues for future research. An immediate open problem is characterizing the (local) convergence of gradient descent for joint optimization of $(\vb,\pb)$. Another major direction is to extend similar analysis to study self-attention layer \eqref{satt} or to allow for multiple tunable tokens (where $\pb$ becomes a matrix). Either setting will enrich the problem by allowing the attention to discover multiple hyperplanes to separate tokens. While our convergence guarantees apply when tokens are separable, it would be interesting to characterize the non-separable geometry by leveraging results developed for logistic regression analysis \cite{ji2019implicit,soudry2018implicit}. Ideas from such earlier results can also be useful for characterizing the non-asymptotic/transient dynamics of how gradient descent aligns with the max-margin direction. Overall, we believe that max-margin token selection is a fundamental characteristic of attention mechanism and the theory developed in this work lays the groundwork of these future extensions. 
%While these provide exciting problem formulations for future, w

\section*{Acknowledgements}%\label{sec ack}

This work was supported by the NSF grants CCF-2046816 and CCF-2212426, Google Research Scholar award, and Army Research Office grant W911NF2110312. 

The authors express their gratitude for the valuable feedback provided by the anonymous reviewers and Christos Thrampoulidis, which has significantly improved this paper.

%It would also be interesting to relax assumptions, for instance, generalizing our nonlinear classifier head results to gradient 

%showed that \emph{attention weights converge to a max-margin solution that separate optimal tokens from non-optimal ones}

%\setlength{\bibsep}{2pt}
\bibliographystyle{unsrt}
\bibliography{refs,TF}

\newpage

\appendix
%\appendix

\part{} 
%\begin{center}
%\Large{Supplementary Materials for \textquotedblleft \textquotedblright }
%\end{center}
\paragraph{Roadmap.} The appendix is organized as follows: Section~\ref{app:basic:fact} provides basic facts about the training risk. Section~\ref{app:sec:linear:head} presents the proof of local and global gradient descent and regularized path for learning $\pb\in\R^d$ with a fixed $\vb\in\R^d$ choice. Section~\ref{app:sec:joint} provides the proof of regularized path applied to the general case of joint optimization of head $\vb$ and attention weights $\pb$ using a logistic loss function. Section~\ref{app:sec:nonlin} presents  the regularized path applied to a more general model $f(\X)=\psi(\X^\top \sftx(\X\W^\top\pb))$ with a nonlinear head $\psi$. Section~\ref{app:exp:detail} provides implementation details. Finally, Section~\ref{app:sec:related} discusses additional related work on implicit bias and self-attention.

\begin{comment}
\paragraph{Corrections and Refinements.} We have made the following changes to the main submission.%text (compared to the main submission)
\begin{itemize}
\item In the first bullet point of Theorem \ref{local path fail main}, we corrected indices $i$ into $j$. This was a typo.
\item In the statement of Theorem \ref{local path fail main}, we now include the norm lower bound $R_\eps$ over the conic neighborhood. Note that, this is consistent with the setting of success guarantee Theorem \ref{thm:local:gd} and the main message on the tightness of local optimality remains intact.
\item In Theorem \ref{joint thm general}, we corrected the statement from $\X_i^\top \sft{\Kb_i\pb_R}\rightarrow \x_{i\alpha_i}$ to $\sft{\Kb_i\pb_R}_{\alpha_i}\rightarrow 1$. Note that, the former statement does not actually imply token index $\alpha_i$ is selected because combination of other tokens can still add up to $\x_{i\alpha_i}$. Instead, the new statement says softmax probability fully concentrates over $\alpha_i$.
%\item \red{We improved Assumption \ref{label margin ass} by identifying the currently stated condition that is much easier to understand. We showed that this condition actually implies the previous condition we required, which was: ``There exists $\nu>0$ such that for all $\pb$, solving \eqref{svm-label2} with $\rb_i\gets \x_i^\pb$ results in a label margin of at most $\Gamma-\nu\cdot \max_{i\in[n]} (1-\s_{i\alpha_i})$ where $\s_i=\sft{\Kb_i\pb}$. ''.}
%\item In the abstract: We decided to replace ``we initiate the study of a softmax-attention model'' with ``we explore the seminal softmax-attention model''. This is to ensure proper credit assignment: Although our results are fundamentally different and more practically-relevant (as they apply to general data), in recent work \cite{oymak2023role}, the authors provide an analysis of the first three gradient iterations of the same attention model for a gaussian-mixture-like dataset.
\end{itemize}
\end{comment}

%\parttoc 

\part{}\parttoc 
\section{Addendum to Section~\ref{sec:intro}%: Basic Facts
}\label{app:basic:fact}
%\smallskip
\subsection{Preliminaries on the Training Risk }
By our assumption $\psi:\R^d\to\R$ and $\ell:\R\to\R$ are differentiable functions. Recall the objective
  \begin{align}\label{npatt}
\Lc( \pb, \W)=\frac{1}{n}\sum_{i=1}^n \ell \left(Y_i\cdot  \link{\X^\top_i \sft{\Kb_i\pb}}\right)
\end{align}
with the generic prediction model $ \link{\X^\top \sft{\Kb\pb}}$ and $\Kb=\X\W^\top$.

Here, we write down the gradients of $\W$ and $\pb$ in \eqref{npatt} to highlight the connection. %Note that softmax derivative is given by $\sfp{\ab}=\diag{\sft{\ab}}-\sft{\ab}\sft{\ab}^\top$. 
Set $\qb:=\W^\top\pb$, $\zX{\X}:=\X^\top \sft{\Kb\pb}$, and $\aX{\X}:=\Kb\pb$.   Given $\X$ and using $\Kb=\X\W^\top$, we have that
\begin{subequations}
\begin{align}
\nabla_{\qb} \psi( \pb, \W)&=\X^\top \sfp{\aX{\X}}\X\cdot \psi'(\zX{\X}),\label{gradq}\\
\nabla_{\pb} \psi( \pb, \W)&=\W\nabla_{\qb} \psi(\pb, \W), \label{gradp}\\
\nabla_{\W} \psi( \pb, \W)&=~\pb\nabla^\top_{\qb} \psi(\pb, \W),\label{gradW}
\end{align}
\end{subequations}
where
\begin{equation*}
\sfp{\aX{\X}} = \diag{\sft{\aX{\X}}}- \sft{\aX{\X}}\sft{\aX{\X}}^\top \in \R^{T \times T}.
\end{equation*}
Setting $\link{\z}=\vb^\top \z$ for linear head, we obtain
\begin{subequations}
\begin{align}
\nabla_{\qb} \psi( \pb, \W) &= \X^\top \sfp{\aX{\X}}\bgam,\label{gradq_lin}\\
\nabla_{\pb} \psi( \pb, \W) &= \W\nabla_{\qb} \psi (\pb, \W) =\Kb^\top \sfp{\aX{\X}} \bgam,\label{gradp_lin}\\
\nabla_{\W} \psi( \pb, \W)  &= \pb\nabla^\top_{\qb} \psi(\pb, \W) =\pb\vb^\top\X^\top \sfp{\aX{\X}}\X.\label{gradW_lin}
\end{align}
\end{subequations}
% where  $\bgam=\X\vb$.

%Let 
%\begin{align}
%\Lc(\pb)&=\frac{1}{n}\sum_{i=1}^n\ell(Y_i,\link{\X_i^\top \sft{\X_i\pb}}),\\
%\Lc(\W)&=\frac{1}{n}\sum_{i=1}^n\ell(Y_i,\link{\X_i^\top \sft{\X_i\W^\top\pb}}).  
%\end{align}
Recalling \eqref{gradp} and \eqref{gradW}, and defining $\ell'_i:=\ell'(Y_i\cdot\lnk(\zX{\X_i})) \in \R$, we have that
\begin{subequations}\label{eqr:gradL:pw}
\begin{align}
\nabla_{\pb}\Lc( \pb, \W)&=\frac{1}{n}\sum_{i=1}^n\ell'_i\cdot Y_i \cdot \W\nabla_{\qb} \psi(\pb, \W),\\
\nabla_{\W}\Lc(  \pb, \W)&= \frac{1}{n}\sum_{i=1}^n \ell'_i\cdot Y_i \cdot  \pb\nabla^\top_{\qb} \psi(\pb, \W).
\end{align}
\end{subequations}
Setting $\link{\z}=\vb^\top \z$ for linear head and $\bgam_i=Y_i\cdot\X_i\vb$, we  obtain 
\begin{subequations}\label{eqr:gradL:lin:pw}
\begin{align}
\nabla_{\pb}\Lc(  \pb, \W) &=\frac{1}{n}\sum_{i=1}^n \ell'_i\cdot \Kb_i^\top \sfp{\aX{\X_i}}\bgam_i,\\
\nabla_{\W}\Lc(  \pb, \W)&=\pb \left(\frac{1}{n}\sum_{i=1}^n  \ell'_i \cdot  \bgam^\top_i \sfp{\aX{\X_i}}\X_i\right).
\end{align}
\end{subequations}

\begin{lemma}[Key Lemma] \label{lem:q_reduce} For any $\pb,\qb\in \R^d$, let $\ab=\Kb\qb$, $\s=\sft{\Kb\pb}$, and $\bgam=\X\vb$. Set
\begin{equation*}
\Gamma=\sup_{t,\tau\in[T]}|\bgam_t-\bgam_\tau|~~~\textnormal{and}~~~A=\sup_{t\in[T]}\tn{\kb_t}\cdot\tn{\qb}.
\end{equation*}
We have that
  \[
    \left|\ab^\top\textnormal{diag}(\s) \bgam-\ab^\top\s\s^\top\bgam-\sum_{t\geq 2}^T (\ab_1-\ab_t)\s_t(\bgam_1-\bgam_t)\right|\leq 2\Gamma A(1-\s_1)^2.
  \]
\end{lemma}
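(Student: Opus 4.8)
The plan is to expand everything in terms of the deviation of the softmax vector $\s$ from the first basis vector $\eb_1$, and control the quadratic-and-higher order terms by $(1-\s_1)$. Write $\s_1 = 1-\eps$ where $\eps = \sum_{t\geq 2}\s_t = 1-\s_1 \in [0,1]$, so that $\sum_{t\geq 2}\s_t = \eps$. The left-hand side is the difference between the exact bilinear form $\ab^\top\mathrm{diag}(\s)\bgam - \ab^\top\s\s^\top\bgam$ (which is $\sum_t \ab_t\s_t\bgam_t - (\sum_t \ab_t\s_t)(\sum_\tau \s_\tau\bgam_\tau)$) and the linearized proxy $\sum_{t\geq 2}(\ab_1-\ab_t)\s_t(\bgam_1-\bgam_t)$. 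Both quantities are \emph{shift-invariant} in $\ab$ and in $\bgam$ separately: replacing $\ab$ by $\ab - \ab_1\onebb$ and $\bgam$ by $\bgam - \bgam_1\onebb$ changes neither term (for the SVM form this uses $\s^\top\onebb = 1$; for the proxy it is immediate since it only involves differences). So without loss of generality I would assume $\ab_1 = 0$ and $\bgam_1 = 0$. Then $|\ab_t| = |\ab_t - \ab_1| \le \tn{\kb_t - \kb_1}\tn{\qb} \le 2A$ and $|\bgam_t| = |\bgam_t-\bgam_1|\le \Gamma$ for all $t\ge 2$, and also $|\ab_t| \le A$ after the reduction is inconsistent with the factor; let me instead keep the cruder bound $|\ab_t| \le 2A$ — but the target constant is $2\Gamma A$, so I should be careful: actually after centering at coordinate $1$ one has $|\ab_1 - \ab_t| \le A + A = 2A$ in general, but the statement's constant $2\Gamma A$ suggests using $|\ab_1-\ab_t|\le 2A$ together with $|\bgam_1-\bgam_t|\le\Gamma$ is too lossy by a factor; I would instead track the bound as $\Gamma A$ times a numerical constant and absorb, or note that $A = \sup_t\tn{\kb_t}\tn{\qb}$ already bounds $|\ab_t|$ directly (not just the difference), so with $\ab_1=0$ we get $|\ab_t| = |\ab_t| \le A$ — wait, centering changed $\ab$. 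The cleanest route: do \emph{not} center $\ab$; only center $\bgam$ (set $\bgam_1 = 0$, legitimate, giving $|\bgam_t|\le\Gamma$ for $t\ge2$ and the proxy becomes $-\sum_{t\ge2}(\ab_1-\ab_t)\s_t\bgam_t$), and keep $|\ab_t|\le A$ throughout.

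With $\bgam_1 = 0$, expand the exact form: $\ab^\top\mathrm{diag}(\s)\bgam = \sum_{t\ge2}\ab_t\s_t\bgam_t$, and $\ab^\top\s\s^\top\bgam = (\sum_{t}\ab_t\s_t)(\sum_{\tau\ge2}\s_\tau\bgam_\tau)$. Write $\sum_t \ab_t\s_t = \ab_1\s_1 + \sum_{t\ge2}\ab_t\s_t = \ab_1(1-\eps) + \sum_{t\ge2}\ab_t\s_t$. Then
\begin{align*}
\ab^\top\s\s^\top\bgam &= \Big(\ab_1 - \ab_1\eps + \textstyle\sum_{t\ge2}\ab_t\s_t\Big)\Big(\textstyle\sum_{\tau\ge2}\s_\tau\bgam_\tau\Big)\\
&= \ab_1\textstyle\sum_{\tau\ge2}\s_\tau\bgam_\tau + \Big(\textstyle\sum_{t\ge2}\ab_t\s_t - \ab_1\eps\Big)\Big(\textstyle\sum_{\tau\ge2}\s_\tau\bgam_\tau\Big).
\end{align*}
Subtracting, the exact bilinear form equals
\[
\sum_{t\ge2}\ab_t\s_t\bgam_t - \ab_1\sum_{t\ge2}\s_t\bgam_t \;-\; \Big(\sum_{t\ge2}\ab_t\s_t - \ab_1\eps\Big)\Big(\sum_{\tau\ge2}\s_\tau\bgam_\tau\Big) \;=\; -\sum_{t\ge2}(\ab_1-\ab_t)\s_t\bgam_t \;-\; E,
\]
where the first piece is exactly the proxy $\sum_{t\ge2}(\ab_1-\ab_t)\s_t(\bgam_1-\bgam_t)$ (recall $\bgam_1=0$), and $E := \big(\sum_{t\ge2}\ab_t\s_t - \ab_1\eps\big)\big(\sum_{\tau\ge2}\s_\tau\bgam_\tau\big)$ is the error to be bounded. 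So the quantity inside $|\cdot|$ on the left-hand side is precisely $-E$, up to sign.

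Bounding $E$ is then elementary: $\big|\sum_{t\ge2}\ab_t\s_t - \ab_1\eps\big| \le \sum_{t\ge2}(|\ab_t| + |\ab_1|)\s_t$, and since $|\ab_t|\le A$ for all $t\in[T]$ (this is where $A = \sup_t\tn{\kb_t}\tn{\qb}$ enters, as $|\ab_t| = |\kb_t^\top\qb|\le\tn{\kb_t}\tn{\qb}\le A$), this is $\le 2A\sum_{t\ge2}\s_t = 2A\eps$. Likewise $\big|\sum_{\tau\ge2}\s_\tau\bgam_\tau\big| \le \Gamma\sum_{\tau\ge2}\s_\tau = \Gamma\eps$. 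Hence $|E| \le 2A\Gamma\eps^2 = 2\Gamma A(1-\s_1)^2$, which is exactly the claimed bound. I would conclude by noting $-E$ is the bracketed quantity, so $\big|\ab^\top\mathrm{diag}(\s)\bgam - \ab^\top\s\s^\top\bgam - \sum_{t\ge2}(\ab_1-\ab_t)\s_t(\bgam_1-\bgam_t)\big| = |E| \le 2\Gamma A(1-\s_1)^2$.

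The only mild subtlety — and the one place I would be most careful — is the bookkeeping of which vector gets centered and which bound ($|\ab_t|\le A$ versus $|\ab_1-\ab_t|\le 2A$) to invoke so that the constant comes out as exactly $2\Gamma A$ rather than, say, $4\Gamma A$; centering $\bgam$ only (not $\ab$) and using the direct bound $|\ab_t|\le A$ is what makes it work cleanly. Everything else is a routine expansion of the softmax bilinear form around $\eb_1$ and a triangle-inequality estimate of the remainder, with no analytic obstacle.
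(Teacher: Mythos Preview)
Your proof is correct. The approach is close in spirit to the paper's but organizes the algebra differently: the paper introduces $\gamb = \s^\top\bgam$ and performs two successive replacements (first $\gamb \to \bgam_1$ in one sum, then $\ab_1\s_1 \to \ab_1$ in another), bounding each resulting error separately by $A\Gamma(1-\s_1)^2$. You instead exploit shift-invariance in $\bgam$ to set $\bgam_1 = 0$, which lets you isolate the entire remainder as the single product $E = \big(\sum_{t\ge2}\ab_t\s_t - \ab_1\eps\big)\big(\sum_{\tau\ge2}\s_\tau\bgam_\tau\big)$ and bound it in one shot. The two error terms in the paper's proof sum exactly to your $-E$, so the computations coincide underneath; your packaging is slightly more transparent about why the constant is $2$ (it appears as $|\ab_t| + |\ab_1| \le 2A$ in a single factor), whereas the paper accumulates it as $1+1$ from two steps. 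The exploratory back-and-forth in your write-up about whether also to center $\ab$ is unnecessary and should be trimmed, but the route you settle on---center $\bgam$ only and use $|\ab_t|\le A$ directly---is exactly the right one for the stated constant.
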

  \begin{proof}
 Set $\gamb=\sum_{t=1}^T \bgam_t\s_t$.  We have 
\begin{align*}
\bgam_1-\gamb=\sum_{t\geq 2}^T (\bgam_1-\bgam_t)\s_t,~~\textnormal{and}~~|\bgam_1-\gamb|\leq \Gamma (1-\s_1).
\end{align*}    
Then,
  \begin{align} 
  \nonumber 
    \ab^\top\diag{\s}\bgam-\ab^\top\s\s^\top\bgam&=\sum_{t=1}^T \ab_t\bgam_t\s_t-\sum_{t=1}^T \ab_t\s_t\sum_{t=1}^T \bgam_t\s_t\\
    &=\ab_1\s_1(\bgam_1-\gamb)-\sum_{t\geq 2}^T\ab_t\s_t(\gamb-\bgam_t). \label{grad def step3}
  \end{align}
Since 
$$
\left|\sum_{t\geq 2}^T\ab_t\s_t(\gamb-\bgam_t)-\sum_{t\geq 2}^T\ab_t\s_t(\bgam_1-\bgam_t)\right|\leq A\Gamma (1-\s_1)^2,
$$
we obtain\footnote{For simplicity, we use $\pm$ on the right hand side to denote the upper and lower bounds.}
  \begin{align*}  
    \ab^\top\diag{\s}\bgam-\ab^\top\s\s^\top\bgam&=\ab_1\s_1(\bgam_1-\gamb)-\sum_{t\geq 2}^T\ab_t\s_t(\bgam_1-\bgam_t)\pm A\Gamma (1-\s_1)^2\\
    &=\ab_1\s_1\sum_{t\geq 2}^T (\bgam_1-\bgam_t)\s_t-\sum_{t\geq 2}^T\ab_t\s_t(\bgam_1-\bgam_t)\pm A\Gamma (1-\s_1)^2\\
    &=\sum_{t\geq 2}^T (\ab_1\s_1-\ab_t)\s_t(\bgam_1-\bgam_t)\pm A\Gamma (1-\s_1)^2\\
    &=\sum_{t\geq 2}^T (\ab_1-\ab_t)\s_t(\bgam_1-\bgam_t)\pm 2A\Gamma (1-\s_1)^2.
    %&=\ab_1\s_1(\bgam_1(1-\s_1)-\sum_{t\geq 2} \bgam_t\s_t)+\sum_{t\geq 2}\ab_t\s_t(\bgam_t(1-\s_t)-\sum_{t\geq 2} \bgam_t\s_t)
  \end{align*}
Here,  $\pm$ on the right handside uses the fact that
  \[
  \left|\sum_{t\geq 2}^T (\ab_1\s_1-\ab_1)\s_t(\bgam_1-\bgam_t)\right|\leq (1-\s_1)A\Gamma\sum_{t\geq 2}^T\s_t=(1-\s_1)^2A\Gamma.
  \]
\end{proof}

\subsection{Proof of Lemma~\ref{lem:map:wp}}
\begin{proof} Let us prove the result for a general step size sequence $(\eta_t)_{t\geq 0}$.  On the same training data  $(Y_i,\X_i)_{i=1}^n$, recall the objectives $\tilde{\Lc}(\pb)=\frac{1}{n}\sum_{i=1}^n\ell(Y_i\cdot\link{\X_i^\top \sft{\X_i\pb}})$ and $\Lc(\W)=\frac{1}{n}\sum_{i=1}^n\ell(Y_i\cdot\link{\X_i^\top \sft{\X_i\W^\top\ub}})$.
Suppose claim is true till iteration $t$. For iteration $t+1$, using $\W(t)^\top\ub=\pb(t)$, define and observe that
\begin{align*}
&\Sb_i=\sfp{\X_i\W(t)^\top \ub}=\sfp{\X_i\pb(t)},\\
&\s_i=\sft{\X_i\W(t)^\top \ub}=\sft{\X_i\pb(t)},\\
&\zX{\X_i}=\X_i^\top \sft{\X_i\pb(t)}=\X_i^\top \sft{\X_i\W(t)^\top\ub},
\end{align*}
%observe that $$ and $\aX{\X_i}:=\sft{\X_i\pb_t}=\sft{\X_i\W_t^\top\pb}$ 
for all $i\in[n]$. 

Thus, using \eqref{eqr:gradL:pw}, we have that
\begin{align*}
\nabla \tilde{\mc{L}}(\pb(t))&=\frac{1}{n}\sum_{i=1}^n\ell'_i\cdot Y_i\cdot \X_i^\top \Sb_i\X_i\cdot\psi'({\zX{\X_i}}),\\
\nabla \mc{L}(\W(t))&=\ub\left(\frac{1}{n}\sum_{i=1}^n\ell'_i\cdot Y_i\cdot\X_i^\top \Sb_i\X_i\cdot\psi'({\zX{\X_i}})\right)^\top.
\end{align*}
Consequently, we found that gradient is rank-1 with left singular space equal to $\ub$, i.e.,  
\[
\nabla \Lc(\W(t))=\ub  \nabla^\top \tilde{\mc{L}}(\pb(t)).
\]
Since $\W(t)$'s left singular space is guaranteed to be in $\ub$ (including $\W(0)$ by initialization), we only need to study the right singular vector. Using the induction till $t$, this yields%With this we obtain
\begin{align*}
\W(t+1)^\top \ub&=\W(t)^\top\ub-\eta_t\tn{\ub}^{-2} \nabla^\top\Lc(\W(t))\ub\\
&=\pb(t)-\eta_t\tn{\ub}^{-2} \ub^\top\ub  \nabla \tilde{\mc{L}}(\pb(t))\\
&=\pb({t+1}).
\end{align*}
This concludes the induction.
\end{proof}
\section{Addendum to Section~\ref{linear head sec}}\label{app:sec:linear:head}
%\subsection{Local Gradient Condition}
%\label{app local condition}

%\section{Proofs on Gradient Descent}\label{app:conv:gd}
\subsection{Descent and Gradient Correlation Conditions}
The lemma below identifies conditions under which $\pso$ is a global descent direction for $\Lc(\pb)$.
\begin{lemma}\label{global des lem} Suppose $\ell(\cdot)$ is a  strictly decreasing differentiate loss function and Assumption~\ref{assum:opt:token} holds.  Then, for all $\pb\in\R^d$, the training loss \eqref{lin det losss} obeys $
\li\nabla\Lc(\pb),\pso\ri< 0.$
\end{lemma}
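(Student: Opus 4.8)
The plan is to compute $\li\nabla\Lc(\pb),\pso\ri$ directly from the closed-form gradient \eqref{grad def} and show that every summand contributes negatively. By \eqref{grad def}, $\nabla\Lc(\pb)=\frac1n\sum_{i=1}^n \ell'_i\,\Kb_i^\top\sfp{\Kb_i\pb}\,\bgam_i$ with $\bgam_i=Y_i\X_i\vb$, $\ell'_i=\ell'(Y_i\cdot\vb^\top\X_i^\top\sft{\Kb_i\pb})$, and $\sfp{\ab}=\diag{\sft{\ab}}-\sft{\ab}\sft{\ab}^\top$. Writing $\s_i:=\sft{\Kb_i\pb}$ and $\ab_i:=\Kb_i\pso$ (so $\ab_{it}=\pso^\top\kb_{it}$), taking the inner product with $\pso$ gives
\begin{align*}
\li\nabla\Lc(\pb),\pso\ri=\frac1n\sum_{i=1}^n \ell'_i\cdot q_i,\qquad q_i:=\ab_i^\top\big(\diag{\s_i}-\s_i\s_i^\top\big)\bgam_i.
\end{align*}
Since $\ell$ is strictly decreasing (so $\ell'(u)<0$ for all $u$), every $\ell'_i<0$, and it suffices to prove $q_i>0$ for all $i\in[n]$.

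The next step is to recognize $q_i$ as a covariance under the probability vector $\s_i$ and to exploit the two-level structure of the scores. Explicitly,
\begin{align*}
q_i=\sum_{t\in[T]}\s_{it}\ab_{it}\bgam_{it}-\Big(\sum_{t\in[T]}\s_{it}\ab_{it}\Big)\Big(\sum_{t\in[T]}\s_{it}\bgam_{it}\Big)=\mathrm{Cov}_{\s_i}(\ab_i,\bgam_i),
\end{align*}
the covariance of the coordinate vectors $\ab_i,\bgam_i$ under $\s_i$. By Assumption~\ref{assum:opt:token}, $\bgam_i$ takes only two values: $\bgam_{i\op_i}$ at the (necessarily unique) optimal token $\op_i$, and a common value $g_i:=\bgam_{it}<\bgam_{i\op_i}$ for every $t\neq\op_i$. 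Hence $\bgam_i=g_i\,\ones+(\bgam_{i\op_i}-g_i)\,\eb_{\op_i}$ with $\eb_{\op_i}$ the $\op_i$-th standard basis vector, and since covariance is bilinear with $\mathrm{Cov}_{\s_i}(\ab_i,\ones)=0$,
\begin{align*}
q_i=(\bgam_{i\op_i}-g_i)\,\mathrm{Cov}_{\s_i}(\ab_i,\eb_{\op_i})=(\bgam_{i\op_i}-g_i)\,\s_{i\op_i}\,\Big(\ab_{i\op_i}-\sum_{t\in[T]}\s_{it}\ab_{it}\Big).
\end{align*}

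Finally I would lower-bound the last factor using feasibility of \eqref{attnsvm} at the optimal indices: $\pso$ satisfies $\pso^\top(\kb_{i\op_i}-\kb_{it})\ge1$ for all $t\ne\op_i$, i.e.\ $\ab_{i\op_i}-\ab_{it}\ge1$ there, so
\begin{align*}
\ab_{i\op_i}-\sum_{t\in[T]}\s_{it}\ab_{it}=\sum_{t\ne\op_i}\s_{it}\,(\ab_{i\op_i}-\ab_{it})\ \ge\ \sum_{t\ne\op_i}\s_{it}=1-\s_{i\op_i}>0,
\end{align*}
the strict inequality holding since softmax outputs have full support ($T\ge2$). Combined with $\bgam_{i\op_i}-g_i>0$ (Assumption~\ref{assum:opt:token}) and $\s_{i\op_i}>0$, this yields $q_i>0$ for every $i$, hence $\li\nabla\Lc(\pb),\pso\ri=\frac1n\sum_i\ell'_i q_i<0$.

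The only delicate point — and it is mild — is obtaining \emph{strict} positivity of $q_i$ rather than mere nonnegativity: this is where the two structural hypotheses enter irreducibly, the strict score gap $\bgam_{i\op_i}-g_i>0$ and the unit margin $\pso^\top(\kb_{i\op_i}-\kb_{it})\ge1$ each supplying a strictly positive factor, and relaxing either one (an equally-scored non-optimal token, or a non-optimal token lying on the SVM boundary with $\pso^\top(\kb_{i\op_i}-\kb_{it})=0$) would let $q_i$ vanish. Note also that the two-level structure of $\bgam_i$ makes the covariance factor \emph{exactly}, with no $\order{(1-\s_{i\op_i})^2}$ remainder as in Lemma~\ref{lem:q_reduce}; this exactness is what lets the conclusion hold globally for all $\pb\in\R^d$ rather than only for $\pb$ of large norm.
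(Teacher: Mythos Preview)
Your proof is correct and takes essentially the same approach as the paper: both exploit Assumption~\ref{assum:opt:token}'s two-level score structure to factor the bilinear form $\ab_i^\top\sfp{\Kb_i\pb}\bgam_i$ into a product of strictly positive terms, then invoke the unit-margin constraint of $\pso$ to bound the $\ab_i$-factor. Your covariance framing (writing $\bgam_i=g_i\ones+(\bgam_{i\op_i}-g_i)\eb_{\op_i}$ and using $\mathrm{Cov}_{\s_i}(\cdot,\ones)=0$) is a clean shortcut to the same factored expression the paper reaches by direct expansion; the final bounds and the role of each hypothesis are identical.
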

\begin{proof} Set 
\begin{equation}
\bgam_i=Y_i\cdot \X_i\vb,~~\ab_i=\Kb_i\pb, ~~ \abm_i=\Kb_i\pso ,~~ \textnormal{and}~~  \ell'_i=\ell' \left( \bgam_i^\top\sft{\Kb_i\pb}\right). 
\end{equation}
Let us recall the gradient evaluated at $\pb$ which is given by 
\begin{align}
\nabla\Lc(\pb)=\frac{1}{n}\sum_{i=1}^n \ell'_i\cdot\Kb_i^\top \sfp{\ab_i}\bgam_i.\label{grad def}
\end{align}
This implies that 
 \begin{align}\label{en:grad:prod:p}
\li\nabla\Lc(\pb),\pso\ri&=\frac{1}{n}\sum_{i=1}^n\ell'_i\cdot\li\abm_i,\sfp{\ab_i}\bgam_i\ri. 
 \end{align}
%\\&=\frac{1}{n}\sum_{i=1}^n\li\abm_i,\sfp{\ab_i}\bgam_i\ri
%\end{align}
To proceed, we will prove that individual summands are all strictly negative. To show that, without losing generality, let us focus on the first input and drop the subscript $i$ for cleaner notation. This yields
\begin{align}
\li\abm,\sfp{\ab}\bgam\ri&=\abm^\top\diag{\sft{\ab}}\bgam-\abm^\top\sft{\ab}\sft{\ab}^\top\bgam.
%\\&=%\frac{\sum_{t=1}^T \abm_t\bgam_t e^{\ab_t}}{\sum_{\tau=1}^T e^{\ab_\tau}}
\end{align}
Without losing generality, assume optimal token is the first one and $\bgam_t$ is a constant for all $t\geq 2$.
% Without losing generality, assume optimal token is the first one. The lemma has two scenarios. In the first scenario (same non-optimal scores), $\bgam_t$ is a constant for all $t\geq 2$. In the second scenario (all tokens are support), $\abm_t=\kb_t^\top\pso$ is a constant for all $t\geq 2$. Since $\abm,\bgam$ vectors are represented symmetrically in the gradient correlation, verifying these two conditions are equivalent. 

To proceed, we will prove the following: 
%  (focusing on the first condition)
Suppose $\gamma=\bgam_{t\geq 2}$ is constant, $\bgam_1,\abm_1$ are the largest indices of $\bgam,\abm$. Then, for any $\s$ obeying $\sum_{t\in[T]}\s_t=1,\s_t\geq 0$, we have that $\abm^\top\diag{\s}\bgam-\abm^\top\s\s^\top\bgam>0$. To see this, we write
\begin{equation}\label{grad def2}
\begin{split}
\abm^\top\diag{\s}\bgam-\abm^\top\s\s^\top\bgam&=\sum_{t=1}^T \abm_t\bgam_t\s_t-\sum_{t=1}^T \abm_t\s_t\sum_{t=1}^T \bgam_t\s_t\\
&=\left(\abm_1\bgam_1\s_1+\gamma\sum_{t\geq 2}^T\abm_t\s_t\right)-\Big(\bgam_1\s_1+\gamma(1-\s_1)\Big)\left(\abm_1\s_1+\sum_{t\geq 2}^T \abm_t\s_t\right)\\
&=\abm_1(\bgam_1-\gamma)\s_1(1-\s_1)+\Big(\gamma-\left(\bgam_1\s_1+\gamma(1-\s_1)\right)\Big)\sum_{t\geq 2}^T \abm_t\s_t\\
&=\abm_1(\bgam_1-\gamma)\s_1(1-\s_1)-(\bgam_1-\gamma)\s_1\sum_{t\geq 2}^T \abm_t\s_t\\
&=(\bgam_1-\gamma)(1-\s_1)\s_1\left[\abm_1-\frac{\sum_{t\geq 2}^T \abm_t\s_t}{\sum_{t\geq 2}^T\s_t}\right].
\end{split}
\end{equation}
% Assume $R$ is large enough.
% Case 1: If $\pso\approx\pb_t$ and $\abm\approx \ab_t$, then the difference of the gradient correlations are already small as $a_1-a_{t\geq 2}$ are constant, same, and support vecs have dominating probability over rest.
% Case 2: If $\pso-\pb_t$ large, then nearest token will dominate the term and result in bad outcome for $\pb_t$. Because the token that is nearest will have dominating probability but will also achieve the smallest margin.
To proceed, let $\bgag=\bgam_1-\gamma$ and $\abg=\abm_1-\max_{t\geq 2}\ab_t$. With these, we obtain
\begin{align}\label{eqn:al:lem}
\abm^\top\diag{\s}\bgam-\abm^\top\s\s^\top\bgam &\geq \abg \bgag \s_1(1-\s_1).
\end{align}
Note that 
\begin{align*}
\abg^i\geq\inf_{t\neq \op_i} (\kb_{i \op_i}-\kb_{it})^\top\pso &\geq1,\\
\bgag^i=\inf_{t\neq \op_i} \bgam_{i\op_i}-\bgam_{it} &>0,\\
\s_{i1}(1-\s_{i1}) &> 0.
\end{align*}
On the other hand, by our assumption $\ell'_i<0$.  Hence, infimum'ing \eqref{eqn:al:lem} over all inputs, multiplying by $\ell'_i$  and using \eqref{en:grad:prod:p} give the desired result.
\end{proof}

%Assume that support neighbors of $\alpha_i$ are strictly \irel. associated 
\begin{lemma}[Gradient Correlation Conditions]\label{lem:gd:corr} Consider $n=1$ and let $\ps=\pso$ be \eqref{attnsvm} solution separating $\alpha=\opt$ from remaining tokens of input $\X$. Suppose $\ell(\cdot)$ is a  strictly decreasing differentiate loss function and Assumption~\ref{assum:opt:token} holds.  For any choice of $\pi>0$, there exists $R:=R_\pi$ such that, {for any $\pb$ with $\tn{\pb}\geq R$}, we have
\[
\li\nabla\Lc(\pb),\frac{\pb}{\tn{\pb}}\ri\geq (1+\pi)\li\nabla\Lc(\pb), \frac{\ps}{\tn{\ps}}\ri.
\]
%Fix $R>0$ and consider the set $\tn{\pb}\geq R$. we have %Setting $\pi:=F(\alpha,R)$, we have that
%where $\pi=\cdot$.
\end{lemma}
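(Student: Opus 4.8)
The plan is to reduce the inequality to a comparison of two positive weighted sums over the training inputs, using the two‑valued score structure that Assumption~\ref{assum:opt:token} imposes. Under that assumption the score vector $\bgam_i=Y_i\X_i\vb$ equals $\bgam_{i\op_i}$ at the optimal token of $\X_i$ and a common value $\gamma_i<\bgam_{i\op_i}$ at every other token, so $\bgam_i=\gamma_i\onebb+(\bgam_{i\op_i}-\gamma_i)\eb_{\op_i}$. Writing $\s_i=\sft{\Kb_i\pb}$ and using $\sfp{\Kb_i\pb}\onebb=\vct{0}$ together with $\sfp{\Kb_i\pb}\eb_{\op_i}=\s_{i\op_i}(\eb_{\op_i}-\s_i)$, the gradient of \eqref{lin det losss} at $\pb$ becomes
\[
\nabla\Lc(\pb)=\frac1n\sum_{i=1}^n \ell'_i\,(\bgam_{i\op_i}-\gamma_i)\,\s_{i\op_i}\,\g_i,\qquad \g_i:=\Kb_i^\top(\eb_{\op_i}-\s_i)=\sum_{t\neq\op_i}\s_{it}\,(\kb_{i\op_i}-\kb_{it}).
\]
Setting $\lambda_i:=(-\ell'_i)(\bgam_{i\op_i}-\gamma_i)\s_{i\op_i}>0$ (positive since $\ell$ is strictly decreasing and $\bgam_{i\op_i}>\gamma_i$), the claim is equivalent to $\sum_i\lambda_i\langle\g_i,\pb/\tn{\pb}\rangle\le(1+\pi)\sum_i\lambda_i\langle\g_i,\ps/\tn{\ps}\rangle$, so it suffices to estimate $\langle\g_i,\cdot\rangle$ on $\pb/\tn{\pb}$ and on $\ps/\tn{\ps}$.

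For the right side, the feasibility constraints $(\kb_{i\op_i}-\kb_{it})^\top\ps\ge1$ give $\langle\g_i,\ps\rangle\ge\sum_{t\neq\op_i}\s_{it}=1-\s_{i\op_i}>0$, hence $\langle\g_i,\ps/\tn{\ps}\rangle\ge(1-\s_{i\op_i})/\tn{\ps}$. For the left side, put $\q_{it}:=(\kb_{i\op_i}-\kb_{it})^\top(\pb/\tn{\pb})$ and $m_i:=\min_{t\neq\op_i}\q_{it}$; since $\s_{it}=\s_{i\op_i}\mathrm e^{-\tn{\pb}\q_{it}}$ for $t\neq\op_i$ we get $\langle\g_i,\pb/\tn{\pb}\rangle=\s_{i\op_i}\sum_{t\neq\op_i}\q_{it}\mathrm e^{-\tn{\pb}\q_{it}}$, and using $x\mathrm e^{-\tn{\pb}x}\le\tfrac1{\mathrm e\tn{\pb}}$ for $x\ge0$ while discarding the non‑positive summands yields both $\langle\g_i,\pb/\tn{\pb}\rangle\le\tfrac{T-1}{\mathrm e\tn{\pb}}$ and the sharper $\langle\g_i,\pb/\tn{\pb}\rangle\le(1-\s_{i\op_i})\bigl(m_i+\tfrac{T-1}{\mathrm e\tn{\pb}}\bigr)$; one also records $1-\s_{i\op_i}\le(T-1)\mathrm e^{-\tn{\pb}m_i}$. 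The geometric input is the max‑margin characterization of \eqref{attnsvm}: every unit vector has $\min_i m_i\le 1/\tn{\ps}$, since $1/\tn{\ps}$ is the optimal value of \eqref{attnsvm}.

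I would then split the inputs at the threshold $m_i\le(1+\pi/2)/\tn{\ps}$. For these the two bounds combine to give $\langle\g_i,\pb/\tn{\pb}\rangle\le(1+\pi)\langle\g_i,\ps/\tn{\ps}\rangle$ term by term, as soon as $\tn{\pb}$ is large enough that $\tfrac{T-1}{\mathrm e\tn{\pb}}\le\tfrac{\pi/2}{\tn{\ps}}$. For inputs with $m_i>(1+\pi/2)/\tn{\ps}$, the bound $1-\s_{i\op_i}\le(T-1)\mathrm e^{-\tn{\pb}m_i}$ shows $\lambda_i\langle\g_i,\pb/\tn{\pb}\rangle$ is exponentially small in $\tn{\pb}$ with rate strictly above $(1+\pi/2)/\tn{\ps}$ (here $-\ell'_i$ and $\bgam_{i\op_i}-\gamma_i$ are bounded, because the argument $\bgam_i^\top\s_i$ of $\ell'$ always lies in the fixed bounded interval $[\min_t\bgam_{it},\max_t\bgam_{it}]$). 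One absorbs the total of these terms into $\pi\sum_i\lambda_i\langle\g_i,\ps/\tn{\ps}\rangle$ by lower‑bounding the latter using the input that attains $\min_i m_i$, together with the $\pb$‑independent positive lower bound $-\ell'_i\ge\ell'_{\min}>0$ on the same interval. Taking $R_\pi$ large enough to satisfy all of the "for $\tn{\pb}$ large" requirements finishes the argument.

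The step I expect to be the main obstacle is this last absorption, and in particular making it valid uniformly over all $\pb$ with $\tn{\pb}\ge R_\pi$: the contribution of an input whose direction $\pb/\tn{\pb}$ induces a margin well above $1/\tn{\ps}$ is exponentially small, but the dominant term of $\sum_i\lambda_i\langle\g_i,\ps/\tn{\ps}\rangle$ can itself be exponentially small, so one must verify it never decays faster than the discarded terms. This is where the sharp form of the max‑margin characterization is needed, and where extra care is required when $\pb/\tn{\pb}$ selects a non‑optimal token on the input attaining $\min_i m_i$, since then $\s_{i\op_i}$ can be exponentially small and the naive lower bound on that input's contribution weakens.
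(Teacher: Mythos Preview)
Your approach differs from the paper's. The paper attempts a \emph{per-input} comparison: writing $\pbb=\tn{\ps}\,\pb/\tn{\pb}$, it splits into two global scenarios, $\tn{\pbb-\ps}\le\eps$ and $\tn{\pbb-\ps}>\eps$. In the first, $(\kb_{i\op_i}-\kb_{it})^\top\pbb$ is uniformly within $O(\eps)$ of $(\kb_{i\op_i}-\kb_{it})^\top\ps$, and the per-input inequality follows immediately. In the second, the paper asserts that on the input under consideration some token $\tau$ has $(\kb_{i\op_i}-\kb_{i\tau})^\top\pbb\le1-2\delta$; but max-margin optimality of $\ps$ only forces this on the input where $\pbb$'s \emph{global} minimum margin is attained, not on every $i$, so that step is not justified as written. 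Your sum-level decomposition with absorption of the large-$m_i$ inputs is the natural way around this, and you are right that the absorption is where the difficulty lies.

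The obstacle you isolate --- precisely the case where the anchor input $i^*$ has $m_{i^*}<0$ so that $\s_{i^*,\op_{i^*}}$ is exponentially small --- is not just technical but fatal: the statement is false as written. Take $n=2$, $T=3$, with $\kb_{1\op_1}-\kb_{12}=(1,0)$ (and a third far-away token to match $T$), $\kb_{2\op_2}-\kb_{22}=(0,1)$, $\kb_{2\op_2}-\kb_{23}=(-1,10)$; one checks $\ps=(1,1)$. For $\pb=R\cdot(1,0)$ one has $m_1=1>1/\sqrt2$ while $m_2=-1$, so $\s_{2,\op_2}\sim e^{-R}$. A direct computation gives $\sum_i\lambda_i\langle\g_i,\pb/\tn{\pb}\rangle\approx(A_1-A_2)e^{-R}$ and $\sum_i\lambda_i\langle\g_i,\ps/\tn{\ps}\rangle\approx(A_1+9A_2)e^{-R}/\sqrt2$, where $A_i=(-\ell'_i)(\bgam_{i\op_i}-\gamma_i)$ with $\ell'_1\approx\ell'(\bgam_{1\op_1})$ and $\ell'_2\approx\ell'(\gamma_2)$. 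By choosing the score gaps (or, with exponential or logistic loss, the score levels themselves) so that $A_1/A_2$ is large, the ratio of the two sides tends to $\sqrt2$ as $R\to\infty$, so for any $\pi<\sqrt2-1$ no $R_\pi$ exists. In your language: the large-$m_j$ input and the anchor input $i^*$ contribute at the \emph{same} exponential rate $e^{-R}$, and the data-dependent constants $A_i$ can tilt the balance arbitrarily, so uniform absorption over all $\pb$ with $\tn{\pb}\ge R$ is impossible.
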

% \begin{lemma}[Gradient Correlation Conditions]\label{lem:gd:corr} Fix index $\alpha$ and let $\ps=\ps(\alpha)$ be the SVM solution separating $\alpha$ from remaining tokens of input $\X$. Suppose $\ell(\cdot)$ is a  strictly decreasing differentiate loss function and Assumption~\ref{assum:opt:token} holds.  For any choice of $\pi>0$, there exists $R:=R_\pi$ such that, {for any $\pb$ with $\tn{\pb}\geq R$}, we have
% \[
% \li\nabla\Lc(\pb),\frac{\pb}{\tn{\pb}}\ri\geq (1+\pi)\li\nabla\Lc(\pb), \frac{\ps}{\tn{\ps}}\ri.
% \]
% %Fix $R>0$ and consider the set $\tn{\pb}\geq R$. we have %Setting $\pi:=F(\alpha,R)$, we have that
% %where $\pi=\cdot$.
% \end{lemma}
Above, observe that as $R\rightarrow \infty$, we eventually get to set $\pi=0$.
%Above, observe that as $R\rightarrow \infty$, $e^{-\Xi R\delta(\eps)}\rightarrow 0$, thus we eventually get to set $\pi=\eps=0$.

\begin{proof} The proof is similar to Lemma \ref{global des lem} at a high-level. However, we also need to account for the impact of $\pb$ besides $\ps$ in the gradient correlation. The main goal is showing that $\ps$ is the near-optimal descent direction, thus, $\pb$ cannot significantly outperform it.

To proceed, let $\pbb=\tn{\ps}\pb/\tn{\pb}$, $M=\sup_{t}\tn{\kb_{t}}$, $\Theta=1/\tn{\ps}$, $\s=\sft{\Kb\pb}$, $\ab=\Kb\pbb$, $\abm=\Kb\ps$. Without losing generality assume $\opt=1$. Set $\gamma=\bgam_{t\geq2}$. %Set {$\lgt_i=\s_{i1}(1-\s_{i1})$}.
 Repeating the proof of Lemma~\ref{global des lem} yields
\begin{align*}
\li\nabla\Lc(\pb),\ps\ri&= \ell'\cdot(\bgam_{1}-\gamma)(1-\s_{1})\s_{1}\left[\abm_{1}-\frac{\sum_{t\geq 2}^T \abm_{t}\s_{t}}{\sum_{t\geq 2}^T\s_{t}}\right],\\
\li\nabla\Lc(\pb),\pbb\ri&=\ell'\cdot(\bgam_{1}-\gamma)(1-\s_{1})\s_{1}\left[\ab_{1}-\frac{\sum_{t\geq 2}^T \ab_{t}\s_{t}}{\sum_{t\geq 2}^T\s_{t}}\right].
\end{align*}
Given $\pi$, for sufficiently large $R$, we wish to show that
\begin{align}
\ab_{1}-\frac{\sum_{t\geq 2}^T \ab_{t}\s_{t}}{\sum_{t\geq 2}^T\s_{t}}\leq (1+\pi)\cdot \left[\abm_{1}-\frac{\sum_{t\geq 2}^T \abm_{t}\s_{t}}{\sum_{t\geq 2}^T\s_{t}}\right].\label{multiply eq}
\end{align}

We consider two scenarios.

\noindent\textbf{Scenario 1:} $\tn{\pbb-\ps}\leq \eps:=\pi/(2M)$. In this scenario, for any token, we find that
\[
|\ab_t-\abm_t|=|\kb_t^\top(\pbb-\ps)|\leq M\tn{\pbb-\ps}\leq M\eps.
\]
Consequently, we obtain
\[
\abm_{1}-\frac{\sum_{t\geq 2}^T \abm_{t}\s_{t}}{\sum_{t\geq 2}^T\s_{t}}\geq \ab_{1}-\frac{\sum_{t\geq 2}^T \ab_{t}\s_{t}}{\sum_{t\geq 2}^T\s_{t}}-2M\eps=\ab_{1}-\frac{\sum_{t\geq 2}^T \ab_{t}\s_{t}}{\sum_{t\geq 2}^T\s_{t}}-\pi.
\]
Also noticing $\abm_{1}-\frac{\sum_{t\geq 2}^T \abm_{t}\s_{t}}{\sum_{t\geq 2}^T\s_{t}}\geq 1$ (thanks to $\ps$ satisfying $\geq 1$ margin), this implies \eqref{multiply eq}.%that we can pick $\pi=2M\eps/\abg=2M\eps$.

\noindent\textbf{Scenario 2:} $\tn{\pbb-\ps}\geq \eps:=\pi/(2M)$. In this scenario, for some $\nu=\nu(\eps)$ and $\tau\geq 2$, we have that 
\begin{align*}
\pbb^\top(\kb_1-\kb_\tau)=\ab_1-\ab_\tau\leq 1-2\nu.
%\pbb^\top_i (\kb_1-\kb_\tau)=\ab_1-\ab_t \geq \nu.
\end{align*}
Here $\tau=\arg\max_{t\geq 2}\pbb^\top \kb_t$ denotes the nearest point to $\kb_1$. Recall that $\s=\sft{\RR\ab}$ 
% {where $\RR=R\Theta=R/\tn{\ps}$}
where $\RR=\tn{\pb}/\tn{\ps}$. To proceed, split the tokens into two groups: Let $\Nc$ be the group of tokens obeying $\pbb^\top (\kb_1-\kb_t)\leq 1-\nu$  for $t\in\Nc$ and $[T]-\Nc$ be the rest. Observe that
\[
\frac{\sum_{t\in [T]-\Nc}\s_t}{\sum_{t\geq 2}^T\s_t}\leq\frac{\sum_{t\in [T]-\Nc}\s_t}{\s_\tau}\leq  T\frac{e^{\nu \RR}}{e^{2\nu\RR}}=Te^{-\RR\nu}.
\]
Set $\MM=M/\Theta$ and note that $\tn{\ab_{t}}\leq \tn{\ps}\cdot\tn{\kb_{t}}\leq \MM$. Using $\pbb^\top (\kb_1-\kb_t)\leq 1-\nu$ over $t\in \Nc$ and plugging in the above bound, we obtain
\begin{align}
\frac{\sum_{t\geq 2}^T (\ab_1-\ab_{t})\s_{t}}{\sum_{t\geq 2}^T\s_{t}}&=\frac{\sum_{t\in \Nc} (\ab_1-\ab_{t})\s_{t}}{\sum_{t\geq 2}^T\s_{t}}+\frac{\sum_{t\in [T]-\Nc} (\ab_1-\ab_{t})\s_{t}}{\sum_{t\geq 2}^T\s_{t}}\nonumber\\
&\leq 1-\nu +2\MM Te^{-\RR\nu}.\nonumber
%&\leq 
\end{align}
Using the fact that $\abm_{1}-\frac{\sum_{t\geq 2}^T \abm_{t}\s_{t}}{\sum_{t\geq 2}^T\s_{t}}\geq 1$, the above implies \eqref{multiply eq} with $\pi'=2\MM Te^{-\RR\nu}-\nu$. To proceed, choose $R_\pi= \nu^{-1}\Theta^{-1}\log(2\MM T/\pi)$ to ensure $\pi'\leq \pi$.
\end{proof}

The following lemma states the descent property of gradient descent for $\mathcal{L}(\pb)$ under Assumption \ref{assum:loss:prope}. It is important to note that although the infimum of the optimization problem is $\mathcal{L}^*$, it is not achieved at any finite $\pb$. Additionally, there are no finite critical points $\pb$.

\begin{lemma}\label{lem:grad:descent}
Under Assumption \ref{assum:loss:prope}, the function $\mc{L} (\pb)$ is $L_p$-smooth, where 
  \begin{equation}\label{eqn:lip:constant}
  L_p:=\frac{1}{n}\sum_{i=1}^{n} \left(M_0\|\vb\|^2\|\W\|^2\|\X_i\|^4 +3M_1 \|\vb\|~\|\W\|^2\| \X_i\|^3\right).
  \end{equation}
Furthermore, if  $\eta \leq 1/L_p$, then, for any initialization $\pb(0)$, with the GD sequence $\pb({t+1})=\pb(t)-\eta\nabla \mc{L}(\pb(t))$, we have    
      \begin{align}\label{eq:descent:obj}
          \mc{L}(\pb({t+1}))-\mc{L}(\pb(t))\leq-\frac{\eta}{2} \left\|\nabla \mc{L}(\pb(t))\right\|^2,
      \end{align}
for all $t\ge0$. This implies that 
\begin{align}\label{eq:grad:zero}
\sum_{t=0}^{\infty}\left\Vert \nabla\mathcal{L}\left(\pb(t)\right)\right\Vert ^{2}<\infty,~~\textnormal{and}~~\lim_{t \rightarrow \infty}\left\Vert \nabla\mathcal{L}\left(\pb\left(t\right)\right)\right\Vert ^{2}=0.
\end{align}  
\end{lemma}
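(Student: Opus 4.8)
\textbf{Proof proposal for Lemma~\ref{lem:grad:descent}.}

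The plan is to establish the smoothness constant $L_p$ by bounding the operator norm of the Hessian $\nabla^2\Lc(\pb)$ uniformly over $\R^d$, and then invoke the standard descent lemma for $L_p$-smooth functions. First I would record the chain-rule expression for the gradient from \eqref{grad def}, namely $\nabla\Lc(\pb)=\frac1n\sum_i \ell_i'\cdot\Kb_i^\top\sfp{\Kb_i\pb}\bgam_i$ with $\bgam_i=Y_i\X_i\vb$, and then differentiate once more. The Hessian splits into two contributions: one in which the derivative hits $\ell_i'$ (producing a factor $\ell_i''$, bounded by $M_0$ by Assumption~\ref{assum:loss:prope}, times an outer product of $\Kb_i^\top\sfp{\Kb_i\pb}\bgam_i$ with itself), and one in which the derivative hits the softmax-Jacobian term $\sfp{\Kb_i\pb}$ (producing $\ell_i'$, bounded by $M_1$, times the second derivative of the softmax composed with $\Kb_i$). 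I would then use the elementary bounds $\|\sfp{\cdot}\|\le 1$ and $\|\nabla\sfp{\cdot}\|\le 2$ (or whatever small universal constants hold for the first/second derivatives of softmax, which is where the constant $3$ in \eqref{eqn:lip:constant} comes from), together with $\|\Kb_i\|\le\|\W\|\|\X_i\|$ and $\|\bgam_i\|\le\|\X_i\|\|\vb\|$, to get
\[
\|\nabla^2\Lc(\pb)\|\le \frac1n\sum_{i=1}^n\left(M_0\|\vb\|^2\|\W\|^2\|\X_i\|^4+3M_1\|\vb\|\,\|\W\|^2\|\X_i\|^3\right)=L_p,
\]
which holds for all $\pb$ since none of these bounds depend on $\pb$.

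Next, given $L_p$-smoothness, the descent inequality \eqref{eq:descent:obj} is the textbook consequence: Taylor-expanding, $\Lc(\pb(t+1))\le\Lc(\pb(t))+\langle\nabla\Lc(\pb(t)),\pb(t+1)-\pb(t)\rangle+\frac{L_p}{2}\|\pb(t+1)-\pb(t)\|^2$, and substituting $\pb(t+1)-\pb(t)=-\eta\nabla\Lc(\pb(t))$ with $\eta\le 1/L_p$ gives $\Lc(\pb(t+1))-\Lc(\pb(t))\le -(\eta-\tfrac{L_p\eta^2}{2})\|\nabla\Lc(\pb(t))\|^2\le-\frac\eta2\|\nabla\Lc(\pb(t))\|^2$. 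Finally, summing \eqref{eq:descent:obj} over $t=0,\dots,T-1$ telescopes to $\frac\eta2\sum_{t=0}^{T-1}\|\nabla\Lc(\pb(t))\|^2\le\Lc(\pb(0))-\Lc(\pb(T))\le\Lc(\pb(0))-\Lc^\star$, where $\Lc^\star$ is the (finite) infimum of $\Lc$ guaranteed by Assumption~\ref{assum:loss:prope}(1), since $\ell$ is bounded below. Letting $T\to\infty$ yields $\sum_{t=0}^\infty\|\nabla\Lc(\pb(t))\|^2<\infty$, and in particular the summand tends to zero, giving \eqref{eq:grad:zero}.

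The only genuinely non-routine step is verifying that the softmax-Jacobian $\sfp{\ab}=\diag(\sft\ab)-\sft\ab\sft\ab^\top$ and its derivative with respect to $\ab$ admit dimension-free operator-norm bounds; this is what controls the $\|\X_i\|^4$ and $\|\X_i\|^3$ scalings and supplies the universal constants hidden in $L_p$. Everything else — the chain rule, the product rule splitting the Hessian, and the descent-lemma bookkeeping — is mechanical. I would relegate the softmax derivative bounds to a short auxiliary computation (they follow from $\|\sft\ab\|_1=1$, entrywise nonnegativity, and the fact that each second-order softmax derivative is a signed combination of products of at most three softmax entries), and present the rest inline. One should also remark, as the lemma statement does, that $L_p$ is independent of $\pb$, which is precisely why the global smoothness bound is legitimate and why there are no finite stationary points — a point that will be used in the subsequent global/local convergence arguments.
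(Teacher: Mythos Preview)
Your proposal is correct in structure and arrives at the same constant $L_p$, but it takes a slightly different route from the paper, with one caveat worth flagging. You bound the Hessian $\nabla^2\Lc(\pb)$, which tacitly assumes $\ell$ is twice differentiable with $|\ell''|\le M_0$. Assumption~\ref{assum:loss:prope} only asserts that $\ell'$ is $M_0$-Lipschitz, which is weaker. The paper avoids this by bounding $\|\nabla\Lc(\pb)-\nabla\Lc(\dot\pb)\|$ directly: each summand $\ell_i'\cdot\Kb_i^\top\sfp{\Kb_i\pb}\bgam_i$ is split via the identity $ab-cd=d(a-c)+a(b-d)$, and the two pieces are controlled using (i) the $M_0$-Lipschitzness of $\ell'$ together with the $\|\Kb_i\|$-Lipschitzness of $\pb\mapsto\sft{\Kb_i\pb}$, and (ii) the bound $|\ell'|\le M_1$ together with the $3\|\Kb_i\|$-Lipschitzness of $\pb\mapsto\sfp{\Kb_i\pb}$. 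This yields exactly \eqref{eqn:lip:constant} without ever forming $\ell''$. The constant $3$ thus comes from $\|\diag(\s)-\diag(\s')\|+\|\s\s^\top-\s'\s'^\top\|\le 3\|\Kb_i\|\|\pb-\dot\pb\|$, not from a pointwise bound on $\nabla\sfp{\cdot}$ as you suggest.

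Everything downstream---the descent inequality from the quadratic upper bound and the telescoping sum against the finite infimum $\Lc^\star$---matches the paper verbatim. Your identification of the softmax-Jacobian Lipschitz bounds as the only non-mechanical ingredient is accurate. If you want your Hessian route to go through under Assumption~\ref{assum:loss:prope} as stated, you could either add a remark that all the losses of interest are $C^2$, or replace the Hessian bound by the paper's direct Lipschitz argument on $\nabla\Lc$.
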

  \begin{proof}
Recall that we defined  $\bgam_i=Y_i\cdot \X_i\vb$ and $\ab_i=\Kb_i\pb$. The gradient of $\mc{L} (\pb)$ is given by 
\[
\nabla\Lc(\pb)=\frac{1}{n}\sum_{i=1}^n \ell' \left(\bgam_i^\top\sft{\Kb_i\pb}\right) \cdot\Kb_i^\top \sfp{\ab_i}\bgam_i.
\] 
Note that for any $\pb \in \R^d$, the Jacobian of $\sft{\Kb_i\pb}$ is given by
\begin{equation}\label{eqn:jacob:soft}
\frac{\partial \sft{\Kb_i\pb}}{\partial \pb}= \sfp{\Kb_i\pb} \Kb_i=  \left(\text{diag}(\sft{\Kb_i\pb}) -  \sft{\Kb_i\pb} \sft{\Kb_i\pb} ^\top \right) \Kb_i.
\end{equation}
The Jacobian \eqref{eqn:jacob:soft} together with  the definition of the softmax function $\sft{\cdot}$ implies that  $\| \partial \sft{\Kb_i\pb}/\partial \pb \| \leq \|\Kb_i\|$.  Hence, for any $\pb,\dot{\pb}\in\R^d$, we have
\begin{subequations}
\begin{align}\label{eqn:soft:lipcons1}
\left\|\sft{\Kb_i\pb}-\sft{\Kb_i \dot{\pb}}\right\| \leq \|\Kb_i\|~\|\pb-\dot{\pb}\|,
\end{align}
and 
\begin{align}\label{eqn:soft:lipcons2}
\nonumber
\left\|\sfp{\Kb_i\pb}-\sfp{\Kb_i\dot{\pb}}\right\| & \leq \left\| \text{diag}(\sft{\Kb_i\pb}) - \text{diag}(\sft{\Kb_i\dot{\pb}}) \right\| \\
\nonumber
&+   \left\|\sft{\Kb_i\pb} \sft{\Kb_i\pb}^\top- \sft{\Kb_i\dot{\pb}} \sft{\Kb_i\dot{\pb}}^\top \right\|
\\
 & \leq 3 \|\Kb_i\|~\|\pb-\dot{\pb}\|.
\end{align}
\end{subequations}
Here, the last inequality uses  the fact that $|ab - cd| \leq |d||a-c|+ |a||b-d|$.

Next, for any $\pb,\dot{\pb}\in\R^d$, we have
\begin{align}\label{eqn:obj:lipcons}
  \nonumber
 \left\|\nabla \mc{L}(\pb)-\nabla \mc{L}(\dot{\pb})\right\|
  &\leq \frac{1}{n} \sum_{i=1}^n \left\|\ell'\left(\bgam_i^\top\sft{\Kb_i\pb}\right) \cdot\Kb_i^\top \sfp{\Kb_i\pb}\bgam_i-\ell' \left(\bgam_i^\top\sft{\Kb_i\dot{\pb}}\right) \cdot\Kb_i^\top \sfp{ \Kb_i\dot{\pb}}\bgam_i \right\|\\ 
    \nonumber
       & \le \frac{1}{n}\sum_{i=1}^{n}  \left\|\Kb_i^\top \sfp{ \Kb_i\dot{\pb}}\bgam_i\right\|~\left|\ell'\left(\bgam_i^\top\sft{\Kb_i\pb}\right)-\ell'\left(\bgam_i^\top\sft{\Kb_i\dot{\pb}}\right) \right| \\
         \nonumber
       &+ \frac{1}{n}\sum_{i=1}^{n} \left|\ell'(\bgam_i^\top\sft{\Kb_i\pb})\right|~\left\|\Kb_i^\top \sfp{\Kb_i\pb}\bgam_i  -\Kb_i^\top \sfp{\Kb_i\dot{\pb}}\bgam_i \right\| \\
         \nonumber
       & \le \frac{1}{n}\sum_{i=1}^{n}  M_0 \|\bgam_i\|^2~\|\Kb_i\|~\left\|\sft{\Kb_i\pb}-\sft{\Kb_i\dot{\pb}}\right\| \\
       & +   \frac{1}{n}\sum_{i=1}^{n}  M_1  \|\bgam_i\|~\|\Kb_i \|~\left\|\sfp{\Kb_i\pb} -\sfp{\Kb_i\dot{\pb}} \right\|,
\end{align}
where the second inequality follows from the fact that $|ab - cd| \leq |d||a-c|+ |a||b-d|$ and the third inequality uses Assumption~\ref{assum:loss:prope}.
%Since  $\ell$ is $M_0$-smooth and  $\|\ell'(\bgam_i^\top\sft{\Kb_i\pb})\| \leq M_1$.

%since $\|\ell'(\bgam_i^\top\sft{\Kb_i\pb})\| \leq M_1$.

Substituting \eqref{eqn:soft:lipcons1}  and \eqref{eqn:soft:lipcons2} into \eqref{eqn:obj:lipcons}, we get
\begin{align*}
\left\|\nabla \mc{L}(\pb)-\nabla \mc{L}(\dot{\pb})\right\| &\leq  \frac{1}{n}\sum_{i=1}^{n} \left(M_0\|\bgam_i\|^2 \|\Kb_i\|^2+ 3M_1  \|\Kb_i \|^2\|\bgam_i\|\right)  \|\pb-\dot{\pb}\|\\
 &\leq  \frac{1}{n}\sum_{i=1}^{n} \left(M_0\|\vb\|^2\|\W\|^2 \|\X_i\|^4 +3M_1 \|\vb\| |\W\|^2 \| \X_i\|^3\right)  \|\pb-\dot{\pb}\|\\
&\leq  L_p  \|\pb-\dot{\pb}\|,
\end{align*}
where $L_p$ is defined in \eqref{eqn:lip:constant}.

The remaining proof follows standard gradient descent analysis (see e.g. \cite[Lemma 10]{soudry2018implicit}). Since  $\mathcal{L}\left(\pb\right)$ is  $L_p$-smooth, we get
  \begin{align*}
  \mathcal{L}\left(\pb\left(t+1\right)\right) & \leq\mathcal{L}\left(\pb\left(t\right)\right)+\nabla\mathcal{L}\left(\pb\left(t\right)\right)^{\top}\left(\pb\left(t+1\right)-\pb\left(t\right)\right)+\frac{L_p}{2}\left\Vert \pb\left(t+1\right)-\pb\left(t\right)\right\Vert ^{2}\\
   & =\mathcal{L}\left(\pb\left(t\right)\right)-\eta\left\Vert \nabla\mathcal{L}\left(\pb\left(t\right)\right)\right\Vert ^{2}+\frac{L_p\eta^{2}}{2}\left\Vert \nabla\mathcal{L}\left(\pb\left(t\right)\right)\right\Vert ^{2}\\
   & =\mathcal{L}\left(\pb\left(t\right)\right)-\eta\left(1-\frac{L_p\eta}{2}\right)\left\Vert \nabla\mathcal{L}\left(\pb\left(t\right)\right)\right\Vert ^{2}\\
   & \leq\mathcal{L}\left(\pb\left(t\right)\right)-\frac{\eta}{2} \left\Vert \nabla\mathcal{L}\left(\pb\left(t\right)\right)\right\Vert ^{2},  
  \end{align*}
where the last inequality follows from our assumption on the stepsize. 
%  Thus, we have 
%  \[
% \left\Vert \nabla\mathcal{L}\left(\pb\left(t\right)\right)\right\Vert ^{2} \leq  \frac{\mathcal{L}\left(\pb\left(t\right)\right)-\mathcal{L}\left(\pb\left(t+1\right)\right)}{\eta\left(1-\frac{L_p\eta}{2}\right)},
%  \]

The above inequality implies that 
\begin{equation}\label{eqn:grad:ser}
 \sum_{t=0}^{\infty}\left\Vert \nabla\mathcal{L}\left(\pb\left(t\right)\right)\right\Vert ^{2}\leq \frac{2}{\eta} \left(
 \mathcal{L}\left(\pb\left(0\right)\right)-\mathcal{L}^*\right),
\end{equation}
where  the right hand side is upper bounded by a finite constant. This is because, by  Assumption~\ref{assum:loss:prope}, $\Lc\left(\pb\left(0\right)\right)<\infty$ and $\mathcal{L}^*\leq\mathcal{L}\left(\pb\left(t\right)\right)$, where $\mathcal{L}^*$ denotes the minimum objective.  

Finally, \eqref{eqn:grad:ser} yields the expression \eqref{eq:grad:zero}.
 %$\sum_{t=0}^{\infty}\left\Vert \nabla\mathcal{L}\left(\pb\left(t\right)\right)\right\Vert ^{2}<\infty$ and therefore $\left\Vert \nabla\mathcal{L}\left(\pb\left(t\right)\right)\right\Vert ^{2}\rightarrow0$.  
\end{proof}

In the following lemma, we demonstrate the existence of parameters $\mu=\mu(\bal)>0$ and $R_\mu>0$ such that when $R_\mu$ is sufficiently large, there are no stationary points within $\Cc_{\mu,R_\mu}(\ps)$.  Additionally, we provide the local gradient correlation condition.

\begin{lemma}[Local Gradient Condition]\label{local cond} 
{Suppose Assumption~\ref{assum:loss:prope} on the loss function $\ell$ holds}. Let $\bal=(\alpha_i)_{i=1}^n$ be indices of locally-optimal tokens per Definition \ref{def loc opt}. % Define $\cone_{\mu}(\ps)$ to be the set of vectors obeying $\corr{\pb,\ps}\geq 1-\mu$. 
%There exists a scalar $\mu=\mu(\bal)>0$ such that for sufficiently large $R=R_\mu$: 
\begin{enumerate}[label={\textnormal{\textbf{L\arabic*.}}}, wide, labelwidth=!,itemindent=!, labelindent=1pt]
\item \label{local cond:l1} There exists a positive scalar $\mu=\mu(\bal)>0$ such that for sufficiently large $\RR_\mu$, no stationary point exists within $\Cc_{\mu,\RR_\mu}(\ps)$, where $\Cc_{\mu,\RR_\mu}$ is defined in \eqref{eqn:cone:inters}.
 %$\cone_{\mu}(\ps)\bigcap \left\{\pb\bgl\tn{\pb}\geq R\right\}$. %defined in \eqref{eqn:def:cone}.
\item \label{local cond:l2} For all $\qb,\pb\in \cone_{\mu}(\ps)$ with $\tn{\qb}=\tn{\ps}$ and $\tn{\pb}\geq \RR_\mu$ with same $\RR_\mu$ choice as \eqref{local cond:l1},   there exist dataset dependent constants $C,c>0$ such that
{
\begin{subequations}
\begin{align}
 & C\cdot \frac{1}{n}\sum_{i\in[n]} \{1-\sft{\Kb_i\pb}_{\alpha_i}\} \geq -\li\nabla\Lc(\pb),\qb\ri\geq c \cdot \frac{1}{n}\sum_{i\in[n]} \{1-\sft{\Kb_i\pb}_{\alpha_i}\}>0,\label{local simplified bound}\\
 & \tn{\nabla\Lc(\pb)}\leq  \bar{A} C\cdot \frac{1}{n}\sum_{i\in[n]} \{1-\sft{\Kb_i\pb}_{\alpha_i}\}\leq \bar{A} C T e^{-\RR_\mu\Theta/2}. \label{grad upper bound}\\
 &   -\li\frac{\qb}{\tn{\qb}},\frac{\nabla\Lc(\pb)}{\tn{\nabla\Lc(\pb)}}\ri\geq \frac{c\Theta}{C\bar{A}}>0,\label{g corr lower bound}
  \end{align}
  \end{subequations}
Here,  $\bar{A}=\max_{i\in[n],t,\tau\in[T]}\tn{\kb_{it}-\kb_{i\tau}}$ and $\Theta=1/\tn{\ps}$.
}
%Additionally, for all $\qb,\pb\in \cone_{\mu}(\ps)$ with $\tn{\qb}=\tn{\ps}$ and $\tn{\pb}\geq \RR_\mu$, there is a dataset dependent constant $\bar{c}=\frac{c}{C\bar{A}}$ where $\bar{A}=\max_{i\in[n],t,\tau\in[T]}\tn{\kb_{it}-\kb_{i\tau}}\cdot\tn{\ps}$ such that correlation to gradient is lower bounded as
 % \begin{align}
 % \li\frac{\qb}{\tn{\qb}},\frac{\nabla\Lc(\pb)}{\tn{\nabla\Lc(\pb)}}\ri\geq \bar{c}>0,\label{g corr lower bound}
  %\end{align}
 %and gradient is upper bounded as
  %\begin{align}
  %\tn{\nabla\Lc(\pb)}\leq \frac{\bar{A} C}{\tn{\ps}}\cdot \frac{1}{n}\sum_{i\in[n]} \{1-\sft{\Kb_i\pb}_{\alpha_i}\}. \label{grad upper bound}
  %\end{align}
%Note that, the identical bound holds by setting $\qb=\ps$ or $\qb=\tn{\ps}\pb/\tn{\pb}$. 
\item \label{local cond:l3} %Denote $\pbb=\tn{\ps}\pb/\tn{\pb}$.
%:=\RR_{\pi,\mu}
 For any $\pi>0$, there exists $R_\pi$ such that \red{$R_\pi\geq \RR_\mu$} and all $\pb\in \Cc_{\mu,R_\pi}(\ps)$ obeys
\[
\li\nabla\Lc(\pb),\frac{\pb}{\tn{\pb}}\ri\geq (1+\pi)\li\nabla\Lc(\pb), \frac{\ps}{\tn{\ps}}\ri.
\]
\end{enumerate}

\end{lemma}
\begin{proof} Let $\ps=\ps(\bal)$ be the solution of \eqref{attnsvm}.  Recall 
\begin{equation*}
\Cc_{\mu,\RR_\mu}(\ps)= \cone_{\mu}(\ps)\bigcap \left\{\pb~\big|~\tn{\pb}\geq \RR_\mu\right\}.
\end{equation*}
Let $(\Tc_i)_{i=1}^n$ be the sets of all \neis per Definition \ref{def loc opt}. Let $\Tcb_i=[T]-\Tc_i-\{\alpha_i\}$ be the set of non-SVM-neighbor tokens, $i\in[n]$. Let
\begin{equation}\label{mu choice}
\begin{split}
&\Theta=1/\tn{\ps},\\
&\delta=0.5\min_{i\in[n]}\min_{t\in\Tc_i,\tau\in\Tcb_i}(\kb_{it}-\kb_{i\tau})^\top \ps,\\
&A=\max_{i\in[n],t\in[T]}\tn{\kb_{it}}/\Theta,\\
&\red{\mu\leq \mu(\delta)=\frac{1}{8}\left(\frac{\min(0.5,\delta)}{A}\right)^2}.
\end{split}
\end{equation}
When $\Tcb_i=\emptyset$ for all $i\in[n]$ (i.e.~globally-optimal indices), we set $\delta=\infty$ as all non-neighbor related terms will disappear. Since $\ps$ is the max-margin model ensuring $(\kb_{i\alpha_i}-\kb_{it})^\top\ps\geq 1$ for all $i\in[n]$, the following inequalities hold for all $\qb\in \cone_\mu(\ps),~\tn{\qb}=\tn{\ps}$ and all $i\in[n], t\in\Tc_i,\tau\in\Tcb_i$:
\begin{equation}\label{cone-non-nei}
\begin{split}
(\kb_{it}-\kb_{i\tau})^\top \qb&\geq \delta>0,\\%\quad\text{for all}\quad \\
(\kb_{i\alpha_i}-\kb_{i\tau})^\top \qb&\geq 1+\delta,\\
3/2\geq(\kb_{i\alpha_i}-\kb_{it})^\top \qb&\geq 1/2.
\end{split}
\end{equation}
Here, we used $\tn{\qb-\ps}^2/\tn{\ps}^2\leq 2\mu$ which implies $\tn{\qb-\ps}\leq \sqrt{2\mu}/\Theta$.

\noindent \ref{local cond:l1} and \ref{local cond:l2}. %: No stationary point and $-\qb^\top \nabla \Lc(\pb)>0$ within cone.} 
Now that the choice of local cone is determined, we need to prove the main claims. We will lower bound $-\qb^\top \nabla \Lc(\pb)$ and establish its strict positivity for $\tn{\pb}\geq R$, where $R=\RR_\mu$. This will show that there is no stationary point as a by product. %Given any $\pb\in \Cc_{\mu,R}(\ps)$, denote $\pbb=(\tn{\ps}/\tn{\pb})\pb$ and recall $\tn{\qb}=\tn{\ps}$.

% Recall $\tn{\qb}=\tn{\ps}$, given any $\qb\in \Cc_{\mu,R}(\ps)$. 
Consider any $\qb\in\R^d$ satisfying $\tn{\qb}=\tn{\ps}$. To proceed, we write the gradient correlation following \eqref{grad def} and \eqref{grad def2}
\begin{align}\label{grad def3}
\li\nabla\Lc(\pb),\qb\ri&=\frac{1}{n}\sum_{i=1}^n\ell'_i\cdot\li\ab_i,\sfp{\ap_i}\bgam_i\ri,
\end{align}
where we denoted $\ell'_i=\ell'(Y_i\cdot \vb^\top \X_i^\top\sft{\Kb_i\pb})$, $\ab_i=\Kb_i\qb$, $\ap_i=\Kb_i\pb$, $\s_i=\sft{\Kb_i\pb}$.

Using \eqref{cone-non-nei}, for all $t\in\Tc_i,\tau\in \Tcb_i$, for all $\pb\in \Cc_{\mu,R}(\ps)$, we have that
\[
\ap_{i\alpha_i}-\ap_{i\tau}\geq R\Theta(1+\delta),\quad \textnormal{and}\quad \ap_{it}-\ap_{i\tau}\geq R\Theta\delta.
\] 
Consequently, we can bound the softmax probabilities $\s_i=\sft{\Kb_i\pb}$ as follows: For all $i\in[n]$, %and any $t_i\in \Tc_i$
\begin{align}\label{soft prob bound}
\nonumber 
&S_i:=\sum_{\tau\in\Tc_i}\s_{i\tau} \leq 1-\s_{i\alpha_i}=\sum_{\tau\neq \alpha_i}\s_{i\tau}\leq T e^{-R\Theta/2}\s_{i\alpha_i}\leq T e^{-R\Theta/2},\\
&Q_i:=\sum_{\tau\in\Tcb_i}\s_{i\tau} \leq T e^{-R\Theta\delta}\s_{it_i}\leq T e^{-R\Theta\delta}S_i, ~~\forall t_i\in \Tc_i.
\end{align}
%(over all $i,t$) 
Recall scores $\bgam_{it}=Y_i\cdot\vb^\top \x_{it}$. Define the score gaps over neighbors: 
\begin{align*}
\bgg_i=\bgam_{i\alpha_i}-\max_{t\in\Tc_i}\bgam_{it}, ~~~\textnormal{and}~~~\bgm_i=\bgam_{i\alpha_i}-\min_{t\in\Tc_i}\bgam_{it}.
\end{align*}
It follows from \eqref{mu choice} that 
\begin{align*}
A=\max_{i\in[n],t\in[T]}\tn{\kb_{it}}/\Theta\geq \max_{i\in[n],t\in[T]}\tn{\ab_{it}}=\tn{\kb_{it}\qb}.
\end{align*}
Define the $\bal$-dependent global scalar $\Gamma=\sup_{i\in[n],t,\tau\in[T]}|\bgam_{it}-\bgam_{i\tau}|$. Let us focus on a fixed datapoint $i\in[n]$, assume (without losing generality) $\alpha_i=1$, and drop subscripts $i$, that is,~$\alpha:=\alpha_i=1$, $\X:=\X_i$, $Y:=Y_i$, $\Kb:=\Kb_i$, $\ap=\Kb\pb$, $\ab=\Kb\qb$, $\s=\sft{\Kb\pb}$, $\bgam=Y\cdot\X\vb$, $\bgg:=\bgg_i$, $\bgm:=\bgm_i$, $Q:=Q_i$, and $S:=S_i$. Directly applying Lemma \ref{lem:q_reduce}, we obtain
\[
  \big|\ab^\top\diag{\s}\bgam-\ab^\top\s\s^\top\bgam-\sum_{t\geq 2}^T (\ab_1-\ab_t)\s_t(\bgam_1-\bgam_t)\big|\leq 2\Gamma A(1-\s_1)^2.
\]
To proceed, let us decouple the non-neighbors within $\sum_{t\geq 2}^T (\ab_1-\ab_t)\s_t(\bgam_1-\bgam_t)$ via
\[
\big|\sum_{t\in\Tcb} (\ab_1-\ab_t)\s_t(\bgam_1-\bgam_t)\big|\leq 2Q\Gamma A.
\]
Aggregating these, we found
\begin{align}
  \big|\ab^\top\diag{\s}\bgam-\ab^\top\s\s^\top\bgam-\sum_{t\in \Tc_i} (\ab_1-\ab_t)\s_t(\bgam_1-\bgam_t)\big|\leq 2\Gamma A((1-\s_1)^2+Q).\label{aggregate}
\end{align}
To proceed, let us upper/lower bound the gradient correlation. \red{We use two bounds depending on $\qb\in\cone_{\mu}(\ps)$ (Case 1) or general $\qb\in\R^d$ (Case 2). }

\red{$\bullet$ Case 1: $\qb\in\cone_{\mu}(\ps)$. Since $1.5\geq\ab_1-\ab_t\geq0.5$ following \eqref{cone-non-nei}, we find  
\[
  1.5\cdot S\cdot\bgm\geq\sum_{t\in \Tc_i} (\ab_1-\ab_t)\s_t(\bgam_1-\bgam_t)\geq 0.5\cdot S\cdot\bgg.
\]}

%\begin{comment}
%\red{$\bullet$ Case 1: $\qb\in\cone_{\mu}(\ps)$.} Since $1.5\geq \ab_1-\ab_t\geq 0.5$, we find
%\redp{For general $\qb$, you can state a looser constant upper bound here.}
%\begin{align}
% 1.5\cdot S\cdot \bgm  \geq\sum_{t\in \Tc_i} (\ab_1-\ab_t)\s_t(\bgam_1-\bgam_t)\geq 0.5\cdot S\cdot \bgg.\label{wishwish}
%\end{align}
%\end{comment}

%\red{$\bullet$ Case 2: $\qb\in\R^d$. Define $\bar{A}=\max_{i\in[n],t,\tau\in[T]}\tn{\kb_{it}-\kb_{i\tau}}\cdot\tn{\ps}$. For any $\tn{\qb}=\tn{\ps}$, we use the fact that $\tn{\ab_1-\ab_t}\leq \tn{\kb_1-\kb_t}\cdot\tn{\qb}\leq \bar{A}$. Note that by definition $\bar{A}\geq 1$. To proceed, we can upper bound
%\begin{align}
%\bar{A}\cdot S\cdot \bgm  \geq\sum_{t\in \Tc_i} (\ab_1-\ab_t)\s_t(\bgam_1-\bgam_t)\label{wishwish2}
%\end{align}}

Next we claim that $S$ dominates $((1-\s_1)^2+Q)$ for large $R$. Specifically, we wish for 
\begin{align}\label{wishfor}
S\cdot \bgg/4\geq 4\Gamma A\max((1-\s_1)^2,Q)\iff S\geq 16\frac{\Gamma A}{\bgg}\max((1-\s_1)^2,Q).
\end{align}
Now choose $R\geq \delta^{-1}\log(T)/\Theta$ to ensure $Q\leq S$ since $Q\leq Te^{-R\Theta\delta}S$. Consequently
\[
(1-\s_1)^2=(Q+S)^2\leq 4S^2\leq 4STe^{-R\Theta/2}.
\]
Combining these, what we wish is ensured by guaranteeing
\begin{align}\label{s bound}
  S\geq 16\frac{\Gamma A}{\bgg}\max(4STe^{-R\Theta/2},Te^{-R\Theta\delta}S).
\end{align}
This in turn is ensured for all inputs $i\in[n]$ by choosing 
\begin{align}\label{R bound}
R=\frac{\max(2,\delta^{-1})}{\Theta}\log\left(\frac{64T\Gamma A}{\bggm}\right),
\end{align}
where $\bggm=\min_{i\in[n]}\bgg_i$ is the global scalar which is the worst case score gap over all inputs. 
% \red{$\bullet$ Case 1: $\qb\in\cone_{\mu}(\ps)$.} 
With the above choice of $R$ %and using \eqref{wishwish},
we guaranteed
%\[
%  2 (1-\s_1)\cdot \bgm\geq 2\cdot S\cdot \bgm \geq \sum_{t\in \Tc} (\ab_1-\ab_t)\s_t(\bgam_1-\bgam_t)\geq\frac{S\cdot \bgg}{4}\geq\frac{(1-\s_1) \bgg}{8}.
%\]
\begin{align*}
  2 (1-\s_1)\cdot \bgm\geq 2\cdot S\cdot \bgm \geq \ab^\top\diag{\s}\bgam-\ab^\top\s\s^\top\bgam\geq\frac{S\cdot \bgg}{4}\geq\frac{(1-\s_1) \bgg}{8}.
\end{align*}
%via \eqref{wishfor} and \eqref{aggregate}.
Since this holds over all inputs, going back to the gradient correlation \eqref{grad def3} and averaging above over all inputs $i\in[n]$ and plugging back the indices $i$, we obtain the advertised bound by setting $q_i=1-\s_{i\alpha_i}$ (where we set $\alpha_i=1$ above without losing generality)
\begin{align}\label{pbb corr}
  \frac{2}{n}\sum_{i\in [n]} -\ell'_i\cdot q_i\cdot \bgm_i\geq -\li\nabla\Lc(\pb),\qb\ri\geq \frac{1}{8n}\sum_{i\in [n]} -\ell'_i\cdot q_i\cdot \bgg_i.
\end{align}
\red{Let $-\ell'_{\min/\max}$ be the min/max values negative loss derivative admits over the ball $[-B,B]$ for $B=\tn{\vb}\cdot\max_{i,t}\tn{\x_{it}}$ and note that $\max_{i\in[n]}\bgm_i>0$ and $\min_{i\in[n]}\bgg_i>0$ are dataset dependent constants. Then, we declare the constants $C=-2\ell'_{\max}\cdot \max_{i\in[n]}\bgm_i>0,c=-(1/8)\ell'_{\min}\cdot \min_{i\in[n]}\bgg_i>0$ to obtain the bound}
\begin{align}\label{pbb corr2}
  \frac{C}{n}\sum_{i\in [n]}  q_i\geq -\li\nabla\Lc(\pb),\qb\ri\geq \frac{c}{n}\sum_{i\in [n]}  q_i,
\end{align}
which is the desired statement in \eqref{local simplified bound}.

%$\bullet$ Case 2: $\qb\in\R^d$.
{
$\bullet$ Case 2: $\qb\in\R^d$ and $\tn{\qb}=\tn{\ps}$. Define $\bar{A}=\max_{i\in[n],t,\tau\in[T]}\tn{\kb_{it}-\kb_{i\tau}}$. For any $\tn{\qb}=\tn{\ps}$, we use the fact that
$$\tn{\ab_1-\ab_t}\leq \tn{\kb_1-\kb_t}\cdot\tn{\qb}\leq \frac{\bar{A}}{\Theta}.$$
Note that by definition $ \frac{\bar{A}}{\Theta} \geq 1$. To proceed, we can upper bound
\begin{align}
\frac{\bar{A}}{\Theta}\cdot S\cdot \bgm  \geq\sum_{t\in \Tc} (\ab_1-\ab_t)\s_t(\bgam_1-\bgam_t).\label{wishwish2}
\end{align}
By choosing the same $R$ as in \eqref{R bound} to ensure $S$ dominates $((1-\s_1)^2+Q)$ and since $\frac{\bar A}{\Theta}\geq 1$, we guaranteed
\[
  \frac{2\bar A}{\Theta}\cdot S\cdot \bgm \geq \ab^\top\diag{\s}\bgam-\ab^\top\s\s^\top\bgam.
\]

% For general $\qb\in\R^d$ with $\tn{\qb}=\tn{\ps}$, using \eqref{wishwish2}, the choice of $R$ in \eqref{R bound} similarly guarantees $2 \frac{\bar{A}}{\Theta} (1-\s_1)\cdot \bgm\geq \ab^\top\diag{\s}\bgam-\ab^\top\s\s^\top\bgam$ for fixed input. 
Going back to the gradient correlation \eqref{grad def3} and averaging above over all inputs $i\in[n]$, with the same definition of $C>0$, we obtain
\begin{align}
\frac{\bar{A} C}{\Theta n}\sum_{i\in [n]} q_i\geq -\li\nabla\Lc(\pb),\qb\ri.\label{general grad bound}
\end{align}
% Since $\tn{\qb}=\tn{\ps}=1/\Theta$, and \eqref{general grad bound} holds for any $\qb\in\R^d$, we obtain
% \[
%   \tn{\nabla\Lc(\pb)}=\frac{1}{\|\ps\|}\li\nabla\Lc(\pb),\frac{\|\ps\|}{\|\nabla\Lc(\pb)\|}\nabla\Lc(\pb)\ri\leq \frac{\bar{A} C}{\Theta n}\sum_{i\in [n]} q_i.
% \]
To proceed, since \eqref{general grad bound} holds for any $\qb\in\R^d$ and $\tn{\qb}=\tn{\ps}$, we observe that when choosing $\qb=\frac{\tn{\ps}}{\tn{\nabla\Lc(\pb)}}\cdot \nabla\Lc(\pb)$, this implies that
\[ 
\li\nabla\Lc(\pb),\qb\ri=\tn{\nabla\Lc(\pb)}\cdot \tn{\ps}\leq \frac{\bar{A} C}{\Theta n}\sum_{i\in [n]} q_i.
\]
Simplifying $\Theta=1/\tn{\ps}$ on both sides yields \eqref{grad upper bound}. Incorporating \eqref{soft prob bound} in the bound above provides the exponential upper bound that decay with $R$.

Combining this with \eqref{pbb corr2}, we obtain that for all $\qb,\pb\in\cone_{\mu}(\ps)$ and $\tn{\qb}\geq \bar R_\mu$
\[ 
-\li\frac{\qb}{\tn{\qb}},\frac{\nabla\Lc(\pb)}{\tn{\nabla\Lc(\pb)}}\ri\geq \frac{c \Theta}{C\bar{A}}.
\]
This gives the desired result in \eqref{g corr lower bound}.
}

%\red{$\bullet$ Case 2: $\qb\in\R^d$.} For general $\tn{\qb}=\tn{\ps}$, using \eqref{wishwish2}, this choice of $R$ similarly guarantees $2 \bar{A}(1-\s_1)\cdot \bgm\geq \ab^\top\diag{\s}\bgam-\ab^\top\s\s^\top\bgam$ for fixed input. Going back to the gradient correlation \eqref{grad def3} and averaging above over all inputs $i\in[n]$, with the same definition of $C>0$, we obtain
%\[ 
%\frac{\bar{A} C}{\Theta n}\sum_{i\in [n]} q_i\geq -\li\nabla\Lc(\pb),\qb\ri
%\]
%\red{To proceed, setting $\qb=\frac{\tn{\ps}}{\tn{\nabla\Lc(\pb)}}\cdot \nabla\Lc(\pb)$, this implies that}
%\[ 
%\li\nabla\Lc(\pb),\qb\ri=\tn{\nabla\Lc(\pb)}\cdot \tn{\ps}\leq \frac{\bar{A} C}{n}\sum_{i\in [n]} q_i.
%\]
%\red{Combining this with \eqref{pbb corr2}, we obtain that for all $\qb,\pb\in\cone_{\mu}(\ps)$
%\[ 
%\li\frac{\qb}{\tn{\qb}},\frac{\nabla\Lc(\pb)}{\tn{\nabla\Lc(\pb)}}\ri\geq \frac{c \Theta}{C\bar{A}}.
%\]}

\noindent\textbf{\ref{local cond:l3}: Establishing gradient correlation.} 

Our final goal is establishing gradient comparison between $\pb,\ps$ for the same choice of $\mu>0$ provided in \eqref{mu choice}. Define $\pbb=\tn{\ps}\pb/\tn{\pb}$ to be the normalized vector. Set notations $\ab_i=\Kb_i\pbb$, $\abm_i=\Kb_i\ps$, and $\bgam_i=Y_i\cdot\X_i\vb$.

To establish the result, using \eqref{grad def3}, we will prove that, for any $\pi>0$, there is sufficiently large $R=R_\pi$ such that for any $\pb\in \Cc_{\mu,R}(\ps)$:
\begin{align}\label{main local cond}
\nonumber 
\li -\nabla\Lc(\pb),\frac{\pb}{\tn{\pb}}\ri&= -\frac{1}{n}\sum_{i=1}^n\ell'_i \cdot  \li \ab_i,\sfp{\Kb_i\pb}\bgam_i\ri\\
&\leq - \frac{1+\pi}{n}\sum_{i=1}^n\ell'_i \cdot  \li\abm_i,\sfp{\Kb_i\pb}\bgam_i\ri=(1+\pi)\li-\nabla\Lc(\pb), \frac{\ps}{\tn{\ps}}\ri.
\end{align}
%Once \eqref{main local cond} holds, the same conclusion will hold for the gradient correlations via \eqref{grad def3}. 
%Moving forward, we shall again focus on a single point $i\in[n]$ and drop all subscripts $i$. Also assume $\alpha=\alpha_i=1$ without losing generality (same as above). 

Following \eqref{aggregate}, for all $i \in [n]$, for all $\qb\in \cone_{\mu}(\ps)$ with $\tn{\qb}=\tn{\ps}$, $\ap=\Kb\qb$ and $\s=\sft{\Kb\pb}$, we have found
\begin{align}
  \big|\ap^\top_i\diag{\s_i}\bgam-\ap^\top_i\s_i\s^\top_i\bgam_i-\sum_{t\in \Tc_i} (\ap_{i1}-\ap_{it})\s_{it}(\bgam_{i1}-\bgam_{it})\big|\leq 2\Gamma A((1-\s_{i1})^2+Q_i). %\label{aggregate}
\end{align}
Plugging in $\ab_i,\abm_i$ in the bound above and assuming $\pi\leq 1$ (w.l.o.g.), \eqref{main local cond} is implied by the following stronger inequality
\begin{align*}
-\frac{1}{n}&\sum_{i=1}^n\ell'_i \cdot \left(6\Gamma A((1-\s_{i1})^2+Q_i)+ \sum_{t\in \Tc_i} (\ab_{i1}-\ab_{it})\s_{it}(\bgam_{i1}-\bgam_{it}) \right)\\
&\leq -\frac{1+\pi}{n}\sum_{i=1}^n\ell'_i  \cdot \sum_{t\in \Tc_i} (\abm_{i1}-\abm_{it})\s_{it}(\bgam_{i1}-\bgam_{it})\\
&=-\frac{1+\pi}{n}\sum_{i=1}^n\ell'_i \cdot \sum_{t\in \Tc_i}\s_{it}(\bgam_{i1}-\bgam_{it}).
\end{align*}
First, we claim that $0.5\pi\sum_{t\in \Tc_i}\s_{it}(\bgam_{i1}-\bgam_{it})\geq 6\Gamma A((1-\s_{i1})^2+Q_i)$ for all $i \in [n]$. The proof of this claim directly follows the earlier argument, namely, following \eqref{wishfor}, \eqref{s bound} and \eqref{R bound} which leads to the choice 
\begin{align}
R \geq \frac{\max(2,\delta^{-1})}{\Theta}\log\left(\frac{\red{C_0}\cdot T\Gamma A}{\pi\bggm}\right),\label{R boundC0}
\end{align}
for some constant $C_0>0$. 
Here, we choose sufficiently large $C_0 \geq 64 \pi$ to ensure $R=R_\pi \geq \RR_\mu$.

Following this control over the perturbation term $6\Gamma A((1-\s_{i1})^2+Q_i)$, to conclude with the result, what remains is proving the comparison
\begin{align}\label{desired comp}
-\frac{1}{n} \sum_{i=1}^n\ell'_i \cdot \sum_{t\in \Tc_i} (\ab_{i1}-\ab_{it})\s_{it}(\bgam_{i1}-\bgam_{it})\leq - \frac{1+0.5\pi}{n}\sum_{i=1}^n\ell'_i \cdot \sum_{t\in \Tc_i}\s_{it}(\bgam_{i1}-\bgam_{it}).
\end{align}
To proceed, we split the problem into two scenarios. 

\noindent\textbf{Scenario 1:} $\tn{\pbb-\ps}\leq \eps=\frac{\pi}{4A\Theta}$ for some $\eps>0$. In this scenario, for any token, we find that
\[
|\ab_{it}-\abm_t|=|\kb_{it}^\top(\pbb-\ps)|\leq A\Theta\eps=\pi/4.
\]
Consequently, we obtain 
\[
\ab_{i1}-\ab_{it}\leq \abm_{i1}-\abm_{it}+2A\Theta\eps= 1+0.5\pi.
\] 
Similarly, $\ab_{i1}-\ab_{it}\geq 1-0.5\pi\geq 0.5$. Since all terms $\ab_{i1}-\ab_{it},\s_{it},\bgam_{i1}-\bgam_{it}$ in \eqref{desired comp} are nonnegative and $(\ab_{i1}-\ab_{it})\s_{it}(\bgam_{i1}-\bgam_{it})\leq (1+0.5\pi)\s_{it}(\bgam_{i1}-\bgam_{it})$, above implies the desired result \eqref{desired comp}.

\noindent\textbf{Scenario 2:} $\tn{\pbb-\ps}\geq \eps=\frac{\pi}{4A\Theta}$. Since $\pbb$ is not (locally) max-margin, in this scenario, for some $i \in  [n]$, $\nu=\nu(\eps)>0$, and $\tau\in\Tc_i$, we have that
\begin{align*}
 \pbb^\top (\kb_{i1}-\kb_{i\tau})=\ab_{i1}-\ab_{i\tau}\leq 1-2\nu.
\end{align*}
Here $\tau=\arg\max_{t\in\Tc_i}\pbb^\top \kb_{it}$ denotes the nearest point to $\kb_{i1}$ (along the $\pbb$ direction). Note that a non-neighbor $t\in\Tcb_i$ cannot be nearest because $\pbb\in \cone_{\mu}(\ps)$ and \eqref{cone-non-nei} holds. Recall that $\s_i=\sft{\RR\ab_i}$ where $\RR=\|\pb\|\Theta \geq R\Theta$. To proceed, let $ \underline{\ab}_i:=\min_{t \in\mc{T}_i}\ab_{i1}-\ab_{it}$,
\begin{align*}
\mc{I}:=\left\{ i\in[n]: \underline{\ab}_i \leq 1-2\nu \right\}, \qquad [n]-\mc{I}:=\left\{ i\in[n]:  1-2\nu  <  \underline{\ab}_i\right\}.
%\\
%\mc{I}_3&=\left\{ i\in[n]:  1 +\frac{\pi}{8} < \underline{\ab}_i\right\},
\end{align*}
For all $ i \in [n]-\mc{I}$,
\begin{equation}\label{eqn:grad:difff0}
\begin{split}
      \sum_{t\in \Tc_i} (\ab_{i1}-\ab_{it})\s_{it}(\bgam_{i1}-\bgam_{it}) &- (1+0.5\pi) \sum_{t\in \Tc_i}\s_{it}(\bgam_{i1}-\bgam_{it})\\
      & \leq  \left(2A - (1+0.5\pi)\right)\Gamma\sum_{t\in \Tc_i,~\ab_{i1}-\ab_{it} \geq 1+\frac{\pi}{2} } \s_{it} \\
      & \leq  \left(2A - (1+0.5\pi)\right)\Gamma Te^{-\RR(1+\frac{\pi}{2})} \\
      &\leq   2A\Gamma  T e^{-\RR(1+\frac{\pi}{2})}.
      % & \leq  \left(2A\Gamma - (1+0.5\pi)\bggm \right)\sum_{t\in \Tc_i,~\ab_{i1}-\ab_{it} \geq 1+\frac{\pi}{2} } \s_{it} \\
      % &\leq   \left(2A\Gamma - (1+0.5\pi) \bggm\right)  T \Gamma e^{-\RR(1+\frac{\pi}{2})}\\
      % &\leq   2A\Gamma^2  T e^{-\RR(1+\frac{\pi}{2})}.    
\end{split}
\end{equation}
% For all $ i \in \mc{I}_2$,  split the tokens into two groups: Let $\Nc_i$ be the group of tokens obeying $\pbb^\top (\kb_{i1}-\kb_{it})\leq 1 +\frac{\pi}{4}$ and $\Tc_i-\Nc_i$ be the rest of the tokens.  Observe that
% \[
% \frac{\sum_{t\in \Tc_i-\Nc}\s_{it}}{\sum_{t\in\Tc_i}\s_{it}}\leq  T\frac{e^{\frac{-\pi}{4} \RR}}{e^{\frac{-\pi}{8}\RR}}=Te^{-\RR\frac{\pi}{8}}.
% \]
% Thus, using $|\ab_{i1}-\ab_{it}|\leq 2A$ and recalling the definition of $\bgg$, observe that 
% \[
% \sum_{t\in\Tc_i-\Nc} (\ab_{i1}-\ab_{it})\s_{it}(\bgam_{i1}-\bgam_{it})\leq \frac{2\Gamma A Te^{-\RR\pi/8}}{\bgg} \sum_{t\in \Tc_i} \s_{it}(\bgam_{i1}-\bgam_{it}).%\leq \frac{3A\Gamma Te^{-\RR\nu}}{\bgg} \sum_{t\in \Nc} (\ab_{i1}-\ab_{it})\s_{it}(\bgam_{i1}-\bgam_{it}).
% \]

% \begin{align*}
%   \sum_{t\in \Tc_i} (\ab_{i1}-\ab_{it})\s_{it}(\bgam_{i1}-\bgam_{it})&= \sum_{t\in \Nc} (\ab_{i1}-\ab_{it})\s_{it}(\bgam_{i1}-\bgam_{it})+\sum_{t\in\Tc_i-\Nc} (\ab_{i1}-\ab_{it})\s_{it}(\bgam_{i1}-\bgam_{it})\nonumber\\
% %  &\leq \sum_{t\in \Nc} (1-\nu)\s_{it}(\bgam_{i1}-\bgam_{it})+\sum_{t\in\Tc_i-\Nc} 2A\Gamma Te^{-\RR\nu}\nonumber\\
%   &\leq \sum_{t\in \Nc} (1+\pi/4)\s_{it}(\bgam_{i1}-\bgam_{it})+\frac{2\Gamma A Te^{-\RR\pi/8}}{\bgg} \sum_{t\in \Tc_i} \s_{it}(\bgam_{i1}-\bgam_{it})\\
%   &\leq \left(1+\frac{\pi}{4}+\frac{2\Gamma A Te^{-\RR\pi/8}}{\bgg}\right)\sum_{t\in \Tc_i}\s_{it}(\bgam_{i1}-\bgam_{it})
% \end{align*}

For all $ i \in \mc{I}$, split the tokens into two groups: Let $\Nc_i$ be the group of tokens obeying $ \ab_{i1}-\ab_{it} \leq 1-\nu$ and $\Tc_i-\Nc_i$ be the rest of the neighbors. Observe that
\[
\frac{\sum_{t\in \Tc_i-\Nc_i}\s_{it}}{\sum_{t\in\Tc_i}\s_{it}}\leq  T\frac{e^{\nu \RR}}{e^{2\nu\RR}}=Te^{-\RR\nu}.
\]
Using $|\ab_{i1}-\ab_{it}|\leq 2A=2 \max_{i\in[n],t\in[T]}\tn{\kb_{it}}/\Theta$ and  $\bggm=\min_{i\in[n]}\bgg_i =\min_{i\in[n]} (\bgam_{i1}-\max_{t\in\Tc_i}\bgam_{it})$, observe that 
\[
\sum_{t\in\Tc_i-\Nc_i} (\ab_{i1}-\ab_{it})\s_{it}(\bgam_{i1}-\bgam_{it})\leq \frac{2\Gamma A Te^{-\RR\nu}}{\bggm} \sum_{t\in \Tc_i} \s_{it}(\bgam_{i1}-\bgam_{it}).%\leq \frac{3A\Gamma Te^{-\RR\nu}}{\bgg} \sum_{t\in \Nc} (\ab_{i1}-\ab_{it})\s_{it}(\bgam_{i1}-\bgam_{it}).
\]
Thus, 
\begin{align*}
  \sum_{t\in \Tc_i} (\ab_{i1}-\ab_{it})\s_{it}(\bgam_{i1}-\bgam_{it})&= \sum_{t\in \Nc_i} (\ab_{i1}-\ab_{it})\s_{it}(\bgam_{i1}-\bgam_{it})+\sum_{t\in\Tc_i-\Nc_i} (\ab_{i1}-\ab_{it})\s_{it}(\bgam_{i1}-\bgam_{it})\nonumber\\
%  &\leq \sum_{t\in \Nc} (1-\nu)\s_{it}(\bgam_{i1}-\bgam_{it})+\sum_{t\in\Tc_i-\Nc} 2A\Gamma Te^{-\RR\nu}\nonumber\\
  &\leq \sum_{t\in \Nc_i} (1-\nu)\s_{it}(\bgam_{i1}-\bgam_{it})+\frac{2\Gamma A Te^{-\RR\nu}}{\bggm} \sum_{t\in \Tc_i} \s_{it}(\bgam_{i1}-\bgam_{it})\\
  &\leq \left(1-\nu+\frac{2\Gamma A Te^{-\RR\nu}}{\bggm}\right)\sum_{t\in \Tc_i}\s_{it}(\bgam_{i1}-\bgam_{it})\\
 &\leq \left(1+\frac{2\Gamma A Te^{-\RR\nu}}{\bggm}\right)\sum_{t\in \Tc_i}\s_{it}(\bgam_{i1}-\bgam_{it}).
\end{align*}
Hence, choosing 
\begin{align}
R\geq\frac{1}{\nu\Theta}\log\left(\frac{8\Gamma AT}{\bggm\pi}\right)\label{R bound pi}
\end{align}
results in that
\begin{equation}\label{eqn:grad:difff1}
    \begin{split}
     &\sum_{t\in \Tc_i} (\ab_{i1}-\ab_{it})\s_{it}(\bgam_{i1}-\bgam_{it})  - (1+\frac{\pi}{2}) \sum_{t\in \Tc_i}\s_{it}(\bgam_{i1}-\bgam_{it}) \\
   &\leq\left(\frac{2\Gamma A Te^{-\RR\nu}}{\bggm}-\frac{\pi}{2}\right)\sum_{t\in \Tc_i}\s_{it}(\bgam_{i1}-\bgam_{it})\\
   &\leq -\frac{\pi}{4}\sum_{t\in \Tc_i}\s_{it}(\bgam_{i1}-\bgam_{it})\\
   &\leq-\frac{\pi}{4T}\bggm  e^{-\bar{R} (1-2\nu)}.      
    \end{split}
\end{equation}
Here, the last inequality follows from the fact that $\sum_{t\in \Tc_i}\s_{it}\geq \max_{t\in\Tc_i}s_{it}\geq\frac{e^{-\bar{R}(1-2\nu)}}{\sum_{t=1}^Te^{-\bar{R}(\ab_{i1}-\ab_{it})}}\geq e^{-\bar{R}(1-2\nu)}/T$.

%Consequently, the proof boils down to ensuring the perturbation term $\frac{2\Gamma A Te^{-R\Theta\nu}}{\bgg}\leq 0.5\pi$. 
From Assumption~\ref{assum:loss:prope}, we have $c_{\min}\leq-\ell'\leq c_{\max}$ for some positive constants $c_{\min}$ and $c_{\max}$. It follows from  \eqref{eqn:grad:difff0} and \eqref{eqn:grad:difff1} that 
\begin{align*}
-\frac{1}{n}\sum_{i}^n \ell_i' \cdot&\left(
      \sum_{t\in \Tc_i} (\ab_{i1}-\ab_{it})\s_{it}(\bgam_{i1}-\bgam_{it})- \sum_{t\in \Tc_i} (1+0.5\pi)\s_{it}(\bgam_{i1}-\bgam_{it})\right)\\
      & \leq    c_{\max}2A\Gamma  T \Gamma e^{-\RR(1 +\frac{\pi}{2})}-\frac{c_{\min}}{nT}\cdot\frac{\pi\bggm}{4}e^{-\bar{R} (1-2\nu)}\\
      % & \leq    c_{\max}\left(2A\Gamma - (1+0.5\pi) \bggm\right)  T \Gamma e^{-\RR(1 +\frac{\pi}{2})}-\frac{c_{\min}}{n}\left(\nu+0.5\pi-\frac{2\Gamma A Te^{-\RR\nu}}{\bggm}\right)\\
      & \leq 0.
\end{align*}
Combing with \eqref{R bound pi}, this is guaranteed by %for all inputs $i\in[n]$ by recalling $\bggm=\min_{i\in[n]}\bgg_i$ and 
choosing 
% \[
%   R\geq ... \frac{1}{\nu\Theta}\log\left(\frac{4\Gamma AT}{\bggm\pi}\right),
% \]
\[
  R\geq \max\left\{\frac{1}{\nu\Theta}\log\left(\frac{8\Gamma AT}{\bggm\pi}\right),\frac{1}{(2\nu+\pi/2)\Theta}\log\left(\frac{8n\Gamma AT^2 c_{\max}}{c_{\min}\bggm\pi}\right)\right\},
\]
where $\nu=\nu(\frac{\pi}{4A\Theta})$ depends only on $\pi$ and global problem variables. 

Combining this with the prior $R$ choice \eqref{R boundC0} (by taking maximum), we conclude with the statement.

\end{proof}
\subsection{Proof of Theorem~\ref{lin det thm}}\label{app:sec:global:path}
\begin{proof}
This proof is a direct corollary of Lemma \ref{label mix lem} which itself is a special case of the nonlinear head Theorem \ref{meta thm}. Let us verify that $f(\X)=\vb^\top \X^\top\sft{\X\pb}$ satisfies the assumptions of Lemma \ref{label mix lem} where we replace the nonlinear head with linear $\vb$. To see this, set the optimal sets to be the singletons $\Rc_i=\{\op_i\}$. Given $(\X_i,Y_i)$,  let $\s_i=\sft{\Kb_i\pb}$ and $\ir_i=\ir_i^\pb=\sum_{t\neq \op_i}\s_{it}$. Recalling score definition $\bgam_i=Y_i\cdot\X_i\vb$ and setting $\nu_i:=\bgam_{i\op_i}$ and $Z_i:=\sum_{t\neq \op_i}\bgam_{it}\s_{it}$, a particular prediction can be written as 
\begin{align*}
Y_i\cdot \vb^\top \X_i^\top\sft{\X_i\pb}&=\bgam_i^\top \s_i=\bgam_{i\op_i}(1-\ir_{i})+\sum_{t\neq \op_i}\bgam_{it}\s_{it}\\
&=\nu_i(1-\ir_{i})+Z_i.
\end{align*}
To proceed, we demonstrate the choices for $C,\eps>0$. Let $C:=-\min_{i\in[n],t\in[T]}\bgam_{it}\wedge 0$ and $q_{\max}=\max_{i\in[n]}q_{i}$. Note that $Z_i\geq \sum_{t\neq \op_i}\bgam_{it}\s_{it}\geq \ir_{i}\bgam_{\min}\geq -Cq_{\max}$. Now, using strict score optimality of $\op_i$'s for all $i\in[n]$, we set 
\[
\eps:=1-\sup_{i\in[n]}\frac{\sum_{t\neq \op_i}\bgam_{it}\s_{it}}{\nu_i\ir_i}\geq 1-\sup_{i\in[n]}\frac{\sup_{t\neq\op_i}\bgam_{it}}{\bgam_{i\op_i}}>0.
\]
We conclude by observing $Z_i\leq \nu_i\ir_i \frac{\sum_{t\neq \op_i}\bgam_{it}\s_{it}}{\nu_i\ir_i} \leq \nu_i\ir_i\eps$ as desired.
\end{proof}

%%%
\subsection{Proof of Theorem~\ref{conv:gd:global}}

\begin{proof} 
We first show that  $\lim_{t\rightarrow\infty}\left\Vert \pb\left(t\right)\right\Vert =\infty$. From Lemma \ref{global des lem}, we have 
\begin{align*}
\li\nabla\Lc(\pb),\pso\ri&=\frac{1}{n}\sum_{i=1}^n \ell'(Y_i\cdot \vb^\top \X_i^\top\sft{\Kb_i\pb}) \cdot\li \Kb_i\pso,\sfp{\ab_i}\bgam_i\ri,
\end{align*}
where %$\abm_i=\Kb_i\pso$,
 $\bgam_i=Y_i\cdot \X_i\vb$ and  $\ab_i=\Kb_i\pb$.

It follows from Lemma~\ref{global des lem} that $\li\nabla\Lc(\pb),\pso\ri <0$  for all $\pb\in\R^d$. Hence, for any finite $\pb$, $\li\nabla\Lc(\pb),\pso\ri$ cannot be equal to zero, as a sum of negative terms.  Therefore, there are no finite critical points $\pb$, for which $\nabla \mc{L} (\pb)=0$. On the other hand, Lemma~\ref{lem:grad:descent} states $\nabla\Lc(\pb(t))\rightarrow 0$ which
implies that $\left\Vert \pb\left(t\right)\right\Vert \rightarrow\infty$. 

Next, we provide the directional convergence for the setting $n=1$. Let us consider an arbitrary value of $\epsilon\in(0,1)$ and set $\pi=\epsilon/(1-\epsilon)$. As $\lim_{t\to\infty}\|\pb(t)\|=\infty$, we can select a specific ${t}_\epsilon$ such that for all $t\ge {t}_\epsilon$, it holds that $\|\pb(t)\| \geq  R_\epsilon \vee 1/2 $ for any choice of $R_\epsilon$. To proceed, we choose $R_\eps$ based on Lemma \ref{lem:gd:corr} so that for any $t\ge {t}_\epsilon$, we have that% Lemma \ref{lem:gd:corr} implies that
\begin{align*}
     \iprod{-\nabla\mc{L}(\pb(t))}
     {\frac{\pso}{\|\pso\|}} \geq  (1-\epsilon)  \iprod{-\nabla \mc{L}(\pb(t))}{\frac{\pb(t)}{\|\pb(t)\|}}.
\end{align*}
Multiplying both sides by the stepsize $\eta$ and using the gradient descent update, we get
\begin{equation}\label{eqn:decpath:1}
\begin{split}
     \left\langle \pb(t+1)-\pb(t),\frac{\pso}{\|\pso\|} \right\rangle &\geq  (1-\epsilon) \left\langle \pb(t+1)-\pb(t), \frac{\pb(t)}{\|\pb(t)\|}\right\rangle\\
     &= \frac{(1-\epsilon)}{2\|\pb(t)\|}\left(\|\pb(t+1)\|^2- \|\pb(t)\|^2-\|\pb(t+1)-\pb(t)\|^2\right) \\
     & \geq (1-\epsilon)\left( \frac{1}{2\|\pb(t)\|} \left(\|\pb(t+1)\|^2- \|\pb(t)\|^2\right)- \|\pb(t+1)-\pb(t)\|^2\right) \\
     & \geq (1-\epsilon)\left(\|\pb(t+1)\|- \|\pb(t)\|-\|\pb(t+1)-\pb(t)\|^2\right) \\
          & \geq (1-\epsilon)\Big(\|\pb(t+1)\|- \|\pb(t)\|- 2\eta  \left(\mc{L}(\pb(t))-\mc{L}(\pb(t+1))\right) \Big).
\end{split}
\end{equation}
Here, the second inequality is obtained from  $\|\pb(t)\|\geq 1/2$; the third inequality follows since  for any $a, b >0$, we have $  (a^2-b^2)/(2b) -  (a-b) \geq 0$; and the last inequality  uses Lemma~\ref{lem:grad:descent}.

Summing the above inequality over $t\geq {t}_\epsilon$ gives 
\begin{align*}
      \left\langle\frac{\pb(t)}{\|\pb(t)\|}, \frac{\pso}{\|\pso\|} \right\rangle \ge1-\epsilon+ \frac{C(\epsilon,\eta)}{\|\pb(t)\|},
\end{align*}
for some finite constant $C(\epsilon,\eta)$ defined as
\begin{equation}\label{eqn:decpath:22}
C(\epsilon,\eta):= \left\langle \pb({t}_\epsilon), \frac{\pso}{\|\pso\|}\right\rangle-(1-\epsilon)\|\pb({t}_\epsilon)\| -2\eta (1-\epsilon) \left(\mc{L}(\pb({t}_\epsilon))-\mc{L}^*\right),
\end{equation}
where $\mathcal{L}^*$ denotes the minimum objective.
%, and $\mathcal{L}^*\leq\mathcal{L}\left(\pb\left(t\right)\right)$ for all $t \geq 0$.

Since $\left\Vert  \pb\left(t\right)\right\Vert \rightarrow\infty$, we get
    \begin{align*}
      \liminf_{t\to\infty}\iprod{\frac{\pb(t)}{\|\pb(t)\|}}{\frac{\pso}{\|\pso\|}}\ge1-\epsilon.
    \end{align*}
Given that we can choose any value of $\epsilon \in (0, 1)$,  we have  $\pb(t)/\|\pb(t)\|\to  \pso/\|\pso\|$.
\end{proof}

\subsection{Proof of Theorem~\ref{thm:local:gd}}
\begin{proof} Following the proof of Lemma~\ref{local cond}, let $(\Tc_i)_{i=1}^n$ denote the sets of SVM-neighbors as defined in Definition \ref{def loc opt}. We define $\Tcb_i=[T]-\Tc_i-\{\alpha_i\}$ as the tokens that are non-SVM neighbors. Additionally, let $\mu$ be defined as in \eqref{mu choice}.
Let us denote the initialization lower bound as $R^0_\mu:=R$, where $R$ is given in the Theorem~\ref{thm:local:gd}'s statement. 
Consider an arbitrary value of  $\epsilon \in (0, \mu/2)$ and let $1/(1+\pi)=1-\epsilon$. We additionally denote $R_\epsilon\gets R_\pi\vee 1/2$ where $R_\pi$ was defined in Lemma \ref{local cond}\eqref{local cond:l3}.  At initialization $\pb(0)$, we set $\epsilon=\mu/2$ to obtain $R^0_\mu=R_{\mu/2}$ %where $\RR_{\mu/2}$ was defined in Lemma \ref{local cond}\eqref{local cond:l2} 
and provide the proof in four steps:\\
\textbf{Step~1: There are no stationary points within $\Cc_{\mu,R^0_\mu}(\ps)$.} We begin by proving that there are no stationary points within  $\Cc_{\mu,R^0_\mu}(\ps)$. 
Then, since $R^0_\mu\geq \RR_\mu$ per Lemma \ref{local cond}, we can apply \eqref{local cond:l2} to find that: For all $\qb,\pb\in \cone_{\mu}(\ps)$ with $\qb\neq 0$ and $\tn{\pb}\geq R^0_\mu$, we have that $-\qb^\top \nabla \Lc(\pb)$ is strictly positive. %We choose $R^0_\mu\geq \bar{R}_\mu$ to conclude. 

\textbf{Step~2:} It follows from Lemma~\ref{local cond}\eqref{local cond:l3} that, for all $\epsilon\in(0,\mu/2)$, all $ \pb \in \Cc_{\mu,R_\epsilon}(\ps)$ satisfy%\vspace{-2pt}
\begin{align}\label{eqn:neg:corr:local}
\iprod{-\nabla\mc{L}(\pb)}      {\frac{\ps}{\|\ps\|}} \geq (1-\epsilon)    \iprod{-\nabla \mc{L}(\pb)}{\frac{\pb}{\|\pb\|}}.
\end{align}
The argument above applies to a general $\epsilon\in(0,\mu/2)$. However, at initialization $\pb(0)$, we set $\epsilon=\mu/2$ to obtain our earlier $R^0_\mu$ choice. To proceed, for any $\epsilon \in (0, \mu/2)$, we will show that after gradient descent enters the conic set $\Cc_{\mu,R_\epsilon}(\ps)$ for the first time, it will never leave the set. Let $t_\epsilon$ be the first time gradient descent enters $\Cc_{\mu,R_\epsilon}(\ps)$. In \textbf{Step 4}, we will prove that such $t_\epsilon$ is guaranteed to exist. Additionally, for $\epsilon\gets\mu/2$, note that $t_\epsilon=0$ i.e.~the point of initialization.

\smallskip
%\textbf{Step 2: Updates remain inside the cone }.
\textbf{Step~3: Updates remain inside the cone $\Cc_{\mu,R_\epsilon}(\ps)$.}  By leveraging the results from \textbf{Step 1} and \textbf{Step 2}, we demonstrate that the gradient iterates, with an appropriate constant step size, starting from $\pb(t_\epsilon) \in \Cc_{\mu,R_\epsilon}(\ps)$, remain within this cone. 

We proceed by induction. Suppose that the claim holds up to iteration $t \geq t_\epsilon$. This implies that $ \pb(t) \in \Cc_{\mu,R_\epsilon}(\ps)$. Hence, recalling cone definition, for $\mu>0$  and $R_\epsilon$, we have  $   \left\langle \frac{\pb(t)}{\|\pb(t)\|},\frac{\ps}{\|\ps\|} \right\rangle \geq 1-\mu$ and $\tn{\pb(t)}\geq R_\epsilon$. Let
\begin{align}\label{eqn:rho:def}
\nonumber
\rho(t):=- \frac{1}{1-\epsilon} \iprod{\nabla\mc{L}(\pb(t))}
     {\frac{\ps}{\|\ps\|}} .
%     \geq - 2\iprod{\nabla\mc{L}(\pb(t))}
%     {\frac{\ps}{\|\ps\|}}\\
%    \geq - 2   \left\langle \nabla \Lc(\pb(t)),\frac{\pb(t)}{\|\pb(t)\|} \right\rangle.
\end{align}
Note that $\rho(t)>0$ due to \textbf{Step~1}. This together with the gradient descent update rule gives 
\begin{subequations}
\begin{equation}\label{eqn:localgd:1}
\begin{split}
     \left\langle \frac{\pb(t+1)}{\|\pb(t)\|},\frac{\ps}{\|\ps\|} \right\rangle  &=    \left\langle \frac{\pb(t)}{\|\pb(t)\|} -\frac{\eta}{\|\pb(t)\|}\nabla \mc{L}(\pb(t)), \frac{\ps}{\|\ps\|} \right\rangle\\
      &\ge 1-\mu- \frac{\eta}{\|\pb(t)\|}\iprod{\nabla \mc{L}(\pb(t))} {\frac{\ps}{\|\ps\|}} \\
      & = 1-\mu +\frac{\eta\rho(t)(1-\epsilon)}{\|\pb(t)\|}.
\end{split}
\end{equation}
Note that from Lemma~\ref{local cond}, we have $\left\langle \nabla \Lc(\pb(t)),\pb(t)\right\rangle<0$~ which implies that $\|\pb(t+1)\| \geq \|\pb(t)\|$. This together with  $R_\epsilon$ definition and $\|\pb(t)\|\geq 1/2$ implies that
%$$
%\|\pb(t+1)\| \leq \|\pb(t)\|- \frac{2\eta}{\|\pb(t)\|}
%       \left\langle \nabla \Lc(\pb(t)),\pb(t)\right\rangle + \eta^2 \|\nabla \Lc(\pb(t))\|^2.
%$$ 
\begin{equation}
\begin{split}
\|\pb(t+1)\|&\leq\frac{1}{{2\|\pb(t)\|}} \left(\|\pb(t+1)\|^2+\|\pb(t)\|^2\right)\\
& = \frac{1}{2\|\pb(t)\|} \left(2\|\pb(t)\|^2-2\eta\left\langle \nabla \Lc(\pb(t)),\pb(t)\right\rangle+\eta^2\|\nabla \Lc(\pb(t))\|^2\right)\\
       &\leq \|\pb(t)\|- \frac{\eta}{\|\pb(t)\|}
       \left\langle\nabla \Lc(\pb(t)),\pb(t)\right\rangle + \eta^2 \|\nabla \Lc(\pb(t))\|^2.
\end{split}
\end{equation}
%Here the last inequality follows $R>1/2$. Then, we obtain
Hence, using \eqref{eqn:neg:corr:local}
\begin{equation}\label{eqn:localgd:2}
\begin{split}
  \frac{\|\pb(t+1)\|}{\|\pb(t)\|}& \leq  1- \frac{\eta}{\|\pb(t)\|}
       \left\langle \nabla \Lc(\pb(t)),\frac{\pb(t)}{\|\pb(t)\|} \right\rangle + \eta^2 \frac{\|\nabla \mc{L}(\pb(t))\|^2}{\|\pb(t)\|}\\
& \leq 1- \frac{\eta}{(1-\epsilon)\|\pb(t)\|}  \iprod{\nabla\mc{L}(\pb(t))}
     {\frac{\ps}{\|\ps\|}}+ \eta^2 \frac{\|\nabla \mc{L}(\pb(t))\|^2}{\|\pb(t)\|}\\
      & = 1 + \frac{\eta \rho(t)}{\|\pb(t)\|} + \frac{\eta^2\|\nabla \mc{L}(\pb(t))\|^2}{\|\pb(t)\|}=:C_1(\rho(t),\eta).
\end{split}
\end{equation}
\end{subequations}
Here, the second inequality follows from \eqref{eqn:neg:corr:local}.
%Here, the second inequality follows from \eqref{eqn:rho:def}.

Now, it follows from \eqref{eqn:localgd:1} and \eqref{eqn:localgd:2} that   
\begin{equation}\label{eqn:pt+1:cone}
\begin{split}
\left\langle \frac{\pb(t+1)}{\|\pb(t+1)\|},\frac{\ps}{\|\ps\|} \right\rangle   &\geq \frac{1}{C_1(\rho(t),\eta)} \left(1-\mu +\frac{\eta \rho(t)(1-\epsilon)}{\|\pb(t)\|}\right)\\
& = 1-\mu+ \frac{1}{C_1(\rho(t),\eta)} \left((1-\mu)(1-C_1(\rho(t),\eta)) +\frac{\eta \rho(t)(1-\epsilon)}{\|\pb(t)\|}\right)\\
&= 1-\mu+ \frac{\eta}{C_1(\rho(t),\eta)} \left((\mu-1)(\frac{ \rho(t)}{\|\pb(t)\|} + \frac{\eta\|\nabla \mc{L}(\pb(t))\|^2}{\|\pb(t)\|}) +\frac{ \rho(t)(1-\epsilon)}{\|\pb(t)\|}\right)\\
& = 1-\mu+\frac{\eta}{C_1(\rho(t),\eta)} \left(\frac{\rho(t)(\mu -\epsilon)}{\|\pb(t)\|}
-   \eta (1-\mu) \frac{\|\nabla \mc{L}(\pb(t))\|^2 }{\|\pb(t)\|}\right)\\
& \geq 1-\mu, 
\end{split}
\end{equation}
where the last inequality uses our choice of stepsize $\eta\leq 1/L_p$ in Theorem~\ref{thm:local:gd}'s statement. Specifically, we need $\eta$ to be small to ensure the last inequality. We will guarantee this by choosing a proper $R_\eps$ in Lemma~\ref{local cond}. Specifically,  Lemma~\ref{local cond} leaves the choice of $C_0$ in $R_\eps$ lower bound of \eqref{R boundC0} open (it can always be chosen larger). Here, by choosing $C_0\gtrsim 1/L_p$ will ensure $\eta\leq 1/L_p$ works well.

To proceed, we have that
\begin{equation}\label{eqn:zeta:mu}
\begin{split}
     \frac{(\mu-\epsilon)}{1-\mu}  \frac{\rho(t) } { \|\nabla \mc{L}(\p(t))\|^2}&\geq\frac{\mu-\epsilon}{1-\mu}   \frac{1}{1-\epsilon}  \frac{c}{C}  \frac{\Theta}{\bar{A}}   \frac{1}{\bar{A}C T }    e^{R_\mu^0\Theta/2}\\
     &\geq\frac{\mu}{2(1-\mu)(1-\frac{\mu}{2})}
     \frac{c}{C}  \frac{\Theta}{\bar{A}}   \frac{1}{\bar{A}C T}    e^{R_\mu^0\Theta/2}\geq\eta.
\end{split}
\end{equation}
Here, the second inequality uses our choice of $\epsilon \in (0, \mu/2)$ (see \textbf{Step 2}), and the first inequality is obtained from Lemma~\ref{local cond} since
\begin{align*}
    \frac{\rho(t) } { \tn{\nabla \mc{L}(\p(t))}} &= - \frac{1}{1-\epsilon} \iprod{ \frac{\nabla\mc{L}(\p(t))}{\tn{\nabla \mc{L}(\p(t))}}}
     {\frac{\ps}{\tn{\ps}}}  \geq \frac{1}{1-\epsilon}  \frac{c}{C}  \frac{\Theta}{\bar{A}},\\
         \frac{1} { \tn{\nabla \mc{L}(\p(t))}} &\geq \frac{1}{\bar{A}C  \frac{1}{n} \sum_{i=1}^n  \left(1-\s_{i\alpha_i}\right)} \geq    \frac{1}{ \bar{A} C T e^{-R_\mu^{0}\Theta/2}} 
\end{align*}
for some data dependent constants $c$ and $C$, $\bar{A}=\max_{i\in[n],t,\tau\in[T]}\tn{\kb_{it}- \kb_{i\tau}}$, and $\Theta=1/\tn{\ps}$.

Next, we will demonstrate that the choice of $\eta$ in \eqref{eqn:zeta:mu} does indeed meet our step size condition as stated in the theorem, i.e., $\eta \leq 1/L_p$. Recall that $1/(1+\pi)=1-\epsilon$, which implies that $\pi =\epsilon/(1-\epsilon)$. Combining this with \eqref{R boundC0}, we obtain:

\begin{align*}
R_\pi &\geq\frac{\max(2,\delta^{-1})}{\Theta}\log\left(\frac{\red{C_0} T\Gamma A}{\pi\bggm}\right), \quad \textnormal{where} \quad C_0 \geq 64 \pi,\\
& \Rightarrow R_\epsilon \geq\frac{\max(2,\delta^{-1})}{\Theta}\log\left(\frac{ (1-\epsilon)\red{C_0} T\Gamma A}{\epsilon\bggm}\right),\quad  \textnormal{where}   \quad C_0 \geq 64  \frac{\epsilon}{1-\epsilon}.
\end{align*}
On the other hand, at the initialization, we have  $\epsilon=\mu/2$ which implies that 
\begin{align}\label{eqn:rmu:c0}
  R_{\mu}^0 \geq \frac{\max(2,\delta^{-1})}{\Theta}\log\left(\frac{ (2-\mu)\red{C_0} T\Gamma A}{\mu\bggm}\right),  \quad  \textnormal{where} \quad  C_0 \geq 64  \frac{\mu}{(2-\mu)}.
\end{align}
In the following, we will determine  a lower bound on  $C_0$ such that our step size condition in Theorem~\ref{thm:local:gd}'s statement, i.e., $\eta \leq 1/L_p$, is satisfied. Note that for the choice of $\eta$ in \eqref{eqn:zeta:mu} to meet the condition $\eta \leq 1/L_p$, the following condition must hold:
\begin{equation}
 \frac{1}{L_p}\leq 
\frac{\mu}{ (2-\mu)} \frac{1}{C_2T} e^{R_\mu^0\Theta/2}  \Rightarrow R_\mu^0 \geq  \frac{2}{\Theta} \log  \left(\frac{1}{L_p}   \frac{(2-\mu)}{\mu} C_2 T\right),
\end{equation}
% where $C_2$ is given in \eqref{eqn:C2}.
% Let 
where $C_2 = (1-\mu)  \frac{  \bar{A}^2 C^2 }{ \Theta c}$. 

This together with \eqref{eqn:rmu:c0} implies that 
% \begin{align*}
% \frac{C_0 \Gamma A}{\bggm}  & \geq (1-\mu) \frac{C_2}{L_p}   \Rightarrow  C_0 \geq  \max \left( \frac{(1-\mu)C_2}{L_p}    \frac{\bggm }{\Gamma A},  \frac{64\mu}{2-\mu} \right). 
% \end{align*}
for sufficiently large 
\[
     R^0_\mu\geq\frac{\max(2,\delta^{-1})}{\Theta}\log\left(\frac{ (2-\mu)\red{C_3} T}{\mu}\right),  \quad  \textnormal{where} \quad  C_3=\frac{C_0\Gamma A}{\bggm}\vee\frac{C_2}{L_p},
\]
% Therefore, with this lower bound on $C_0$, 
the step size bound in \eqref{eqn:zeta:mu} ensures that $\eta \leq 1/L_p$ guarantees \eqref{eqn:pt+1:cone}. Hence,  $\pb(t+1)$ remains within the cone, i.e., $\pb(t+1) \in \Cc_{\mu,R_\epsilon}(\ps)$.

\textbf{Step 4: The correlation of $\pb(t)$ and $\ps$ increases over $t$.} The remainder is similar to the proof of Theorem~\ref{conv:gd:global}. From Step 3, we have that all iterates remain within the initial conic set i.e.~$\pb(t)\in\Cc_{\mu,R^0_\mu}(\ps)$ for all $t\geq 0$. Note that it follows from Lemma~\ref{local cond} that  $\li\nabla\Lc(\pb), \ps/\|\ps\|\ri<0$, for any finite $\pb \in \Cc_{\mu,R^0_\mu}(\ps)$. Hence, there are no finite critical points $\pb \in \Cc_{\mu,R^0_\mu}(\ps)$, for which $\nabla \mc{L} (\pb)=0$. Now, based on Lemma~\ref{lem:grad:descent}, which guarantees that $\nabla\Lc(\pb(t))\rightarrow 0$, this
implies that $\left\Vert \pb\left(t\right)\right\Vert \rightarrow\infty$. Consequently, for any choice of $\eps\in (0,\mu/2)$ there is a time $t_\eps$ such that, for all $t\geq t_\eps$, $\pb(t)\in\Cc_{\mu,R_\eps}(\ps)$. Once within $\Cc_{\mu,R_\eps}(\ps)$, following similar steps in \eqref{eqn:decpath:1} and \eqref{eqn:decpath:22}, 
%there exists a time ${t}_\epsilon$ such that 
for any $t\ge {t}_\epsilon$,
\begin{align*}
      \left\langle\frac{\pb(t)}{\|\pb(t)\|}, \frac{\ps}{\|\ps\|} \right\rangle \ge1-\epsilon+ \frac{C_2(\epsilon,\eta)}{\|\pb(t)\|},\quad\p(t)\in\Cc_{\mu,R_\epsilon}(\ps),
\end{align*}
for some finite constant $C_2(\epsilon,\eta)$. Consequently,
    \begin{align*}
      \liminf_{t\to\infty}\iprod{\frac{\pb(t)}{\|\pb(t)\|}}{\frac{\ps}{\|\ps\|}}\ge1-\epsilon,~~\textnormal{where}~~\pb(t) \in \Cc_{\mu,R_\eps}(\ps).
    \end{align*}
Since the choice of $\epsilon \in (0, \mu/2)$ is arbitrary, we obtain $\pb(t)/\|\pb(t)\|\to  \ps/\|\ps\|$.
\end{proof} 
%\subsection{Proof of Theorem~\ref{lin det thm}}\label{app:sec:global:path}
%
%%\subsection{Proof of Theorem \ref{lin det thm}}
%This proof is a direct corollary of Lemma \ref{label mix lem} which itself is a special case of the nonlinear head Theorem \ref{meta thm}. Let us verify that $f(\X)=\vb^\top \X^\top\sft{\X\pb}$ satisfies the assumptions of Lemma \ref{label mix lem} where we replace the nonlinear head with linear $\vb$. To see this, set the optimal sets to be the singletons $\Rc_i=\{\op_i\}$. given $(\X_i,Y_i)$ and defining $\s_i=\sft{\Kb_i\pb}$ and $\ir_i:=\ir_i^\pb=\sum_{t\neq \op_i}\s_{it}$. Recalling score definition $\bgam_i=Y_i\cdot\X_i\vb$ and setting $\nu_i:=\bgam_{i\op_i}$ and $Z_i:=\sum_{t\neq \op_i}\bgam_{it}\s_{it}$, a particular prediction can be written as 
%\begin{align}
%Y_i\cdot \vb^\top \X_i^\top\sft{\X_i\pb}&=\bgam_i^\top \s_i=\bgam_{i\op_i}(1-\ir_{i})+\sum_{t\neq \op_i}\bgam_{it}\s_{it}\\
%&=\nu_i(1-\ir_{i})+Z_i.
%\end{align}
%To proceed, we demonstrate the choices for $C,\eps>0$. Let $C:=-\min_{i\in[n],t\in[T]}\bgam_{it}\wedge 0$ and $q_{\max}=\max_{i\in[n]}q_{i}$. Note that $Z_i\geq \sum_{t\neq \op_i}\bgam_{it}\s_{it}\geq \ir_{i}\bgam_{\min}\geq -Cq_{\max}$. Now, using strict score optimality of $\op_i$'s for all $i\in[n]$, we set 
%\[
%\eps:=1-\sup_{i\in[n]}\frac{\sum_{t\neq \op_i}\bgam_{it}\s_{it}}{\nu_i\ir_i}\geq 1-\sup_{i\in[n]}\frac{\sup_{t\neq\op_i}\bgam_{it}}{\bgam_{i\op_i}}>0.
%\]
%We conclude by observing $Z_i\leq \nu_i\ir_i \frac{\sum_{t\neq \op_i}\bgam_{it}\s_{it}}{\nu_i\ir_i} \leq \nu_i\ir_i\eps$ as desired.
%\input{supp/app_descent.tex}
\subsection{Proof of Theorem \ref{non-LOMM reg path}}
%Regularization path can only converge to Locally-Optimal Max Margin directions

\subsubsection{Supporting Lemma}

We present a lemma that will aid in simplifying our analysis. We begin with a definition.

\begin{definition}[Selected-tokens, Neighbors, Margins, and Neighbor-optimality of a direction] \label{def direction stuff}Let $\qb\in\R^d-\{\boldsymbol{0}\}$ and $(Y_i,\Kb_i,\X_i)_{i=1}^n$ be our dataset. We define the (possibly non-unique) selected-tokens of $\qb$ as follows:\footnote{If $\alpha_i$ is unique for all $i\in[n]$, let us call it, unique selected tokens.}
\begin{equation}\label{eqn:q:select}
\alpha_i\in \arg\max_{t\in[T]}\kb_{it}^\top\qb.
\end{equation}
Next, we define the margin and directional-neighbors for $\qb$ as the minimum margin tokens to the selected-tokens, i.e., 
\begin{align}
\Gamma_{\qb}&=\min_{i\in[n],t\neq\alpha_i} (\kb_{i\alpha_i}-\kb_{it})^\top\qb, \label{eqn:gammaq:set}\\
% \Mc_{\qb }&=\arg\min_{i\in[n],t\neq\alpha_i} (\kb_{i\alpha_i}-\kb_{it})^\top\qb. 
\Mc_\qb&=\left\{ (i,t)~\big|~(\kb_{i\alpha_i}-\kb_{it})^\top\qb=\Gamma_{\qb} \right\}. \label{eqn:mcq:set}
\end{align}
Finally, we say that $\qb$ is neighbor-optimal if the scores of its directional-neighbors are strictly less than the corresponding selected-token. Concretely, for all $(i,t)\in \Mc_{\qb}$, we require that
\[
\bgam_{it}=Y_i\cdot\x_{it}^\top\vb<\bgam_{i\alpha_i}=Y_i\cdot\x_{i\alpha_i}^\top\vb.
\]
\end{definition}

\begin{lemma}[When does one direction dominate another?]\label{lem dominate} Suppose $\qb,\pb\in\R^d$ be two unit Euclidean norm vectors with identical selected tokens. Specifically, for each $i\in[n]$, there exists unique $\alpha_i\in[T]$ such that $\alpha_i= \arg\max_{t\in[T]}\kb_{it}^\top\qb= \arg\max_{t\in[T]}\kb_{it}^\top\pb$. Suppose directional margins obey $\Gamma_{\qb}<\Gamma_{\pb}$ and set $\delta_\Gamma=\Gamma_{\pb}-\Gamma_{\qb}$.
\begin{itemize}
\item Suppose $\qb$ and $\pb$ are both neighbor-optimal. Then, for some $R(\delta_\Gamma)$ and all $R>R(\delta_\Gamma)$, we have that $\Lc(R\cdot\pb)<\Lc(R\cdot\qb)$.
\item Suppose $\qb$ has a unique directional-neighbor and is not neighbor-optimal (i.e.~this neighbor has higher score). Let $\delta_{\qb}$ be the margin difference between unique directional-neighbor and the second-most minimum-margin neighbor (i.e.~the one after the unique one, see \eqref{deltaq def}) of $\qb$. Then, for some $R(\delta_\Gamma\wedge \delta_{\qb})$ and all $R>R(\delta_\Gamma\wedge \delta_{\qb})$, we have that $\Lc(R\cdot\qb)<\Lc(R\cdot\pb)$.
\end{itemize}
\end{lemma}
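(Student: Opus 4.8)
\textbf{Proof plan for Lemma~\ref{lem dominate}.}
The plan is to compare the two risks $\Lc(R\cdot\pb)$ and $\Lc(R\cdot\qb)$ by expanding the softmax probabilities $\sft{R\cdot\Kb_i\pb}$ and $\sft{R\cdot\Kb_i\qb}$ for large $R$, keeping only the leading-order terms. Since $\pb$ and $\qb$ share the same (unique) selected token $\alpha_i$ for each $i$, the softmax mass concentrates on $\alpha_i$ and the deviation from the one-hot vector decays like $e^{-R\Gamma_{\pb}}$ (resp.\ $e^{-R\Gamma_{\qb}}$) up to polynomial-in-$T$ factors coming from multiplicities of the minimum-margin neighbors. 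Concretely, I would write $1-\sft{R\Kb_i\pb}_{\alpha_i} = (|\Mc^i_{\pb}| + o(1))\,e^{-R\Gamma_{\pb}}$ where $\Mc^i_\pb$ is the set of minimum-margin tokens for input $i$ under direction $\pb$, and likewise for $\qb$; the key reference here is the key Lemma~\ref{lem:q_reduce}, which lets me translate these softmax expansions into expansions of the prediction $Y_i f(\X_i) = \bgam_i^\top \s_i$ in terms of the score gaps $\bgam_{i\alpha_i}-\bgam_{it}$ over the minimum-margin tokens.

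For the first bullet, both directions are neighbor-optimal, so for every minimum-margin pair $(i,t)\in\Mc_{\pb}$ we have $\bgam_{i\alpha_i}-\bgam_{it}>0$, and similarly for $\qb$. Hence $Y_i f(\X_i)$ at $R\pb$ equals $\bgam_{i\alpha_i} - c_i^{\pb} e^{-R\Gamma_{\pb}}(1+o(1))$ with $c_i^{\pb}>0$, and the same at $R\qb$ with $e^{-R\Gamma_{\qb}}$. Because $\ell$ is strictly decreasing (Assumption~\ref{assum:loss:prope}) and $\Gamma_{\pb}>\Gamma_{\qb}$, the correction term at $R\pb$ is exponentially smaller, so $Y_i f_{\pb}(\X_i) > Y_i f_{\qb}(\X_i)$ eventually for the worst-case input (the one with margin exactly $\Gamma_{\qb}$ for $\qb$), which forces $\ell$ of the $\pb$-prediction to be strictly smaller for that input and no larger for the others once $R>R(\delta_\Gamma)$; summing over $i$ gives $\Lc(R\pb)<\Lc(R\qb)$. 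I must be a little careful that the per-input comparison goes the right way for \emph{all} inputs, not just the critical one — inputs with strictly larger margins under $\qb$ than $\Gamma_{\qb}$ contribute even smaller corrections, so they do not overturn the sign; this is where the choice of threshold $R(\delta_\Gamma)$ in terms of the margin gap $\delta_\Gamma$ enters.

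For the second bullet, $\qb$ has a \emph{unique} directional-neighbor $(j,\beta)$, and it is \emph{not} neighbor-optimal, meaning $\bgam_{j\beta}>\bgam_{j\alpha_j}$. Then the leading correction to $Y_j f_{\qb}(\X_j)$ is $+(\bgam_{j\beta}-\bgam_{j\alpha_j})\,e^{-R\Gamma_{\qb}}(1+o(1))$, which is \emph{positive} — so moving in the $\qb$ direction actually \emph{increases} the margin of the prediction for input $j$, hence \emph{decreases} $\ell$. The quantity $\delta_{\qb}$ (defined via \eqref{deltaq def}, the gap to the second-smallest directional margin) ensures that this single favorable term dominates all other corrections of order $e^{-R\Gamma_\qb}$ coming from non-minimum-margin neighbors; and $\delta_\Gamma$ ensures it dominates everything coming from the $\pb$ side, which is of order $e^{-R\Gamma_\pb}$. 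So for $R>R(\delta_\Gamma\wedge\delta_{\qb})$ we get $Y_i f_{\qb}(\X_i) \ge Y_i f_{\pb}(\X_i)$ for all $i$ with strict inequality for $i=j$, and strict decrease of $\ell$ there, giving $\Lc(R\qb)<\Lc(R\pb)$.

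The main obstacle I anticipate is the bookkeeping of lower-order terms: ensuring that all $O(e^{-R\Gamma_{\cdot}(1+\epsilon)})$ and $O(R\,e^{-2R\Gamma_\cdot})$ remainders — both from the softmax expansion (Lemma~\ref{lem:q_reduce} bounds them by $2\Gamma A(1-\s_1)^2$) and from the smoothness/Lipschitzness of $\ell'$ (Assumption~\ref{assum:loss:prope}) — are genuinely negligible compared to the single governing exponential, uniformly over all $n$ inputs. This is purely a matter of choosing $R(\cdot)$ large enough as an explicit function of $\delta_\Gamma$, $\delta_\qb$, the global score range $\Gamma$, the token norms $A$, and $T$; the argument is routine once the leading-order picture above is set up, but writing the inequalities carefully so that the worst-case input controls the sum is the delicate part.
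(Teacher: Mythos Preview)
Your high-level picture is right, but the per-input comparison you rely on is false in general, and this is the core gap.

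\textbf{Where it breaks.} In Case~1 you claim that for $R$ large, $Y_i f_{\pb}(\X_i)\ge Y_i f_{\qb}(\X_i)$ for \emph{every} $i$, and similarly in Case~2 with the inequality reversed. But the hypothesis $\Gamma_{\qb}<\Gamma_{\pb}$ is a statement about the \emph{global} minimum margin $\Gamma_{\pb}=\min_i\Gamma^i_{\pb}$; it says nothing about how the \emph{per-input} margins $\Gamma^i_{\pb}$ and $\Gamma^i_{\qb}$ compare for each fixed $i$. Concretely, take $n=2$ with per-input margins $(\Gamma^1_{\qb},\Gamma^2_{\qb})=(1,3)$ and $(\Gamma^1_{\pb},\Gamma^2_{\pb})=(2,1.5)$: then $\Gamma_{\qb}=1<1.5=\Gamma_{\pb}$, yet on input~$2$ the $\pb$-prediction carries a correction $\sim e^{-1.5R}$ which is \emph{larger} than the $\qb$-correction $\sim e^{-3R}$, so $\ell(Y_2 f_{\pb}(\X_2))>\ell(Y_2 f_{\qb}(\X_2))$. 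Your sentence ``inputs with strictly larger margins under $\qb$ than $\Gamma_{\qb}$ contribute even smaller corrections'' implicitly assumes the per-input margin ordering is preserved, which it need not be. (Relatedly, your expansion $1-\s_{i\alpha_i}\sim e^{-R\Gamma_{\pb}}$ should have the per-input $\Gamma^i_{\pb}$ in the exponent, not the global one.)

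\textbf{What the paper does instead.} The paper never compares per input. It introduces the common baseline $\Lc_\star=\frac{1}{n}\sum_i\ell(\bgam_{i\alpha_i})$ and, via the mean value theorem on $\ell$ together with the elementary softmax bound $\tfrac12 e^{-R\Gamma^{it}_{\qb}}\le \s_{it}\le e^{-R\Gamma^{it}_{\qb}}$, writes
\[
\Lc(R\qb)-\Lc_\star=\frac{1}{n}\sum_{i}\sum_{t\neq\alpha_i}M_{it}\,e^{-R\Gamma^{it}_{\qb}}(\bgam_{i\alpha_i}-\bgam_{it}),\qquad M_{it}\in[\tfrac12 M_-,M_+],
\]
and likewise for $\pb$. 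In Case~1 (all score gaps positive) this gives an \emph{upper} bound $\Lc(R\pb)-\Lc_\star\le C_+e^{-R\Gamma_{\pb}}$ (every exponent is $\ge\Gamma_{\pb}$) and a \emph{lower} bound $\Lc(R\qb)-\Lc_\star\ge C_-e^{-R\Gamma_{\qb}}$ (from a single worst term); comparing these two aggregate bounds yields the claim once $R\gtrsim \delta_\Gamma^{-1}$. In Case~2 one instead lower-bounds $\Lc(R\pb)-\Lc_\star\ge -C_+e^{-R\Gamma_{\pb}}$ and upper-bounds $\Lc(R\qb)-\Lc_\star\le -c\,e^{-R\Gamma_{\qb}}+Ce^{-R(\Gamma_{\qb}+\delta_{\qb})}$, the negative leading term coming from the unique high-score neighbor and $\delta_{\qb}$ pushing all other contributions to higher order. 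The aggregate comparison then goes through once $R\gtrsim(\delta_\Gamma\wedge\delta_{\qb})^{-1}$.

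\textbf{Minor point.} Lemma~\ref{lem:q_reduce} controls the quantity $\ab^\top\mathrm{diag}(\s)\bgam-\ab^\top\s\s^\top\bgam$, i.e.\ gradient correlations, not the loss value itself. You do not need it here; the direct identity $\bgam_i^\top\s_i=\bgam_{i\alpha_i}-\sum_{t\neq\alpha_i}\s_{it}(\bgam_{i\alpha_i}-\bgam_{it})$ plus the mean value theorem on $\ell$ is all that is required.
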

\begin{proof} We prove these two statements in order. First define the directional risk baseline induced by letting $R\rightarrow\infty$ and purely selecting the tokens $\bal=(\alpha_i)_{i=1}^n$. This is given by
\begin{equation*}
\Lc_\st:=\frac{1}{n}\sum_{i=1}^n\ell \left(Y_i \cdot\vb^\top\x_{i\alpha_i}\right).
\end{equation*}
We evaluate $\qb,\pb$ with respect to $\Lc_\st$. To proceed, let $\s_{i}=\sft{R\Kb_i\qb}$. Define $\Gamma^{it}_{\qb}=\kb_{i\alpha_i}^\top\qb-\kb_{it}^\top\qb$. Note that, the smallest value for $t\neq \alpha_i$ is achieved for $\Gamma_{\qb}$. For sufficiently large $R\gtrsim \order{\log(T)/\Gamma_{\qb}}$, observe that, for $t\neq\alpha_i$
\begin{align}
e^{-R\Gamma^{it}_{\qb}}\geq \s_{it}=\frac{e^{R\kb_{it}^\top\qb}}{\sum_{t\in [T]}e^{R\kb_{it}^\top\qb}}\geq 0.5 e^{-R\Gamma^{it}_{\qb}}.\label{up low bound}
\end{align}
Recalling the score definition and let $M_+,M_-$ be the upper and lower bounds on $-\ell'$ over its bounded domain that scores fall on, respectively. Note that, for some intermediate $M_+\geq M_i\geq M_-$ values, we have 
\begin{align*}%\neq \alpha_i
\Lc(R\qb)-\Lc_\st&=\frac{1}{n}\sum_{i=1}^n \ell(\sum_{t\in [T]}\s_{it} \bgam_{it})-\ell(\bgam_{i\alpha_i})\\
&=\frac{1}{n}\sum_{i=1}^nM_i \sum_{t\neq \alpha_i}\s_{it} (\bgam_{i\alpha_i}-\bgam_{it}).
%\\&=\frac{1}{n}\sum_{i=1}^nM_i \sum_{t\neq \alpha_i}\frac{e^{-}}{} (\bgam_{i\alpha_i}-\bgam_{it}).
\end{align*}
Now, using \eqref{up low bound} for a refreshed $M_+\geq M_{it}\geq 0.5M_-$ values, we can write
\begin{align}%\neq \alpha_i
\Lc(R\qb)-\Lc_\st&=\frac{1}{n}\sum_{i=1}^n \sum_{t\neq \alpha_i}M_{it}e^{-R\Gamma^{it}_{\qb}} (\bgam_{i\alpha_i}-\bgam_{it}).\label{eq so far}
\end{align}
The same bound also applies to $\pb$ with some $M'_{it},$ multipliers
\begin{align}%\neq \alpha_i
\Lc(R\pb)-\Lc_\st&=\frac{1}{n}\sum_{i=1}^n \sum_{t\neq \alpha_i}M'_{it}e^{-R\Gamma^{it}_{\pb}} (\bgam_{i\alpha_i}-\bgam_{it}).
\end{align}
We can now proceed with the proof.

\textbf{Case 1: $\qb$ and $\pb$ are both neighbor-optimal.} This means that $\bgam_{i\alpha_i}-\bgam_{it}>0$ for all $i\in[n],t\neq\alpha_i$. Let $K_+>K_->0$ be upper and lower bounds on  $\bgam_{i\alpha_i}-\bgam_{it}$ values. We can now upper bound the right hand side of \eqref{eq so far} via
\[
M_+K_+T e^{-R\Gamma_{\qb}} \geq \Lc(R\qb)-\Lc_\st\geq \frac{1}{2n}M_-K_-e^{-R\Gamma_{\qb}}.
\]
Consequently, $\Lc(R\qb)>\Lc(R\pb)$ as soon as $\frac{1}{2n}M_-K_-e^{-R\Gamma_{\qb}}>M_+K_+T e^{-R\Gamma_{\pb}}$. Since $M_+,K_+,n,T$ are global constants, this happens under the stated condition on the margin gap $\Gamma_{\pb}-\Gamma_{\qb}$.

\textbf{Case 2: $\qb$ has a unique directional-neighbor and is not neighbor-optimal.} In this scenario, $\Lc(R\qb)-\Lc_\st$ is actually negative for large $R$. To proceed, define the maximum score difference $K_+=\sup_{i,t\neq\alpha_i}|\bgam_{i\alpha_i}-\bgam_{it}|$. Also let $(j,\beta)$ be the unique directional neighbor achieving the minimum margin $\Gamma_{\qb}$. Then, $\delta_\qb$ -- the margin difference between unique directional-neighbor and the second minimum-margin neighbor (i.e.~the one after the unique one) of $\qb$ -- is defined as
\begin{align}
\delta_{\qb}=\min_{i\in[n],~t\neq\alpha_i,~(i,t)\neq (j,\beta)} \Gamma^{it}_{\qb}-\Gamma_{\qb}.\label{deltaq def}
\end{align}
To proceed, we can write
\[
\Lc(R\pb)-\Lc_\st\geq -M_+K_+T e^{-R\Gamma_{\pb}}.
\]
On the other hand, setting $\kappa=\bgam_{j\beta}-\bgam_{j\alpha_j}>0$, we can bound 
\begin{align*}
\Lc(R\qb)-\Lc_\st&=-\frac{1}{n}M_{j\beta}e^{-R\Gamma_{\qb}}\kappa+ \frac{1}{n} \sum_{i\in[n],~t\neq \alpha_i,~(i,t)\neq (j,\beta)}M_{it}e^{-R\Gamma^{it}_{\qb}} (\bgam_{i\alpha_i}-\bgam_{it})\\
&\leq -\frac{1}{n}M_-e^{-R\Gamma_{\qb}}\kappa+ M_+K_+T e^{-R(\Gamma_{\qb}+\delta_{\qb})}.
\end{align*}
Consequently, we have found that $\Lc(R\pb)>\Lc(R\qb)$ as soon as
\[
\frac{1}{n}M_-e^{-R\Gamma_{\qb}}\kappa\geq M_+K_+T (e^{-R(\Gamma_{\qb}+\delta_{\qb})}+e^{-R\Gamma_{\pb}})
\]
This happens when $R\gtrsim \frac{1}{\delta_{\qb}\wedge (\Gamma_{\pb}-\Gamma_{\qb})}$ (up to logarithmic terms) establishing the desired statement.
\end{proof}

\subsubsection{Proof of Theorem \ref{non-LOMM reg path}}
%Below, we assume that $\ps(\bal)$ exists (i.e.~SVM is solvable) for any choice of indices $\bal=(\alpha_i)_{i=1}^n$. 
Define the locally-optimal unit directions
\[
\Pc^{\tsc{mm}}=\left\{\frac{\ps(\bal)}{\tn{\ps(\bal)}}~\big|~\bal~\text{is a locally-optimal set of indices}\right\}.
\]
The theorem below shows that cone-restricted regularization paths can only directionally converge to an element of this set.
\begin{theorem}[Non-LOMM Regularization Paths Fail] \label{tightest reg path}Fix a unit Euclidean norm vector $\qb\in\R^d$ such that $\qb\not\in\Pc^{\tsc{mm}}$. Assume that the token scores are distinct (i.e., $\bgam_{it}\neq\bgam_{i\tau}$ for $t\neq \tau$) and the key embeddings $\kb_{it}$ are in general position. Specifically, we require the following conditions to hold \footnote{This requirement holds for general data because it is guaranteed by adding arbitrarily small independent gaussian perturbations to keys $\kb_{it}$.}:
\begin{itemize}
\item When $m=d$, all matrices $\Kbb\in\R^{m\times d}$ where each row of $\Kbb$ has the form $\kb_{it}-\kb_{i\alpha_i}$ for a unique $(i,\alpha_i,t\neq \alpha_i)$ tuple, are full-rank.
\item When $m=d+1$, the vector of all ones is not in the range space of any such $\Kbb$ matrix.
\end{itemize} 
%Assume that token scores are distinct (namely~$\bgam_{it}\neq\bgam_{i\tau}$ for $t\neq \tau$) and
%key embeddings $\kb_{it}$ are in general position: Concretely, we require that\footnote{This requirement holds for general data because it is guaranteed by adding arbitrarily small independent gaussian perturbations to keys $\kb_{it}$.}: Let $\Kb\in\R^{m\times d}$ be a matrix where each row of $\Kb$ has the form $\kb_{it}-\kb_{i\alpha_i}$ for some unique $(i,\alpha_i,t\neq \alpha_i)$ tuple. 
%\begin{itemize}
%\item All such $\Kb$ matrices are full-rank when $m=d$.
%\item All ones vector is not in the range space of any such $\Kb$ matrix when $m=d+1$.
%\end{itemize}
Fix arbitrary $\eps>0,R_0>0$. Define the local regularization path of $\qb$ as its $(\eps,R_0)$-conic neighborhood:
\begin{equation}%\label{eqn:local:RP}
\pbb(R)=\underset{\pb\in \Cc_{\epsilon,R_0}(\qb),  \|\pb\| \leq R}{\arg\min}\Lc(\pb),~~~\text{where}~~~\Cc_{\epsilon,R_0}(\qb)= \cone_{\eps}(\qb) \cap \left\{\pb \in \R^d\big|~\|\pb\|\geq R_0\right\}.
\end{equation}
Then,  either $\lim_{R\rightarrow\infty}\tn{\pbb(R)}<\infty$ or $\underset{R\rightarrow\infty}{\lim} \pbb(R)/\tn{\pbb(R)}\neq \qb$. In both scenarios $\underset{R\rightarrow\infty}{\lim} \pbb(R)/R\neq \qb$.
%
%Consider the conic neighborhood (recalling \eqref{eqn:def:cone}) $\Cc_{\epsilon,R_0}(\qb)$, where $R_0\geq 0$ is arbitrary. Define the local path $\pbb(R)=\min_{\pb\in \Cc_{\eps},\tn{\pb}\leq R}\Lc(\pb)$. Then either $\lim_{R\rightarrow\infty}\tn{\pbb(R)}<\infty$ or $\underset{R\rightarrow\infty}{\lim}\frac{\pbb(R)}{\tn{\pbb(R)}}\neq \qb$. In both scenarios $\underset{R\rightarrow\infty}{\lim}\frac{\pbb(R)}{R}\neq \qb$.
\end{theorem}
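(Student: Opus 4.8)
The plan is to argue by contradiction. By the dichotomy built into the statement it suffices to handle the diverging case: since the feasible sets $\Cc_{\epsilon,R_0}(\qb)\cap\{\tn{\pb}\le R\}$ are nested, $\Lc(\pbb(R))$ is non-increasing; if $\inf_{\pb\in\Cc_{\epsilon,R_0}(\qb)}\Lc(\pb)$ is attained at a finite point then $\pbb(R)$ eventually stabilizes there, $\lim_R\tn{\pbb(R)}<\infty$, and $\pbb(R)/R\to\boldsymbol 0\ne\qb$. Otherwise no bounded subsequence of $\pbb(R)$ can be optimal, so $\tn{\pbb(R)}\to\infty$; and since $\pbb(R)/R\to\qb$ would force both $\tn{\pbb(R)}/R\to1$ and $\pbb(R)/\tn{\pbb(R)}\to\qb$, it is enough to contradict the latter. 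So assume $\tn{\pbb(R)}\to\infty$ and set $\ub_R:=\pbb(R)/\tn{\pbb(R)}\to\qb$; I will produce a competing ray $\rho\mapsto\rho\,\pb^\dagger$ with $\pb^\dagger\in\cone_\eps(\qb)$, $\tn{\pb^\dagger}=1$, for which $\Lc(\rho\,\pb^\dagger)<\Lc(\rho\,\ub_R)=\Lc(\pbb(R))$ once $R$ is large, where $\rho=\tn{\pbb(R)}$. As $\tn{\pbb(R)}\pb^\dagger$ is feasible in the problem defining $\pbb(R)$, this contradicts optimality.

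The structural input is that $\qb\notin\Pc^{\tsc{mm}}$ forces exactly one of three situations, using that scores are distinct: (a) $\qb$ does not select a unique token in some input; (b) $\qb$ selects a unique token per input, with set $\bal=(\alpha_i)$, but is \emph{not} neighbor-optimal (Definition~\ref{def direction stuff}), i.e.\ some directional-neighbor $(j,t_0)\in\Mc_\qb$ has $\bgam_{jt_0}>\bgam_{j\alpha_j}$; or (c) $\qb$ selects $\bal$ uniquely and is neighbor-optimal but is not the max-margin direction for $\bal$, equivalently $\qb\ne\ps(\bal)/\tn{\ps(\bal)}$ — for otherwise neighbor-optimality is precisely local-optimality of $\bal$ (Definition~\ref{def loc opt}) and $\qb\in\Pc^{\tsc{mm}}$.

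In case (c) take $\pb^\dagger$ to be a small step from $\qb$ toward $\ps(\bal)/\tn{\ps(\bal)}$ (then renormalized). By concavity and positive homogeneity of the margin map $\ub\mapsto\min_{i,t\ne\alpha_i}(\kb_{i\alpha_i}-\kb_{it})^\top\ub$ we get $\Gamma_{\pb^\dagger}>\Gamma_\qb$, and for a small enough step $\pb^\dagger$ still selects $\bal$, still lies in $\cone_\eps(\qb)$, and stays neighbor-optimal (its directional-neighbor set is contained in $\Mc_\qb$ once the step is below the next margin gap). Since $\ub_R$ also selects $\bal$ for large $R$ and $\Gamma_{\ub_R}\to\Gamma_\qb<\Gamma_{\pb^\dagger}$, the first bullet of Lemma~\ref{lem dominate}, applied to $\ub_R$ and $\pb^\dagger$, gives $\Lc(\rho\,\pb^\dagger)<\Lc(\rho\,\ub_R)$ for all $\rho$ beyond a threshold that depends only on $\Gamma_{\pb^\dagger}-\Gamma_\qb$, hence for $\rho=\tn{\pbb(R)}$ with $R$ large. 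In case (b) take $\pb^\dagger$ a normalized perturbation $\qb-\delta w$ where, using the general-position hypotheses (the $m=d$ full-rank condition makes $\{\kb_{i\alpha_i}-\kb_{it}\}_{(i,t)\in\Mc_\qb}$ linearly independent and the $m=d+1$ condition prevents $|\Mc_\qb|\ge d+1$), one can choose $w$ with $w^\top(\kb_{j\alpha_j}-\kb_{jt_0})>0$ and $w^\top(\kb_{i\alpha_i}-\kb_{it})\le0$ for the remaining neighbors; then the softmax contribution of the higher-scoring token $t_0$ is a strictly larger negative term for $\pb^\dagger$ than for $\ub_R$ while no other term worsens, which by the exponential-tail estimate underlying Lemma~\ref{lem dominate} (its Case~2 in the singleton case) yields $\Lc(\rho\,\pb^\dagger)<\Lc(\rho\,\ub_R)$ for large $\rho$. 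In case (a) take $\pb^\dagger$ a small perturbation of $\qb$ that strictly prefers the highest-scoring tied token in each tied input; then $\pb^\dagger$ has a fixed positive margin there while $\ub_R\to\qb$ forces the corresponding margin of $\pbb(R)$ to $0$, so $\Lc(\rho\,\pb^\dagger)$ reaches its limiting risk at a fixed exponential rate in $\rho$ that eventually beats $\Lc(\pbb(R))$. In every case $\ub_R\to\qb$ is contradicted, which proves the theorem.

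The main obstacle is cases (a) and (b): Lemma~\ref{lem dominate} is stated only for two directions sharing the same unique selected tokens (and, in Case~2, a single directional-neighbor), so the real work is (i) upgrading those comparisons to the many-neighbor and tied-token regimes through the general-position conditions, and (ii) checking, uniformly in $R$, that the constructed $\pb^\dagger$ genuinely stays inside the \emph{fixed} cone $\cone_\eps(\qb)$ and leaves the margins at all other inputs non-degraded — i.e.\ that the perturbation size $\delta$ can be chosen independent of $R$ while the exponential-in-$\rho$ advantage it buys is still visible at $\rho=\tn{\pbb(R)}\to\infty$.
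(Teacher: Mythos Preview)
Your case decomposition and contradiction strategy match the paper's: your (c) is the paper's (A1), using a step toward $\ps(\bal)/\tn{\ps(\bal)}$ and the first bullet of Lemma~\ref{lem dominate}; your (a) is the paper's (B), breaking ties toward the highest-scoring token and comparing margins. The one genuine difference is your (b) versus the paper's (A2). The paper does \emph{not} try to beat $\ub_R$ along rays. Instead it constructs $\qb_0\in\cone_\eps(\qb)$ whose directional-neighbors \emph{all} have strictly higher scores than their selected tokens (solving a linear system via the full-rank general-position condition to isolate the higher-scoring subset $\Mc_{\qb}^+\subset\Mc_{\qb}$ as the new minimum-margin set), and then shows directly (Lemma~\ref{lemma finite p}) that $\Lc(R\cdot\qb_0)<\Lc_\star$ for large $R$. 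This exhibits a finite feasible point with loss strictly below the asymptotic floor $\Lc_\star$, so the regularization path stays bounded --- the \emph{first} alternative of the theorem. Your route in (b) instead lands on the \emph{second} alternative by pushing a single higher-scoring neighbor $(j,t_0)$ to be the unique directional-neighbor of $\pb^\dagger$ with $\Gamma_{\pb^\dagger}<\Gamma_{\qb}$, and invoking the second bullet of Lemma~\ref{lem dominate} against $\ub_R$ (whose margin tends to $\Gamma_{\qb}$). Both routes are valid; yours is more economical in the perturbation (isolate one neighbor rather than all of $\Mc_{\qb}^+$), while the paper's bounded-path argument avoids tracking $\ub_R$ altogether and needs no uniform-in-$R$ threshold bookkeeping.

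You correctly flag the remaining work in (a). The paper's extra step that your sketch elides is to first argue, \emph{by optimality of $\pbb(R)$}, that for large $R$ its selected tokens must be exactly $\bal$ (the highest-scoring among each tie): any other selection, or equal probability-splitting over $\Tc_i$, yields an asymptotic loss strictly above $\Lc_\star$, so the minimizer cannot adopt it once $\tn{\pbb(R)}$ is large. Only with this in hand do $\ub_R$ and $\pb^\dagger$ share selected tokens and are both neighbor-optimal, so that the first bullet of Lemma~\ref{lem dominate} applies and your ``$\Gamma_{\ub_R}\to 0$ versus $\Gamma_{\pb^\dagger}>0$ fixed'' comparison goes through.
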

\begin{proof} 
We will prove the result by dividing the problem into distinct cases. In each case, we will construct an alternative direction that achieves a strictly better objective than some $\delta=\delta(\eps)>0$ neighborhood of $\qb$, thereby demonstrating the suboptimality of the $\qb$ direction. Let's define the $\delta$ neighborhood as follows:
\begin{equation}\label{eqn:delta:nb}
\Nc_\delta=\left\{\pb~\big|~ \left\|\frac{\pb}{\tn{\pb}}-\qb\right\|\leq\delta\quad\text{and}\quad \tn{\pb}\geq R_0\right\}.%,R\geq \tn{\pb}\geq R_\eps \right\}.
\end{equation}
%$$\underline{\gamma}=\min_{i\in[n],t\in\Tc_i}\bgam_{i\alpha_i}-\bgam_{it}$$
Now, let's recall a few more definitions based on Definition~\ref{def direction stuff}. First, the tokens selected by $\qb$ are given by \eqref{eqn:q:select}. To proceed, let's initially consider the scenario where $\alpha_i$ is unique for all $i\in[n]$, meaning that as we let $c\rightarrow\infty$, $c\cdot\qb$ will choose a single token per input. Later, we will revisit the setting when $\arg\max$ is not a singleton, and $\qb$ is allowed to select multiple tokens.

Additionally, it's important to note that $\tn{\pbb(R)}$ is non-decreasing by definition. Suppose it has a finite upper bound $\tn{\pbb(R)}\leq M$ for all $R<\infty$. In that scenario, we have $\lim_{R\to\infty} \frac{\pbb(R)}{R} = 0 \neq \qb$.

$\bullet$ \textbf{(A) $\qb$ selects a single token per input:} Given that the indices $\bal=(\alpha_i)_{i=1}^n$ defined in \eqref{eqn:q:select} are uniquely determined, we can conclude that the $\qb$ direction eventually selects tokens $\kb_{i\alpha_i}$. Recall the definition of the margin $\Gamma_{\qb}$ from \eqref{eqn:gammaq:set} and the set of directional neighbors, which is defined as the indices that achieve $\Gamma_{\qb}$, as shown in \eqref{eqn:mcq:set}. Let us refer to $\qb$ as \emph{neighbor-optimal} if $\bgam_{it}<\bgam_{i\alpha_i}$ for all $(i,t)\in \Mc_\qb$.

We will consider two cases for this scenario: when $\qb$ is neighbor-optimal and when $\qb$ is not neighbor-optimal.

$\diamond$ \textbf{(A1) $\qb$ is neighbor-optimal.} In this case, we will argue that max-margin direction $\psb:=\ps(\bal)/\tn{\ps(\bal)}$ can be used to construct a strictly better objective than $\qb$. Note that $\ps(\bal)$ exists because $\qb$ is already a viable separating direction for tokens $\bal$. Specifically, consider the direction $\qb'=\frac{\qb+\eps\psb}{\tn{\qb+\eps\psb}}$. Observe that, $\qb'$ lies within $\cone_{2\eps}(\qb)$,\footnote{As a result, let us prove the result for $\eps\gets 2\eps$ without losing generality.}
\[
\qb^\top \qb'\geq \frac{1-\eps}{1+\eps}\geq 1-2\eps.
\]
 We now argue that, there exists $\delta=\delta_\eps>0$ such that for all $R>R_\eps$
\begin{align}
\min_{R_\eps\leq r\leq R}\Lc(r\cdot\qb')< \min_{\pb\in \Nc_{\delta},R_\eps\leq \tn{\pb}\leq R}\Lc(\pb).\label{desired delta bound}
\end{align}
To prove this, we study the margin $\Gamma_{\qb'}$ induced by $\qb'$ and the maximum margin $\Gamma_\delta$ induced within $\pb\in \Nc_{\delta}$. Concretely, we will show that $\Gamma_{\qb'}>\Gamma_{\delta}$ and directly apply the first statement of Lemma \ref{lem dominate} to conclude with \eqref{desired delta bound}.

Let $\Gamma=1/\tn{\ps(\bal)}$ be the margin induced by $\psb$. Note that $\Gamma>\Gamma_{\qb}$ by the optimality of $\ps(\bal)$ and the fact that $\qb\neq \ps(\bal)$. Consequently, we can lower and upper bound the margins via
\begin{align*}
\Gamma_{\qb'}&=\min_{i\in[n]}\min_{t\neq\alpha_i}(\kb_{i\alpha_i}-\kb_{it})^\top\qb'\geq \frac{\Gamma_\qb+\eps\Gamma}{1+\eps}\geq \Gamma_\qb+\frac{\eps}{2}(\Gamma-\Gamma_\qb),\\
\Gamma_\delta&=\max_{\pb\in \Nc_{\delta}}\min_{i\in[n]}\min_{t\neq\alpha_i}(\kb_{i\alpha_i}-\kb_{it})^\top\pb/\tn{\pb}\\
&\leq \max_{\tn{\rb}\leq 1}\min_{i\in[n]}\min_{t\neq\alpha_i}(\kb_{i\alpha_i}-\kb_{it})^\top(\qb+\delta\rb)\\
&\leq \Gamma_\qb+M\delta,
\end{align*}
where $M=\max_{i,t,\tau}\tn{\kb_{it}-\kb_{i\tau}}$.

Consequently, setting $\delta=\frac{\eps}{4M}(\Gamma-\Gamma_\qb)$, we find that
\[
\Gamma_{\qb'}\geq\Gamma_\delta+\frac{\eps}{4}(\Gamma-\Gamma_\qb).
\]
Equipped with this inequality, we apply the first statement of Lemma \ref{lem dominate} which concludes that\footnote{Here, we apply this lemma to compare $\qb'$ against all $\pb\in\Nc_{\delta}$. We can do this uniform comparison because the $R$ requirement in Lemma \ref{lem dominate} only depends on the margin difference and global problem variables and not the particular choice of $\pb\in\Nc_{\delta}$.} for some $R_\eps=R(\frac{\eps}{4}(\Gamma-\Gamma_\qb))$ and all $R>R_\eps$, \eqref{desired delta bound} holds. This in turn implies that, within $\Cc_{\eps}$, the optimal solution is
\begin{itemize}
\item either upper bounded by $R_\eps$ in $\ell_2$ norm (i.e.~$\lim_{R\rightarrow\infty}\tn{\pbb(R)}<\infty$) or
\item at least $\delta=\delta(\eps)>0$ away from $\qb$ after $\ell_2$-normalization i.e.~$\tn{\frac{\pbb(R)}{\tn{\pbb(R)}}- \qb}\geq \delta$.
\end{itemize} 
In either scenario, we have proven that $\underset{R\rightarrow\infty}{\lim}\frac{\pbb(R)}{R}\neq \qb$.

%\begin{lemma}
%\end{lemma}
%\begin{proof}
%\end{proof}

%s within each input $i\in[n]$ are unique
$\diamond$ \textbf{(A2) $\qb$ is not neighbor-optimal.} In this scenario, we will prove that $\pbb(R)$ is finite to obtain $\lim_{R\rightarrow \infty}\pbb(R)/R=0\neq \qb$. To start, assume that conic neighborhood $\eps$ of $\qb$ is small enough so that selected-tokens $\bal$ remain unchanged within $\Cc_\eps$. This is without generality because if directional convergence fails in a small neighborhood of $\qb$, it will fail in the larger neighborhood as well. Secondly, if $\lim_{R\rightarrow \infty}\tn{\pbb(R)}\rightarrow\infty$ and $\pbb(R)\in\Cc_{\eps}$, since softmax will eventually perfectly select $\bal$ (i.e.~assigning probability $1$ on token indices $(i,\alpha_i)$), we would have 
\[
\lim_{R\rightarrow \infty}\Lc(\pbb(R))=\Lc_\star=\frac{1}{n}\sum_{i=1}^n \ell(\bgam_{i\alpha_i}).
\]
Note that, this is simply by selection of $\bal$ and regardless of $\pbb(R)$ directionally converges to $\qb$. This means that, if there exists a finite $\pb\in \Cc_{\eps}$ such that $\Lc(\pb)<\Lc_\star$ (i.e.~outperforming the training loss of $\tn{\pbb(R)}\rightarrow\infty$), then $\tn{\pbb(R)}<\infty$. This would conclude the proof.

Thus, we will simply find such a $\pb$ obeying $\Lc(\pb)<\Lc_\star$. To this aim, we first prove the following lemma.
\begin{lemma} \label{lemma finite p}
Given a fixed unit Euclidean norm vector $\pb$, if all directional neighbors of $\pb$ consistently have higher scores for their associated selected tokens, i.e., $\bgam_{i\alpha_i} < \bgam_{i\beta}$ for all $(i,\alpha_i)$ and directional neighbor $(i,\beta)$, then there exists $\bar{R}$ such that for all $R > \bar{R}$, 
%Fix $\pb$ with unit Euclidean norm. Suppose all directional neighbors of $\pb$ dominate their associated selected-tokens scorewise, that is, for all $(i,\alpha_i)$ and directional neighbor $(i,\beta)$, we have that $\bgam_{i\alpha_i}<\bgam_{i\beta}$. Then, there exists $\bar{R}$ such that for all $R>\bar{R}$, 
\begin{align}
\Lc(R\cdot\pb)<\Lc_\st=\lim_{R\rightarrow\infty}\Lc(R\cdot\pb).
\end{align}
\end{lemma}
\begin{proof} Define the maximum score difference $K_+=\sup_{i\in[n],t\neq\alpha_i}|\bgam_{i\alpha_i}-\bgam_{it}|$. Also let $\Mc_\pb$ be the set of directional neighbors achieving the minimum margin $\Gamma_{\pb}$; see \eqref{eqn:mcq:set}. Define $\Gamma^{it}=\kb_{i\alpha_i}^\top\pb-\kb_{it}^\top\pb$. Define $\delta_\pb$ to be the margin difference between the directional-neighbors and the second-most minimum-margin neighbors defined as
\begin{align}
\delta_{\pb}=\min_{i\in[n],t\neq\alpha_i,(i,t)\not\in\Mc_{\pb}} \Gamma^{it}_{\pb}-\Gamma_{\pb}.%\label{deltaq def}
\end{align}
To proceed, setting $\kappa=\min_{(j,\beta)\in\Mc_{\pb}}\bgam_{j\beta}-\bgam_{j\alpha_j}>0$ and using \eqref{eq so far}, we can bound 
\begin{align*}
\Lc(R\pb)-\Lc_\st&\leq-\frac{1}{n}\sum_{(j,\beta)\in\Mc_{\pb}}M_{j\beta}~e^{-R\Gamma_{\pb}}\kappa+ \frac{1}{n} \sum_{i\in[n],t\neq \alpha_i,(i,t)\not\in\Mc_{\pb}}M_{it}~e^{-R\Gamma^{it}_{\pb}} (\bgam_{i\alpha_i}-\bgam_{it})\\
&\leq -\frac{1}{n}M_-e^{-R\Gamma_{\pb}}\kappa+ M_+K_+T e^{-R(\Gamma_{\pb}+\delta_{\pb})}.
\end{align*}

Consequently, we have found that $\Lc(R\pb)<\Lc_\star$ as soon as
\[
\frac{1}{n}M_-e^{-R\Gamma_{\qb}}\kappa\geq M_+K_+T e^{-R(\Gamma_{\qb}+\delta_{\qb})}.
\]
This happens when $R\gtrsim {1}/{\delta_{\qb}}$ (up to logarithmic terms) establishing the desired statement.
\end{proof}

Based on this lemma, what we need is constructing a perturbation to modify $\qb$'s directional neighbors and make sure all of them have strictly better scores than their associated selected-tokens. Note that, once we construct a new candidate (say $\qb_0=\qb+\text{perturbation}$), all sufficiently large scalings of $\qb_0$ will achieve $\Lc(R\cdot\qb_0)<\Lc_\star$. Thus, we can find a strictly better solution than $\Lc_\star$ for any norm lower bound $R_0$ -- which is enforced within the definition of $\Cc_{\eps}$.%statement of the theorem

%We construct a perturbation towards a directional neighbor of $\bal$ which is strictly better scorewise. 
\begin{lemma} There are at most $d$ directional neighbors i.e.~$|\Mc_\qb|\leq d$.
\end{lemma}
\begin{proof} Directional neighbors are indices $(i,t)$ obeying the inequality
\[
(\kb_{i\alpha_i}-\kb_{it})^\top\qb=\Gamma_{\qb}.
\]
Declare $\Db$ to be the matrix with rows obtained by these key differences $\kb_{it}-\kb_{i\alpha_i}$. $\Db\in\R^{M\times d}$ where $M=|\Mc_{\qb}|$. We then obtain $\Db\qb=-\Gamma_{\qb}\onebb_{M}$ where $\onebb_{M}$ is the all ones vector. If $M>d$, the equality $\Db\qb=-\Gamma_{\qb}\onebb_{M}$ cannot be satisfied because $\onebb_{M}$ is not in the range space of $\Db$ by our assumption of general key embedding positions.
\end{proof}

To proceed, $|\Mc_\qb|\leq d$ and let $\Db$ be as defined in the lemma above. $\Db$ is also full-rank by our assumption of general key positions. We use $\Db$ to construct a perturbation as follows. Let $\Mc^+_{\qb}\subset \Mc_{\qb}$ be the set of directional neighbor with strictly higher scores than their associated selected-tokens. In other words, all $(j,\beta)\in \Mc^+_{\qb}$ obeys
%achieving highest relative score (possibly non-unique) defined as%$(j,\beta)\in \Mc_{\qb}$ as follows
\[
\bgam_{j\beta}>\bgam_{j\alpha_j}.%(j,\beta)=\arg\min_{(i,t)\in\Mc_{\qb}} 
\]
Define the score difference $\kappa=\min_{(j,\beta)\in\Mc^+_{\qb}}\bgam_{j\beta}-\bgam_{j\alpha_j}>0$. We know $\kappa>0$ because $\bal$ is not neighbor-optimal. Finally, define the indicator vector of $\onebb_+$ with same dimension as cardinality $|\Mc_{\qb}|$. $\onebb_+$ is 1 for the rows of $\Db$ corresponding to $\Mc^+_{\qb}$ and is $0$ otherwise. Finally, set the perturbation as
\[
\qb^\perp = \Db^\dagger\onebb_+.
\]
where we used the full-rankness of $\Db$ during pseudo-inversion. To proceed, for a small $\eps_0>0$, consider the candidate direction $\qb_0=\qb+\eps_0\qb^\perp$. We pick $\eps_0=\order{\eps}$ sufficiently small to ensure $\qb_0\in \Cc_\eps$. To finalize, let us consider the margins of the tokens within $\qb_0$. Similar to Lemma \ref{lemma finite p}, set $\delta_{\qb}=\min_{i\in[n],t\neq\alpha_i,(i,t)\not\in\Mc_{\qb}} \Gamma^{it}_{\qb}-\Gamma_{\qb}>0$. Let $\bar{\eps}_0=\tn{\qb_0}-1=\tn{\qb+\eps_0\qb^\perp}-1$. Using definition of $\qb^\perp$, we have that
\begin{itemize}
\item For $(i,t)\in \Mc^+_\qb$, we achieve a margin of 
\[
(\kb_{i\alpha_i}-\kb_{it})^\top(\qb+\eps_0\qb^\perp)/(1+\bar{\eps}_0)=\frac{\Gamma_{\qb}-\eps_0}{1+\bar{\eps}_0}.
\]
\item For $(i,t)\in \Mc^+_\qb$, we achieve a margin of $\frac{\Gamma_{\qb}}{1+\bar{\eps}_0}$.
\item For $(i,t)\not\in \Mc_{\qb}$, setting $K=\tn{\qb^\perp}\cdot \sup_{i,t,\tau}\tn{\kb_{i\alpha_i}-\kb_{it}}$, we achieve a margin of at most
\[
(\kb_{i\alpha_i}-\kb_{it})^\top(\qb+\eps_0\qb^\perp)/(1+\bar{\eps}_0)\geq \frac{\Gamma_{\qb}+\delta_{\qb}-\eps_0K}{1+\bar{\eps}_0}.
\]
\end{itemize}
In short, since $\bar{\eps}_0=\order{\eps_0}$, setting $\eps_0$ sufficiently small guarantees that $\Mc^+_\qb$ is the set of directional neighbors of $\qb_0$. Since $\Mc^+_\qb$ has strictly higher scores than their associated selected-tokens, applying Lemma \ref{lemma finite p} on $\qb_0$ shows that, $\Lc(R\cdot\qb_0)<\Lc_\star$ for sufficiently large $R$ implying $\tn{\pbb(R)}<\infty$.

$\bullet$ \textbf{(B) $\qb$ selects multiple tokens for some inputs $i\in[n]$:} In this setting, we will again construct a perturbation to create a scenario where $\qb_0=\qb+\eps\qb^\perp$ selects a single token for each input $i\in[n]$. We will then employ margin analysis (first statement of Lemma \ref{lem dominate}) to conclude that $\qb_0$ outperforms a $\delta\ll \eps$ neighborhood of $\qb$.

Let $\Ic\subset[n]$ be the set of inputs for which $\qb$ selects multiple tokens. Specifically, for each $i\in\Ic$, there is $\Tc_i\subset [T]$ such that $|\Tc_i|\geq 2$ and for any $i\in\Ic$ and $\theta\in\Tc_i$,
\[
\kb_{i\theta}^\top \qb=\arg\max_{t\in[T]}\kb_{it}^\top\qb.
\]
From these multiply-selected token indices let us select the highest score one, namely, $\beta_i=\arg\max_{\theta\in\Tc_i}\bgam_{i\theta}$ for $i\in\Ic$. Now, define the unique optimal tokens for each input as $\bal\in\R^n$ where $\alpha_i:=\beta_i$ for $i\in\Ic$ and $\alpha_i=\arg\max_{t\in[T]}\kb_{it}^\top\qb$ for $i\not\in\Ic$. Define $\Lc_\star=\frac{1}{n}\sum_{i=1}^n\ell(\bgam_{i\alpha_i})$ as earlier.

Secondly, we construct a perturbation $\qb^\perp$ to show that $\qb_0=\qb+\eps_0\qb^\perp$ can select tokens $\bal$ asymptotically. To see this, define the matrix $\Db$ where each (unique) row is given by $\kb_{i\alpha_i}-\kb_{i\theta}$ where $\theta\in\Tc_i,\theta\neq\alpha_i$, $i\in\Ic$. Now note that, $(\kb_{i\alpha_i}-\kb_{i\theta})^\top\qb=0$ for all $\theta\in\Tc_i,\theta\neq\alpha_i$, $i\in\Ic$. Since keys are in general positions, this implies that $\Db$ has at most $d-1$ rows and, thus, its rows are linearly independent. Consequently, choose $\qb^\perp=\Db^\dagger \onebb$ where $\dagger$ denotes pseudo-inverse and $\onebb$ is the all ones vector. Also let $\Gamma_{\qb}$ be the margin of directional margin of $\qb$ that is
\[
\Gamma_{\qb}=\min_{i\in[n],t\not\in \Tc_i}(\kb_{i\alpha_i}-\kb_{it})^\top \qb.
\]
With this choice and setting $K= \sup_{i,t,\tau}\tn{\kb_{i\alpha_i}-\kb_{it}}$, we have that
%\red{LEFT HERE}
\begin{itemize}
\item For $\theta\in\Tc_i,\theta\neq \alpha_i$: $(\kb_{i\alpha_i}-\kb_{i\theta})^\top\qb_0=(\kb_{i\alpha_i}-\kb_{i\theta})^\top\qb^\perp=\eps_0$.
\item For all other $(i,t)$ with $t\neq \alpha_i$: $(\kb_{i\alpha_i}-\kb_{it})^\top\qb_0\geq \Gamma_{\qb}-K\tn{\qb^\perp}\eps_0$.
\end{itemize}
Choosing $\eps_0< \Gamma_{\qb}/(1+K\tn{\qb^\perp})$, together, these imply that,
\begin{itemize}
\item $\bal=(\alpha_i)_{i=1}^n$ is the selected-tokens of $\qb_0$,
\item $\qb_0$ achieves a directional margin of 
\[
\Gamma_{\qb_0}=\frac{\eps_0}{\tn{\qb_0}}\geq \frac{\eps_0}{1+\eps_0\tn{\qb^\perp}},
\]
\item $\qb_0$ is neighbor-optimal because directional neighbors are $\theta\in\Tc_i,\theta\neq \alpha_i$ and $\bgam_{i\theta}<\bgam_{i\alpha_i}$.
\end{itemize}
Note that, these conditions lay the groundwork for applying the first statement of Lemma \ref{lem dominate} with $\pb\gets \qb_0$. We next explore the optimal directions within $\Nc_{\delta}$ and show that $\qb_0$ strictly outperform them in terms of training loss.

To proceed, given small $\delta\ll \eps_0$, let us study $\Lc_R=\min_{\pb\in \Nc_{\delta},R\leq \tn{\pb}<\infty}\Lc(\pb)$. Here, recall that $\pb$ has a $\delta$-small directional perturbation around $\qb$ which can modify the token selections by breaking the ties between the multiply-selected token indices by $\qb$. However, thanks to the distinct token score assumption, for large $R\leq\tn{\pb}$, the optimal $\pb\in \Nc_{\delta}$ is guaranteed to (uniquely) select indices $\alpha_i\in\Tc_i$. Because all other $\pb$ directions -- which, asymptotically, either select other tokens $\theta\in\Tc_i,\theta\neq\alpha_i$ or split the probabilities equally across a subset of $\Tc_i$ -- achieve a larger loss. For instance, $\qb$ will split the probabilities equally across $\Tc_i$ to achieve an asymptotic loss of
\[
\Lc^{\qb}_\star:=\lim_{R\rightarrow\infty}\Lc(R\cdot\qb)=\frac{1}{n}\sum_{i\not\in\Ic}\ell(\bgam_{i\alpha_i})+\frac{1}{n}\sum_{i\in\Ic}\ell\left(\frac{1}{|\Tc_i|}\sum_{\theta\in\Tc_i}\bgam_{i\theta}\right)
\]
Thus, $\Lc^{\qb}_\star>\Lc_\star$ because $\ell\left(\frac{1}{|\Tc_i|}\sum_{\theta\in\Tc_i}\bgam_{i\theta}\right)>\ell\left(\bgam_{i\alpha_i}\right)=\ell(\bgam_{i\alpha_i})$ where $\alpha_i$ has the highest score i.e.~$\bgam_{i\alpha_i}>\frac{1}{|\Tc_i|}\sum_{\theta\in\Tc_i}\bgam_{i\theta}$. Set $\tilde{\pb}(R)=\arg\min_{\pb\in \Nc_{\delta},\tn{\pb}\leq R}\Lc(\pb)$. Consequently, there are two scenarios are:
\begin{itemize}
\item $\lim_{R\rightarrow\infty}\tn{\tilde{\pb}(R)}$ is finite. This already proves the statement of the theorem as $\tilde{\pb}(R)/R\rightarrow0$ within $\delta<\eps$ neighborhood of $\qb$.
\item For sufficiently large $R$, the selected-tokens of $\tilde{\pb}(R)$ are $\bal=(\alpha_i)_{i=1}^n$.
\end{itemize}
Proceeding with the second (remaining scenario), we study the directional margin of $\tilde{\pb}(R)$. More broadly, for any $\pb\in\Nc_{\delta}$ and $\bar{\pb}=\pb/\tn{\pb}$ with selected-tokens $\bal$, using the fact that $\tn{\bar{\pb}-\qb}\leq \delta\ll\eps_0$, we can bound the directional margin as
\begin{itemize}
\item For $\theta\in\Tc_i,\theta\neq \alpha_i$: $(\kb_{i\alpha_i}-\kb_{i\theta})^\top\bar{\pb}=(\kb_{i\alpha_i}-\kb_{i\theta})^\top(\bar{\pb}-\qb)\leq K\delta$.
\item For all other $(i,t)$ with $t\neq \alpha_i$: $(\kb_{i\alpha_i}-\kb_{it})^\top\bar{\pb}\geq \Gamma_{\qb}-K\delta$.
\end{itemize}
This means that, any such $\bar{\pb}\in\Nc_{\delta}$ achieves a directional margin of at most
\[
\Gamma_{\bar{\pb}}\leq K\delta.
\]
Applying Lemma \ref{lem dominate} and setting $\delta=\order{\eps_0}$, this implies that for 
$$
R\gtrsim \frac{1}{\Gamma_{\qb_0}-\Gamma_{\bar{\pb}}}=\frac{1}{\frac{\eps_0}{1+\eps_0\tn{\qb^\perp}}-K\delta}=\order{\frac{1}{\eps_0}},
$$
we have that $\Lc(R\cdot\qb_0)<\min_{\tn{\pb}=R,\pb\in\Nc_{\delta}}\Lc(\pb)$. Since this holds for all $R$, \eqref{desired delta bound} holds (similar to Case \textbf{(A1)}) and we conclude that whenever $\tn{\pbb(R)}\rightarrow\infty$, it doesn't directionally converge within $\Nc_{\delta}$ (i.e.~$\delta>0$ neighborhood of $\qb$) proving the advertised result.
\end{proof}
%Also define $\Lc_\st=\lim_{R\rightarrow\infty}\Lc_R$.
%\newpage

\subsection{Proof of Lemma \ref{pso exists lemma}}

We prove a slightly general restatement where we require $\vb\in \text{range}(\W^\top)$ -- instead of full-rank $\W$. % of this lemma for completeness purposes.

\begin{lemma} \label{pso exists lemma2} Suppose for all $i\in[n]$ and $t\neq \op_i$, $Y_i=1$ and $\bgam_{it}<\bgam_{i\op_i}$. Also suppose $\vb\in \textnormal{range}(\W^\top)$. Then, $\pso$ exists -- i.e.~\eqref{attnsvm} is feasible for optimal indices $\alpha_i\gets \op_i$.
\end{lemma}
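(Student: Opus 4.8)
The plan is to exhibit an explicit feasible point for \eqref{attnsvm} with $\alpha_i\gets\op_i$ and then invoke standard coercivity to conclude that the minimum-norm solution $\pso$ is attained. The key observation is that, by hypothesis, $\vb$ lies in $\mathrm{range}(\W^\top)$, so there exists $\pb_0\in\R^d$ with $\W^\top\pb_0=\vb$. Recalling that the key embeddings are $\Kb_i=\X_i\W^\top$, the $t$-th row is $\kb_{it}=\W\x_{it}$, hence
\[
\pb_0^\top\kb_{it}=\pb_0^\top\W\x_{it}=(\W^\top\pb_0)^\top\x_{it}=\vb^\top\x_{it}=\bgam_{it},
\]
where the last equality uses $Y_i=1$ so that $\bgam_{it}=Y_i\cdot\vb^\top\x_{it}=\vb^\top\x_{it}$. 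In words: projecting the key tokens onto $\pb_0$ reproduces exactly the token scores.

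With this identity, the ATT-SVM margin gaps evaluated at $\pb_0$ become $\pb_0^\top(\kb_{i\op_i}-\kb_{it})=\bgam_{i\op_i}-\bgam_{it}$, which by the assumed strict score optimality of $\op_i$ is strictly positive for every $i\in[n]$ and every $t\neq\op_i$. Since there are finitely many such $(i,t)$ pairs, I would set $\eps:=\min_{i\in[n],\,t\neq\op_i}(\bgam_{i\op_i}-\bgam_{it})>0$ and take $\pb:=\pb_0/\eps$; then $\pb^\top(\kb_{i\op_i}-\kb_{it})=(\bgam_{i\op_i}-\bgam_{it})/\eps\geq 1$ for all $i$ and all $t\neq\op_i$, so $\pb$ satisfies all constraints of \eqref{attnsvm}. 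Thus the feasible set is nonempty.

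Finally, to upgrade feasibility to existence of the minimizer $\pso$, note the feasible set is a (nonempty) intersection of finitely many closed half-spaces, hence closed, and the objective $\pb\mapsto\tn{\pb}$ is continuous and coercive; therefore the infimum is attained, which gives $\pso$. To recover the original statement (full-rank $\W\in\R^{d\times d}$), observe $\mathrm{range}(\W^\top)=\R^d$ in that case, so the hypothesis $\vb\in\mathrm{range}(\W^\top)$ holds automatically. I do not anticipate a genuine obstacle here; the only point requiring mild care is bookkeeping the convention $\kb_{it}=\W\x_{it}$ (from $\Kb_i=\X_i\W^\top$) so that the substitution $\W^\top\pb_0=\vb$ indeed turns key inner products into scores, and implicitly the strict-inequality hypothesis already forces $\op_i$ to be the unique maximizer, so no ambiguity in the choice of $\alpha_i$ arises.
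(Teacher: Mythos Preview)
Your proof is correct and essentially identical to the paper's: both pick $\pb_0$ with $\W^\top\pb_0=\vb$ (the paper uses $(\W^\top)^\dagger\vb$ explicitly), observe that $\pb_0^\top\kb_{it}=\vb^\top\x_{it}=\bgam_{it}$ since $Y_i=1$, and then rescale by the minimum score gap to achieve margin at least $1$. Your added coercivity remark to pass from feasibility to attainment of $\pso$ is a nice touch the paper leaves implicit.
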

\begin{proof} To establish the existence of $\pso$, we only need to find a direction that demonstrates the feasibility of \eqref{attnsvm}, i.e. we need to find $\pb$ that satisfies the margin constraints. To begin, let's define the minimum score difference:
\[
\underline{\gamma}=\min_{i\in[n],t\neq \op_i}\bgam_{i\op_i}-\bgam_{it}.
\]
We then set $\pb=\underline{\gamma}^{-1}(\W^\top)^\dagger\vb$ where $\dagger$ denotes pseudo-inverse. By assumption $\W^\top\pb=\underline{\gamma}^{-1}\vb$. To conclude, observe that $\pb$ is a feasible solution since $\kb_{it}=\W\x_{it}$ and for all $i\in [n]$ and $t\neq \op_i$, we have that
\begin{align*}
(\kb_{i\op_i}-\kb_{it})^\top\pb=(\x_{i\op_i}-\x_{it})^\top\W^\top\pb&=\underline{\gamma}^{-1}(\x_{i\op_i}-\x_{it})^\top\W^\top (\W^\top)^\dagger\vb\\
&=\underline{\gamma}^{-1}(\x_{i\op_i}-\x_{it})^\top\vb\geq 1,
\end{align*}
which together with the constraints in \eqref{attnsvm} completes the proof. 
\end{proof}

\section{Addendum to Section~\ref{sec joint}}\label{app:sec:joint}
\subsection{Proof of Theorem~\ref{thm:path:joint:vp}}

\begin{proof} Suppose the claim is incorrect and either $\pb_R/R$ or $\vb_{r}/r$ fails to converge as $R,r$ grows. Set $\Xi=1/\tn{\ps}$, $\Gamma=1/\tn{\vs}$, $\pbs=R\Xi\ps$ and $\vbs=r\Gamma\vs$. The proof strategy is obtaining a contradiction by proving that $(\vbs,\pbs)$ is a strictly better solution compared to $(\vb_r,\pb_R)$ for large $R,r$. Without losing generality, we will set $\alpha_i=1$ for all $i\in[n]$ as the problem is invariant to tokens' permutation. Define $\ir_i^\pb=1-\s^{\pb}_{i1}$ to be the amount of non-optimality (cumulative probability of non-first tokens) where $\s_i^\pb=\sft{\Kb_i\pb}$ is the softmax probabilities. %Define $\bota_i^\pb=\frac{\sum_{t\geq2}\ab_{it}^\pb \x_{it}}{s_i^\pb}$ and note that $\tn{\bota_i^\pb}\leq \Gamma$.

%=\sum_{t\neq \alpha_{i}} \ab_{it}^{\pb}$. 
\noindent$\bullet$ \textbf{Case 1: $\pb_R/R$ does not converge.} Under this scenario there exists $\delta,\gamma=\gamma(\delta)>0$ such that we can find arbitrarily large $R$ with $\tn{\pb_R/R-\pbs/R}\geq \delta$ and margin induced by $\pb_R/R$ is at most $\Xi(1-\gamma)$ (from strong convexity of \eqref{attnsvm}). Following $\ir_i^\pb$ definition above, set $\hat{\ir}_{\max}=\sup_{i\in[n]}\ir^{\pb_R}_i$ to be worst non-optimality in $\pb_R$ and $\ir^\st_{\max}=\sup_{i\in[n]}\ir^{\pbs}_i$ to be the same for $\pbs$. Repeating the identical argument in Theorem \ref{meta thm} (specifically \eqref{smax difference}), we can bound the non-optimality amount $\ir^{\pbs}_i$ of $\pbs$ as
\begin{align}
\ir^{\pbs}_i=\frac{\sum_{t\neq\alpha_i}\exp(\kb_{it}^\top \pbs)}{\sum_{t\in [T]}\exp(\kb_{it}^\top \pbs)}\leq \frac{\sum_{t\neq\alpha_i}\exp(\kb_{it}^\top \pbs)}{\exp(\kb_{i\alpha_i}^\top \pbs)}\leq T\exp(-R\Xi).\label{joint arg 1}
\end{align}
Thus, $\ir^\st_{\max}=\max_{i\in [n]}\ir^{\pbs}_i\leq T\exp(-R\Xi)$. Next without losing generality, assume first margin constraint is $\gamma$-violated by $\pb_R$ and $\min_{t\neq \alpha_1}(\kb_{1\alpha_1}-\kb_{1t})^\top \pb_R\leq \Xi R(1-\gamma)$. Denoting the amount of non-optimality of the first input as ${\ir}^{\pb_R}_1$, we find
%we wish to control $\Lc(\pb_R)$ by lower bounding the non-optimality in $\pb_R$. Focusing on $m_1$, let $\alpha\in\Rc_1$ be the index in \eqref{svm} for which $m_1\leq 1-\gamma$ is attained. 
\begin{align}
{\ir}^{\pb_R}_1=\frac{\sum_{t\neq\alpha_1}\exp(\kb_{1t}^\top \pb_R)}{\sum_{t\in [T]}\exp(\kb_{1t}^\top \pb_R)}\geq \frac{1}{T}\frac{\sum_{t\neq\alpha_1}\exp(\kb_{1t}^\top \pb_R)}{\exp(\kb_{1\alpha_1}^\top \pb_R)}\geq T^{-1}\exp(-(1-\gamma)R\Xi).\label{joint arg 2}
\end{align}
We similarly have $\ir^\st_{\max}\geq T^{-1}\exp(-R\Xi)$ to find that
%\newpage
\begin{align}\label{smax difff joint}
\nonumber
    \log(\hat{\ir}_{\max})&\geq -(1-\gamma)\Xi R-\log T,\\
        -\Xi R-\log T\leq \log(\ir^\st_{\max})&\leq -\Xi R+\log T.
\end{align}
In words, $\pbs$ contains exponentially less non-optimality compared to $\pb_R$ as $R$ grows. The remainder of the proof differs from Theorem \ref{meta thm} as we need to upper/lower bound the logistic loss of $(\vbs,\pbs)$ and $(\vb_r,\pb_R)$ respectively to conclude with the contradiction. %To proceed, set $\rb^\st_i=\x

First, let us upper bound the logistic loss of $(\vbs,\pbs)$. Set $\rb_i=\X_i^\top\sft{\Kb_i\pbs}$. Observe that if $\tn{\rb_i-\x_{i1}}\leq \eps_i$, we have that $\vs$ satisfies the SVM constraints on $\rb_i$ with $Y_i\cdot\rb_i^\top\vs \geq 1-\eps_i/\Gamma$. Consequently, setting $\eps_{\max}=\sup_{i\in[n]}\eps_i$, $\vs$ achieves a label-margin of $\Gamma-\eps_{\max}$ on the dataset $(Y_i,\rb_i)_{i\in [n]}$. With this, we upper bound the logistic loss of $(\vbs,\pbs)$ as follows. Let $M=\sup_{i\in [n],t,\tau\in[T]}\tn{\x_{it}-\x_{i\tau}}$. Let us recall the fact \eqref{smax difff joint} that worst-case perturbation is
$$
\eps_{\max}\leq M\exp(-\Xi R+\log T)=MT\exp(-\Xi R).
$$
 This implies that 
% $\vs$ is the max-margin solution for $(\x_{i1})_{i=1}^n$ dataset.
\begin{align}
\nonumber
\Lc(\vbs,\pbs)&\leq \max_{i\in [n]} \log(1+\exp(-Y_i\rb_i^\top\vbs)).\\
\nonumber
&\leq \max_{i\in [n]} \exp(-Y_i\rb_i^\top\vbs)\\
\nonumber
&\leq  \exp(-r\Gamma+r\eps_{\max})\\
&\leq e^{rMT\exp(-\Xi R)}e^{-r\Gamma}.\label{logistic upper}
\end{align}
Conversely, we obtain a lower bound for $(\vb_r,\pb_R)$. Set $\rb_i=\X_i^\top\sft{\Kb_i\pb_R}$. Using Assumption \ref{label margin ass}, we find that solving \eqref{svm-label2} on $(Y_i,\rb_i)_{i\in[n]}$ achieves at most $\Gamma-\nu e^{-(1-\gamma)\Xi R}/T$ margin. Consequently, we have
\begin{align}
\nonumber
\Lc(\vb_r,\pb_R)&\geq \frac{1}{n}\max_{i\in [n]} \log(1+\exp(-Y_i\rb_i^\top\vb_r))\\
\nonumber
&\geq \frac{1}{2n}\max_{i\in [n]} \exp(-Y_i\rb_i^\top\vb_r)\wedge \log 2\\
\nonumber
&\geq \frac{1}{2n}\exp(-r(\Gamma-\nu e^{-(1-\gamma)\Xi R}/T))\wedge \log 2\\
&\geq \frac{1}{2n}e^{r(\nu/T)\exp(-(1-\gamma)\Xi R)}e^{-r\Gamma}\wedge \log 2.\label{logistic lower}
\end{align}
Observe that, this lower bound dominates the previous upper bound when $R$ is large, namely, when (ignoring the multiplier $1/2n$ for brevity)
\[
(\nu/T)e^{-(1-\gamma)\Xi R}\geq MTe^{-\Xi R}\iff R\geq R_0:=\frac{1}{\gamma\Xi}\log\left(\frac{MT^2}{\nu}\right).
\]
Thus, we indeed obtain the desired contradiction since such large $R$ is guaranteed to exist when $\pb_R/R\not\rightarrow\ps$.

\noindent$\bullet$ \textbf{Case 2: $\vb_r/r$ does not converge.} This is the simpler scenario: There exists $\delta>0$ such that we can find arbitrarily large $r$ obeying $\tn{\vb_r/r-\vs/\tn{\vs}}\geq \delta$. If $\tn{\pb_R/R-\Xi\ps}\not\rightarrow 0$, then ``Case 1'' applies. Otherwise, we have $\tn{\pb_R/R-\Xi\ps}\rightarrow 0$, thus we can assume $\tn{\pb_R/R-\Xi\ps}\leq \eps$ for arbitrary choice of $\eps>0$.

On the other hand, due to the strong convexity of \eqref{svm-label2}, for some $\gamma:=\gamma(\delta)>0$, $\vb_r$ achieves a margin of at most $(1-\gamma)\Gamma r$ on the dataset $(Y_i,\x_{i1})_{i\in [n]}$. Additionally, since $\tn{\pb_R/R-\Xi\ps}\leq \eps$, $\pb_R$ strictly separates all optimal tokens (for small enough $\eps>0$) and $\hat{\ir}_{\max}:=f(\eps)\rightarrow 0$ as $R\rightarrow\infty$. Consequently, setting $\rb_i=\X_i^\top\sft{\Kb_i\pb_R}$, for sufficiently large $R>0$ setting $M=\sup_{i\in [n],t\in[T]}\tn{\x_{it}}$, we have that
\begin{align}
\nonumber
\min_{i\in[n]}Y_i\vb_r^\top \rb_i&\leq \min_{i\in[n]}Y_i\vb_r^\top \x_{i1}+\sup_{i\in[n]}|\vb_r^\top(\rb_i-\x_{i1})|\\
\nonumber
&\leq (1-\gamma)\Gamma r+M f(\eps)r\\
&\leq (1-\gamma/2)\Gamma r.
\end{align}
This in turn implies that logistic loss is lower bounded by (following \eqref{logistic lower}),
\[
\Lc(\vb_r,\pb_R)\geq \frac{1}{2n}e^{\gamma\Gamma r/2}e^{-\Gamma r}\wedge \log 2.
\]
Going back to \eqref{logistic upper}, this exponentially dominates the upper bound of $(\pbs,\vbs)$ whenever $rMT\exp(-\Xi R)<r\gamma \Gamma/2$, (that is, whenever $R,r$ are sufficiently large), again concluding the proof.
\end{proof}

\subsection{Proof of Theorem \ref{joint thm general}}\label{app joint general}
\begin{comment}
We first restate Theorem \ref{joint thm general} for ease of reference.
\begin{theorem}\label{joint thm general2} Consider the path of $(\vb_r,\pb_R)$ as $r,R\rightarrow\infty$ as in Theorem \ref{thm:path:joint:vp}. {Suppose $\sft{\Kb_i\pb_R}_{\alpha_i}\rightarrow 1$}, i.e., the tokens $(\alpha_i)_{i=1}^n$ are asymptotically selected. Then, $\vb_{r}/{r}\rightarrow \vs/\tn{\vs}$ where $\vs$ is the solution of \eqref{svm-label2} with $\rb_i=\x_{i\alpha_i}$, $\Scc$ is the set of non-support vectors for \eqref{svm-label2}, and $\frac{\pb_R}{R}\rightarrow\frac{\prr}{\tn{\prr}}$ where $\prr$ is the solution of \eqref{relax svm} with $\alpha_i$ choices.
\end{theorem}
\end{comment}

We will prove this result in two steps. Our first claim restricts the optimization to the particular quadrant induced by $\min_{t\neq \alpha_i}(\kb_{i\alpha_i}-\kb_{it})^\top\pb_R\geq 0$ under the theorem's condition $\sft{\Kb_i\pb_R}_{\alpha_i}\rightarrow 1$.%\eb_{\alpha_i}
\begin{lemma} \label{quad lem}Suppose $\sft{\Kb_i\pb_R}_{\alpha_i}\rightarrow 1$. Then, there exists $R_0$ such that for all $R\geq R_0$, we have that,
\begin{align}
\min_{t\neq \alpha_i}~(\kb_{i\alpha_i}-\kb_{it})^\top\pb_R\geq 0,\quad\textnormal{for all}\quad i\in[n].\label{quadrant}
\end{align}
\end{lemma}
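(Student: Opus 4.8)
\textbf{Proof plan for Lemma \ref{quad lem}.}

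The plan is to argue by contradiction, using the hypothesis $\sft{\Kb_i\pb_R}_{\alpha_i}\to 1$ to force every margin $(\kb_{i\alpha_i}-\kb_{it})^\top\pb_R$ to be nonnegative once $R$ is large. Suppose the conclusion fails: then there is a sequence $R_k\to\infty$, an input index $i$, and a token $t\neq\alpha_i$ such that $(\kb_{i\alpha_i}-\kb_{it})^\top\pb_{R_k}<0$, i.e.~$\kb_{it}^\top\pb_{R_k}>\kb_{i\alpha_i}^\top\pb_{R_k}$ for all $k$. I would then bound the softmax probability of the $\alpha_i$-th token from above by comparing it with the $t$-th token: since softmax is monotone in the logits,
\[
\sft{\Kb_i\pb_{R_k}}_{\alpha_i}=\frac{e^{\kb_{i\alpha_i}^\top\pb_{R_k}}}{\sum_{\tau\in[T]}e^{\kb_{i\tau}^\top\pb_{R_k}}}\le \frac{e^{\kb_{i\alpha_i}^\top\pb_{R_k}}}{e^{\kb_{i\alpha_i}^\top\pb_{R_k}}+e^{\kb_{it}^\top\pb_{R_k}}}=\frac{1}{1+e^{(\kb_{it}-\kb_{i\alpha_i})^\top\pb_{R_k}}}\le\frac12,
\]
the last step using $(\kb_{it}-\kb_{i\alpha_i})^\top\pb_{R_k}>0$. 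This contradicts $\sft{\Kb_i\pb_{R_k}}_{\alpha_i}\to 1$.

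To make this into the stated uniform-in-$R_0$ claim rather than just an asymptotic one, I would phrase it quantitatively: the hypothesis $\sft{\Kb_i\pb_R}_{\alpha_i}\to 1$ for each $i\in[n]$ (a finite set) means there exists $R_0$ such that $\sft{\Kb_i\pb_R}_{\alpha_i}>\tfrac12$ for all $R\ge R_0$ and all $i\in[n]$. The displayed chain of inequalities shows that whenever some margin $(\kb_{i\alpha_i}-\kb_{it})^\top\pb_R<0$ we would have $\sft{\Kb_i\pb_R}_{\alpha_i}\le\tfrac12$; hence for $R\ge R_0$ no such violated margin can exist, which is exactly \eqref{quadrant}.

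I do not anticipate a genuine obstacle here — this lemma is essentially a one-line consequence of monotonicity of softmax in its logits combined with the hypothesis that the $\alpha_i$-th coordinate dominates asymptotically. The only mild care needed is to quantify over the finite index set $i\in[n]$ (and over $t\neq\alpha_i$, also finite) so that a single threshold $R_0$ works for all of them simultaneously, which is immediate since a finite maximum of the individual thresholds suffices. One could even sharpen the constant $\tfrac12$ to any value in $(0,1)$, but $\tfrac12$ is all that is required for the subsequent arguments (where the point is simply that the optimization eventually stays in the quadrant $\min_{t\neq\alpha_i}(\kb_{i\alpha_i}-\kb_{it})^\top\pb\ge 0$, so that the relaxed-SVM constraints in \eqref{relax svm} become the operative ones).
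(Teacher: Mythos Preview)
Your proposal is correct and takes essentially the same approach as the paper: argue by contradiction, observe that a negative margin $(\kb_{i\alpha_i}-\kb_{it})^\top\pb_R<0$ forces $\sft{\Kb_i\pb_R}_{\alpha_i}\le\tfrac12$, and contradict the hypothesis $\sft{\Kb_i\pb_R}_{\alpha_i}\to 1$. The paper phrases the key step as $\s^R_{it}\ge\s^R_{i\alpha_i}$ together with $\s^R_{it}+\s^R_{i\alpha_i}\le 1$ (and uses the threshold $0.9$ rather than your $\tfrac12$), but this is exactly your sigmoid bound in slightly different clothing.
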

\begin{proof} Suppose the claim does not hold. Set $\s^R_i=\sft{\Kb_i\pb_R}$. Fix $R_0$ such that $\s^R_{i\alpha_i}\geq 0.9$ for all $R\geq R_0$. On the other hand, there exists arbitrarily large $R$ for which $(\kb_{i\alpha_i}-\kb_{it})^\top\pb_R<0$ for some $t\neq \alpha_i\in[T]$ and $i\in[n]$. At this $(R,i,t)$ choices, we have that $\s^R_{it}\geq \s^R_{i\alpha_i}$. Since $\s^R_{it}+ \s^R_{i\alpha_i}\leq 1$, we find $\s^R_{i\alpha_i}<0.5$ which contradicts with $\s^R_{i\alpha_i}\geq 0.9$.
\end{proof}
Let $\Qc$ be the set of $\pb$ satisfying the quadrant constraint \eqref{quadrant} -- i.e.~indices $(\alpha_i)_{i=1}^n$ are selected. Let $\hb_R$ be the solution of regularization path of $(\vb,\pb)$ subject to the constraint $\pb\in\Qc$. From Lemma \ref{quad lem}, we know that, for some $R_0$ and all $R\geq R_0$, $\hb_R=\pb_R$. Thus, if the limit exists, we have that $\lim_{R\rightarrow\infty}\hb_R/R=\lim_{R\rightarrow\infty}\pb_R/R$.

To proceed, we will prove that $\lim_{R\rightarrow\infty}\hb_R/R$ exists and is equal to ${\prr}/{\tn{\prr}}$ and simultaneously establish $\vb_{r}/{r}\rightarrow \vs/\tn{\vs}$.

\begin{lemma} $\lim_{R\rightarrow\infty}\hb_R/R={\prr}/{\tn{\prr}}$ and $\lim_{r\rightarrow\infty}\vb_{r}/{r}=\vs/\tn{\vs}$.
\end{lemma}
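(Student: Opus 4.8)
The plan is to repeat the proof of Theorem~\ref{thm:path:joint:vp} almost verbatim, with \eqref{attnsvm} replaced by the relaxed program \eqref{relax svm} and the inputs partitioned into the support set $\Sc$ and its complement $\Scc$ of \eqref{svm-label2}. By Lemma~\ref{quad lem} we may assume $R\geq R_0$, so that $\hb_R=\pb_R$ and every iterate lies in the quadrant $\Qc=\{\pb:\min_{t\neq\alpha_i}(\kb_{i\alpha_i}-\kb_{it})^\top\pb\geq0,\ \forall i\}$; the hypothesis $\sft{\Kb_i\hb_R}_{\alpha_i}\to1$ then forces $\tn{\hb_R}\to\infty$, and separability of the joint problem forces $\tn{\vb_r}\to\infty$, so the bounded‑norm alternative does not arise. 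Set $\Xi=1/\tn{\prr}$, $\Gamma=1/\tn{\vs}$, and introduce the comparison pair $\pbs:=R\Xi\prr$ (which lies in $\Qc$ because $\prr$ obeys the selection constraints of \eqref{relax svm}) and $\vbs:=r\Gamma\vs$. Assuming for contradiction that $\hb_R/R\not\to\prr/\tn{\prr}$ or $\vb_r/r\not\to\vs/\tn{\vs}$, I will show $\Lc(\vbs,\pbs)<\Lc(\vb_r,\hb_R)$ in \eqref{lin det losss} for all sufficiently large $R,r$.

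The quantitative core is again a bound on the non‑optimality $q_i^{\pb}:=1-\sft{\Kb_i\pb}_{\alpha_i}$, but now the split $[n]=\Sc\cup\Scc$ is used decisively: for $i\in\Sc$ the label margin $\Gamma$ is \emph{tight}, so by Assumption~\ref{label margin ass} restricted to $\Sc$ the loss is near‑optimal only if $q_i^{\pb}$ decays \emph{exponentially} in $\tn{\pb}$; whereas for $i\in\Scc$ the token $\x_{i\alpha_i}$ is a strict interior point for \eqref{svm-label2} solved over $\Sc$, so it suffices to push $q_i^{\pb}$ below a fixed constant $c_i$, after which the mixed feature $\X_i^\top\sft{\Kb_i\pb}$ stays off the active set of $\vs$ and perturbs only the subleading term $e^{-r\Gamma(1+\nu_i/2)}$ of the loss. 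Since $\prr$ has margin $\geq1$ on every $\Sc$‑constraint, $\pbs$ gives $q_i^{\pbs}\leq Te^{-R\Xi}$ for $i\in\Sc$; since $\prr$ selects $\alpha_i$ on $\Scc$ — modulo a negligible perturbation that makes any $\Scc$‑constraint of \eqref{relax svm} binding with equality strictly positive, exactly the construction used in Theorem~\ref{tightest reg path} — also $q_i^{\pbs}\to0$ on $\Scc$. Hence $\vs$ classifies the features $\X_i^\top\sft{\Kb_i\pbs}$ with label margin $\geq\Gamma-O(Te^{-R\Xi})$, and $\Lc(\vbs,\pbs)\leq\exp\!\big(CrTe^{-R\Xi}\big)e^{-r\Gamma}$, exactly as in \eqref{logistic upper}.

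Conversely, if $\hb_R/R$ stays a distance $\delta>0$ from $\prr/\tn{\prr}$, strong convexity of \eqref{relax svm} (uniqueness of its minimizer) forces the $\Sc$‑margin of $\hb_R/\tn{\hb_R}$ down to at most $\Xi(1-\gamma)$ for some $\gamma=\gamma(\delta)>0$; for the worst $\Sc$‑input $j$ this gives $q_j^{\hb_R}\geq T^{-1}e^{-(1-\gamma)\Xi R}$, whence by Assumption~\ref{label margin ass} over $\Sc$ the dataset $(Y_i,\X_i^\top\sft{\Kb_i\hb_R})_i$ admits SVM label margin at most $\Gamma-\nu T^{-1}e^{-(1-\gamma)\Xi R}$, and therefore $\Lc(\vb_r,\hb_R)\geq\tfrac{1}{2n}\exp\!\big(\nu T^{-1}r\,e^{-(1-\gamma)\Xi R}\big)e^{-r\Gamma}\wedge\log2$ for \emph{every} admissible $\vb_r$, as in \eqref{logistic lower}. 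Choosing $R\gtrsim(\gamma\Xi)^{-1}\log(\cdot)$ makes this lower bound dominate the upper bound of the previous paragraph — the contradiction — so $\hb_R/R\to\prr/\tn{\prr}$. The remaining case $\vb_r/r\not\to\vs/\tn{\vs}$ is then Case~2 of Theorem~\ref{thm:path:joint:vp}: once $\hb_R/R\to\prr/\tn{\prr}$, all tokens $\alpha_i$ are asymptotically selected, so $\vb_r$ performs logistic regression on features $\X_i^\top\sft{\Kb_i\hb_R}\to\x_{i\alpha_i}$, and strong convexity of \eqref{svm-label2} again yields an exponentially dominating loss gap unless $\vb_r/r\to\vs/\tn{\vs}$.

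The main obstacle is the $\Scc$ bookkeeping for the comparison point: one must verify that although \eqref{relax svm} asks only for nonnegative margin on $\Scc$, the scaled‑and‑perturbed solution still satisfies $Y_i\vs^\top\big(\X_i^\top\sft{\Kb_i\pbs}\big)\geq1+\nu_i/2$ for $i\in\Scc$, so these inputs genuinely lie below the leading‑order loss term and do not interfere with the exponential comparison — and that this perturbation does not spoil the $\Sc$‑margin used above. Carrying that out carefully is the delicate step; the remaining estimates are a line‑by‑line transcription of the Theorem~\ref{thm:path:joint:vp} computations with \eqref{attnsvm} replaced by \eqref{relax svm}.
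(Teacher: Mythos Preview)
Your approach matches the paper's: a contradiction argument comparing $\Lc(\vb_r,\hb_R)$ against a carefully chosen competitor, split into the two cases $\hb_R/R\not\to\prr/\tn{\prr}$ and $\vb_r/r\not\to\vs/\tn{\vs}$, with the key observation that since $\hb_R\in\Qc$ already satisfies the $\Scc$ constraints of \eqref{relax svm}, any distance from $\prr/\tn{\prr}$ must manifest as a margin shortfall on some $\Sc$-constraint.

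The one place your sketch is loose is the competitor's upper bound. You set $\pbs=R\Xi\prr$ and then invoke a ``negligible perturbation'' to make the $\Scc$ constraints strictly positive, but still write the support-side non-optimality as $q_i^{\pbs}\leq Te^{-R\Xi}$ with the \emph{full} $\Xi$. After any such perturbation the $\Sc$-margin drops to some $\Xi_\eps<\Xi$, so the upper bound is really $e^{rMT\exp(-\Xi_\eps R)}e^{-r\Gamma}$. For the exponential comparison against the lower bound (whose exponent involves $e^{-(1-\gamma)\Xi R}$) to go through, you need $\Xi_\eps>(1-\gamma)\Xi$, which forces the perturbation size to depend on $\gamma$. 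The paper makes this explicit by introducing the $\eps$-tightened problem
\[
\pre=\arg\min_{\pb}\tn{\pb}\quad\text{s.t.}\quad \pb^\top(\kb_{i\alpha_i}-\kb_{it})\geq\begin{cases}1,&i\in\Sc\\ \eps,&i\in\Scc\end{cases},
\]
shows its margin $\Xi_\eps$ is Lipschitz in $\eps$ (via the feasible interpolant $\eps\ps+(1-\eps)\prr$), and then picks $\eps$ so that $\Xi_\eps\geq\Xi(1-\gamma/2)$. With that choice the upper bound has exponent $-\Xi(1-\gamma/2)R$ and the lower bound $-(1-\gamma)\Xi R$, leaving a clean $\gamma/2$ gap. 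You correctly flag this $\Scc$ bookkeeping as the delicate step; filling it in amounts to exactly this $\gamma$-dependent choice of $\eps$, after which your argument and the paper's coincide.
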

\begin{proof} The proof will be similar to that of Theorem \ref{thm:path:joint:vp}. As usual, we aim to show that SVM-solutions constitute the most competitive direction. Set $\Xi=1/\tn{\prr}$.

\noindent$\bullet$ \textbf{Case 1: $\hb_R/R$ does not converge.} Under this scenario there exists $\delta,\gamma=\gamma(\delta)>0$ such that we can find arbitrarily large $R$ with $\tn{\hb_R/R-\Xi\prr}\geq \delta$. This implies that margin induced by $\hb_R/R$ is at most $\Xi(1-\gamma)$ over the support vectors $\Sc$ (from strong convexity of \eqref{relax svm}). The reason is that, $\hb_R$ satisfies $\hb_R^\top (\kb_{i\alpha_i}-\kb_{it})\geq 0~\text{for all}~t\neq \alpha_i$ by construction as $\hb_R\in\Qc$. Thus, a constraint over the support vectors have to be violated (when normalized to the same $\ell_2$ norm as $\tn{\prr}=1/\Xi$).

As usual, we will construct a solution strictly superior to $\hb_R$ and contradicts with its optimality.

\textbf{Construction of competitor:} Rather than using $\prr$ direction, we will choose a slightly deviating direction that ensures the selection of the correct tokens over non-supports $\Scc$. Specifically, consider the solution of \eqref{relax svm} where we tighten the non-support constraints by arbitrarily small $\eps>0$.
\begin{align} 
\pre=\arg\min_{\pb} \tn{\pb}\quad\text{such that}\quad \pb^\top (\kb_{i\alpha_i}-\kb_{it})\geq \begin{cases} 1\quad\text{for all}\quad t\neq \alpha_i,~i\in\Sc\\ \eps\quad\text{for all}\quad t\neq \alpha_i,~i\in\Scc\end{cases}.\label{relax svm eps}
\end{align}
Let $\ps$ be the solution of \eqref{attnsvm} with $\bal=(\alpha_i)_{i=1}^n$ (which was assumed to be separable). Observe that $\ps_\eps=\eps\ps+(1-\eps)\prr$ satisfies the constraints of \eqref{relax svm eps}. Additionally, $\ps_\eps$ would achieve a margin of $\frac{1}{(1-\eps)/\Xi+\eps/\Delta}=\frac{\Delta\Xi}{\Delta+\eps(\Xi-\Delta)}$ where $\Delta=1/\tn{\ps}$. Using optimality of $\pre$, this implies that the reduced margin $\Xi_\eps=1/\tn{\pre}$ (by enforcing $\eps$ over non-support) over the support vectors is a Lipschitz function of $\eps$. That is $\Xi_\eps\geq \Xi-\eps M$ for some $M\geq 0$. To proceed, choose an $\eps>0$ such that, it is strictly superior to margin induced by $\hb_R$, that is,
\[
\Xi_\eps\geq \Xi\left(1-\frac{\gamma}{2}\right).
\]
To proceed, set $\pbr=R\Xi_\eps\pre$. Let us recall the following notation from the proof of Theorem \ref{thm:path:joint:vp}: $\s^\pb_i=\sft{\Kb_i\pb}$ and $\ir_i^\pb=1-\s_{i\alpha_i}$. Set $\hat{\ir}_{\max}=\max_{i\in\Sc} {\ir}^{\hb_R}_{i}$ to be worst non-optimality of $\hb_R$ over \textbf{support set}. Similarly, define $\ir^\st_{\max}=\max_{i\in\Sc} \ir^{\pbr}_i$ to be the same for $\pbr$. Repeating the identical arguments to \eqref{joint arg 1}, \eqref{joint arg 2}, \eqref{smax difff joint}, and using the fact that $\pre$ achieves a margin $\Xi(1-\frac{\gamma}{2})\leq \Xi_\eps\leq \Xi$, we end up with the lines
\begin{subequations}
\begin{align}\label{smax difff relax}
    \log(\hat{\ir}_{\max})&\geq -(1-\gamma)\Xi R-\log T,\\
        -\Xi R-\log T\leq \log(\ir^\st_{\max})&\leq -\Xi(1-0.5\gamma) R+\log T.\label{smax difff relax2}
\end{align}
\end{subequations}

In what follows, we will prove that $\pbr$ achieves a strictly smaller logistic loss contradicting with the optimality of $\pb_R$ (whenever $\tn{\hb_R/R-\Xi\prr}\geq \delta$).

\noindent\textbf{Upper bounding logistic loss.}
Let us now upper bound the logistic loss of $(\vbs,\pbr)$ where $\vbs=r\Gamma\vs$ with $\vs$ being the solution of \eqref{svm-label2} with $\rb_i\gets \x_{i\alpha_i}$ and $\Gamma=1/\tn{\vs}$. Set $\rb_i=\X_i^\top\sft{\Kb_i\pbr}$. Set $\upsilon=\min_{i\in\Scc} Y_i\cdot \x_{i\alpha_i}^\top \vs-1$ to be the additional margin buffer that non-support vectors have access to. Also set $M=\sup_{i\in [n],t,\tau\in[T]}\tn{\x_{it}-\x_{i\tau}}$. Observe that we can write
\[
\x_{i\alpha_i}-\rb_i=\sum_{t\neq \alpha_i} \s_{it}(\x_{i\alpha_i}-\x_{it})\implies \tn{\x_{i\alpha_i}-\rb_i}\leq \ir_iM.
\]
\underline{Non-supports achieve strong label-margin:} Using above and \eqref{relax svm eps} for all $i\in\Scc$ and $t\neq \alpha_i$, we have that $\s_{it}\leq e^{-\eps \Xi_\eps R}\s_{i\alpha_i}\leq e^{-\eps \Xi(1-\gamma/2) R}\s_{i\alpha_i}$. Consequently, whenever $R\geq \bar{R}_0:=(\eps \Xi(1-\gamma/2))^{-1}\log(\frac{TM}{\Gamma\upsilon})$,
\[
\ir^{\pbr}_i\leq \frac{\sum_{t\neq\alpha_i}\s_{it}}{\s_{i\alpha_i}}\leq Te^{-\eps \Xi(1-\gamma/2) R}\leq \frac{\Gamma\upsilon}{M}.
\]
This implies that, on $i\in\Scc$
\begin{align}
Y_i\cdot\rb_i^\top\vs\geq 1+\upsilon+Y_i\cdot(\rb_i-\x_{i\alpha_i})^\top\vs\geq 1+\upsilon-\ir_iM\tn{\vs}\geq 1.\label{non sup good}
\end{align}
\emph{In words:} Above a fixed $\bar{R}_0$ that only depends on $\gamma=\gamma(\delta)$, features $\rb_i$ induced by all non-support indices $i\in\Scc$ achieve margin at least $1$. What remains is analyzing the margin shrinkage over the support vectors as in Theorem \ref{thm:path:joint:vp}.

\underline{Controlling support margin and combining bounds:} Over $\Sc$, suppose $\vs$ satisfies the SVM constraints on $\rb_i$ with $Y_i\cdot\rb_i^\top\vs \geq 1-\eps_i/\Gamma$. Consequently, setting $\eps_{\max}=\sup_{i\in[n]}\eps_i$, $\vs$ achieves a label-margin of $\Gamma-\eps_{\max}$ on the dataset $(Y_i,\rb_i)_{i\in [n]}$. Next, we recall the fact \eqref{smax difff relax2} that worst-case perturbation is $\eps_{\max}\leq M\exp(-\Xi(1-0.5\gamma) R+\log T)=MT\exp(-\Xi(1-0.5\gamma) R)$. With this and \eqref{non sup good}, we upper bound the logistic loss of $(\vbs,\pbr)$ as follows. 
% $\vs$ is the max-margin solution for $(\x_{i1})_{i=1}^n$ dataset.
\begin{align}
\nonumber
\Lc(\vbs,\pbs)&\leq \max_{i\in [n]} \log(1+\exp(-Y_i\rb_i^\top\vbs)).\\
\nonumber
&\leq \max_{i\in [n]} \exp(-Y_i\rb_i^\top\vbs)\\
\nonumber
&\leq  \exp(-r\Gamma+r\eps_{\max})\\
&\leq e^{rMT\exp(-\Xi(1-0.5\gamma) R)}e^{-r\Gamma}.\label{logistic upper2}
\end{align}
%Using theorem's conditionsAssumption \ref{label margin ass}, we find that
% theorem's assumption on the optimality of $(\alpha_i)_{i=1}^n$
Conversely, we obtain a lower bound for $(\vb_r,\hb_R)$. Set $\rb_i=\X_i^\top\sft{\Kb_i\hb_R}$. Recall the lower bound \eqref{smax difff relax} over the support vector set $\Sc$. {Combining this with our Assumption \ref{label margin ass} over the support vectors of \eqref{svm-label2} implies that, solving \eqref{svm-label2} on $(Y_i,r_i)_{i\in[n]}$ achieves at most $\Gamma-\nu e^{-(1-\gamma)\Xi R}/T$ margin.} Consequently, we have
\begin{align}
\nonumber
\Lc(\vb_r,\hb_R)&\geq \frac{1}{n}\max_{i\in [n]} \log(1+\exp(-Y_i\rb_i^\top\vb_r))\\
\nonumber
&\geq \frac{1}{2n}\max_{i\in [n]} \exp(-Y_i\rb_i^\top\vb_r)\wedge \log 2\\
\nonumber
&\geq \frac{1}{2n}\exp(-r(\Gamma-\nu e^{-(1-\gamma)\Xi R}/T))\wedge \log 2\\
&\geq \frac{1}{2n}e^{r(\nu/T)\exp(-(1-\gamma)\Xi R)}e^{-r\Gamma}\wedge \log 2.\label{logistic lower2}
\end{align}
Observe that, this lower bound dominates the previous upper bound when $R$ is large, namely, when (ignoring the multiplier $1/2n$ for brevity)
\[
(\nu/T)e^{-(1-\gamma)\Xi R}\geq MTe^{-\Xi (1-0.5\gamma)R}\iff R\geq R_0:=\frac{2}{\gamma\Xi}\log\left(\frac{MT^2}{\nu}\right).
\]
Thus, we obtain the desired contradiction since $\pbr$ is a strictly better solution compared to $\pb_R=\hb_R$ (once $R$ is sufficiently large). 

%\vspace{50pt}
\noindent$\bullet$ \textbf{Case 2: $\vb_r/r$ does not converge.} This is the simpler scenario: There exists $\delta>0$ such that we can find arbitrarily large $r$ obeying $\tn{\vb_r/r-\vs/\tn{\vs}}\geq \delta$. First, note that, due to the strong convexity of \eqref{svm-label2}, for some $\gamma:=\gamma(\delta)>0$, $\vb_r$ achieves a margin of at most $(\Gamma-\gamma) r$ on the dataset $(Y_i,\x_{i1})_{i\in [n]}$. By theorem's condition, we are provided that $\sft{\Kb_i\pb_R}_{\alpha_i}\rightarrow 1$. This immediately implies that, for any choice of $\eps=\gamma/3>0$, above some sufficiently large $(r_0,R_0)$, we have that $\tn{\x_i^{\pb_R}-\rb_i}\leq \eps$. Following \eqref{logistic upper2}, this implies that, choosing $\vbs=r\vs/\tn{\vs}$ achieves a logistic loss of at most $e^{r\gamma/3}e^{-r\Gamma}$. %If $\tn{\pb_R/R-\Xi\ps}\not\rightarrow 0$, then ``Case 1'' applies. Otherwise, we have $\tn{\pb_R/R-\Xi\ps}\rightarrow 0$, thus we can assume $\tn{\pb_R/R-\Xi\ps}\leq \eps$ for arbitrary choice of $\eps>0$.
% Additionally, since $\tn{\pb_R/R-\Xi\ps}\leq \eps$, $\pb_R$ strictly separates all optimal tokens (for small enough $\eps>0$) and $\hat{\ir}_{\max}:=f(\eps)\rightarrow 0$ as $R\rightarrow\infty$. Consequently, setting $\rb_i=\X_i^\top\sft{\Kb_i\pb_R}$, for sufficiently large $R>0$
Again using $\tn{\x_i^{\pb_R}-\rb_i}\leq \eps$, for sufficiently large $(r,R)$ we have that
\begin{align*}
\min_{i\in[n]}Y_i\vb_r^\top \rb_i&\leq \min_{i\in[n]}Y_i\vb_r^\top \x_{i1}+\sup_{i\in[n]}|\vb_r^\top(\rb_i-\x_{i1})|\\
&\leq (\Gamma-\gamma) r+\eps r\\
&\leq (\Gamma-2\gamma/3) r.
\end{align*}
This in turn implies that logistic loss is lower bounded by (following \eqref{logistic lower2}),
\[
\Lc(\vb_r,\pb_R)\geq \frac{1}{2n}e^{2\gamma r/3}e^{-r\Gamma}\wedge \log 2.
\]
This dominates the above upper bound $e^{r\gamma/3}e^{-r\Gamma}$ of $\vbs$ whenever $\frac{1}{2n}e^{\gamma r/3}>1\iff r>\frac{3}{\gamma}\log(2n)$, (that is, when $r$ is sufficiently large), again concluding the proof.
\end{proof}
%We next wish to show that, over this quadrant, 
%\section{Addendum to Section~\ref{sec nonlin}}\label{app:sec:nonlin}

%\vspace{-5pt}
\section{Regularization Path of Attention with Nonlinear Head}%\label{sec nonlin}
\label{app:sec:nonlin}
%\vspace{-5pt}

%Our results will build on a contextual dataset model where the high-level idea is splitting a tokenized input into label-\rel and label-\irel tokens\footnote{In Section \ref{multicon}, we also augment this model to allow for multiple contexts akin to the distinction between binary vs multiclass classification.}. Under this model, we will see that attention mechanism plays a critical for separating \rel tokens from the \irel ones.
So far our discussion has focused on the attention model with linear head. However, the conceptual ideas on optimal token selection via margin maximization also extends to a general nonlinear model under mild assumptions. The aim of this section is showcasing this generalization. Specifically, we consider the prediction model $f(\X)=\link{\X^\top \sft{\Kb\pb}}$ where $\link{\cdot}:\R^d\rightarrow\R$ generalizes the linear head $\vb$ of our attention model. For instance, following exposition in Section \ref{sec prelim}, $\link{\cdot}$ can represent a multilayer transformer with $\pb$ being a tunable prompt at the input layer. Recall that $(\X_i,\Kb_i,Y_i)_{i=1}^n$ is the dataset of the input-key-label tuples. We consider the training risk\vspace{-2pt}
\begin{align}
\Lc(\pb)=\frac{1}{n}\sum_{i=1}^n \ell(Y_i,\link{\X_i^\top \s^{\pb}_i}),\quad\text{where}\quad\s^{\pb}_i=\sft{\Kb_i\pb}\in\R^T.\label{NPT loss}
\end{align}
The challenge with nonlinear $\link{\cdot}$ is that, we lack a clear score function (Def.~\ref{score def}) unlike the previous sections. The assumption below introduces a generic condition that splits the tokens of each $\X_i$ into an \emph{optimal} set $\Rc_i$ and \emph{\irel} set $\Rcb_i=[T]-\Rc_i$. In words, \irel tokens are those that strictly increase the training risk $\Lc(\pb)$ if they are not fully suppressed by attention probabilities $\s_i^{\pb}$.  

\vspace{-3pt}
\begin{assumption}[Mixing \irel tokens hurt]\label{asshurt} There exists sets $(\Rc_i)_{i=1}^n\subset[T]$ as follows. Let $q^{\pb}_i=\sum_{t\in \Rcb_{i}} \s^{\pb}_{it}$ be the sum of softmax similarities over the \irel set for $\pb$. Set $q^{\pb}_{\max}=\max_{i \in [n]}q^{\pb}_i$. For any $\Delta>0$, there exists $\rho<0$ such that:\vspace{-1pt}
\[
\textnormal{For all }\pb,\pb'\in\R^d,~\textnormal{ if }~\log (q^{\pb}_{\max})\leq (1+\Delta)\log (q^{\pb'}_{\max})\wedge \rho,~~\textnormal{ then }\Lc(\pb)< \Lc(\pb').
\]

\end{assumption}
This assumption is titled \emph{mixing hurts} because the attention output $\X_i^\top \s^\pb_i$ is mixing the tokens of $\X_i$ and our condition is that, to achieve optimal risk, this mixture should not contain any \irel tokens. In particular, we require that, a model $\pb$ that contains \emph{exponentially less non-optimality} (quantified via log($q_{\max}$)) compared to $\pb'$ is strictly preferable. As we discuss in the supplementary material, Theorem \ref{lin det thm} is in fact a concrete instance (with linear head $\vb$) satisfying this condition.

Before stating our generic theorem, we need to introduce the max-margin separator towards which regularization path of attention will converge. This is a slightly general version of Section \ref{linear head sec}'s \eqref{attnsvm} problem where we allow for a set of \rel tokens $\Rc_i$ for each input.
\begin{align}
\pb^{\svm}=\arg\min_{\pb}\tn{\pb}\quad\text{subject to}\quad &\max_{\alpha\in \Rc_{i}}\min_{\beta\in \Rcb_{i}}\pb^\top(\kb_{i\alpha}-\kb_{i\beta})\geq 1,\quad\text{for all}\quad i\in [n]\label{svm}\tag{ATT-SVM'}.
\end{align}
Unlike \eqref{attnsvm}, this problem is not necessarily convex when the \rel set $\Rc_i$ is not a singleton. To see this, imagine $n=d=1$ and $T=3$: Set the two \rel tokens as $\kb_1=1$ and $\kb_2=-1$ and the \irel token as $\kb_3=0$. The solution set of \eqref{svm} is $\ps\in \{-1,1\}$ whereas their convex combination $\pb=0$ violates the constraints. To proceed, our final result establishes the convergence of regularization path to the solution set of \eqref{svm} under Assumption \ref{asshurt}. 
%On the other hand, problem becomes convex if there is a single \rel token or all \rel tokens are identical so that application of $\max_{\alpha\in \Rc_{i}}$ becomes trivial.
\begin{theorem} \label{meta thm} Let $\Gc^\svm$ be the set of global minima of \eqref{svm}. Suppose its margin $\Xi:=1/\tn{\ps}>0$ and Assumption \ref{asshurt} holds. Let $\dist{\cdot,\cdot}$ denote the $\ell_2$-distance between a vector and a set. Following \eqref{NPT loss}, define $\pbb(R)=\arg\min_{\tn{\pb}\leq R}\Lc(\pb)$. We have that $\lim_{R\to\infty}\dist{\frac{\pbb(R)}{\Xi R},\Gc^\svm}=0$.
%\[
%\lim_{R\to\infty}\dist{\Gamma\frac{\pbb(R)}{R},\Gc^\svm}=0.
%\]
\end{theorem}
We note that Theorem \ref{lin det thm} is a corollary of this result where $\Rc_i$'s and $\Gc^\svm$ are singleton. Based on this result, with multiple optimal tokens, Theorem \ref{lin det thm} would gracefully generalize to solve \eqref{svm}.

\subsection{Proof of Theorem \ref{meta thm}}

\begin{proof} The key idea is showing that, thanks to the exponential tail of softmax-attention, (harmful) contribution of the \irel token with the minimum margin can dominate the contribution of all other tokens as $R\rightarrow\infty$. This high-level approach is similar to earlier works on implicit bias of gradient descent with logistic loss \cite{ji2019implicit,soudry2018implicit}.

Pick $\ps\in \Gc^\svm$ and set $\pb^\st_R=R\frac{\ps}{\tn{\ps}}$. This will be the baseline model that $\pb_R$ has to compete against. Also let $\pbb_R=\frac{\pb_R}{\Xi R}$. Now suppose $\dist{\pbb_R, \Gc^\svm}\not\rightarrow0$ as $R\rightarrow\infty$. Then, there exists $\delta>0$ such that, we can always find arbitrarily large $R$ obeying $\dist{\pbb_R, \Gc^\svm}\geq \delta$.

Since $\pbb_R$ is $\delta>0$ bounded away from $\Gc^\svm$, and $\tn{\pbb_R}=\tn{\ps}$, $\pbb_R$ strictly violates at least one of the inequality constraints in \eqref{svm}. Otherwise, we would have $\pbb_R\in \Gc^\svm$. Without losing generality, suppose $\pbb_R$ violates the first margin constraint, that is, for some $\gamma:=\gamma(\delta)>0$, $\max_{\alpha\in \Rc_{1}}\min_{\beta\in \Rcb_{1}}\pbb_R^\top(\kb_{1\alpha}-\kb_{1\beta})\leq 1-\gamma$. Now, we will argue that this will lead to a contradiction as $R\rightarrow\infty$ since we will show that $\Lc(\pb^\st_R)<\Lc(\pb_R)$ for sufficiently large $R$.

First, let us control $\Lc(\pb^\st_R)$. We study $\s^\st_i=\sft{\Kb_i\pb^\st_R}$ and let $\alpha_i\in\Rc_i$ be the index $\alpha$ in \eqref{svm} for which $\text{margin}_i=\max_{\alpha\in \Rc_{i}}\min_{\beta\in \Rcb_{i}}(\kb_{i\alpha}-\kb_{i\beta})^\top\ps\geq 1$ is attained. Then, we bound the non-optimality amount $\ir^\st_i$ of $\pb^\st_R$ as
\[
\ir^\st_i=\frac{\sum_{t\in \Rcb_{i}}\exp(\kb_{it}^\top \pb^\st_R)}{\sum_{t\in [T]}\exp(\kb_{it}^\top \pb^\st_R)}\leq \frac{\sum_{t\in \Rcb_{i}}\exp(\kb_{it}^\top \pb^\st_R)}{\exp(\kb_{i\alpha_i}^\top \pb^\st_R)}\leq T\exp(-\Xi R).
\]
Thus, $\ir^\st_{\max}=\max_{i\in [n]}\ir^\st_i\leq T\exp(-\Xi R)$. Secondly, we wish to control $\Lc(\pb_R)$ by lower bounding the non-optimality in $\pb_R$. Focusing on the first margin constraint, let $\alpha\in\Rc_1$ be the index in \eqref{svm} for which $\text{margin}_1\leq 1-\gamma$ is attained. Denoting the amount of non-optimality of the first input as $\hat{\ir}_1$, we find\footnote{Here, we assumed margin is non-negative i.e.~$\kb_{1\alpha}^\top \pb_R\geq \sup_{t\in\Rcb_1}\kb_{1t}^\top \pb_R$. Otherwise, $\sup_{t\in[T]}\kb_{1t}^\top \pb_R$ is attained in $\Rcb_1$ which implies $\hat{\ir}_1\geq T^{-1}$. Thus, we can still use the identical inequality \eqref{smax difference} with the choice $\gamma=1$.}
\[
\hat{\ir}_1=\frac{\sum_{t\in \Rcb_{1}}\exp(\kb_{1t}^\top \pb_R)}{\sum_{t\in [T]}\exp(\kb_{1t}^\top \pb_R)}\geq \frac{1}{T}\frac{\sum_{t\in \Rcb_{1}}\exp(\kb_{1t}^\top \pb_R)}{\exp(\kb_{1\alpha}^\top \pb_R)}\geq T^{-1}\exp(-\Xi R(1-\gamma)).
\]
We similarly have $\ir^\st_{\max}\geq T^{-1}\exp(-\Xi R)$. In conclusion, for $\pb_R,\pb^\st_R$, denoting maximum non-optimality by $\hat{\ir}_{\max}\geq \hat{\ir}_1$ and $\ir^\st_{\max}$, we respectively obtained 
\begin{align}\label{smax difference}
\nonumber
\log(\hat{\ir}_{\max})&\geq -(1-\gamma)(\Xi R)-\log T,\\
    -(\Xi R)-\log T\leq \log(\ir^\st_{\max})&\leq -(\Xi R)+\log T.
\end{align}
The above inequalities satisfy Assumption \ref{asshurt} as follows where $\pb\gets\pb^\star_R$ and $\pb'\gets \pb_R$: Set $R_0=3\gamma^{-1}\Xi^{-1}\log T$ so that $\log T= \frac{\gamma\Xi R_0}{3}$. Secondly, set $\rho_0=-\Xi R_0-\log T$. This way, $\rho_0\geq \log(\ir^\st_{\max})$ implies $R\geq R_0$ and $\log T\leq \frac{\gamma \Xi R}{3}$. Using the latter inequality, we bound the $\log T$ terms to obtain
\begin{itemize}
\item $\log(\hat{\ir}_{\max})\geq -(1-2\gamma/3)(\Xi R)$, and
\item $\log(\ir^\st_{\max})\leq -(1-\gamma/3)(\Xi R)$.
\end{itemize}
To proceed, we pick $1+\Delta=\frac{1-\gamma/3}{1-2\gamma/3}$ implying $\Delta:=\frac{\gamma}{3-2\gamma}$. Finally, for this $\Delta$, there exists $\rho(\Delta)$ which we need to ensure $\log(\hat{\ir}_{\max})\leq \rho(\Delta)$. This can be guaranteed by picking sufficiently large $R$ that ensures $\log(\ir^\st_{\max})\leq -(1-\gamma/3)(\Xi R)\leq \rho(\Delta)$ to satisfy all conditions of Assumption \ref{asshurt}. Since such large $R$ exists by initial assumption $\dist{\pbb_R, \Gc^\svm}\not\rightarrow0$, Assumption \ref{asshurt} in turn implies that $\Lc(\pb^\st_R)<\Lc(\pb_R)$ contradicting with the optimality of $\pb_R$ in \eqref{NPT loss}.
\end{proof}

\subsection{Application to Linearly-mixed Labels}

The following example shows that if \irel tokens result in reduced score (in terms of the alignment of prediction and label), Assumption \ref{asshurt} holds. The high-level idea behind this lemma is that, if the optimal risk is achieved by setting $\ir^{\pb}_{\max}=0$, then, Assumption \ref{asshurt} will hold.%and $s_{\max}^{\pb}=\max_{i\in[n]}s^{\pb}_i$ 
\begin{lemma} [Linear label mixing]\label{label mix lem} Recall $\ir^{\pb}_i=\sum_{t\in \Rcb_t}\s^\pb_{it}$ from Assumption \ref{asshurt}. Suppose $Y_i\in\{-1,1\}$ and 
\[
Y_i\cdot\link{\X_i^\top \s^\pb_i}=\nu_i(1-\ir_i^{\pb})+Z_i,
\]
for some $(\nu_i)_{i=1}^n> 0$. Here $Z_i=Z_i(\pb)$ is the contribution of \irel tokens to prediction. For some $C,\eps>0$ and for all $\pb\in\R^d$, assume
\begin{align}
-C\ir^{\pb}_{\max}\leq Z_i\leq (1-\eps)\nu_i \ir_i^{\pb}.\label{zi cond}%\quad \text{and}\quad|Z_i|\leq C\ir^{\pb}_{\max}
\end{align}
Then, Assumption \ref{asshurt} holds for $\Lc(\pb)=\frac{1}{n}\sum_{i=1}^n\ell(Y_i\cdot\link{\X_i^\top \s^\pb_i})$ when $\ell(\cdot)$ is a strictly decreasing loss function with continuous derivative.
\end{lemma}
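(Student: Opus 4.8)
Looking at Lemma \ref{label mix lem}, the goal is to verify that the condition in Assumption \ref{asshurt} holds under the given hypotheses. Here's my proof plan.

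\textbf{Plan.} The strategy is to directly translate the hypothesis $Y_i\cdot\psi(\X_i^\top\s^\pb_i)=\nu_i(1-q_i^\pb)+Z_i$ together with the sandwich bound \eqref{zi cond} into a comparison of training risks. The key observation is that when $q^\pb_{\max}$ is exponentially smaller than $q^{\pb'}_{\max}$, every prediction score under $\pb$ dominates the corresponding score under $\pb'$, and since $\ell$ is strictly decreasing, this forces $\Lc(\pb)<\Lc(\pb')$.

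\textbf{Step 1: Lower bound the prediction scores under $\pb$.} For each $i$, combine the identity with the lower bound $Z_i\geq -Cq^\pb_{\max}$ to get $Y_i\cdot\psi(\X_i^\top\s^\pb_i)\geq \nu_i(1-q_i^\pb)-Cq^\pb_{\max}\geq \nu_i - (\nu_{\max}+C)q^\pb_{\max}$, where $\nu_{\max}=\max_i\nu_i$. So each score is at least $\nu_i-c_1 q^\pb_{\max}$ with $c_1=\nu_{\max}+C$.

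\textbf{Step 2: Upper bound the prediction scores under $\pb'$.} Using the upper bound $Z_i\leq(1-\eps)\nu_i q_i^{\pb'}$, we get $Y_i\cdot\psi(\X_i^\top\s^{\pb'}_i)\leq \nu_i(1-q_i^{\pb'})+(1-\eps)\nu_i q_i^{\pb'}=\nu_i(1-\eps q_i^{\pb'})=\nu_i-\eps\nu_i q_i^{\pb'}$. To make this useful we need a \emph{lower} bound on some $q_i^{\pb'}$ in terms of $q^{\pb'}_{\max}$ — but $q^{\pb'}_{\max}=q_{i^\star}^{\pb'}$ for the maximizing index $i^\star$, so at that single index, $Y_{i^\star}\cdot\psi(\X_{i^\star}^\top\s^{\pb'}_{i^\star})\leq \nu_{i^\star}-\eps\nu_{\min}q^{\pb'}_{\max}$ where $\nu_{\min}=\min_i\nu_i>0$. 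At all other indices we only have $\leq\nu_i$; this is the point I expect to require care.

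\textbf{Step 3: Compare the risks.} Write $\Lc(\pb')-\Lc(\pb)=\frac1n\sum_i[\ell(s_i')-\ell(s_i)]$ where $s_i',s_i$ are the respective scores. Since $\ell$ is decreasing, $\ell(s_i')\geq\ell(s_i)$ whenever $s_i'\leq s_i$; from Steps 1--2 this holds at every $i$ once $c_1 q^\pb_{\max}\leq \nu_i-s_i'$... but more cleanly: the index $i^\star$ achieving $q^{\pb'}_{\max}$ contributes a strictly positive gap of size at least $\ell(\nu_{i^\star}-\eps\nu_{\min}q^{\pb'}_{\max})-\ell(\nu_{i^\star}-c_1 q^\pb_{\max})$, which is strictly positive (by strict monotonicity of $\ell$ on the bounded score interval) provided $\eps\nu_{\min}q^{\pb'}_{\max}>c_1 q^\pb_{\max}$; meanwhile every other index satisfies $s_i'\leq\nu_i\leq s_i+c_1q^\pb_{\max}$, so by continuity of $\ell'$ (hence a local Lipschitz bound $M$ on the relevant compact interval) $\ell(s_i')-\ell(s_i)\geq -M c_1 q^\pb_{\max}$. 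Thus $\Lc(\pb')-\Lc(\pb)\geq \frac1n\big[\ell(\nu_{i^\star}-\eps\nu_{\min}q^{\pb'}_{\max})-\ell(\nu_{i^\star})\big]-M c_1 q^\pb_{\max}$. Now choose $\rho<0$ sufficiently negative: the condition $\log q^\pb_{\max}\leq(1+\Delta)\log q^{\pb'}_{\max}\wedge\rho$ means $q^\pb_{\max}\leq (q^{\pb'}_{\max})^{1+\Delta}\wedge e^\rho$, so $q^\pb_{\max}$ is superlinearly smaller than $q^{\pb'}_{\max}$ and both are tiny; for the scores near $\nu_{i^\star}$, using $\ell'$ bounded below in absolute value by some $m>0$ on the compact interval, the positive term is $\gtrsim m\eps\nu_{\min}q^{\pb'}_{\max}$ while the negative term is $\lesssim M c_1 (q^{\pb'}_{\max})^{1+\Delta}$, and the former dominates once $q^{\pb'}_{\max}\leq e^{\rho/(1+\Delta)}$ is small enough. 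This yields $\Lc(\pb)<\Lc(\pb')$.

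\textbf{Main obstacle.} The delicate part is Step 2 together with bounding the ``other indices'' contribution in Step 3: I must ensure the single positive contribution from the $q^{\pb'}_{\max}$-achieving index strictly outweighs the potentially negative contributions from all other indices. This requires (i) confining all scores to a fixed compact interval so that $\ell'$ admits both a positive lower bound $m$ and finite upper bound $M$ there — this follows because $q^\pb_i\in[0,1]$ forces $s_i\in[\text{something},\nu_{\max}]$ and the lower end is controlled by $-Cq^\pb_{\max}\geq -C$ — and (ii) exploiting the superlinear gap $q^\pb_{\max}\leq(q^{\pb'}_{\max})^{1+\Delta}$ (not merely $q^\pb_{\max}\leq q^{\pb'}_{\max}$) so the $O(q^\pb_{\max})$ error is lower-order relative to the $\Theta(q^{\pb'}_{\max})$ main term. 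Everything else is bookkeeping with the strictly-decreasing-loss property.
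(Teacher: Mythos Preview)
Your proposal is correct and follows essentially the same strategy as the paper: bound $|\ell'|$ above and below on the compact interval containing all scores, translate the sandwich \eqref{zi cond} on $Z_i$ into a linear-in-$q_{\max}$ control on loss differences, and then use the superlinear gap $q^{\pb}_{\max}\leq (q^{\pb'}_{\max})^{1+\Delta}$ to conclude. The paper organizes this slightly more cleanly by sandwiching $\Lc(\pb)-\Lc_\star$ (with $\Lc_\star=\tfrac1n\sum_i\ell(\nu_i)$) between $C_-\,q^{\pb}_{\max}$ and $C_+\,q^{\pb}_{\max}$, which reduces the comparison to $q^{\pb'}_{\max}>\Theta\,q^{\pb}_{\max}$ in one stroke rather than singling out the index $i^\star$; but the content is the same.

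One small slip to fix in your final sentence: the hypotheses of Assumption~\ref{asshurt} give an upper bound on $q^{\pb}_{\max}$ (via $\log q^{\pb}_{\max}\leq\rho$), not on $q^{\pb'}_{\max}$, so you cannot assume $q^{\pb'}_{\max}$ is small. The correct closing is purely a ratio argument: from $\log q^{\pb}_{\max}\leq(1+\Delta)\log q^{\pb'}_{\max}$ one gets $\log q^{\pb'}_{\max}-\log q^{\pb}_{\max}\geq -\tfrac{\Delta}{1+\Delta}\log q^{\pb}_{\max}$, so choosing $\rho<-(1+\Delta^{-1})\log\Theta$ with $\Theta=nMc_1/(m\eps\nu_{\min})$ forces $q^{\pb'}_{\max}/q^{\pb}_{\max}>\Theta$, which is exactly what your Step~3 needs.
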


%\subsubsection{Proof of Lemma \ref{label mix lem}}
\begin{proof} %Logistic loss is \rel because (i) it is used in most classification problems and also (ii) softmax-attention and logistic loss are both exponential-tailed so their joint interaction is of practical interest for transformers. 
Recall the assumption $Y_i\cdot\link{\X_i^\top \s^\pb_i}=\nu_i(1-\ir_i^{\pb})+Z_i$ with $Z_i$ obeying \eqref{zi cond}. Let us also write the loss function 
\begin{align*}
\Lc(\pb)=\frac{1}{n}\sum_{i=1}^n\ell(\nu_i(1-s_i^{\pb})+Z_i).
\end{align*}
%Fix $\rho$ which will be chosen to be a small constant. 
Define $\irm=\sup_{i\leq [n]}\ir_i^\pb$. Let $M$ be the maximum absolute value of score over tokens. Let 
\begin{align*}
B=\max_{|x|\leq M}-\ell'(x)\geq A=\min_{|x|\leq M}-\ell'(x)>0.
\end{align*}
Through Taylor's Theorem (integral remainder), we have that
\[
B(\ir_i^\pb\nu_i-Z_i)\geq \ell(\nu_i(1-q_i^{\pb})+Z_i)-\ell(\nu_i)\geq A(\ir_i^\pb\nu_i-Z_i)\geq \eps A\nu_i\ir_i^\pb.
\]
Set $\Lc_\st=\frac{1}{n}\sum_{i=1}^n\ell(\nu_i)$. Set $C_+=B(C+\max_{i\in[n]}\nu_i)$ and $C_-=n^{-1}A\eps\min_{i\in[n]}\nu_i$. This also implies
\begin{align*}
C_+\irm \geq \frac{1}{n}\sum_{i\in[n]}B(\ir_i^\pb\nu_i-Z_i)\geq \Lc(\pb)-\Lc_\st &\geq \frac{1}{n}\sum_{i\in[n]}A(\ir_i^\pb\nu_i-Z_i)\\
&\geq\frac{1}{n}\sum_{i\in[n]} \eps A\nu_i\ir_i^\pb\geq C_-\irm.
\end{align*}
%\[\big|\ell(\nu_i(1-q_i^{\pb})+Z_i)-[\ell(\nu_i)+\ell'(\nu_i)(Z_i-\ir_i^\pb\nu_i)]\big|\leq 2M_0(Z_i-\ir_i^\pb\nu_i)^2\leq C_0 \irm^2,\]
%for some $C_0$ depending on $M_0,\max_i|\nu_i|,C$.
Thus, to prove $\Lc(\pb')>\Lc(\pb)$, we simply need to establish the stronger statement $C_-\ira>C_+\irm$.

Going back to the condition of Assumption \ref{asshurt}, any $\log (\irm)\leq (1+\Delta)\log (\ira)$ obeys $\irm\leq (\ira)^{1+\Delta}$ i.e. $\ira\geq (\irm)^{(1+\Delta)^{-1}}$. Following above, we wish to ensure $\ira>\Theta\irm$ for such $(\pb,\pb')$ pairs where $\Theta=C_+/C_->1$. This is guaranteed by
\[
(\irm)^{(1+\Delta)^{-1}-1}>\Theta\iff \frac{\Delta}{1+\Delta}\log(\irm)< -\log(\Theta).
\]
The above is satisfied by choosing a $\rho(\Delta):=-2(1+\Delta^{-1})\log(\Theta)$ in Assumption \ref{asshurt}. Thus, all $\pb,\pb'$ with $\log (\irm)\leq \rho=\rho(\Delta)$ satisfies the condition of Assumption \ref{asshurt} finishing the proof.
\end{proof}
\ifarxiv
\section{Experimental Details}\label{app:exp:detail}
In this section, we provide additional implementation details for the experiments.
\else
\section{Implementation Details and Additional Experiments}\label{app:exp:detail}
In this section, we provide implementation details and additional experiments.
\fi
 %\begin{figure}[t]
%\end{figure}

%\begin{enumerate}[label={\textnormal{\textbf{D\arabic*.}}}, wide, labelwidth=!,itemindent=!, labelindent=1pt]
%    \item
    
\noindent $\bullet$   We build one attention layer using \texttt{PyTorch}. During training, we use \texttt{SGD} optimizer with learning rate $0.1$ and train the model for $1000$ iterations. To better visualize the convergence path, we normalize the gradient of $\pb$ (and $\vb$) at each iteration.

\noindent $\bullet$  Next, given the gradient solution $\pb$, we determine locally-optimal indices to be those with the highest softmax scores. Using these optimal indices, we utilize 
    % we first select the optimal tokens within each sample, and then utilize 
    python package \texttt{cvxopt} to build and solve \eqref{attnsvm}, and then get solution $\ps$. After obtaining $\ps$, we also verify that these indices satisfy our local-optimal definition. The examples we use in the paper are all trivial to verify (by construction).
    
\noindent $\bullet$  In Figures~\ref{fig2:allsvm} and ~\ref{fig2:nosvm},  $\vs$ (blue dashed) is solved using python package \texttt{sklearn.svm} via \eqref{svm-label2} based on the given label information, and red dashed line represents $\prr$ direction instead, which is the solution of \eqref{relax svm}. Note that in both figures, $\vs/\|\vs\|=[0,1]$. Therefore, in Figure~\ref{fig2:allsvm} all optimal tokens are support vectors and $\prr=\ps$. Whereas in Figure~\ref{fig2:nosvm}, yellow $\star$ is not a support vector and only needs to satisfy positive correlation with $\pb$. Gray dashed line displays the $\ps$ direction.

\noindent\textbf{Failure of gradient descent's global convergence when $n\geq2$ (refer to Theorem~\ref{conv:gd:global})}.  
Figure~\ref{fig:global counter} provides a counter-example demonstrating that the $n=1$ restriction is indeed necessary and tight to guarantee global convergence of the gradient descent iterates $\pb(t+1) = \pp{t} - \eta \nabla \Lc(\pb(t))$ on \eqref{lin det losss}.
 \begin{wrapfigure}{r}{0.4\textwidth}
  \vspace{-.6cm} 
    \begin{center}
    \begin{tikzpicture}
    \node at (0,0) {\includegraphics[width=0.4\textwidth]{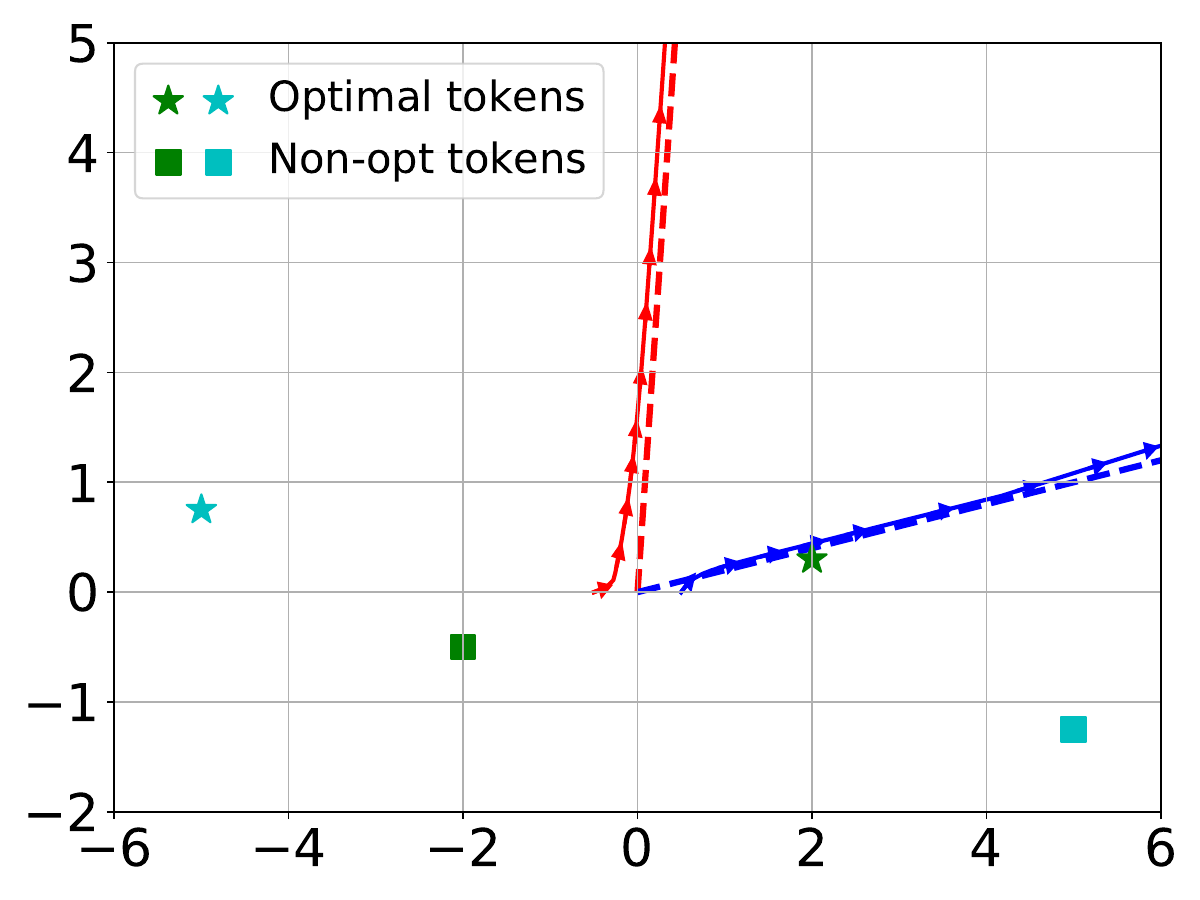}};
    \end{tikzpicture}
    \end{center}
        \vspace{-.6cm} 
\caption{{ The convergence behavior of the gradient descent on the attention weights $\pb$ using the logistic loss in \eqref{lin det losss} with $n=T=d=2$. }}
    \label{fig:global counter}
  \vspace{-.35cm} 
\end{wrapfigure}
For this example, we use logistic loss in \eqref{lin det losss}. We set $n=T=d=2$, implying that there is only one non-optimal token, thus Assumption~\ref{assum:opt:token} is satisfied. The red and blue lines represent \GM and \LM solutions, respectively. We note that the green star and teal square indicate the locally-optimal tokens. Specifically, referring to the local optimality definition (Definition~\ref{def loc opt}), for  \LM solution ($\ps$) represented by the blue line, the square teal token does not have any SVM-neighbors. The arrows indicate the two trajectories originating from different initializations. This demonstrates that the gradient descent iterates $\pb(t+1) = \pp{t} - \eta \nabla \Lc(\pb(t))$ on \eqref{lin det losss} with two different initializations converge to two different SVM solutions (\GM and \LM). Results validate the necessity of $n=1$ in Theorem~\ref{conv:gd:global} to provide the gradient descent convergence to $\pso$ from any initialization.
\ifarxiv
\else

\textbf{The convergence behavior of gradient descent under over-parameterization.} To illustrate Theorems \ref{thm:local:gd} \& \ref{non-LOMM reg path}, we have investigated the convergence behavior of $\pb(t)$ generated by gradient descent in Figure~\ref{fig:prob diff d}, using $n=4$, $T=6$, and conducted 1,000 random trials for varying $d\in\{2,5,10,100,300,500\}$. These experiments use normalized gradient descent with learning rate 1 for 1000 iterations. Inputs $\x_{it}$ and the linear head $\vb$ are uniformly sampled from the unit sphere, while $Y_i$ is uniformly $\pm1$, and $\W$ is set to $\Iden$.

\begin{figure}[t]
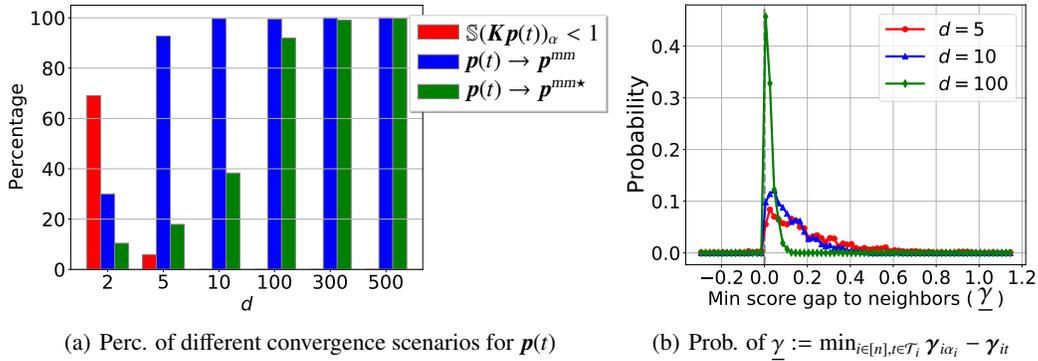
   
  \centering
  % \hspace{-10pt}
   \hspace{-12pt}
  %\caption{Convergence to global and local optima}
  % \hspace{-5pt}
  \subfigure[Perc. of different convergence scenarios for $\pb(t)$ ]{
    \begin{tikzpicture}
      \node at (0,0) {\includegraphics[height=.3\columnwidth, trim={0 0.5cm 50mm 0}]{fig/p_probs_bar_v2.pdf}};
%      \node at (2.92,1.57) {\footnotesize{$\sftx(\Kb\pb(t))_{\alpha}<1$}};
%      \node at (2.76,1.17) {\small{$\pb(t)\to\ps$}};
%      \node at (2.84,0.77) {\small{$\pb(t)\to\pso$}};
      \node at (2.92,1.6) {\footnotesize{$\sftx(\Kb\pb(t))_{\alpha}<1$}};
      \node at (2.76,1.23) {\footnotesize{$\pb(t)\to\ps$}};
      \node at (2.84,0.86) {\footnotesize{$\pb(t)\to\pso$}};
    \end{tikzpicture}
    \label{fig:prob diff d}
  }
  \hspace{-20pt}
  \subfigure[Prob. of $\underline{\gamma}:=\min_{i\in[n],t\in\Tc_i}\bgam_{i\alpha_i}-\bgam_{it}$]{
    \begin{tikzpicture}
      \node at (0,0) {\includegraphics[height=.295\columnwidth,trim={0 0.5cm 0 0}]{fig/p_score_gap_v2.pdf}};
%      \node at (2.17,-1.97) {$\underline{\gamma}$};
      \node at (2.05,-1.97) {$\underline{\gamma}$};
    \end{tikzpicture}
    \label{fig:prob score gap}
  }
\vspace{-2mm}
%\caption{Convergence behavior of attention weight $\pb$ when trained with random data using gradient descent. In \textbf{(a)}, we report the probabilities of three scenarios: 1) attention failling to select one token per input; 2) $\pb$ converging in direction to $\ps$ and 3) $\ps$ equating to $\pso$ in red, blue and green bars, respectively. Considerring the instances when $\pb$ successfully selects some locally-optimal tokens $(\alpha_i)_{i=1}^n$, \textbf{(b)} presents the probabilities of the minimal score gap between $\alpha_i$ and its corresponding neighbors $\Tc_i$.}
\caption{Convergence analysis of $\pb(t)$ trained with random data using gradient descent. \textbf{(a)} shows three scenarios: \textcolor{darkred}{(1)} attention failing to select one token per input (i.e.~softmax is not saturated); \textcolor{darkblue}{(2)} $\pb$ converging towards $\ps$; and \textcolor{darkgreen}{(3)} $\ps$ equating to $\pso$ with red, blue, and green bars, respectively. Considering saturated softmax instances where $\pb(t)$ selects one token $\alpha_i$ per-input, \textbf{(b)} presents histogram of the minimal score gap between $\alpha_i$ and its corresponding neighbors $\Tc_i$.}
  \label{fig:general}\vspace{-10pt}
  \end{figure}

% (i.e., $\sftx(\Kb\pb)_{\alpha_i}<1-10^{-5}$ for all $i \in [n]$)
%The red bars show the percentage of trials where gradient . 
% non-saturated softmax is a superset of proxy for the problems where the final parameter norm is finite. gradient descent solution is  for 
The bar plot in Figure~\ref{fig:prob diff d} distinguishes between \textit{non-saturated} softmax (red bars) and \textit{saturated} softmax (other bars). Saturation is defined as average softmax probability over tokens selected by gradient descent are at least $1-10^{-5}$ and implies that attention selects one token per input. Note that, whenever the norm of gradient descent solution is finite, softmax will be non-saturated. For small $d$ (e.g., $d=2$), problem has small degrees of freedom to separate optimal tokens from the rest (i.e.~no SVM solution for \LM directions) -- especially due to label randomness. This results in a tall red bar capturing the finite-norm solutions. However, for larger $d$, we observe that softmax saturates (i.e.~$\tn{\pb(t)}\to\infty$) and we observe that the selected tokens $\bal$ almost always converges to an \LM direction (blue bar) -- this is in line with Theorems \ref{thm:local:gd} \& \ref{non-LOMM reg path}. We also study the convergence to the globally-optimal \GM which is represented by the green bar: \GM is a strict subset of \LM however as $d$ increases, we observe that the probability of \GM convergence increases as well. This behavior is in line with what one would expect from over-parameterized deep learning theory \cite{du2019gradient,jacot2018neural,oymak2020toward,allen2019convergence} and motivates future research. The average correlation coefficient between $\pb(t)$ and its associated \LM/\GM direction is $0.997$, suggesting that, whenever softmax saturates, gradient descent indeed directionally converges to a \LM solution $\pb \in\Pc^{\tsc{mm}}$, confirming Theorem~\ref{non-LOMM reg path}.
%The  Among and \GM (blue and green, respectively). 

Furthermore, we found that there exist problem instances, with saturated softmax and $\tn{\pb(t)}\to\infty$, that do not converge to either \LM or \GM. We analyzed this phenomenon using the minimum score gap, $\underline{\gamma}:=\min_{i\in[n],t\in\Tc_i}\bgam_{i\alpha_i}-\bgam_{it}$, where $\Tc_i,i\in[n]$, represents the sets of \nei tokens. Figure~\ref{fig:prob score gap} provides the probability distribution of $\underline{\gamma}$ (with bins of width $<0.01$) and demonstrates the rarity of such cases. Specifically, we found this happens less than 1\% of the problems, that is, $\text{Prob}(\underline{\gamma}<0)<0.01$. Figure~\ref{fig:prob score gap} also reveals that, in these scenarios, even if $\underline{\gamma}<0$, it is typically close to zero i.e.~even if there exists a \nei with a higher score, it is only slightly so. This is not surprising since when token scores are close, we need a large number of gradient iterations to distinguish them. For all practical purposes, the optimization will treat both tokens equally and rather than solving \eqref{attnsvm}, the more refined formulation \eqref{svm} developed in Section \ref{app:sec:nonlin} will be a better proxy. Confirming this intuition, we have verified that, over the instances $\underline{\gamma}<0$, gradient descent solution is still $>0.99$ correlated with the max-margin solution in average. %Additional details are provided in the appendix.
\fi

\noindent $\bullet$  In Figure~\ref{fig:general}, we again applied normalized gradient descent with a learning rate equal to $1$. 
 \begin{wrapfigure}{r}{0.4\textwidth}
  \vspace{-.6cm} 
    \begin{center}
    \begin{tikzpicture}
    \node at (0,0) {\includegraphics[width=0.4\textwidth, trim={0 23mm 0 0}, clip]{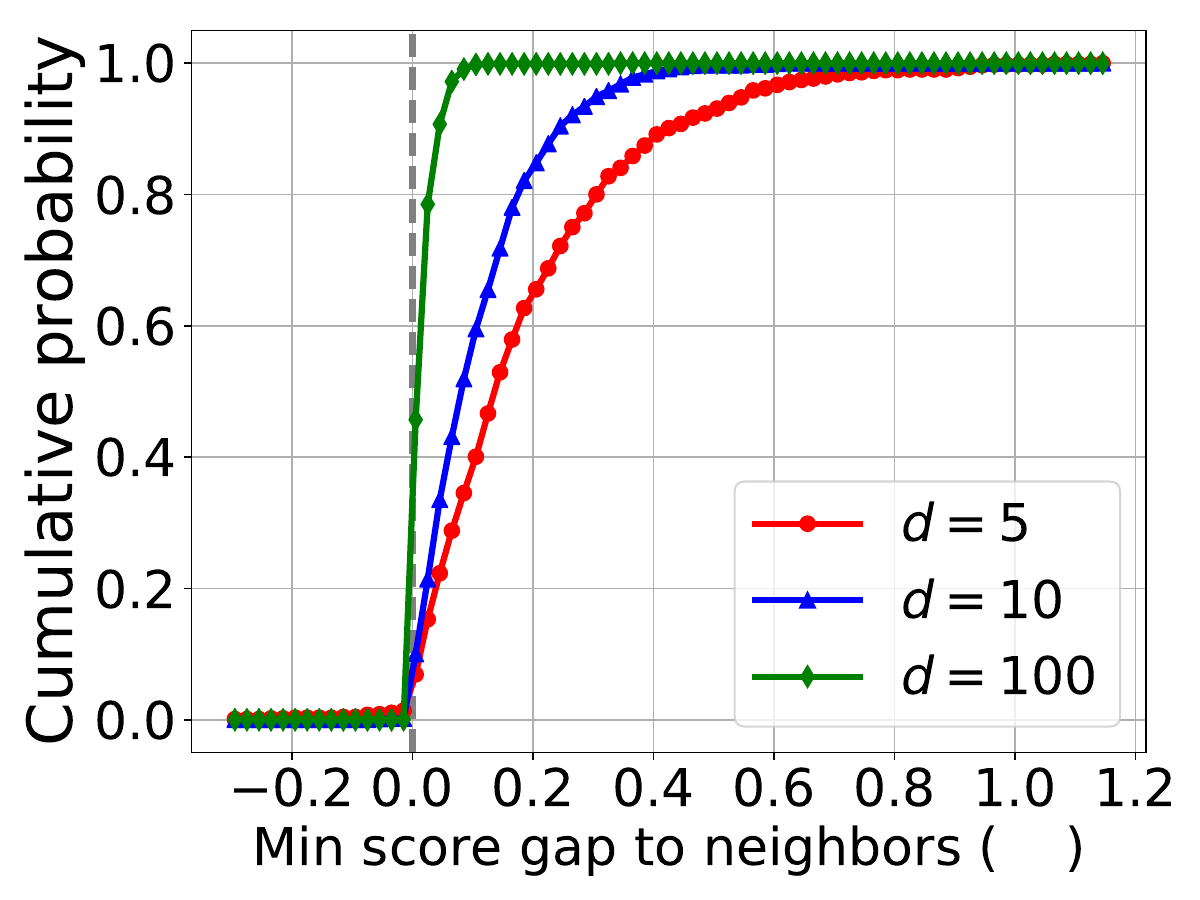}};
    \node at (0.3,-2.1) {$\underline{\gamma}$};
    \end{tikzpicture}
    \end{center}
        \vspace{-.6cm} 
\caption{{Cumulative prob. of the gap $\underline{\gamma}:=\min_{i\in[n],t\in\Tc_i}\bgam_{i\alpha_i}-\bgam_{it}$ in Figure~\ref{fig:prob score gap}. }}
    \label{fig:pdf score gap}
  \vspace{-.6cm} 
\end{wrapfigure}
Each trial involved randomly generated data and training for $1000$ iterations as discussed in Theorem~\ref{non-LOMM reg path}. The tokens selected by $\pb$ were denoted as $(\alpha_i)_{i=1}^n$, where $\alpha_i=\arg\max_{t\in[T]}\sftx(\X_i\pb)_t$. 
The averaged softmax probabilities were calculated as $\bar{s}:=\frac{1}{n}\sum_{i=1}^n\sft{\X_{i}\pb}_{\alpha_i}$ (same as Figure~\ref{fig2:prob}). The red bars in Figure \ref{fig:prob score gap} represent the values of $\Pro(\bar s\leq 1-10^{-5})$ for each choice of $d$. Figure~\ref{fig:pdf score gap} displays the cumulative probability distribution of $\underline{\gamma}$ from Figure~\ref{fig:prob score gap}, with the gray dashed line indicating $\underline{\gamma}=0$. From this figure, we observe that the minimal score gap exhibit a sharp transition at zero (<$1\%$ of the instances have $\underline{\gamma}<0$), demonstrating that, in most random problem instances with $\tn{\pb}\to\infty$ ($\bar s\to 1$), problem directionally converges to an $\LM$ i.e.~$\pb(t)/\tn{\pb(t)}\to\ps/\tn{\ps}$. We believe the rare occurrence of a negative score gap is due to small score differences (so that optimal token is not clearly distinguished) and finite number of gradient iterations we run. Interestingly, even in the negative score gap scenarios, gradient descent is aligned with $\ps(\bal)$ (even if $\ps(\bal)$ is not \LM) which can be predicted from our Section \ref{app:sec:nonlin} which handles the scenario where there are multiple optimal tokens per input.

% as well as numerical accuracies (even if we want to run longer) 

%the different convergence direction of gradient descent from the regularization path, as well as limited computational accuracy. 
  %  \end{enumerate}

\section{Addendum to Section~\ref{sec related}}\label{app:sec:related}
We provide an overview of the current literature on implicit regularization and attention mechanism.

\subsection{Related Work on Implicit Regularization}
The introduction of Support Vector Machines (SVM), which utilize explicit regularization to choose maximum margin classifiers, represents one of the earliest relevant literature in this field \cite{vapnik2006estimation}. The concept of maximizing the margin was later connected to generalization performance \cite{bartlett1996valid}.  From a practical perspective, exponential losses with decaying regularization exhibit asymptotic behavior similar to SVMs, as demonstrated in \cite{soudry2018implicit}. While the analysis of the perceptron \cite{novikoff1963convergence} originally introduced the concept of margins, the method itself does not possess an inherent bias as it terminates with zero classification error. However, establishing a meaningful lower bound for the attained margin is not possible. Initial empirical investigations highlighting the implicit bias of descent methods focused on $\ell_1$-regularization, revealing that coordinate descent, when combined with the exponential loss, exhibits an inherent inclination towards $\ell_1$-regularized solutions \cite{bartlett1998boosting}.

This work draws extensively from the literature on implicit bias and regularization, which has provided valuable techniques and inspiration. A common observation in these studies is the convergence to a specific optimal solution over the training set. This phenomenon has been observed in various approaches, including coordinate descent \cite{zhang2005boosting,telgarsky2013margins}, gradient descent \cite{ji2018risk,kini2021label,gunasekar2018characterizing,soltanolkotabi2023implicit,taheri2023generalization,soudry2018implicit,oymak2019overparameterized}, deep linear networks \cite{ji2018gradient,arora2019implicit}, ReLU networks \cite{lyu2019gradient,chizat2020implicit,ji2020directional,frei2023benign,vardi2022margin}, mirror descent \cite{azizan2021stochastic,azizanstochastic,yun2020unifying, amid2020reparameterizing}, and many others. The implicit bias of gradient descent in classification tasks involving separable data has been extensively examined by \cite{soudry2018implicit, gunasekar2018characterizing, nacson2019convergence, ji2021characterizing, moroshko2020implicit, ji2020directional}. The works on classification typically utilize logistic loss or exponentially-tailed losses to establish connections to margin maximization. The results have also been extended to non-separable data by \cite{ji2018risk,ji2019implicit,ji2020gradient}. Additionally, several papers have explored the implicit bias of stochastic gradient descent \cite{li2019towards,blanc2020implicit, damian2021label, zou2021benefits}, as well as adaptive and momentum-based methods \cite{qian2019implicit, wang2021momentum, wang2021implicit, ji2021fast}.
% Furthermore, there have been notable investigations into the implicit bias in regression problems and losses, utilizing techniques such as mirror descent \cite{woodworth2020kernel, }.

While there are some similarities between our optimization approach for $\vb$ and existing works, the optimization of $\pb$ presents notable differences. Firstly, our optimization problem is nonconvex and involves a composition of loss and softmax, which introduces new challenges and complexities. The presence of softmax adds a nonlinearity to the problem, requiring specialized techniques for analysis and optimization. Secondly, our analysis introduces the concept of locally-optimal tokens, which refers to tokens that achieve locally optimal solutions in their respective attention cones. This concept is crucial for understanding the behavior of the attention mechanism and its convergence properties. By focusing on the cones surrounding locally-optimal tokens, we provide a tailored analysis that captures the unique characteristics of the attention model. Overall, our work offers novel insights into the optimization of attention-based models and sheds light on the behavior of the attention mechanism during training.

\subsection{Related Work on Attention Mechanism}
As the backbone of Transformers \cite{vaswani2017attention}, the self-attention mechanism \cite{cheng2016long,parikh2016decomposable,paulus2017deep,lin2017structured} plays a crucial role in computing feature representations by globally modeling long-range interactions within the input. Transformers have achieved remarkable empirical success in various domains, including natural language processing \cite{radford2019language,brown2020language}, recommendation systems \cite{zhou2018deep, chen2019behavior, sun2019bert4rec}, and reinforcement learning \cite{chen2018best,janner2021reinforcement, zheng2022online}. With the introduction of Vision Transformer (ViT) \cite{dosovitskiy2020image}, Transformer-based models \cite{touvron2021training, jiang2021all} have become a strong alternative to convolutional neural networks (CNN) and become prevalent in vision tasks. %To train ViT efficiently, several techniques have been developed, among which token sparsification \cite{pan2021scalable, rao2021dynamicvit, liang2022not, tang2022patch, yin2022a} remove redundant tokens (image patches) from the data, improving computational complexity while maintaining comparable learning performance.

However, the theoretical foundation of Transformers and self-attention mechanisms has remained largely unexplored. Some studies have established important results, including the Lipschitz constant of self-attention \cite{kim2021lipschitz}, properties of the neural tangent kernel \cite{hron2020infinite,yang2020tensor}, and the expressive power and Turing-completeness of Transformers \cite{dehghani2018universal,yun2019transformers,giannou2023looped,edelman2022inductive,dong2021attention,levine2020limits} with statistical guarantees \cite{snell2021approximating,wei2021finetuned}. There is also a growing effort towards a theoretical understanding of emergent abilities of language models -- such as in-context learning \cite{akyurek2022learning,garg2022can,li2023transformers} and chain-of-thought \cite{wei2022chain,feng2023towards,li2023dissecting} -- which are inherently related to the models ability to attend to the relevant information within the input sequence.

Focusing on the self-attention component, Edelman et al. \cite{edelman2022inductive} theoretically shows that a single self-attention head can represent a sparse function of the input with a sample complexity for the generalization gap between the training loss and the test loss. However, they did not delve into the algorithmic aspects of training Transformers to achieve desirable loss. Sahiner et al. \cite{sahiner2022unraveling} and Ergen et al. \cite{ergen2022convexifying} further explored the analysis of convex relaxations for self-attention, investigating potential optimization techniques and properties. The former work applies to self-attention with linear activation (rather than softmax) whereas the latter work attempts to approximate softmax via a linear operation with unit simplex constraints. In contrast, we directly study softmax and characterize its non-convex geometry. In terms of expressive ability, Baldi and Vershynin \cite{baldi2022quarks} investigated the capacity of attention layers to capture complex patterns and information, while Dong et al.~\cite{dong2021attention} illustrates the propensity of attention networks to degenerate during the training process, with the result often being an output that is approximately a rank-1 matrix.
% and Cordonnier et al.~\cite{cordonnier2019relationship} provided sufficient and necessary conditions for multi-head self-attention structures to simulate convolution layers.

%. Likhosherstov et al. \cite{likhosherstov2021expressive} studied the model complexity for function approximation of the self-attention module,

Recent works have made progress in characterizing the optimization and generalization dynamics of attention \cite{jelassi2022vision,li2023theoretical,tian2023scan,oymak2023role,nguyen2023primal}. Jelassi et al. \cite{jelassi2022vision} studied gradient-based methods from random initialization and provided a theoretical analysis of the empirical finding that Vision Transformers learn position embeddings that recapitulate the spatial structure of the training data, even though this spatial structure is no longer explicitly represented after the image is split into patches. Li et al. \cite{li2023theoretical} provided theoretical results on training three-layer ViTs for classification tasks. They quantified the importance of self-attention in terms of sample complexity for achieving zero generalization error, as well as the sparsity of attention maps when trained by stochastic gradient descent (SGD). In another related work, Nguyen et al. \cite{nguyen2023primal} proposed a primal-dual optimization framework that focuses on deriving attention as the dual expansion of a primal neural network layer. By solving a support vector regression problem, they gained a deeper understanding and explanation of various attention mechanisms. This framework also enables the creation of novel attention mechanisms, offering flexibility and customization in designing attention-based models. In another closely related work, Oymak et al. \cite{oymak2023role} analyzed the same attention model as ours, denoted by \eqref{lin det losss}. Specifically, they jointly optimize $\vb,\pb$ for three gradient iterations for a contextual dataset model. This is in contrast to our emphasis on infinite-iteration behavior of $\pb$-only optimization. However, it is important to note that all of these works make certain assumptions about the data. Specifically, they assume that tokens are tightly clusterable or can be clearly split into relevant and irrelevant sets. Additionally, Li et al. \cite{li2023theoretical} require specific assumptions on the initialization of the model, while Jelassi et al. \cite{jelassi2022vision} consider a simplified attention structure where the attention matrix is not directly parameterized with respect to the input.

In contrast, our work offers a comprehensive optimization-theoretic analysis of the attention model, establishing a formal connection to max-margin problems. While comparable works make assumptions on the dataset model, our results apply under minimal assumptions for general data and realistic conditions. Our analysis based on max-margin-equivalence allows us to gain a deeper understanding of the optimization geometry of attention and its behavior during the training process. As articulated in our experiments, our results lead to novel insights even for $n=1,2$ samples, $T=2,3$ tokens and $d=2,3$ dimensions (in contrast to \cite{jelassi2022vision,li2023theoretical,tian2023scan,oymak2023role}).  Notably, our work also presents the first theoretical understanding of the implicit bias exhibited by gradient descent methods in the context of the attention model. We remark that recent work \cite{tarzanagh2023transformers} expands the theory presented in this work to 1-layer transformers. By uncovering the underlying optimization principles and thoroughly characterizing the directional convergence of attention, we provide valuable insights into the dynamics and generalization properties of attention-based models opening the path for future research.

\end{document}